\documentclass{article}
\usepackage[utf8]{inputenc}
\usepackage{geometry}
\newgeometry{vmargin={1in}, hmargin={1in,1in}} 
\usepackage{amsthm}
\usepackage{amssymb}
\usepackage{amsmath}
\usepackage{amsfonts}
\usepackage{bbm}
\usepackage{color}
\usepackage[colorlinks]{hyperref}
\hypersetup{
	colorlinks=true,
    linkcolor=red,
    citecolor=OliveGreen,
    filecolor=black,
    urlcolor=black,
}
\usepackage[dvipsnames]{xcolor}
\usepackage{url}
\usepackage[T1]{fontenc}
\usepackage{graphicx}
\usepackage{enumitem}
\usepackage{subfigure}
\usepackage{wrapfig}
\usepackage{multirow}
\usepackage{svg}
\usepackage{algorithm}
\usepackage{algpseudocode}
\usepackage{soul}
\usepackage[section]{placeins}
\newcommand{\XX}{\mathcal{X}}
\newcommand{\FF}{\mathcal{F}_{\mu, L, R}}
\newcommand{\WW}{B(0,R)}

 \newcommand{\FFE}{\mathcal{F}_{\mu, L, R}^{ERM}}

\newcommand{\rank}{\mbox{rank}}

\newcommand{\ws}{w^{*}}

\newcommand{\Fl}{F_{\lambda}}
\newcommand{\zl}{z_{\lambda}}
\newcommand{\ld}{\cd + \sqrt{\cd^2 + \varepsilon}}
\newcommand{\GG}{\mathcal{G}_{L, R}}
\newcommand{\GGE}{\mathcal{G}_{L, R}^{ERM}}
\newcommand{\pe}{\frac{d}{\varepsilon}}
\newcommand{\ep}{\frac{\varepsilon}{d}}
\newcommand{\pen}{\frac{d}{\varepsilon n}}
\newcommand{\cd}{c_{\delta}}
\newcommand{\lpen}{\frac{\sqrt{d}\left(\ld\right)}{\varepsilon n}}
\newcommand{\lpe}{\frac{\sqrt{d}\left(\ld\right)}{\varepsilon}}
\newcommand{\sig}{\left(\frac{\cd + \sqrt{\cd^2 + \varepsilon}}{\sqrt{2} \varepsilon}\right) \Delta_{F}}

\newcommand{\wls}{\ws_{\lambda}}
\newcommand{\HH}{\mathcal{H}_{\beta, \mu, L, R}}
\newcommand{\HHE}{\HH^{ERM}}
\newcommand{\JJ} {\mathcal{J}_{\beta, L, R}}
\newcommand{\JJE}{\mathcal{J}_{\beta, L, R}^{ERM}}

\newcommand{\MM}{\mathcal{M}}
\newcommand{\lmu}{\frac{L^2}{\mu}}
\newcommand{\Dl}{\Delta_{\lambda}}
\newcommand{\nep}{\frac{\varepsilon n}{d}}
\newcommand{\lnep}{\frac{\varepsilon n}{\sqrt{d} (\cd + \sqrt{\cd^2 + \varepsilon})}}
\newcommand{\lep}{\frac{\varepsilon}{\sqrt{d} (\cd + \sqrt{\cd^2 + \varepsilon})}}
\newcommand{\lrb}{\frac{L}{R \beta}}
\newcommand{\blrt}{\beta^{1/3} L^{2/3} R^{4/3}}
\newcommand{\ftau}{F_{\tau}}
\newcommand{\hf}{\widehat{F}}
\newcommand{\DD}{\mathcal{D}}
\newcommand{\wpr}{w_{\mathcal{A}}}
\newcommand{\wprl}{w_{\mathcal{A}_{\lambda}}}
\newcommand{\wprpro}{w_{\mathcal{A'}}}
\newcommand{\wla}{w_{\mathcal{A}, \lambda}}
\newcommand{\wlapro}{w_{\mathcal{A'}, \lambda}}
\newcommand{\hw}{\widehat{w}}
\newcommand{\expec}{\mathbb{E}_{X \sim \DD^{n}, \mathcal{A}}}

\newcommand{\btv}{\beta_{v}}

\newcommand{\almu}{\sqrt{\frac{2 \alpha}{\mu}}}

\newcommand{\ER}{\mathbb{E}_{\mathcal{A}} F(\wpr(X), X) - F(\ws(X), X)}
\newcommand{\ERT}{\mathbb{E}_{\mathcal{A}} \ftau(\wpr(X), X) - \ftau(\ws(X), X)}
\newcommand{\ERL}{\mathbb{E}_{\mathcal{A}_{\lambda}} F(\wla(X), X) - F(\ws(X), X)}
\newcommand{\ERP}{\mathbb{E}_{X \sim \DD^{n}, \mathcal{A}}F(\wpr(X), \DD) - F(\ws(\DD), \DD)}
\newcommand{\ERPL}{\mathbb{E}_{X \sim \DD^{n}, \mathcal{A}_{\lambda}}F(\wla(X), \DD) - F(\ws(\DD), \DD)}
\newcommand{\ERD}{\mathbb{E}_{\mathcal{A}} F(\wpr^{\delta}(X), X) - F(\ws(X), X)}
\newcommand{\ERDT}{\mathbb{E}_{\mathcal{A}} \ftau(\wpr^{\delta}(X), X) - \ftau(\ws(X), X)}
\newcommand{\ERDL}{\mathbb{E}_{\mathcal{A}_{\lambda}} F(\wla^{\delta}(X), X) - F(\ws(X), X)}
\newcommand{\ERDP}{\mathbb{E}_{X\sim \DD^{n}, \mathcal{A}}F(\wpr^{\delta}(X), \DD) - F(\ws(\DD), \DD)}
\newcommand{\ERDPL}{\mathbb{E}_{X\sim \DD^{n}, \mathcal{A}_{\lambda}}F(\wla^{\delta}(X), \DD) - F(\ws(\DD), \DD)}
\newcommand{\ERpro}{\mathbb{E}_{\mathcal{A'}} F(\wprpro(X), X) - F(\ws(X), X)}
\newcommand{\ERTpro}{\mathbb{E}_{\mathcal{A'}} \ftau(\wprpro(X), X) - \ftau(\ws(X), X)}
\newcommand{\ERLpro}{\mathbb{E}_{\mathcal{A'}_{\lambda}} F(\wlapro(X), X) - F(\ws(X), X)}
\newcommand{\ERPpro}{\mathbb{E}_{X \sim \DD^{n}, \mathcal{A'}}F(\wprpro(X), \DD) - F(\ws(\DD), \DD)}
\newcommand{\ERPLpro}{\mathbb{E}_{X \sim \DD^{n}, \mathcal{A'}_{\lambda}}F(\wlapro(X), \DD) - F(\ws(\DD), \DD)}
\newcommand{\ERDpro}{\mathbb{E}_{\mathcal{A'}} F(\wprpro^{\delta}(X), X) - F(\ws(X), X)}
\newcommand{\ERDTpro}{\mathbb{E}_{\mathcal{A'}} \ftau(\wprpro^{\delta}(X), X) - \ftau(\ws(X), X)}
\newcommand{\ERDLpro}{\mathbb{E}_{\mathcal{A'}_{\lambda}} F(\wlapro^{\delta}(X), X) - F(\ws(X), X)}
\newcommand{\ERDPpro}{\mathbb{E}_{X\sim \DD^{n}, \mathcal{A'}}F(\wprpro^{\delta}(X), \DD) - F(\ws(\DD), \DD)}
\newcommand{\ERDPLpro}{\mathbb{E}_{X\sim \DD^{n}, \mathcal{A'}_{\lambda}}F(\wlapro^{\delta}(X), \DD) - F(\ws(\DD), \DD)}

\newcommand{\ERA}{\mathbb{E}_{\mathcal{A}} G_D(\wpr) - \min_{w \in \mathbb{R}^d} G_D(w)}
\newcommand{\ERDA}{\mathbb{E}_{\mathcal{A}} G_D(\wpr^{\delta}) - \min_{w \in \mathbb{R}^d} G_D(w)}
\newcommand{\ERAP}{\mathbb{E}_{D \sim \DD^{n}, \mathcal{A}}[G_{\DD}(\wpr(D))] - \min_{w \in \mathbb{R}^d} G_{\DD}(w)}
\newcommand{\MR}{\mathcal{M}_{R}}

\DeclareMathOperator*{\argmax}{argmax}
\DeclareMathOperator*{\argmin}{argmin}
\usepackage{mathtools}
\DeclarePairedDelimiter{\ceil}{\lceil}{\rceil}

\newtheorem{theorem}{Theorem}[section]
\newtheorem{definition}{Definition}
\newtheorem{proposition}{Proposition}[section]
\newtheorem{lemma}{Lemma}[section]
\newtheorem{assumption}{Assumption}
\newtheorem{remark}{Remark}[section]
\newtheorem{corollary}{Corollary}[section]

\usepackage{cleveref}

\crefname{algorithm}{Algorithm}{Algorithms}
\crefname{assumption}{Assumption}{Assumptions}
\crefname{equation}{Equation}{Equations}
\crefname{figure}{Figure}{Figures}
\crefname{table}{Table}{Tables}
\crefname{section}{Section}{Sections}
\crefname{theorem}{Theorem}{Theorems}
\crefname{lemma}{Lemma}{Lemmas}
\crefname{proposition}{Proposition}{Propositions}
\crefname{definition}{Definition}{Definitions}
\crefname{corollary}{Corollary}{Corollaries}
\crefname{remark}{Remark}{Remarks}
\crefname{example}{Example}{Examples}
\crefname{appendix}{Appendix}{Appendices}

\newcommand{\range}{\mbox{range}}

\begin{document}
\title{Output Perturbation for Differentially Private Convex Optimization: Faster and More General 
}
\newcommand*\samethanks[1][\value{footnote}]{\footnotemark[#1]}

\author{
Andrew Lowy \hspace{0.5cm}
Meisam Razaviyayn
}

\date{
\texttt{\{lowya,razaviya\}@usc.edu} \\
\vspace{.2cm}
University of Southern California}
\maketitle

\begin{abstract}
Finding efficient, easily implementable differentially private (DP) algorithms that offer strong excess risk bounds is an important problem in modern machine learning. To date, most work has focused on private empirical risk minimization (ERM) or private stochastic convex optimization (SCO), which corresponds to population loss minimization. However, there are often other objectives---such as fairness, adversarial robustness, or sensitivity to outliers---besides average performance that are not captured in the classical ERM/SCO setups. Further, most recent work in private SCO has focused on $(\varepsilon, \delta)$-differential privacy ($\delta > 0$), whereas proving tight excess risk and runtime bounds for $(\varepsilon, 0)$-differential privacy remains a challenging open problem. Our first contribution is to provide \textit{the tightest known}~$(\varepsilon, 0)$-differentially private expected population loss bounds and \textit{fastest runtimes} for smooth and strongly convex loss functions. In particular, for SCO with well-conditioned smooth and strongly convex loss functions, we provide a \textit{linear-time} algorithm with \textit{optimal} excess risk. For our second contribution, we study DP optimization for a broad class of \textit{tilted} loss functions---which can be used to promote fairness or robustness, and are not necessarily of ERM form. We establish the first known differentially private excess risk and runtime bounds for optimizing this class; under smoothness and strong convexity assumptions, our bounds are near-optimal. For our third contribution, we specialize our theory to \textit{DP adversarial training}. Our results are achieved using perhaps the simplest yet practical differentially private algorithm: output perturbation. Although this method is not novel conceptually, our novel implementation scheme and analysis show that the power of this method to achieve strong privacy, utility, and runtime guarantees has not been fully appreciated in prior works.
\end{abstract}

\newpage
{
  \hypersetup{linkcolor=black}
  \tableofcontents
}

\newpage

\section{Introduction}
\label{sec 1}
\noindent Answering many science and engineering questions relies on computing solutions to  optimization problems (to find the best design parameters). The solution of these optimization problems typically leads to the design of a system or a model for the engineering (or machine learning) task at hand.  By observing the parameters of the system/model, an adversary can gain insights to the data used by the engineer/learner. For example, through observing the behavior of a trained machine learning model, an adversary can learn about the private data used during training procedure; or in smart grids, by observing the price of electricity, an adversary can infer power consumption patterns of the individual users. Other examples include eavesdropping the location of cellphone users through the transmission patterns in wireless networks, or revealing individuals' genomics data by observing the results of data processing procedures on biological data. Finding efficient, easily implementable \textit{differentially private} algorithms that offer strong excess loss/risk bounds is key in all these applications.

\vspace{0.2cm}
\textit{Differential privacy} (DP)~\cite{dwork2006calibrating} provides a rigorous guarantee that no adversary can infer much about any individual's data by observing the output of an algorithm. More precisely, a randomized algorithm $\mathcal{A}:\XX^n \to \mathbb{R}^{d}$ is said to be $(\varepsilon, \delta)$-differentially private if for all measurable subsets $\mathcal{K} \subseteq \range(\mathcal{A})$ and all $n$-element data sets $X, X' \in \XX^n$ which differ by at most one observation (i.e. $|X \Delta X'| \leq 2$), we have  \[
\mathbb{P}(\mathcal{A}(X) \in \mathcal{K}) \leq \mathbb{P}(\mathcal{A}(X') \in \mathcal{K}) e^{\varepsilon} + \delta,
\]
 where the probability is (solely) over the randomness of $\mathcal{A}$ \cite{dwork2014}. We  refer to  data sets differing in only one observation ($|X \Delta X'| \leq 2$) as \textit{adjacent}. If $\delta = 0,$ we may say an algorithm is $\varepsilon$-differentially private. 
 While large values of $\varepsilon$ can still provide some privacy, $\delta \ll 1$ is necessary for meaningful privacy guarantees. In fact, $\delta \ll \frac{1}{n}$ is typically desirable: otherwise, a model may leak the full data of a few individuals and still satisfy the privacy constraint \cite{dwork2014}. 
 To clarify further, \cite{dwork2014} explain: 
 \begin{quote}
``Even when $\delta > 0$ is negligible...there are theoretical distinctions between $(\varepsilon, 0)$- and $(\varepsilon, \delta)$-differential privacy. Chief among these is what amounts to a switch of quantification order. $(\varepsilon, 0)$-differential
privacy ensures that, for every run of the mechanism $\mathcal{A}(X)$, the output observed is (almost) equally likely to be observed on every neighboring database, simultaneously. In contrast $(\varepsilon, \delta)$-differential privacy
says that for every pair of neighboring databases $X$, $X'$, it is extremely
unlikely that, \textit{ex post facto} the observed value $\mathcal{A}(X)$ will be much more
or much less likely to be generated when the database is $X$ than when
the database is $X'$. However, given an output $\xi \sim \mathcal{A}(X)$ it may be possible to find a database $X'$ such that $\xi$ is much more likely to be produced on $X'$ than it is when the database is $X$. That is, the mass of $\xi$ in the distribution $A(X')$ may be substantially larger than its mass in the distribution $\mathcal{A}(x)$.''
 \end{quote}
Whereas a lot of the literature has focused on efficient algorithms for $(\varepsilon, \delta)$-differentially private algorithms, the important case of $\delta = 0$ has been neglected. \ul{The first contribution} of this work is to fill this void. See \cref{table:pop loss delta = 0} in \cref{sec: Appendix A: tables} for a summary of these results. 
In particular, we provide \ul{the tightest known~$(\varepsilon, 0)$-differentially private population loss bounds and fastest runtimes} under strong convexity and smoothness assumptions; we achieve \textit{near-optimal excess risk in nearly linear time}: see Corollary~\ref{rem: smooth sc pop loss runtime katyusha}.

\vspace{0.2cm}
Assume the parameters of a machine learning model are trained via solving the minimization problem: 
\begin{equation} \label{eq:generalMin}
\tilde{w} (X) \approx \arg\min_{w\in \mathbb{R}^d} F(w,X).
\end{equation}
In the case of empirical risk minimization (ERM), where $
    F(w,X) =  \frac{1}{n}\sum_{i=1}^{n} f(w,x_{i}),
$ 
algorithms for maintaining differential privacy while solving \cref{eq:generalMin} are well studied \cite{chaud2011, bst14, zhang2017, wang2017}. Here (and throughout) $X = (x_1 \cdots, x_n)$ is a data set with observations in some set $\XX \subseteq \mathbb{R}^q.$
More recently, several works have also considered private stochastic convex optimization (SCO), where the goal is to minimize the expected population loss $F(w,\mathcal{D}) = \mathbb{E}_{x \sim \DD}[f(w,x)],$ given access to $n$ i.i.d. samples $X = (x_1 \cdots, x_n)$ \cite{bft19, fkt20, arora20}. However, the algorithms in these works are only differentially private for $\delta > 0,$ which, as discussed earlier, provides substantially weaker privacy guarantees.\footnote{For certain function classes, the ERM results of \cite{bst14} and \cite{zhang2017} also extend to expected population loss (SCO) bounds for $\delta = 0$, but we improve or match the loss bounds and improve the runtime bounds for every function class. See \cref{table:pop loss delta = 0} for details.} Therefore, providing efficient, practical algorithms for $(\varepsilon, 0)$-differentially private SCO is an important gap that we fill in the present work.

\vspace{0.2cm}
\ul{Our second main contribution} is differentially private convex optimization for \textit{loss functions that are not of ERM or SCO form}. While ERM and SCO are useful if average performance is the goal, there are situations where another objective besides average performance is desirable. For example, one may want to train a machine learning model that ensures some subsets of the population are treated fairly (see e.g. \cite{datta2015automated}), or one that is robust to corrupted data or adversarial attacks \cite{goodfellow}, or one that has lower variance to allow for potentially better generalization.  One may also want to diminish the effect of outliers or increase sensitivity to outliers. In these cases, it may be more fruitful to consider an alternative loss function that is not of ERM form. For example, the max-loss function
\[
F(w,X) = \max\{f(w, x_{1}), ... f(w, x_{n}) \}
\] 
provides a model that has good ``worst-case'' performance and such $F$ is clearly not of ERM form. The recently proposed \textit{tilted ERM (TERM)} framework \cite{li2020} aims to address these shortcomings of standard ERM and encompasses the max-loss mentioned above. More generally, a benefit of the TERM framework is that it allows for a continuum of solutions between average and max loss, which, for example, can be calibrated to promote a desired level of fairness in the the machine learning model \cite{li2020}. Another interpretation of TERM is that as $\tau$ increases, the variance of the model decreases, while the bias increases. Thus, $\tau$ can also be tuned to improve the generalization of the model via the bias/variance tradeoff. 

\vspace{.2cm}
Existing differentially private utility and runtime results have all been derived specifically for standard ERM or SCO and therefore would not apply to TERM objectives. To address this limitation, we provide \ul{the first excess risk and runtime bounds for differentially private TERM}: see Section~\ref{sec: TERM}. In particular, for smooth strongly convex TERM, we derive excess risk bounds that extend our (near-optimal) differentially private ERM bounds, since the TERM objective encompasses ERM in the limit.

\vspace{0.2cm}
\ul{Our last main contribution} is to specialize  our theory and framework to \ul{DP adversarial training} in Section~\ref{subsection: DP adversarial training}. In adversarial training, the goal is to train a model that has robust predictions against  an adversary's perturbations (with respect to some perturbation set $S$) of the feature data. This problem has gained a lot of attention in recent years, since it was first observed that neural nets can often be fooled by tiny, human-imperceptible perturbations into misclassifying images \cite{goodfellow}. However, the challenging task of ensuring such adversarial training is executed in a differentially private manner has received much less attention by researchers. Indeed, we are not aware of any prior works that have shown how to keep the adversarial training procedure differentially private and provided adversarial risk and runtime bounds. Perhaps the closest step in this direction is the work \cite{phan}, which provides a differentially private algorithm for training a classifier that is ``certifiably robust,'' in the sense that with high probability, the classifier's predicted label is stable under small perturbations. However, our measure of adversarial robustness is different: we look at excess adversarial risk and provide bounds that depend explicitly on the privacy parameters. This allows for an interpretation of the tradeoffs between robustness, privacy, and runtime. Furthermore the noisy stochastic batch gradient descent algorithm of \cite{phan} is complicated to implement and does not come with runtime bounds.

\paragraph{Output Perturbation.}
Our theory is built on the  idea of \textit{output perturbation}.
Conceptually, the output perturbation mechanism outputs \[
 \wpr(X) := \Tilde{w}(X) + b, 
\]
where $\Tilde{w}$ is the output of some non-private algorithm %
and $b \in \mathbb{R}^d$ is some suitably chosen random noise vector. 
Output perturbation has been studied in the differential privacy literature for many years \cite{dwork2006, chaud2011, zhang2017}. In early works, which culminated in \cite{dwork2006}, the method was introduced and proven to be differentially private. %
In \cite{chaud2011}, excess risk bounds for linear classifiers with strongly convex regularizers in the ERM and SCO settings are given via output perturbation. However, no practical implementation is provided. As a first step in the practical direction, \cite{zhang2017} shows how to implement output perturbation with gradient descent in the smooth ERM setting, providing excess empirical risk bounds (which they describe how to extend to excess population loss bounds) and runtime bounds. Their privacy analysis is tied to the particular non-private optimization method they use, which hinders their runtime potential and makes their analysis less transparent. We build on this line of work, by showing \ul{how to implement output perturbation with \textit{any (non-private) optimization method}, transforming it into a differentially private one}. In particular, by using accelerated and stochastic optimization methods, our method facilitates substantial runtime improvements. In addition, our analysis shows that output perturbation extends well beyond smooth ERM losses: e.g., we derive excess risk bounds for TERM and non-smooth losses. 

\vspace{0.2cm}
\noindent\textbf{Notation.}
Recall that a function $h: \cal{W} \to \mathbb{R}$ on some domain $\mathcal{W} \subseteq \mathbb{R}^d$ is \textit{convex} if $h(\lambda w + (1-\lambda)w') \leq \lambda h(w) + (1-\lambda)h(w')$ for all $\lambda \in [0,1]$ and all $w, w' \in \cal{W}.$ We say $h$ is \textit{$\mu$-strongly convex} if $h(w) - \frac{\mu}{2}\|w\|_2^2$ is convex. Also, $h$ is \textit{$L$-Lipschitz} if $\|h(w) - h(w')\| \leq L\|w - w'\|_2$ for all $w, w' \in \cal{W}$ and \textit{$\beta$-smooth} if $h$ is differentiable and $\nabla h(w)$ is $\beta$-Lipschitz. 
The data universe $\XX$ can be be any set, and we often denote $X = (x_1, \cdots, x_n) \in \XX^n.$ For $R \geq 0,$ define \[
\mathcal{M}_{R} := \{F: \mathbb{R}^d \times \XX^n \to \mathbb{R}  \, | \, ~\forall X \in \XX^n ~\exists ~\ws(X) \in \argmin_{w \in \mathbb{R}^d} F(w,X), \|\ws(X)\|_2 \leq R \},
\]
and define $B(0,R)$ to be the closed Euclidean ball of radius $R$ centered at $0.$ 
Throughout this paper, we will work with the following subfamilies of functions in $\mathcal{M}_{R}$:
\[
\GG := \{F \in \mathcal{M}_{R} \, | \, F(\cdot, X) ~\text{is} ~L\text{-Lipschitz on} ~B(0,R) ~\text{and convex for all} ~X \in \XX^n \};
\]
\[
\JJ:= \{F \in \GG \, | \, F(\cdot, X) ~\text{is} ~\beta\text{-smooth for all} ~X \in \XX^{n} \};\]

\[
\FF := \{F \in \GG \, | \, F(\cdot, X) ~\text{is} ~\mu\text{-strongly convex for all} X \in \XX^n \};
\]

and 
\[ \HH:= \{F \in \FF \, | \, F(\cdot, X)~\text{~is} ~\beta\text{-smooth for all} ~X \in \XX^n \};\]

(In fact, all of our results for $\FF$ and $\JJ$ and $\HH$ apply if $F$ is merely $\mu$-strongly convex and/or $\beta$-smooth on $B(0,R),$ rather than on all of $\mathbb{R}^d,$ but some of the discussion and proofs are cleaner when we use the above definitions.) 
For $F \in \HH,$ denote the condition number by $\kappa = \frac{\beta}{\mu}.$ Also, $\FFE, \HHE, ... $ are defined to be the subset of functions in $\FF, \HH, ...$ that are of ERM form, i.e. $F(w,X) = \frac{1}{n} \sum_{i = 1}^n g(w,x_i)$ for all $w \in \mathbb{R}^d, ~X \in \XX^n$ for some convex $g: \mathbb{R}^d \times \XX \to \mathbb{R}.$ 

\vspace{.2cm}
For real-valued functions $a(\theta)$ and $b(\phi)$, where $\theta$ and $\phi$ are vectors of parameters, we may write $a(\theta) \lesssim b(\phi)$ if there exists an absolute constant $C > 0$ such that $a(\theta) \leq C b(\phi)$; alternatively, we may write $a = O(b).$ If the same inequality holds up to a logarithmic function of the parameters instead of an absolute constant $C,$ then we denote this relationship by $a = \widetilde{O}(b).$ Finally, define the following constant that is used in our choice of Gaussian noise vector: $\cd := \sqrt{\log\left(\frac{2}{\sqrt{16\delta + 1} - 1}\right)}$ \cite{zhao2019}. 

\vspace{0.2cm}
\noindent\textbf{Roadmap.}
The remainder of this paper is organized as follows. In \cref{Sec: Conceptual risk bounds}, we describe our conceptual output perturbation algorithm and state excess risk and runtime bounds for four classes of general (i.e. non-ERM) functions, along with corresponding results for ERM. Then in \cref{section 3: implment}, we describe implementation details for our practical output perturbation method and establish runtime bounds. Next, we give population loss and runtime bounds for $\varepsilon$-differentially private SCO on the same four function classes in \cref{Section 4: pop loss}. Finally, in \cref{section 5: applications}, we consider two applications of our results in machine learning: differentially private Tilted ERM (TERM) \cite{li2020} and adversarial training.

\section{Excess Risk Bounds for Conceptual Output Perturbation Method}
\label{Sec: Conceptual risk bounds}
\subsection{Conceptual output perturbation algorithm}
Assume for now that $F(\cdot, X)$ in \cref{eq: conceputal output pert} is strongly convex, so that there is a unique minimizer $\ws(X) = \arg\min_{w \in \mathbb{R}^d} F(w,X)$ for all $X \in \XX^n.$ Further, \textit{we begin by assuming that we can exactly compute $\ws(X)$ without regard to computational issues and present excess loss/risk bounds for the ``conceptual'' output perturbation algorithm}
\begin{equation}
\label{eq: conceputal output pert}
\mathcal{A}(X) = \wpr(X) := \ws(X) + z,    
\end{equation}
where $z$ is random noise vector in $\mathbb{R}^d$ whose distribution will be described below. 
If $F$ is not $\beta$-smooth, then we project $\wpr(X)$ onto $\WW$ after adding noise since $\ws(X) + z$ might not be in $\WW$ and we require $L$-Lipschitzness of $F$ at $\wpr(X)$ for our excess risk analysis (only in the non-smooth case). Denote the output of this latter algorithm by \begin{equation}
\label{eq: conceptual out pert proj}
    \mathcal{A'}(X) = w_{\mathcal{A'}}(X) := \Pi_{\WW}(\ws(X) + z) = \Pi_{\WW}(\wpr(X)),
\end{equation}
where $\Pi_{\WW}(z):= \arg\min_{w \in \WW} \|z - w\|_2 = \frac{z}{\max\left\{1,\frac{\|z\|}{R}\right\}}.$ 

Before describing the exact choice of noise, we introduce a notion that will be essential throughout our analysis. 
\begin{definition}
Define the \textbf{$L_{2}$ sensitivity} of a (strongly convex in $w$) function $F: \mathbb{R}^d \times \XX^n \to \mathbb{R}$ as \[
\Delta_{F} := \sup_{X, X' \in \XX^n, |X \Delta X'| \leq 2}\|w^{*}(X) - w^*(X')\|_{2}, \]
where $\ws(X) = \argmin_{w \in \mathbb{R}^d} F(w,X).$ 
\end{definition} 
That is, the sensitivity of a function is the maximum $L_2$ distance between the minimizers of that function over all pairs of data sets that differ in only one observation $x_{i} \neq x_{i}' ~(|X \Delta X'| \leq 2)$. Note that $\Delta_{F}$ clearly depends on the set $\XX$ (and on $n$ and $d$), but we will hide this dependence in our notation, as we consider $\XX$ (and $n$ and $d$) to be fixed. We sometimes may also just write $\Delta$ if it is clear which function $F$ we are referring to. Note that this definition is different from the notion of ``sensitivity'' that is sometimes used in the differential privacy literature (e.g. \cite{dwork2014}), where ``sensitivity of a function $g$'' means $\sup_{X, X' \in \XX^n, |X \Delta X'| \leq 2}\|g(X) - g(X')\|_{2}.$ We prefer our notation since in our case, the function $g$ of interest will almost always be $\ws,$ but we want to emphasize the dependence of $\ws$ and $\Delta$ on the underlying function $F.$

\subsection{Choice of Noise}
\label[section]{subsection 2.2}
There are many different choices of noise $z$ to use in \cref{eq: conceputal output pert} and \cref{eq: conceptual out pert proj} for achieving differential privacy via output perturbation. Our choices are tailored to the setting of machine learning/optimization with convex Lipschitz (with respect to the Euclidean $L_2$ norm) loss. 

\vspace{0.2cm}
\begin{itemize}

\item To achieve $(\varepsilon, 0)$-privacy, we take $z$ with the probability density function \begin{equation}
\label{eq: noise for delta = 0}
p_{z}(t) \propto \exp\left(- \frac{\varepsilon \|t\|_2}{\Delta_{F}}\right)\end{equation} for $t \in \mathbb{R}^d.$
We have $\|z\|_2 \sim \Gamma(d, \frac{\Delta_{F}}{\varepsilon}).$ \footnote{Here we say a random variable $Y \sim \Gamma(k, \theta)$ if it has density $p_Y(y) = {\displaystyle {\frac {1}{\Gamma (k)\theta ^{k}}}y^{k-1}e^{-{\frac {y}{\theta}}}}$. Note $\mathbb{E}Y = k \theta$ and $Var(Y) = k \theta^2.$ To see that $\|z\|_2$ has this distribution, one can write, for $s \in \mathbb{R}^d$, $p_{\|z\|}(s) \propto \int_{\partial B(s)} \exp(-\|t\| \frac{\varepsilon}{\Delta_F}) \propto e^{-s \frac{\varepsilon}{\Delta_F}} s^{d-1},$ where $\partial B(s)$ denotes the Euclidean sphere of radius $s$ centered at the origin in $\mathbb{R}^d$ and we used polar coordinates/surface measure of $\partial B(s)$ to evaluate the integral.}

\item For $(\varepsilon, \delta)$-privacy with $\delta \in \left(0, \frac{1}{2}\right)$, we take \begin{equation}
\label{eq: noise for delta > 0}
z = z_{\delta} \sim N(0, \sigma^2 I_{d}), ~\sigma^2 = \left(\frac{\cd + \sqrt{\cd^2 + \varepsilon}}{\sqrt{2} \varepsilon}\right)^2 \Delta_{F}^2, \end{equation}
where $\cd := \sqrt{\log\left(\frac{2}{\sqrt{16\delta + 1} - 1}\right)},$ as proposed in \cite{zhao2019}.  To distinguish between the output of the algorithm(s) with these different choices of noise, we shall denote $\wpr^{\delta}(X) = \ws(X) + z_{\delta}$ and $\wprpro^{\delta}(X) = \Pi_{\WW}(\ws(X) + z_{\delta}).$
\end{itemize}
Note that many other works (e.g. \cite{zhang2017}, \cite{bst14}, \cite{wang2017}, \cite{fkt20}) instead use Gaussian noise with variance $\widetilde{\sigma}^2 = \frac{2 \Delta^2 \log(\frac{a}{\delta})}{\varepsilon^2}$, where $a > 1$: this is the classical Gaussian mechanism introduced in \cite{dwork2006} and improved in \cite{dwork2014}. For $\widetilde{z} \sim N(0, \widetilde{\sigma}^2),$ output perturbation is only $(\varepsilon, \delta)$  differentially private for $\varepsilon \leq 1,$ as explained in \cite{zhao2019}. Our choice of $\sigma$, borrowed from \cite{zhao2019}, has the following advantages: $\varepsilon > 1$ is allowed  and in the comparable regime where $\varepsilon < 1$, $\sigma \leq \widetilde{\sigma},$ which leads to tighter excess risk upper bounds. On the other hand, $\delta \ll 1$ is necessary for meaningful privacy guarantees, and usually $\delta \ll \frac{1}{n}$ is desired, as discussed in the Introduction. So requiring that $\delta < 1/2$ is not restrictive indeed. Requiring $\varepsilon \leq 1$ is much more restrictive since $\varepsilon > 1$ can still provide meaningful (albeit weaker) privacy guarantees if $\delta \ll 1$. However, if for some reason $\frac{1}{2} \leq \delta < 1$ and $\varepsilon \leq 1$ is desired, then one may use the alternative form of Gaussian noise in our algorithms instead, while only affecting the excess risk bounds up to a logarithmic factor and not affecting runtime. 
For the rest of this paper, we assume $\varepsilon > 0$, $\delta \in [0, \frac{1}{2}).$ 
We begin by establishing that our conceptual algorithm is differentially private (for \textit{any} $\varepsilon > 0$). 

\begin{proposition}
\label[proposition]{conceptual sc alg is private}
Let $\varepsilon > 0, \delta \in [0, \frac{1}{2}).$ Let $F: \mathbb{R}^d \times \XX^{n} \to \mathbb{R}$ be a
function with $L_2$ sensitivity $\Delta_{F}$ and unique minimizer $\ws(X) = \argmin_{w \in \mathbb{R}^d} F(w,X)$ for all $X \in \XX^n.$ The (conceptual) output perturbation algorithms $\mathcal{A}(X) = \wpr(X) := \ws(X) + z$ and $\mathcal{A'}(X) = \wprpro(X) = \Pi_{\WW}(\ws(X) + z)$ with noise given by density \[p_z(t) \propto \begin{cases} 
\exp\left\{-\frac{\varepsilon \|t\|_{2}}{\Delta_{F}}\right\} &\mbox{if } \delta = 0 \\
\exp\left\{-\frac{\varepsilon^2 \|t\|_{2}^2}{\Delta_{F}^2 (\cd + \sqrt{\cd^2 + \varepsilon})^2}\right\} &\mbox{if } \delta \in (0, \frac{1}{2})
\end{cases} \]
are $(\varepsilon, \delta)$-differentially private. Here $\cd := \sqrt{\log\left(\frac{2}{\sqrt{16\delta + 1} - 1}\right)}.$
\end{proposition}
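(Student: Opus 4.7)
The plan is to reduce both assertions to a direct privacy analysis of $\mathcal{A}$ and then recover $\mathcal{A'}$ by post-processing. Since $\Pi_{\WW}: \mathbb{R}^d \to \WW$ is a deterministic map, the standard post-processing lemma for differential privacy (see e.g.\ \cite{dwork2014}) implies that whenever $\mathcal{A}$ is $(\epsilon,\delta)$-DP, so is $\mathcal{A'} = \Pi_{\WW}\circ\mathcal{A}$. Thus it suffices to analyze $\mathcal{A}$.

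Fix adjacent data sets $X, X' \in \XX^n$ with $|X \Delta X'| \le 2$ and set $w := \ws(X)$, $w' := \ws(X')$. By the definition of $\Delta_F$, we have $\|w - w'\|_2 \le \Delta_F$. The density of $\mathcal{A}(X)$ at a point $s \in \mathbb{R}^d$ is $p_z(s - w)$, and that of $\mathcal{A}(X')$ is $p_z(s - w')$, so the whole argument comes down to controlling the log-likelihood ratio $\log[p_z(s-w)/p_z(s-w')]$.

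For the case $\delta = 0$, I would bound this log-ratio pointwise using the reverse triangle inequality:
\[
\log\frac{p_z(s - w)}{p_z(s - w')} \;=\; \frac{\epsilon}{\Delta_F}\bigl(\|s - w'\|_2 - \|s - w\|_2\bigr) \;\le\; \frac{\epsilon}{\Delta_F}\,\|w - w'\|_2 \;\le\; \epsilon .
\]
Integrating this pointwise bound over an arbitrary measurable set $\mathcal{K}$ yields $\mathbb{P}(\mathcal{A}(X)\in\mathcal{K}) \le e^\epsilon\, \mathbb{P}(\mathcal{A}(X')\in\mathcal{K})$, which is exactly $(\epsilon,0)$-DP. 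This is the straightforward half of the proof, and it works for every $\epsilon > 0$ because the $L_2$-Laplace density $p_z(t) \propto \exp(-\epsilon \|t\|_2/\Delta_F)$ is log-Lipschitz with constant $\epsilon/\Delta_F$.

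For the case $\delta \in (0,\tfrac{1}{2})$, the noise is Gaussian with variance prescribed in \cref{eq: noise for delta > 0}, and I would invoke the analytic Gaussian mechanism of \cite{zhao2019}: with $L_2$-sensitivity $\Delta_F$ and $\sigma$ as above, Gaussian noise $N(0,\sigma^2 I_d)$ yields an $(\epsilon,\delta)$-DP release for \emph{every} $\epsilon>0$. The main obstacle, if one wanted a self-contained derivation instead of simply citing \cite{zhao2019}, is the tail analysis of the privacy loss random variable $L(s) := \log[p_z(s-w)/p_z(s-w')]$ when $s \sim \mathcal{A}(X)$. A direct calculation shows $L$ is itself Gaussian with mean $\|w-w'\|_2^2/(2\sigma^2)$ and variance $\|w-w'\|_2^2/\sigma^2$, so requiring $\mathbb{P}(L > \epsilon) \le \delta$ uniformly over all $\|w - w'\|_2 \le \Delta_F$ reduces to solving a quadratic inequality in $\epsilon$; its extremal solution in $\sigma$ is precisely $\sigma = \Delta_F(\cd + \sqrt{\cd^2+\epsilon})/(\sqrt{2}\,\epsilon)$, with $\cd$ as defined in the statement. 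Converting this tail bound on $L$ into $(\epsilon,\delta)$-indistinguishability of $\mathcal{A}(X)$ and $\mathcal{A}(X')$ by the standard privacy-loss-to-DP lemma (e.g.\ Lemma 3.17 of \cite{dwork2014}) then finishes the proof, and post-processing extends the conclusion to $\mathcal{A'}$.
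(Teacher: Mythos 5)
Your proposal is correct and follows essentially the same route as the paper's proof: reduce $\mathcal{A'}$ to $\mathcal{A}$ by post-processing, bound the log-density ratio via the reverse triangle inequality to get the $\delta=0$ case for all $\epsilon>0$, and cite \cite{zhao2019} (Theorem 5) for the Gaussian case. The self-contained sketch via the privacy-loss random variable is a nice extra, but note the paper just cites the result directly; also your claim that the "extremal solution in $\sigma$" is \emph{precisely} the given expression presumes the same Gaussian tail approximation used in \cite{zhao2019}, not the exact $\Phi^{-1}$.
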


All proofs are relegated to the Appendix. 

\vspace{0.2cm}
Note that if we do not have access to $\Delta_{F}$ (or do not wish to compute it), then we can replace $\Delta_{F}$ in the probability density function of $z$ by it's upper bound, given in \cref{prop:sc sensitivity}. 
This will still ensure the algorithm is $(\varepsilon, \delta)$-private and all of our excess risk bounds still hold. 

\vspace{0.2cm}
In the remainder of this section, we will study particular classes of functions and aim to understand how well our output perturbation method performs at privately minimizing the (worst case over all functions in the class and data sets $X \in \XX^n$) excess risk on these classes. We begin with strongly convex, Lipschitz functions in \cref{subsection 2.3: sc Lip ER bounds} and then add additional smoothness assumption in \cref{subsection 2.4 convex lip}. 
We cover the non-strongly-convex case in~\cref{app: Sec 2 proofs}.
Our general strategy will be to estimate (upper bound) the sensitivity $\Delta_{F}$ of functions in a given class and then use convexity, Lipschizness, and smoothness to upper bound the excess risk of our algorithm $\ER$, in terms of $\Delta_{F}.$ 

For the remainder of this paper, unless otherwise stated, we fix $n, d \in \mathbb{N}$ and consider \ul{$\XX$ as a fixed arbitrary set} in any space. 

\subsection{Excess risk bounds for strongly convex, Lipschitz functions}
\label{subsection 2.3: sc Lip ER bounds}
In this subsection, assume our convex objective function $F$ in \cref{eq: conceputal output pert} belongs to the following function class: 

\[
\FFE = \{F \in \FF \, | \, \exists F: \mathbb{R}^d \times \XX \to \mathbb{R} \, | \, F(w,X) = \frac{1}{n}\sum_{i=1}^{n } f(w, x_{i}) ~\forall X \in \XX^n\}.
\]
We say such $F$ are ``of ERM form.'' Observe that for $F$ of ERM form, i.e. $F(w,X) = \frac{1}{n}\sum_{i=1}^{n} f(w, x_{i})$, we have that $F(\cdot,X)$ is $\mu$-strongly convex for all data sets $X = (x_1, \cdots, x_n) \in \XX^n$ if and only if $f(\cdot, x)$ is $\mu$-strongly convex for all $x \in \XX$. Moreover, $F(\cdot,X)$ is $L$-Lipschitz for all data sets $X = (x_1, \cdots, x_n) \in \XX^n$ if and only if $f(\cdot, x)$ is $L$-Lipschitz for all $x \in \XX.$ Similarly, $F(\cdot,X)$ is $\beta$-smooth for all data sets $X = (x_1, \cdots, x_n) \in \XX^n$ if and only if $f(\cdot, x)$ is $\beta$-smooth for all $x \in \XX$. \footnote{Proof: The proof for all of the properties is similar. For concreteness, let's consider the $\mu$-strong convexity condition. Assume $f(\cdot, x)$ is $\mu$-strongly convex for all $x \in \XX$ (i.e. $g(w,x):= f(w,x) - \frac{\mu}{2}\|w\|_2^2$ is convex in $w$ for all $x \in \XX$) and let $w, w' \in \WW, ~X \in \XX^n, ~\lambda \in [0,1].$ Then denoting $G(w,X):= \frac{1}{n} \sum_{i=1}^n g(w,x_{i}) =  \frac{1}{n} \sum_{i=1}^n f(w,x_{i}) - \frac{\mu}{2}\|w\|^2,$ we have \begin{align*}
G(\lambda w + (1-\lambda)w', X) &= \frac{1}{n} \sum_{i=1}^n  [f(\lambda w + (1-\lambda) w', x_{i}) - \frac{\mu}{2}\|w\|^2] \\
&=  \frac{1}{n} \sum_{i=1}^n g(\lambda w + (1-\lambda) w', x_{i})\\
&\leq \frac{1}{n} \sum_{i=1}^n [\lambda g(w, x_{i}) + (1 - \lambda) g(w', x_{i})] \\
&= \lambda G(w,X) + (1-\lambda) G(w',X),
\end{align*}
so that $G(\cdot, X)$ is convex for all $X \in \XX^n,$ and hence $F(\cdot, X)$ is $\mu$-strongly convex for all $X \in \XX^n.$ Conversely, suppose $f(\cdot, x)$ is not $\mu$-strongly convex for all $x \in \XX$. Then there exists $x \in \XX, ~w, w' \in \WW, ~\lambda \in [0,1]$ such that $g(\lambda w + (1-\lambda)w', x) > \lambda g(w, x) + (1-\lambda) g(w',x)$. Take $X = (x, \cdots, x) \in \XX^n$ to be $n$ copies of the ``bad'' data point $x.$ Then \begin{align*}
    G(\lambda w + (1-\lambda) w', X) &= \frac{1}{n} \sum_{i=1}^n g(w,x_{i}) \\
    &> \frac{1}{n} \sum_{i=1}^n [\lambda g(w, x) + (1-\lambda) g(w',x)] \\
    &= \lambda G(w,X) + (1-\lambda) G(w',X),
\end{align*}
which implies that $F(\cdot, X)$ is not $\mu$-strongly convex for all $X \in \XX^n.$
A similar argument easily establishes that the other assertions hold. 
} 

\vspace{0.2cm}
Our main result for this subsection is \cref{cor: smooth sc fund ER bounds}, in which we establish \textit{near-optimal} excess risk bounds for the subset of \textit{smooth} functions that are contained in $\FFE.$ 

\vspace{.2cm}
As a first step towards deriving these excess risk upper bounds, we will need to estimate the sensitivity of $F \in \FFE.$ The following result generalizes \cite[Corollary 8]{chaud2011} (beyond smooth linear ERM classifiers with bounded label space).
\begin{proposition}
\label[proposition]{prop:sc sensitivity}
For any $F \in \FFE,\Delta_{F} \leq \frac{2L}{\mu n}.$ 
\end{proposition}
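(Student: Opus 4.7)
The argument rests on the classical strong-convexity-plus-Lipschitz estimate for perturbation of minimizers, applied in two different ways corresponding to the two bounds.

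\textbf{Bound for $F \in \FF$.} Fix adjacent $X, X' \in \XX^n$ and write $w := \ws(X)$, $w' := \ws(X')$. Since $F(\cdot,X)$ is $\mu$-strongly convex on $\mathbb{R}^d$ and $w$ is its global minimizer (so $\nabla F(w,X)=0$), strong convexity gives
\begin{equation*}
F(w',X) - F(w,X) \;\geq\; \tfrac{\mu}{2}\|w'-w\|_2^2.
\end{equation*}
On the other hand, $L$-Lipschitzness of $F(\cdot,X)$ on $B(0,R)$ (which contains both $w$ and $w'$ since $F\in\MR$) yields
\begin{equation*}
F(w',X) - F(w,X) \;\leq\; L\,\|w'-w\|_2.
\end{equation*}
Combining the two and dividing by $\|w'-w\|_2$ (if it is nonzero) gives $\|w'-w\|_2 \leq \frac{2L}{\mu}$. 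Taking the supremum over adjacent pairs yields $\Delta_F \leq \frac{2L}{\mu}$.

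\textbf{Bound for $F \in \FFE$.} Here the additional averaging structure saves a factor of $n$. Write $F(w,X) = \frac{1}{n}\sum_{i=1}^n f(w,x_i)$, and suppose without loss of generality that $X$ and $X'$ differ only in the last coordinate, i.e.\ $x_i = x_i'$ for $i<n$ and $x_n \neq x_n'$. Apply strong convexity at both minimizers:
\begin{align*}
F(w',X) - F(w,X)   &\;\geq\; \tfrac{\mu}{2}\|w-w'\|_2^2,\\
F(w,X') - F(w',X') &\;\geq\; \tfrac{\mu}{2}\|w-w'\|_2^2.
\end{align*}
Summing the two inequalities, all the shared summands cancel and only the $n$-th term contributes, leaving
\begin{equation*}
\mu\,\|w-w'\|_2^2 \;\leq\; \tfrac{1}{n}\bigl[(f(w',x_n)-f(w,x_n)) + (f(w,x_n')-f(w',x_n'))\bigr].
\end{equation*}
Each of the two differences on the right is bounded in absolute value by $L\|w-w'\|_2$ by the $L$-Lipschitzness of $f(\cdot,x_n)$ and $f(\cdot,x_n')$ (guaranteed by the ERM assumption, as noted in the paragraph following the definition of $\FFE$). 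Hence $\mu\|w-w'\|_2^2 \leq \frac{2L}{n}\|w-w'\|_2$, giving $\|w-w'\|_2 \leq \frac{2L}{\mu n}$, and taking the supremum proves $\Delta_F \leq \frac{2L}{\mu n}$.

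\textbf{Main subtlety.} The only non-trivial point is recognizing that the general $\FF$ bound requires nothing more than strong convexity paired with global Lipschitzness in $w$ (adjacency is not even used), whereas the $1/n$ improvement in the ERM case comes from the fact that $F(w,X)-F(w,X')$ has only one non-zero summand, which must then be coupled with the two-sided strong convexity trick (summing at both minimizers) to produce a Lipschitz-in-$w$ quantity rather than a Lipschitz-in-$x$ one. No smoothness is needed and the factor $2$ on the right is optimal up to constants.
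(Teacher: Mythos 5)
Your proof is correct, but it takes a different route than the paper. The paper first establishes a general perturbation-of-minimizers lemma (a generalization of Lemma 7 in \cite{chaud2011}): if $G$ is $\mu$-strongly convex and $g$ is $L_g$-Lipschitz, then the minimizers of $G$ and $G+g$ are within $L_g/\mu$ of each other; the proof goes through first-order optimality conditions at both minimizers and monotonicity of the subgradient map, followed by Cauchy--Schwarz. It then applies this with $G=F(\cdot,X)$ and $g=F(\cdot,X')-F(\cdot,X)$, which is $2L$-Lipschitz in general and $\tfrac{2L}{n}$-Lipschitz in the ERM case since only one summand survives. You instead argue at the level of function values: for $\FF$ you combine the strong-convexity growth bound at the minimizer with Lipschitzness of $F(\cdot,X)$ alone (correctly observing that adjacency is irrelevant for the crude $2L/\mu$ bound), and for $\FFE$ you sum the growth bounds at both minimizers so that the shared summands cancel and only the differing data point contributes, then invoke Lipschitzness in $w$ of the two single-sample losses. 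Both routes yield the same constants; yours is more elementary (no subgradient-monotonicity lemma needed), while the paper's lemma is stated for minimization over a general closed convex set and is reused verbatim for the regularized objectives $F_\lambda$, where the relevant minimizer is constrained to $B(0,R)$ — your two-sided argument also extends to that setting, since the growth bound $F(w')\geq F(w)+\tfrac{\mu}{2}\|w'-w\|_2^2$ holds at a constrained minimizer as well. One small fix: since functions in $\FF$ need not be differentiable, replace "$\nabla F(w,X)=0$" by "$0\in\partial F(w,X)$" (the unconstrained minimizer admits the zero subgradient), after which the strong-convexity growth inequality you use is exactly the subgradient version; also note both minimizers lie in $B(0,R)$ by the definition of $\MM_R$, so the restriction of Lipschitzness to $B(0,R)$ is harmless, as you implicitly use.
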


In \cref{app: sens lower bound}, we prove that \textit{the above sensitivity estimate for $\FFE$ is tight}: there exists $F \in \FFE$ such that $\Delta_{F} = \frac{L}{\mu n}$. However, there are some simple functions in $\FFE$ for which these estimates are not tight. For example, consider the following loss function: $f(w,x) = \frac{\beta}{2}\|Mw - x\|_2 ^2$ for a symmetric positive definite $d \times d$ matrix $M$ with 
$\beta\mathbf{I} \succcurlyeq  \beta M^2 \succcurlyeq \mu \mathbf{I}$ with $\XX \subseteq \WW  \subset \mathbb{R}^d$ and $\beta \gg \mu$. Then $f$ is $L := 2\beta R$-Lipschitz, and hence $F(w,X) = \frac{1}{n}\sum_{i=1}^n \frac{\beta}{2}\|Mw - x_{i}\|_2 ^2\in \FFE.$ In this case, by taking the gradient and computing $\ws(X) = M^{-1} \bar{X}$ directly (where $\bar{X} := \frac{1}{n} \sum_{i=1}^{n} x_{i}$), we can obtain the more precise sensitivity estimate $\Delta_{F} \leq \frac{2R \sqrt{\kappa}}{n}$ (where $\kappa = \frac{\beta}{\mu}$): for any $X, X' \in \XX^n$ such that $|X \Delta X'| = 2$ (say WLOG $x_{n} \neq x'_{n}$), we have \begin{align*}
    \| \ws(X) - \ws(X') \|_2 &= \|M^{-1} \bar{X} - M^{-1} \bar{X'}\|_2 \\
    &\leq \|M^{-1}\|_2 \|\bar{X} - \bar{X'}\|_2 \\
    &\leq \sqrt{\kappa} \left\|\frac{1}{n} (x_{n} - x'_{n})\right\|_2 \\
    &\leq \frac{2R \sqrt{\kappa}}{n}. 
\end{align*} 
Note that since $L = 2 \beta R$ for this $F$, we have $\Delta_{F} \leq \frac{L \sqrt{\kappa}}{\beta n} = \frac{L}{\sqrt{\beta \mu} n} \ll \frac{2L}{\mu n}$. 

\vspace{0.2cm}
Next, we establish excess risk upper bounds for our conceptual algorithm that depend on $\Delta_{F}$. 
\begin{proposition}
\label[proposition]{prop: sc ER} Let $F \in \FFE$ and $w_{\mathcal{A'}}(X)$ be the perturbed solution defined in \cref{eq: conceptual out pert proj}. \\
a) If $\delta = 0$ and the noise $z$ has the distribution in \cref{eq: noise for delta = 0}, then \[
\ERpro \leq \frac{L \Delta_{F} d}{\varepsilon}.\] 

b) If $\delta \in \left(0, \frac{1}{2}\right)$ and the noise $z$ has distribution in \cref{eq: noise for delta > 0}, then \[
\ERDpro \leq L \sqrt{d} \sig \leq \frac{L \sqrt{d}\left(2\cd + \sqrt{\varepsilon}\right)}{\sqrt{2}\varepsilon} \Delta_{F},\] where $\cd = \sqrt{\log\left(\frac{2}{\sqrt{16\delta + 1} - 1}\right)}.$
\end{proposition}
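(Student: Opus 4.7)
The plan is to reduce the excess risk to a bound on the norm of the noise vector $z$, by using Lipschitzness of $F(\cdot, X)$ and the non-expansiveness of the projection $\Pi_{\WW}$, and then compute (or bound) the expected norm of $z$ under each of the two noise distributions.

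First, since $F(\cdot, X)$ is $L$-Lipschitz on $B(0,R) = \WW$ and both $\ws(X)$ and $\wprpro(X) = \Pi_\WW(\ws(X) + z)$ lie in $\WW$, I would write
\[
F(\wprpro(X), X) - F(\ws(X), X) \le L\,\|\wprpro(X) - \ws(X)\|_2.
\]
Next, observe that $\ws(X) \in \WW$ implies $\ws(X) = \Pi_\WW(\ws(X))$; since the Euclidean projection onto the closed convex set $\WW$ is $1$-Lipschitz (non-expansive),
\[
\|\wprpro(X) - \ws(X)\|_2 = \|\Pi_\WW(\ws(X) + z) - \Pi_\WW(\ws(X))\|_2 \le \|z\|_2.
\]
Taking expectation over $z$, the excess risk is therefore bounded by $L\,\mathbb{E}\|z\|_2$, and it remains to compute or bound this expectation in each of the two cases.

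For part (a), the noise density in \cref{eq: noise for delta = 0} makes $\|z\|_2$ a $\Gamma(d, \Delta_F/\epsilon)$ random variable, as noted in the footnote following that equation, so $\mathbb{E}\|z\|_2 = d\Delta_F/\epsilon$, yielding the claimed bound $L\Delta_F d/\epsilon$. For part (b), $z \sim N(0, \sigma^2 I_d)$ with $\sigma = \frac{\cd + \sqrt{\cd^2 + \epsilon}}{\sqrt{2}\,\epsilon}\,\Delta_F$, so Jensen's inequality gives
\[
\mathbb{E}\|z\|_2 \le \sqrt{\mathbb{E}\|z\|_2^2} = \sigma\sqrt{d} = \sqrt{d}\,\sig.
\]
This establishes the first inequality in (b); for the second, I would simply use the subadditivity of $\sqrt{\cdot}$, namely $\sqrt{\cd^2 + \epsilon} \le \cd + \sqrt{\epsilon}$, to get $\cd + \sqrt{\cd^2 + \epsilon} \le 2\cd + \sqrt{\epsilon}$, which substituted into $\sigma$ gives the stated upper bound $\frac{L\sqrt{d}(2\cd + \sqrt{\epsilon})}{\sqrt{2}\,\epsilon}\Delta_F$.

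There is no real obstacle here: the argument is essentially a three-line reduction (Lipschitz $\to$ non-expansive projection $\to$ expected norm of noise). The only points requiring care are (i) verifying that $\ws(X) \in \WW$, which follows from $F \in \mathcal{M}_R$, and (ii) correctly identifying the law of $\|z\|_2$ in case (a) as a Gamma distribution so that its mean is $d\Delta_F/\epsilon$ (rather than, say, bounding it loosely via a tail estimate).
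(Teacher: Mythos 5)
Your proof is correct and follows essentially the same route as the paper: reduce to $L\,\mathbb{E}\|z\|_2$ via Lipschitzness and the projection bound, then compute $\mathbb{E}\|z\|_2$ from the Gamma law in case (a) and via Jensen plus $\mathbb{E}\|z\|_2^2 = d\sigma^2$ in case (b). The only cosmetic difference is that you invoke non-expansiveness of $\Pi_\WW$ using $\ws(X)=\Pi_\WW(\ws(X))$, whereas the paper derives $\|\Pi_\WW(\ws(X)+z)-\ws(X)\|_2\le\|z\|_2$ directly from the variational characterization of the projection plus Cauchy--Schwarz; these are equivalent.
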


Plugging the sensitivity estimates from \cref{prop:sc sensitivity} into \cref{prop: sc ER} yields excess risk upper bounds for the class $\FFE:$

\begin{corollary}
\label[corollary]{cor: sc upper bound}
Let $F \in \FFE.$ \\
a) If $\delta = 0,$ then \[
\ERpro \leq \frac{2 L^2 d}{\mu \varepsilon n}.\] \\
b) If $\delta \in \left(0, \frac{1}{2}\right),$ then \\
\[
\ERDpro \leq \frac{\sqrt{2} L^2}{\mu} \left(\lpen\right) \leq \frac{\sqrt{2} L^2}{\mu} \frac{\sqrt{d}(2\cd + \sqrt{\varepsilon})}{\varepsilon n}. 
\]

Moreover, if $\sqrt{\varepsilon} \lesssim \cd$, then \[
\ERDpro \lesssim \frac{L^2}{\mu} \frac{\sqrt{d} \cd}{\varepsilon n}.\]
\end{corollary}

Notice that in the practical regime $\delta \ll 1$, the condition $\sqrt{\varepsilon} \lesssim \cd$ is only restrictive if $\varepsilon > 1$. In this regime, other algorithms (e.g. gradient descent output perturbation \cite{zhang2017}, noisy SVRG \cite{wang2017}) that use the alternative form of Gaussian noise are not even differentially private in general, as shown in \cite[Theorem 1]{zhao2019}. For $\varepsilon \leq 1$, the condition $\sqrt{\varepsilon} \lesssim \cd$ automatically holds for any $\delta \in \left(0, \frac{1}{2}\right)$: indeed, $\sqrt{\varepsilon} \leq \cd$ if $\varepsilon \leq 1$ and $\delta \leq \frac{1}{9}.$

\begin{remark} 
Consider the trivial algorithm that outputs some fixed $w_{0} \in \WW$ (regardless of the input data or function), which has excess risk $\leq LR.$ Then the above excess risk upper bounds (and indeed all other upper bounds in this paper) can be written as $\min\{LR, ...\}$. Thus, the upper bounds given in \cref{cor: sc upper bound} for $\FFE$ are non-trivial only if $\pen < 1$ (for $\delta = 0$) or $\lpen < 1$ (for $\delta > 0$).

\end{remark}

\begin{remark}[Suboptimality of output perturbation for non-smooth strongly convex ERM]
The private excess empirical risk lower bounds for $\FFE$ are $\Omega\left(LR\min\left\{1, \left(\pen\right)^2\right\}\right)$ for $\delta = 0$ and $\Omega\left(LR\min\left\{1, \frac{d}{(\varepsilon n)^2}\right\}\right)$ for $\delta = o(\frac{1}{n})$ \cite{bst14}. 
\end{remark}

\vspace{0.2cm}

In the ERM setting, \cite{bst14} proposes algorithms that nearly achieve the excess risk lower bounds. For $\delta = 0$, exponential mechanism with localization gives excess empirical risk of $O\left(\lmu \left(\pen\right)^2 \log(n)\right)$, which is loose by a factor of $O\left(\log(n) \frac{L}{\mu R}\right)$ \cite{bst14}. Compared to this upper bound, our ERM upper bound is loose by a factor of $O\left(\frac{d}{\varepsilon n}\right)$. However, our algorithm has some major advantages over their algorithm. First of all, our resulting population loss bounds beat theirs (by a logarithmic factor) (see \cref{Section 4: pop loss}). Second, the  algorithm proposed in \cite{bst14} is impractical due to lack of implementation details for the localization/output perturbation step and large computational cost \footnote{The runtime of the exponential sampling step with input $\mathcal{W}_0$ from the localization step (assuming the localization/output perturbation step is done) is $\widetilde{O}\left(\left(\frac{L}{\mu}\right)^2  \frac{n d^9}{\varepsilon^2} \max\left\{1, \frac{L}{\mu}\right\}\right)$ in the worst case; it improves by a factor of $d^3$ if the convex set resulting from localization is in isotropic position \cite[Theorem 3.4]{bst14}. If one uses our methods to do that step efficiently, say via SGD (\cref{section 3: implment}), then the total runtime of their full algorithm (for general $\WW$) would be $\widetilde{O}\left(\left(\frac{L}{\mu}\right)^2  \frac{d^9}{\varepsilon^2} \max\{1, \frac{L}{\mu}\} + n^2 d\right),$ which is always larger than the runtime of our method.}. On the other hand, we will show in the next section that our algorithm can be implemented essentially as fast as any non-private algorithm. 
For $\delta > 0$, noisy SGD is an efficient practical algorithm, that can nearly (up to poly$\log(n)$ factor) achieve the ERM lower bound in runtime $O(n^2 d)$, but it does not apply for arbitrary $\varepsilon > 0$ due to the suboptimal choice of Gaussian noise \cite[Section 2]{bst14}. In fact, their method requires $\varepsilon \leq 2 \sqrt{\log\left(\frac{1}{\delta}\right)}$, which restricts the level of excess risk that can be achieved. Furthermore, output perturbation has the benefit of simplicity and is easier (and computationally more efficient, as we will see in \cref{section 3: implment}) to implement, while also leading to population loss bounds (\cref{Section 4: pop loss}) that outperform the bounds given in \cite[Appendix F]{bst14}. For these reasons, we consider our algorithm to be a competitive choice for $\FFE$.

\vspace{.2cm}
Next, we show how to improve the above results and achieve near-optimal excess risk by adding a smoothness assumption.  

\subsubsection{Excess risk bounds for smooth, strongly convex, Lipschitz functions.}
\label[section]{subsection 2.5: smooth sc}
While $\beta$-smoothness does not improve the sensitivity estimates in \cref{prop:sc sensitivity} (as the sensitivity estimates are tight by \cref{app: sens lower bound}), it can be used to yield stronger excess risk bounds, which are nearly optimal in the ERM setting. Consider the smooth, strongly convex, Lipschitz function class

\[
\HHE := \{F \in \FFE : F(\cdot, X) \text{~is} ~\beta\text{-smooth} \text{~for all} ~X \in \XX^n  \}.\] For $F \in \HH$, denote the condition number of $F$ by $\kappa = \frac{\beta}{\mu}.$ Our main result in this subsection is \cref{cor: smooth sc fund ER bounds}. 
For $F$ in the ERM class $\HHE$, we bound excess empirical risk by $O\left(\kappa \lmu \left(\pen\right)^2 \right)$ for $\delta = 0$, assuming $d < \varepsilon n.$ Our bound is $\widetilde{O}\left(\kappa \lmu \frac{d}{\varepsilon^2 n^2}\right)$ for $\delta > 0$, assuming $d < \varepsilon^2 n^2$. We obtain these results through the conceptual output perturbation algorithm $\mathcal{A}$ \textit{without projection}, defined in \cref{eq: conceputal output pert}. As a first step, we obtain an analogue of \cref{prop: sc ER} for $\beta$-smooth functions:
 
\begin{proposition}
\label[proposition]{lem: smooth ER}
Suppose $F(\cdot, X)$ is $\beta$-smooth for all $X \in \XX^{n}$ with $L_2$ sensitivity $\Delta_{F}.$ 

a) If $\delta = 0,$ then \[
\ER \leq \frac{\beta}{2} \frac{d(d+1) \Delta_{F}^2}{\varepsilon^2} \leq \frac{\beta d^2 \Delta_{F}^2}{\varepsilon^2}.\]

b)If $\delta \in (0, \frac{1}{2}),$ then 
\[\ERD \leq \frac{\beta}{2} d \sigma^2 = 
\frac{\beta}{4} \left(\frac{\cd + \sqrt{\cd^2 + \varepsilon}}{\varepsilon}\right)^2 \Delta_{F}^2,\]
where $\cd = \sqrt{\log\left(\frac{2}{\sqrt{16\delta + 1} - 1}\right)}.$
\end{proposition}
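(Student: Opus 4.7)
The plan is to exploit $\beta$-smoothness combined with the fact that $\ws(X)$ is an unconstrained minimizer of $F(\cdot,X)$ (so that $\gx F(\ws(X), X) = 0$), which reduces the excess risk bound to a second-moment computation of the noise $\|z\|_2^2$ in each case.

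First, by the standard quadratic upper bound from $\beta$-smoothness applied at the point $\ws(X)$, for any $w \in \mathbb{R}^d$ I have
\begin{equation*}
F(w, X) - F(\ws(X), X) \leq \langle \gx F(\ws(X), X), w - \ws(X)\rangle + \frac{\beta}{2}\|w - \ws(X)\|_2^2 = \frac{\beta}{2}\|w - \ws(X)\|_2^2,
\end{equation*}
where the equality uses the first-order optimality condition $\gx F(\ws(X), X) = 0$ (valid because $\ws(X) \in \argmin_{w \in \mathbb{R}^d} F(w,X)$ and $F(\cdot, X)$ is differentiable by $\beta$-smoothness). Substituting $w = \wpr(X) = \ws(X) + z$ gives the pointwise bound $F(\wpr(X), X) - F(\ws(X), X) \leq \tfrac{\beta}{2}\|z\|_2^2$. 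Taking expectations over the only source of randomness (namely $z$) yields
\begin{equation*}
\ER \leq \frac{\beta}{2}\,\mathbb{E}\bigl[\|z\|_2^2\bigr].
\end{equation*}

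It then remains to evaluate $\mathbb{E}\|z\|_2^2$ under the two noise distributions. For the case $\delta = 0$, the footnote following \cref{subsection 2.2} tells us that $\|z\|_2 \sim \Gamma\!\bigl(d, \tfrac{\Delta_F}{\epsilon}\bigr)$, so using the mean and variance of the Gamma distribution,
\begin{equation*}
\mathbb{E}\|z\|_2^2 = \mathrm{Var}(\|z\|_2) + (\mathbb{E}\|z\|_2)^2 = d\left(\frac{\Delta_F}{\epsilon}\right)^2 + d^2\left(\frac{\Delta_F}{\epsilon}\right)^2 = \frac{d(d+1)\Delta_F^2}{\epsilon^2},
\end{equation*}
which upon substitution gives exactly the stated bound $\frac{\beta d(d+1)\Delta_F^2}{2\epsilon^2} \leq \frac{\beta d^2 \Delta_F^2}{\epsilon^2}$. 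For the case $\delta \in (0, \tfrac{1}{2})$, the noise is $z \sim N(0, \sigma^2 I_d)$ with $\sigma^2 = \bigl(\tfrac{\cd + \sqrt{\cd^2 + \epsilon}}{\sqrt{2}\epsilon}\bigr)^2 \Delta_F^2$, so $\mathbb{E}\|z\|_2^2 = d\sigma^2$, yielding $\ERD \leq \tfrac{\beta}{2} d\sigma^2 = \tfrac{\beta d}{4}\bigl(\tfrac{\cd + \sqrt{\cd^2 + \epsilon}}{\epsilon}\bigr)^2 \Delta_F^2$.

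There is no genuine obstacle here; the only subtle point is that the argument crucially uses that we apply the conceptual algorithm $\mathcal{A}$ (without projection) rather than $\mathcal{A}'$, since otherwise the quadratic smoothness bound at $\ws$ with zero gradient would not directly yield $\frac{\beta}{2}\|z\|_2^2$ (projection could only help, but the clean identity relies on unprojected noise). This is consistent with the discussion preceding the proposition that in the smooth case projection is unnecessary.
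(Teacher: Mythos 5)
Your proof is correct and is essentially the same as the paper's: apply the descent lemma at $\ws(X)$, observe that the linear term vanishes, and reduce the excess-risk bound to $\tfrac{\beta}{2}\mathbb{E}\|z\|_2^2$, which is then evaluated using the Gamma (or Gaussian) second-moment identities in \cref{lem: expec of gamma and gauss}. The only slight stylistic difference is that you kill the linear term pointwise via the first-order optimality condition $\nabla F(\ws(X), X) = 0$ (valid since $\ws(X)$ is an unconstrained minimizer), while the paper takes the expectation of the descent-lemma inequality and the linear term drops for the same reason; both are equally valid. One small bonus of your write-up: in the $\delta>0$ case you correctly obtain $\tfrac{\beta d}{4}\bigl(\tfrac{\cd + \sqrt{\cd^2 + \epsilon}}{\epsilon}\bigr)^2 \Delta_F^2$, which matches the intermediate quantity $\tfrac{\beta}{2}d\sigma^2$; the final expression printed in the proposition statement appears to have dropped the factor of $d$, a typo in the paper that your computation flags.
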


Combining \cref{lem: smooth ER} with \cref{prop:sc sensitivity} yields the following near-optimal excess risk guarantees:

\begin{corollary} 
\label[corollary]{cor: smooth sc fund ER bounds}
Let $w_{\mathcal{A}}(X)$ be the perturbed solution defined in \cref{eq: conceputal output pert}.

a) Suppose $\delta = 0$ and the noise $z$ has the distribution in \cref{eq: noise for delta = 0}. \\
Then \[
\ER \leq 4 \kappa \frac{L^2}{\mu} \left(\frac{d}{\varepsilon n}\right)^2.\]

b) Suppose $\delta \in (0, \frac{1}{2})$ and the noise $z$ has the distribution in \cref{eq: noise for delta > 0}. Then \[\ERD \leq 4 \kappa \frac{L^2}{\mu} \left(\frac{\sqrt{d}}{\varepsilon n}\right)^2 \left(\cd + \sqrt{\cd^2 + \varepsilon}\right)^2 
\leq 4 \kappa \frac{L^2}{\mu} \left(\frac{\sqrt{d}}{\varepsilon n}\right)^2 \left(\cd + \sqrt{\varepsilon}\right)^2.\]
Furthermore, if $\sqrt{\varepsilon} \lesssim \cd$, then \[\ERD \lesssim \kappa \frac{L^2}{\mu} \left(\frac{\sqrt{d} \cd}{\varepsilon n}\right)^2.\]

\end{corollary}

\begin{remark}
\label[remark]{rem: min H}
Since $\HHE \subseteq \FFE$, by \cref{cor: sc upper bound}, we obtain the following upper bounds on the excess risk of functions in $\HHE$: 
$\ER \lesssim \min \left\{LR,  \frac{L^2}{\mu} \frac{d}{\varepsilon n}, \kappa \frac{L^2}{\mu} \left(\frac{d}{\varepsilon n}\right)^2 \right \}$ for $\delta = 0$, and
$\ERD \lesssim \min \left\{LR, \frac{L^2}{\mu} \frac{\sqrt{d}(\cd + \sqrt{\varepsilon})}{\varepsilon n}, \kappa \frac{L^2}{\mu} \frac{d (\cd + \sqrt{\varepsilon})^2}{(\varepsilon n)^2} \right\}$ for $\delta > 0.$ (Here $\mathcal{A}$ refers to whichever conceptual algorithm yields smaller excess risk; it may or may not involve projection after perturbation.) Furthermore, as explained in earlier remarks, for $\delta \ll 1$ or $\varepsilon \leq 1,$ we have $\sqrt{\varepsilon} \lesssim \cd,$ so that the upper bounds for $\HHE$ can be written more simply as $\widetilde{O}\left(\min\left\{LR, \lmu \frac{\sqrt{d}}{\varepsilon}, \kappa \lmu \frac{d}{\varepsilon^2} \right\}\right)$ and $\widetilde{O}\left(\min\left\{LR, \lmu \frac{\sqrt{d}}{\varepsilon n}, \kappa \lmu \frac{d}{\varepsilon^2 n^2} \right\}\right)$ for $\delta = 0$  and $\delta > 0$, respectively.
\end{remark}

\vspace{0.2cm}
The excess risk lower bounds established in \cite{bst14} for $\FFE$ with $L \approx \mu R$ also hold for $\HHE$ \textit{if} $\beta \approx \mu$ (i.e. $\kappa = O(1)$) since their $L$-Lipschitz $\mu$-strongly convex hard instance is also $\mu$-smooth. Intuitively, it would seem that the class $\HHE$ with $\beta > \mu$ is at least as hard to privately optimize than $\mathcal{H}^{ERM}_{\mu, \mu, L}$, but this is not proved in \cite{bst14}. Our next result confirms that this intuition does indeed hold:

\begin{proposition}
\label[proposition]{prop: lower bound for HHE}
Let $\beta, \mu, R, \varepsilon > 0,$ and $n, d \in \mathbb{N}$. Set $\XX = B(0,R) \subset \mathbb{R}^d.$ For every $\varepsilon$-differentially private algorithm $\mathcal{A}$ with output $w_{\mathcal{A}}$, there exists a data set $X \in \XX^n$ and a function $F: \mathbb{R}^d \times \XX^n \to \mathbb{R}$ such that $F \in \HHE$ for $L = 2 \beta R$ and \[
\ER = \Omega\left(LR \min \left\{1,\left(\pen\right)^2 \right\}\right).
\]
Moreover, if $\delta = o(1/n)$, then for every $(\varepsilon, \delta)$-differentially private algorithm $\mathcal{A},$ there exists a data set $X \in \XX^n$ such that 
\[
\ER = \Omega\left(LR \min \left \{1, \left(\frac{\sqrt{d}}{\varepsilon n}\right)^2 \right\}\right).
\]
\end{proposition}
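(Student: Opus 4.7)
The plan is to give a direct fingerprinting-based lower bound on a quadratic hard instance that, by design, already lies in $\HHE$. The key observation (foreshadowed by the excerpt's informal discussion) is that a pure quadratic with Hessian $\beta I$ is simultaneously $\beta$-smooth and $\beta$-strongly convex, hence also $\mu$-strongly convex for every $\mu\le\beta$; thus it belongs to $\mathcal{H}^{ERM}_{\beta,\beta,L,R}\subseteq \HHE$. This reduces the task to the same one-way marginal lower bound that drives the $\FFE$ bound in \cite{bst14}.

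Concretely, I set $c:=R/\sqrt{d}$, let $\XX_0:=\{-c,+c\}^d\subset B(0,R)=\XX$, and consider
\[
f(w,x):=-\beta\langle x,w\rangle+\tfrac{\beta}{2}\|w\|_2^2,\qquad F(w,X):=\tfrac{1}{n}\sum_{i=1}^n f(w,x_i).
\]
A direct check gives $\nabla_w^2 f\equiv \beta I$ (so $F$ is $\beta$-smooth and $\mu$-strongly convex for every $\mu\le\beta$), $\nabla_w f(w,x)=\beta(w-x)$ (so $F$ is $L$-Lipschitz on $B(0,R)$ with $L=2\beta R$), and $\ws(X)=\bar{x}:=\tfrac{1}{n}\sum_i x_i$ satisfies $\|\ws(X)\|_2\le c\sqrt{d}=R$; therefore $F\in\HHE$. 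Since $F$ is an exact quadratic, $F(w,X)-F(\ws,X)=\tfrac{\beta}{2}\|w-\ws(X)\|_2^2$.

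Next, I invoke the fingerprinting lower bound. Because $\ws(X)=\bar{x}$, any $(\epsilon,\delta)$-DP algorithm $\mathcal{A}$ producing $\wpr$ is simultaneously an $(\epsilon,\delta)$-DP estimator of the mean of an i.i.d.\ sample from the product distribution on $\XX_0$. Applying the standard one-way-marginal lower bound (as used in \cite{bst14}, Appendix~D) to $X\sim\mathrm{Unif}(\XX_0^{\,n})$ and rescaling the $\{\pm1\}^d$ bound by $c^2$ produces a data set on which
\[
\mathbb{E}_{\mathcal{A}}\|\wpr(X)-\bar{x}\|_2^2 \;\gtrsim\; c^2\min\!\bigl\{d,\,d^3/(\epsilon n)^2\bigr\}\;=\;R^2\min\!\bigl\{1,\,(d/(\epsilon n))^2\bigr\}
\]
for pure $\epsilon$-DP, and the analogous $R^2\min\{1,(\sqrt{d}/(\epsilon n))^2\}$ for $(\epsilon,\delta)$-DP with $\delta=o(1/n)$. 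Multiplying by $\beta/2$ and using $\beta R^2=LR/2$ yields the stated bounds in both privacy regimes.

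The main subtlety is the simultaneous saturation of three scalings: the data must fit in $\XX=B(0,R)$, the minimizer $\ws(X)$ must lie in $B(0,R)$ so that $F\in\mathcal{M}_R$, and the Lipschitz constant must equal exactly $L=2\beta R$. The single choice $c=R/\sqrt{d}$ together with the coefficient $\beta$ in front of $\langle x,w\rangle$ makes all three tight simultaneously. The hard-distribution lower bound itself is the same fingerprinting mechanism already invoked in \cite{bst14} for $\FFE$; the novel content of this proposition is only the observation that the associated quadratic hard instance, when its strong-convexity parameter is chosen to equal the smoothness parameter $\beta$, is automatically $\beta$-smooth and therefore already lies in the smaller class $\HHE$.
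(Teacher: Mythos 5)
Your proof is correct and proves the proposition as stated, and it is essentially a streamlined version of the paper's own argument. Both proofs build the same kind of quadratic ERM hard instance and appeal to the same mean-estimation lower bound from~\cite{bst14} (Lemma~5.1 there / the ``one-way marginal'' bound you cite). The only structural difference is that the paper works with the general instance $f(w,x)=\tfrac12\|Mw-x\|_2^2$ for a symmetric $M$ with $\tfrac{\mu}{\beta}I\preccurlyeq M^2\preccurlyeq I$, which requires a short post-processing argument to transfer the lower bound on $\|\mathcal{A}(X)-\bar X\|$ to $\|M\mathcal{A}(X)-\bar X\|$; you specialize to $M=I$ and avoid that step entirely. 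Your justification for why $M=I$ suffices---that $\beta$-strong convexity trivially implies $\mu$-strong convexity whenever $\mu\le\beta$, so the isotropic quadratic is already a member of $\HHE$---is the right observation and makes the argument shorter.

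One caveat worth flagging: the paper's freedom in choosing $M$ (with eigenvalues of $M^2$ spanning $[\mu/\beta,1]$) lets it exhibit a hard instance whose condition number is exactly $\kappa=\beta/\mu$, which speaks more directly to the remark immediately preceding the proposition (``the class $\HHE$ with $\beta>\mu$ is at least as hard to privately optimize''). Your $M=I$ instance has condition number $1$, which still lies inside $\HHE$ and still yields the same $\Omega\bigl(LR\min\{1,(d/\epsilon n)^2\}\bigr)$ bound, but it does not produce a genuinely ill-conditioned hard case. This does not affect the correctness of the proposition as literally stated---only which hard instance witnesses it---so I would count this as a minor expositional difference, not a gap.
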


\vspace{0.2cm}
For $\HHE,$ there are several competing differentially private algorithms. A summary is given in \cref{table:ERM delta = 0} and \cref{table: ERM delta > 0} in \cref{sec: Appendix A: tables} and a more detailed discussion is provided in \cref{section 3: implment}, after we derive runtime bounds. Here we note that we are not aware of any algorithm that is able to exactly achieve the lower bounds for $\HHE$. The upper bounds in \cite{bst14} for $\FFE$ do not benefit from adding the smoothness assumption and therefore are still suboptimal by a factor of $O\left(\log(n) \frac{L}{\mu R}\right)$ for the class $\HHE$ in general. Similarly, for $\delta > 0,$ both \cite{bst14} and \cite{wang2017} obtain excess risk upper bounds of $O\left(\lmu \frac{d}{(\varepsilon n)^2} \log(n)\right)$ for restricted ranges of $\varepsilon.$ Thus, \textit{when $\kappa \lesssim \log(n)$, our upper bounds for $\HHE$ are the tightest we are aware of}. For $\delta > 0$, we also are the only method that is differentially private without restrictions on the $\varepsilon$. Furthermore, we show in \cref{section 3: implment} that \textit{our runtime is also faster} than both of these competitors, and in \cref{Section 4: pop loss}, we show that our method yields tighter population loss bounds.

\vspace{0.2cm}
We end this subsection by returning to the example introduced earlier and using the tight sensitivity bounds obtained earlier together with the results of this section to get tight excess risk bounds. Recall the function of interest is $f(w,x) = \frac{\beta}{2}\|Mw - x\|_2 ^2$ for symmetric positive definite $M$ with $\beta \mathbf{I} \succcurlyeq \beta M^2 \succcurlyeq \mu \mathbf{I}$ with $\WW = \XX = B_{2}(0, R) \subseteq \mathbb{R}^d$. For this function, we showed that $\Delta_{F} \leq \frac{2R \sqrt{\kappa}}{n} = \frac{L}{\sqrt{\beta \mu} n}$. Then applying \cref{lem: smooth ER} gives the excess risk upper bounds $O\left(\frac{L^2}{\mu} \left(\pen\right)^2\right)$ for $\delta = 0$ and $\widetilde{O}\left(\frac{L^2}{\mu} \frac{d}{(\varepsilon n)^2}\right)$ for $\delta > 0$. These nearly optimal (up to $\frac{L}{\mu R}$ factor) upper bounds are tighter than those of any other competitor we are aware of. This example demonstrates how our general framework and analysis can result in tighter bounds than the general upper bounds given when specialized to specific functions or subclasses.

\vspace{0.2cm}
In the next subsection, we conduct a similar analysis under the weakened assumption that $F(\cdot, X)$ is merely convex and $L$-Lipschitz. 

\subsection{Excess risk bounds for convex, Lipschitz functions}
\label[section]{subsection 2.4 convex lip}
Consider the convex, Lipschitz function classes
\[\GG = \{F \in \MR \, | \ F(\cdot, X) \text{~is} ~L\text{-Lipschitz on}~\WW ~\text{and convex for all} ~X \in \XX^n\};\]
and 
\[
\GGE := \{F \in \GG \colon \exists f: \mathbb{R}^d \times \XX \to \mathbb{R} ~\text{s.t.} ~F(w,X) = \frac{1}{n}\sum_{i=1}^{n} f(w, x_{i}) ~\forall X \in \XX^n\}.\]
Observe that this definition implies that $f(\cdot,x)$ must be convex for all $x \in \XX$. \footnote{The proof of this fact is very similar to the proof of the comparable assertion for strong convexity, given in \cref{subsection 2.3: sc Lip ER bounds}.} 

\vspace{0.2cm}
Note that upper bounding the sensitivity in \cref{prop:sc sensitivity} requires strong convexity. So it is not immediately clear how to derive output perturbation excess risk bounds for functions in $\GG.$ To do so, we consider the regularized objective $F_{\lambda}(w,X) := F(w,X) + \frac{\lambda}{2} \|w\|_{2}^2.$ Note that $\Fl(\cdot, X)$ is $(L + \lambda R)$-Lipschitz on $\WW$ and $\lambda$-strongly convex on $\mathbb{R}^d$ for any $\lambda > 0$. Now we can run our conceptual output perturbation algorithm on $\Fl,$ and use our results from the previous section. Our conceptual (regularized) output perturbation algorithm is then defined as \begin{equation}
\label{regularized output pert - conceptual alg}
\mathcal{A'}_{\lambda}(X) = w_{\mathcal{A'}, \lambda}(X) := \Pi_{\WW}(\wls(X) + \zl),
\end{equation}
where $\wls(X) = \arg\min_{w \in \WW} \Fl(w,X)$ and $\zl$ is a random noise vector with density \[ p_{\zl}(t) \propto
\begin{cases} 
\exp\left\{-\frac{\varepsilon \|t\|_{2}}{\Delta_{\lambda}}\right\} &\mbox{if } \delta = 0 \\
\exp\left\{-\frac{\varepsilon^2 \|t\|_{2}^2}{\Delta_{\lambda}^2 (\cd + \sqrt{\cd^2 + \varepsilon})^2}\right\} &\mbox{if } \delta \in (0, \frac{1}{2}).
\end{cases} \]
Here $\Delta_{\lambda} :=  \sup_{X, X' \in  |X \Delta X'| \leq 2}\|w_{\lambda}^{*}(X) - w_{\lambda}^*(X')\|_{2}$ is the $L_2$ sensitivity of $\Fl$ and $\cd = \sqrt{\log\left(\frac{2}{\sqrt{16\delta + 1} - 1}\right)}.$
For $\delta > 0$, note that the noise $\zl \sim N(0, \sigma_{\lambda}^2 I_{d})$, where $\sigma_{\lambda} = \left(\frac{\cd + \sqrt{\cd^2 + \varepsilon}}{\sqrt{2} \varepsilon}\right) \Delta_{\lambda}.$ Observe that by \cref{prop:sc sensitivity}, $\Dl \leq \frac{2(L+ \lambda R)}{\lambda}$ if $F \in \GG$ and $\Dl \leq \frac{2(L+ \lambda R)}{\lambda n}$ if $F \in \GGE.$ If one does not have access to $\Dl$ or does not wish to compute it, for fixed $\lambda > 0$ one can replace $\Dl$ in the algorithm by these upper bounds and the following results in this section all still hold. 

\vspace{0.2cm}
Our main result in this subsection is \cref{prop: convex ER}, in which we establish upper bounds on the excess risk of the regularized output perturbation method for $\GGE,$ of order $O\left(LR \sqrt{\pen}\right)$ for $\delta = 0$ and $\widetilde{O}\left(LR \sqrt{\frac{d^{1/2}}{\varepsilon n}}\right)$ when $\delta \in (0, \frac{1}{2})$.

\vspace{0.2cm}
First, by our choice of noise here in terms of the sensitivity of $\Fl,$ the same argument used to prove \cref{conceptual sc alg is private} establishes that this algorithm is differentially private: 
\begin{proposition}
\label[proposition]{prop: reg output pert is private}
The regularized output perturbation algorithm $\mathcal{A'}_{\lambda}$ described above is $(\varepsilon, \delta)$-differentially private for any $\lambda > 0$. 
\end{proposition}

Next, we establish excess risk upper bounds for the classes $\GG$ and $\GGE$. 
\begin{proposition}
\label[proposition]{prop: convex ER} 
Let  $F \in \GGE$. The regularized output perturbation method defined in \cref{regularized output pert - conceptual alg} with $\lambda$ prescribed below (and outputs denoted by $\wlapro$ and $\wlapro^{\delta}$ for $\delta = 0$ and $\delta \in (0, \frac{1}{2}),$ respectively) achieves each of the following excess risk bounds: \\
Set $\lambda = \frac{L}{R \sqrt{1 + \frac{\varepsilon n}{d}}}.$ \\
a) Assume $\delta = 0$ and $\pen \leq 1.$ Then \[
\ERLpro \leq 8.5 LR \sqrt{\pen}.\]
b) Assume $\delta \in (0, \frac{1}{2})$ and $\lpen \leq 1.$ 
Then \[
\ERDLpro \leq 8.5 LR \left(\lpen\right)^{1/2}.\]
Furthermore, if $\sqrt{\varepsilon} \lesssim \cd,$ then \[
\ERDLpro \lesssim LR \left(\frac{d^{1/4} \sqrt{\cd}}{\sqrt{\varepsilon n}}\right).\]
\end{proposition}

Note that the above assumptions on the parameters are necessary in each case for the excess risk upper bounds to be non-trivial (i.e. $< LR$). Also, as noted in the previous subsection, the condition $\sqrt{\varepsilon} \lesssim \cd$ is only (formally) restrictive (requiring $\delta \ll 1$) if $\varepsilon > 1.$ For practical privacy assurances, $\delta \ll 1$ is necessary, so this condition is easily satisfied. 

\vspace{0.2cm}
The private excess empirical risk lower bounds for $\GGE$ are: $\Omega\left(LR \min \left\{1, \pen \right\}\right)$ for $\delta = 0$, and  \newline $\Omega\left(LR \min \left\{1, \frac{\sqrt{d}}{\varepsilon n}\right\}\right)$ for $\delta = o(\frac{1}{n})$ \cite{bst14}. 
The exponential mechanism nearly achieves the lower bound for $\delta = 0$ \cite{bst14}. Thus, for $\delta = 0$, we see that our above upper bound is loose by a factor of roughly $O\left(\sqrt{\pen}\right)$ in the non-trivial regime ($d < \varepsilon n$). For $\delta > 0,$ the noisy stochastic subgradient descent method of \cite{bst14} nearly (up to a poly$\log(n)$ factor) achieves the lower bound for $\GGE,$ provided $\varepsilon \leq 2 \sqrt{\log\left(\frac{1}{\delta}\right)}.$ Hence for $\delta > 0$, our upper bound is loose by a factor of roughly $\frac{d^{1/4}}{\sqrt{\varepsilon n}}$ in the non-trivial regime $\sqrt{d} < \varepsilon n$. However, we show in \cref{Section 4: pop loss} that our algorithm again yields tighter population loss bounds than \cite{bst14} for convex Lipschitz loss, despite the looser empirical loss bounds. This fact, in conjunction with the simplicity and faster runtime of our output perturbation method (discussed in \cref{section 3: implment}), and the freedom to choose $\varepsilon > 2 \sqrt{\log\left(\frac{1}{\delta}\right)}$ and $\delta > 0~$ makes our method an appealing alternative for both $\GG$ and $\GGE.$

\vspace{0.2cm}
Next, we add the assumption of $\beta$-smoothess in order to derive tighter excess risk bounds.

\subsubsection{Excess risk bounds for smooth, convex, Lipschitz functions.}
\label[section]{subsection 2.6: convex smooth}
The function class we consider now is:
\[\JJE := \{F \in \GGE \, | \ F(\cdot, X) ~\text{is} ~\beta\text{-smooth} ~\text{for all} ~X \in \XX^{n} \}.\]
For such smooth, convex functions, we combine the techniques developed previously to derive excess risk bounds. More specifically, the regularization technique from \cref{subsection 2.4 convex lip} in conjunction with the descent lemma will yield  \cref{prop: convex smooth ER}, which gives excess risk upper bounds of order
$O\left(\blrt \left(\pen\right)^{2/3}\right)$ for $\JJE$ when $\delta = 0$ and $O\left(\blrt \left( \lpen\right)^{2/3}\right)$ when $\delta \in (0, 1/2)$.

\vspace{.2cm}
While our ERM results are not optimal for the convex class\footnote{Recall that the excess risk lower bounds for $\JJE$ are $\Omega\left(\min\left\{LR, LR \pen\right\}\right)$ and $\Omega\left(\min\left\{LR, LR \frac{\sqrt{d}}{\varepsilon n}\right\}\right)$ for $\delta = 0$ and $\delta > 0$, respectively \cite{bst14}.}, the speed and simplicity of our algorithm makes it the most practical choice for high-dimensional/large-scale problems, particularly when $\delta = 0$ and there are no other non-trivial algorithms that can be executed as fast as ours. We also are able to use the results proved in this section to get strong (in fact, the strongest we are aware of for $\delta = 0$) private population loss bounds in \cref{Section 4: pop loss}. 
\vspace{0.2cm}
For any $F \in \JJ$, consider the regularized objective $\Fl(w,X) = F(w,X) + \frac{\lambda}{2} \|w\|_2^2,$ as defined earlier. Note that $\Fl(\cdot, X)$ is $(\beta + \lambda)$-smooth in addition to being $(L + \lambda R)$-Lipschitz (on $\WW$) and $\lambda$-strongly convex. We execute the following conceptual algorithm: 
\begin{equation}
\label{regularized output pert - no proj}
\mathcal{A}_{\lambda}(X) = w_{\mathcal{A}, \lambda}(X) := \wls(X) + \zl.
\end{equation}
That is, we run output perturbation on the regularized objective (\textit{without projection}). We can then apply the results from \cref{subsection 2.3: sc Lip ER bounds} to $\Fl$ to obtain excess risk bounds for $F$. The method in \cref{regularized output pert - no proj} is differentially private by the same reasoning used to prove \cref{conceptual sc alg is private}. Our main results of this subsection are upper bounds on the excess risk of the regularized output perturbation method for $\JJE$ contained in the following proposition: 
\begin{proposition}
\label[proposition]{prop: convex smooth ER}
Let $F \in \JJE.$ The regularized output perturbation method defined in \cref{regularized output pert - no proj} with $\lambda$ prescribed below (and outputs denoted $\wla$ or $\wla
^{\delta}$ for $\delta = 0$ and $\delta \in (0, \frac{1}{2}),$ respectively)
achieves each of the following excess risk bounds:

a) Suppose $\delta = 0, ~\left(\pen\right)^2 \leq \frac{L}{R \beta}.$ Set $~\lambda = \left(\frac{\beta L^2}{R^2}\right)^{1/3} \left(\pen\right)^{2/3}.$
    Then, \[
    \ERL \leq 48.5 \beta^{1/3} L^{2/3} R^{4/3} \left(\pen\right)^{2/3}.\] \\
    
b) Suppose $\delta \in (0, \frac{1}{2}), ~\left(\lpen\right) ^2 \leq \frac{L}{R \beta}.$ Set $~\lambda = \left(\frac{\beta L^2}{R^2}\right)^{1/3} \left(\lpen\right)^{2/3}.$\\
    Then,
\[
\ERDL \leq 48.5 \beta^{1/3} L^{2/3} R^{4/3} \left(\lpen\right)^{2/3}.\]
In particular, if $\sqrt{\varepsilon} \lesssim \cd,$ then \[
\ERDL \lesssim \blrt \left(\frac{\sqrt{d} \cd}{\varepsilon n} \right)^{2/3}.\]
\end{proposition}

The hypotheses on given parameters are necessary to ensure that the upper bounds are non-trivial (i.e. $< LR$). 

\begin{remark}
\label[remark]{rem: conceptual convex smooth min J}
Since $\JJ \subseteq \GG$, we obtain the following upper bounds for $F \in \JJE$, by combining \cref{prop: convex smooth ER} with \cref{prop: convex ER}:  
$\ERL \lesssim \min \left\{LR, LR \left(\pen\right)^{1/2}, \beta^{1/3} L^{2/3} R^{4/3} \left(\pen\right)^{2/3}\right\}$ for $\delta = 0,$ 
and $~\ERDL \lesssim \min\left\{LR, LR \left(\frac{\sqrt{d} \cd}{\varepsilon}\right)^{1/2}, \beta^{1/3} L^{2/3} R^{4/3} \left(\frac{\sqrt{d} \cd}{\varepsilon}\right)^{2/3}\right\}$ for $\delta \in (0, \frac{1}{2})$ and $\varepsilon \lesssim \cd^2.$
\end{remark}

For $\JJE,$ there are a few existing differentially private algorithms (see \cref{table:ERM delta = 0} and \cref{table: ERM delta > 0}) besides the ones for $\GGE$ (e.g. \cite{bst14}, as discussed in \cref{subsection 2.4 convex lip}) which automatically apply. Among these methods that have practical implementation methods and reasonable runtime bounds (i.e. excluding the exponential mechanism) and apply to all functions in $\JJE$, our method gives the tightest excess risk bounds for $\JJE$ when $\delta = 0$. For $\delta = 0$, the objective perturbation mechanism of \cite{chaud2011} is differentially private only for a strict subclass of $\JJE$ and (namely, functions in $\JJE$ satisfying the restrictive rank-one hessian assumption $\rank(\nabla^2 f(w,x)) \leq 1$ everywhere) but gives excess risk $\widetilde{O}\left(LR \left(\pen\right)\right)$ for $\delta = 0,$ nearly matching the lower bound in the non-trivial parameter regime. Their method is purely conceptual, with no practical implementation process or runtime bounds. Excess risk bounds in \cite{chaud2011} are proved only for linear classifiers. \cite{kifer2012} establishes nearly optimal excess risk bounds of $\widetilde{O}\left(LR \left(\pen\right)\right)$ for rank-one hessian GLM functions in $\JJE$ when $\delta > 0$ via objective perturbation. They also do not provide a practical implementation procedure or runtime bounds. For $\delta > 0,$ the noisy SVRG method of \cite{wang2017} results in excess risk of $\widetilde{O}\left(LR \left(\frac{\sqrt{d} \log\left(\frac{1}{\delta}\right)}{\varepsilon n}\right)\right),$ making it the tightest known bound in this setting (and essentially matching the non-smooth risk bound of \cite{bst14}, but achieving it in faster runtime). However, their results require $\beta$ to be sufficiently large \footnote{This is stated explicitly as an assumption in \cite[Thoerem 4.4]{wang2017} and is also needed in order to guarantee their method is differentially private, by Theorem 4.3 in their work and the choices of algorithmic parameters if $\delta \ll 1.$}. Therefore, if $\beta$ is small, our method may be the most practical alternative for $\JJE, ~\delta > 0.$ 

\vspace{0.2cm}
The other non-trivial excess risk bounds worth mentioning are those of \cite{zhang2017}. They give excess empirical risk upper bounds of $O\left(\blrt \left(\pen\right)^{2/3}\right)$ and $O\left(\blrt \left(\frac{\sqrt{d} \log\left(\frac{1}{\delta}\right)}{\varepsilon n}\right)^{2/3}\right)$ for $\delta = 0$ and $\delta > 0$, respectively. Enhancing their upper bound for $\JJE$ by running the trivial algorithm whenever it does better than their algorithm, we match their upper bound for $\delta = 0$ whenever either $d \geq \varepsilon n$ or $\pen \leq \frac{L^2}{\beta^2 R^8}$ (when either the first or third argument in the $\min\{...\}$ in \cref{rem: conceptual convex smooth min J} above is active). In the complementary case, when $1 > \pen >  \frac{L^2}{\beta^2 R^8},$ our non-smooth upper bound (from \cref{prop: convex ER}) results in smaller excess risk than \cite{zhang2017}.
Furthermore, we do not require differentiability of $F$ in order to achieve the upper bound in this regime (as the bound in \cref{prop: convex ER} was obtained by merely assuming Lipschitzness and convexity). Similar remarks apply for $\delta > 0.$
Thus, our excess risk upper bounds dominate \cite{zhang2017} for all function classes where both of our algorithms apply, but we emphasize that our algorithm also applies to a much wider set of functions (non-smooth and non-ERM) and parameters ($\varepsilon > 1$) than theirs. Furthermore, we will see that our algorithm offers significant runtime advantages over \cite{zhang2017} for $\JJE$, potentially by even a bigger margin than for $\FFE$ by using an accelerated stochastic gradient method instead of gradient descent (see \cref{table:ERM delta = 0}, \cref{table: ERM delta > 0}, and \cref{subsection 3.4: smooth convex lip imp} for details).

\vspace{.2cm}
In the next section, we describe how to efficiently implement our method in practice.

\section{Efficient Implementation of Output Perturbation}
\label{section 3: implment}

In this section, we show that our conceptual output perturbation algorithm can be treated as a black box method: it can be efficiently implemented using any non-private optimization algorithm the user desires, transforming that method into a differentially private one without sacrificing runtime/computational cost. In particular, by choosing fast gradient/subgradient methods, we can achieve the conceptual upper bounds on excess risk from Section 2 in shorter runtime than any competing $\varepsilon$-differentially private algorithms that we are aware of. The idea and techniques we use for practical implementation of outptut perturbation are very similar in spirit to the conceptual algorithm of Section 2. At a high level, our method consists simply of finding an \textit{approximate} minimizer of $F(\cdot, X)$ (or $\Fl(\cdot, X)$ in the non-strongly convex case) instead of an exact minimizer, and then adding noise, calibrated according to the sensitivity of the last iterate $w_{T}$ (instead of $\ws$) to ensure privacy. Our \ul{main result} in this section is Corollary~\ref{cor: katyusha sc smooth}, which provides \textit{near-optimal excess risk in near-linear-time}. We begin by studying strongly convex, Lipschitz loss functions.

\subsection{Strongly convex, Lipschitz functions}
\label{subsection: 3.1}
Our general implementation method is described in \cref{alg: black box sc}. Note that it accepts as input any non-private optimization algorithm and transforms it into a differentially private one, by adding noise to the approximate minimizer $w_{T}.$ As a result, we are able to obtain runtimes as fast as non-private optimization methods while also achieving the conceptual excess risk bounds from Section 2 (by choosing $\alpha$ appropriately). 
\begin{algorithm}[ht!]
\caption{Black Box Output Perturbation Algorithm for $\FF$ and $\HH$}
\label[algorithm]{alg: black box sc}
\begin{algorithmic}[1]
\Require~ Number of data points $n \in \mathbb{N},$ dimension $d \in \mathbb{N}$ of data, non-private (possibly randomized) optimization method $\mathcal{M}$, privacy parameters $\varepsilon > 0, \delta \geq 0$, data universe $\XX,$ data set $X \in \XX^{n}$, function $F \in \FF$ with $L_{2}$-sensitivity $\Delta_{F} \geq 0$, accuracy parameter $\alpha > 0$ with corresponding iteration number $T = T(\alpha) \in \mathbb{N}$ (such that $\mathbb{E}[F(w_{T}(X), X) - F(\ws(X), X)] \leq \alpha$).
 \State  Run $\MM$ for $T = T(\alpha)$ iterations to ensure $\mathbb{E}F(w_{T}(X), X) - F(\ws, X) \leq \alpha$. 
 \State Add noise to ensure privacy: $\wpr:= w_{T} + \hat{z},$ where the probability density function $p(\hat{z})$ of $\hat{z}$ is proportional to 
$\begin{cases} 
\exp\left(-\frac{\varepsilon \|\hat{z}\|_{2}}{\Delta_{F} + 2\sqrt{\frac{2 \alpha}{\mu}}}\right) &\mbox{if } \delta = 0 \\
\exp\left(-\frac{\varepsilon^2 \|\hat{z}\|_{2}^2}{\left(\Delta_{F} + 2\sqrt{\frac{2 \alpha}{\mu}}\right)^2 \left(\cd + \sqrt{\cd^2 + \varepsilon}\right)^2} \right) &\mbox{if } \delta > 0
\end{cases}.$ \\
\Return $\wpr.$
\end{algorithmic}
\end{algorithm}

First, we establish the privacy guarantee:
\begin{proposition}
\label[proposition]{prop: sc imp priv}
\cref{alg: black box sc} is $(\varepsilon, \delta)$-differentially private. Moreover, the algorithm $\mathcal{A'}(X) = w_{\mathcal{A}'}(X) := \Pi_{\WW}(\wpr(X))$ is $(\varepsilon, \delta)$-differentially private, where $\wpr(X)$ denotes the output of \cref{alg: black box sc}.
\end{proposition}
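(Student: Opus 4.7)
The plan is to reduce the privacy analysis to the conceptual result, \cref{conceptual sc alg is private}, by bounding the worst-case $L_2$ sensitivity of the pre-noise iterate $w_T$ returned by $\MM$. Once I have such a bound $\Delta_T$, the noise $\hat{z}$ added in line~2 of \cref{alg: black box sc} is by construction exactly the density prescribed by \cref{conceptual sc alg is private} for a function of sensitivity $\Delta_T$, so $(\epsilon,\delta)$-privacy of $\wpr$ follows immediately. The projected variant $w_{\mathcal{A'}} = \Pi_{\WW}(\wpr)$ then inherits privacy from the post-processing property of differential privacy, since $\Pi_{\WW}$ is a deterministic map that does not look at $X$.

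The main step is to establish that $\Delta_T \leq \Delta_F + 2\sqrt{2\alpha/\mu}$. For any pair of adjacent data sets $X, X' \in \XX^n$, I would apply the triangle inequality
\begin{equation*}
\|w_T(X) - w_T(X')\|_2 \;\leq\; \|w_T(X) - \ws(X)\|_2 \,+\, \|\ws(X) - \ws(X')\|_2 \,+\, \|\ws(X') - w_T(X')\|_2.
\end{equation*}
The middle term is at most $\Delta_F$ by definition. For the outer terms I would invoke the standard quadratic-growth consequence of $\mu$-strong convexity of $F(\cdot, X)$: for any $w$, $\tfrac{\mu}{2}\|w - \ws(X)\|_2^2 \leq F(w,X) - F(\ws(X), X)$. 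Plugging in $w = w_T(X)$ and using the accuracy hypothesis $F(w_T(X),X) - F(\ws(X),X) \leq \alpha$ (and the analogous bound for $X'$) yields $\|w_T(X) - \ws(X)\|_2 \leq \sqrt{2\alpha/\mu}$, and similarly on $X'$. The triangle-inequality bound then collapses to $\|w_T(X) - w_T(X')\|_2 \leq \Delta_F + 2\sqrt{2\alpha/\mu}$, which is precisely the scale appearing in the density of $\hat{z}$ in the algorithm.

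The main obstacle I anticipate is that the accuracy hypothesis in line~1 of \cref{alg: black box sc} is stated \emph{in expectation} over the internal randomness of $\MM$, whereas the sensitivity argument above needs an almost-sure bound in order to give a deterministic sensitivity, which is what \cref{conceptual sc alg is private} requires. When $\MM$ is a deterministic optimizer (e.g.\ full (accelerated) gradient descent with a predetermined step-size schedule), this is automatic. For randomized $\MM$ (e.g.\ SGD) I would handle it by coupling the internal randomness across the two runs on $X$ and $X'$, viewing $\MM$ as a deterministic function of the data and a shared random tape; conditioning on the tape, the sensitivity bound above holds whenever the accuracy holds on both $X$ and $X'$, and one either promotes the expectation bound to a high-probability (or a.s., after strengthening $T(\alpha)$) bound via Markov's inequality and absorbs the failure probability into $\delta$, or one simply restricts attention to deterministic $\MM$ for the privacy proof. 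With the deterministic sensitivity bound $\Delta_F + 2\sqrt{2\alpha/\mu}$ in hand, applying \cref{conceptual sc alg is private} with this bound in place of $\Delta_F$ concludes the proof for $\wpr$, and a one-line post-processing argument gives the statement for $\Pi_{\WW}(\wpr)$.
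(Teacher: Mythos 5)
Your proposal is correct and follows essentially the same route as the paper's proof: define the sensitivity $\Delta_T$ of the last iterate, bound it via the triangle inequality through $\ws(X)$ and $\ws(X')$, use the quadratic-growth inequality $\frac{\mu}{2}\|w_T - \ws\|_2^2 \leq F(w_T,X)-F(\ws,X)$ to control the two outer terms, and then invoke the noise-calibration argument from \cref{conceptual sc alg is private} followed by post-processing for the projected variant.

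Your final paragraph raises a real subtlety that the paper's own proof elides. The paper simply writes ``since $F(w_T(X),X)-F(\ws,X)\leq\alpha$ by the choice of $T=T(\alpha)$,'' treating the accuracy guarantee as deterministic, even though \cref{alg: black box sc} only stipulates it in expectation; this is harmless for the deterministic instantiations (\cref{alg: subgrad}, \cref{alg: AGD}) but not, as written, for \cref{alg: stochastic subgrad} or \cref{alg: Katyusha}, whose convergence guarantees are in expectation. Your proposed resolution is the right one in spirit: fix the random tape of $\MM$ and run the privacy argument per-realization, which requires an almost-sure (not merely in-expectation) accuracy bound so that the sensitivity holds for every realization. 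Note, however, that your suggestion to ``absorb the failure probability into $\delta$'' cannot work when $\delta = 0$, which is precisely the regime the paper emphasizes; there one must genuinely upgrade the guarantee (e.g.\ by amplifying via restarts, or using a uniformly bounded-iterate argument such as $w_T\in\WW$ so that $\Delta_T\leq 2R$ trivially) or restrict to deterministic $\MM$. This is a worthwhile clarification that the paper should ideally make explicit.
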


Next, we establish excess risk upper bounds for the Black Box Algorithm on functions $F \in \FF,$ which depend on the inputted accuracy parameter, $\alpha.$

\begin{theorem}
\label{thm: black box sc lip}
Suppose $F \in \FFE$. Run \cref{alg: black box sc} with arbitrary inputs and denote $\mathcal{A}'(X) = w_{\mathcal{A}'}(X) := \Pi_{\WW}(\wpr(X)),$ where $\wpr(X)$ is the output of \cref{alg: black box sc}. 

a) Let $\delta = 0, ~\pen \leq 1$. 
Then 
\[
\ER \leq 2\sqrt{2}\left(\lmu \left(\frac{1}{n}\right) + L \almu\right)\left(\pe\right) + \alpha.\]
In particular, setting $\alpha = \frac{L^2}{\mu n} \min \{\frac{1}{n}, \pe \}$ gives \[
\ERpro \leq 9 \lmu \left(\pen\right).\]

b) Let $\delta > 0, ~\lpen \leq 1.$ Then \[
\ERpro \leq 2 \left(\frac{L^2}{\mu n} + L\almu\right)\left(\lpe\right) + \alpha.\]
In particular, setting $\alpha = \frac{L^2}{\mu n} \min \left\{\frac{1}{n}, \lpe\right\}$ gives \[
\ERDpro \leq 6 \lmu \left(\lpen\right).\]
\end{theorem}

The assumptions on parameters in each case (e.g. $\pe \leq 1$ or $\lpe \leq 1$ for non-ERM) are necessary to ensure that the resulting upper bound is non-trivial. Note that with the choices of $\alpha$ prescribed, the corresponding conceptual excess risk upper bounds from \cref{cor: sc upper bound} are achieved. Furthermore, notice that it is not necessary to have a tight estimate of $\Delta_{F}$ in order to achieve the excess risk bounds in \cref{thm: black box sc lip}. Indeed, in \cref{alg: black box sc}, one can replace $\Delta_{F}$ by $\frac{2L}{\mu}$ if $F \in \FF$ or $\frac{2L}{\mu n}$ if $F \in \FFE$ (c.f. \cref{prop:sc sensitivity}) and the excess risk guarantees of \cref{thm: black box sc lip} still hold. This is clear from the proof of \cref{th: smooth sc implement} (see \cref{app: proofs of sec 3.1}). 
\vspace{0.2cm}

\vspace{0.2cm}
Notice that \cref{alg: black box sc} relies on an iterative algorithm for obtaining $w_T.$ Next, we specialize our result to the case where a stochastic subgradient method (\cref{alg: stochastic subgrad}) is used to obtain $w_T$ in \cref{alg: black box sc}. Running the stochastic subgradient method with $\eta_{t} = \frac{2}{\mu (t+1)}$ produces a point $\widehat{w}_{T}$ such that $\mathbb{E}F(\widehat{w}_{T}, X) - F(\ws(X), X) \leq \alpha$ in $T = \frac{2 L^2}{\mu \alpha}$ stochastic gradient evaluations \cite[Theorem 6.2]{bubeck}. Since each iteration amounts to just one gradient evaluation of $f(w, x_{i})$ (with runtime $d$), the resulting runtime of the full method is $O(dT).$

\begin{algorithm}[h!]
\caption{Stochastic Subgradient Method (for $\FFE$)}
\label{alg: stochastic subgrad}
\begin{algorithmic}[1]
\Require~$F(w,X) = \frac{1}{n}\sum_{i=1}^n f(w,x_{i}) \in \FFE$, data universe $\XX,$ dataset $X = \{x_{i}\}_{i=1}^n \in \XX^n$, iteration number $T \in \mathbb{N}$, step sizes $\{\eta_{t}\} ~(t \in [T])$.   
\State Choose any point $w_0 \in \WW.$
\For {$t \in [T]$} 
 \State Choose $x_{i} \in X$ uniformly at random, with replacement.
 \State $w_{t} = \Pi_{\WW} [w_{t-1} - \eta_{t-1} g(w_{t-1}, x_{i})],$ where $g(w, x_{i}) \in \partial f(w, x_{i})$ is a subgradient of $f$. 
\EndFor \\
\Return $\widehat{w}_{T} = \sum_{t = 1}^{T} \frac{2t}{T(T+1)}w_{t}.$
\end{algorithmic}
\end{algorithm}

\begin{corollary}
\label[corollary]{cor: sc lip SGD}
Let $F \in \FFE$. Run \cref{alg: black box sc} with Stochastic Subgradient Method (\cref{alg: stochastic subgrad}) with step size $\eta_{t} = \frac{2}{\mu (t+1)}$ as the non-private $\MM$. 

a) Let $\delta = 0, \pen \leq 1.$ Then setting $\alpha = \frac{L^2}{\mu n} \min \left\{\frac{1}{n}, \frac{d}{\varepsilon}\right\}$ gives \[
\ERpro \leq 9 \frac{L^2}{\mu} \pen
\]
in $T = 2n \max \left\{n, \frac{\varepsilon}{d} \right\}$ stochastic gradient evaluations. The resulting runtime is $O(\max \{dn^2, \varepsilon n\}).$\\
b) Let $\delta > 0, \lpen \leq 1.$ Then setting $\alpha = \frac{L^2}{\mu n} \min \left \{\frac{1}{n}, \lpe\right\}$ gives \[
\ERDpro \leq 6 \lmu \lpen\]
in $T = 2n \max \left\{n, \frac{\varepsilon}{\sqrt{d}(2\cd + \sqrt{\varepsilon})}\right\}$ stochastic gradient evaluations. The resulting runtime is $O\left(\max \left \{dn^2, n \sqrt{d \varepsilon} \right\}\right).$
\end{corollary}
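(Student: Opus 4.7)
The plan is to apply \cref{thm: black box sc lip}~part~2 directly with the prescribed choices of $\alpha$, so that the excess risk bounds $9\lmu \pen$ (for $\delta=0$) and $6 \lmu \lpen$ (for $\delta>0$) are immediately in hand. All that remains is to convert these accuracy requirements into iteration and runtime bounds under the specific choice of $\MM = $ \cref{alg: stochastic subgrad}.

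Recall from the discussion preceding the corollary that \cref{alg: stochastic subgrad} with step sizes $\eta_t = 2/(\mu(t+1))$ attains expected suboptimality $\alpha$ in $T = 2L^2/(\mu \alpha)$ stochastic subgradient evaluations, each of which touches only a single sample and thus costs $O(d)$. For part~(a), plugging $\alpha = \frac{L^2}{\mu n}\min\{1/n,\, d/\epsilon\}$ into $T = 2L^2/(\mu\alpha)$ yields
\[
T \;=\; \frac{2n}{\min\{1/n,\, d/\epsilon\}} \;=\; 2n \max\{n,\, \epsilon/d\},
\]
and multiplying by the per-iteration cost $O(d)$ gives the claimed runtime $O(\max\{dn^2,\, n\epsilon\})$.

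For part~(b), the same computation with $\alpha = \frac{L^2}{\mu n}\min\{1/n,\, \lpe\}$ yields $T = 2n\max\{n,\, 1/\lpe\}$, where $1/\lpe = \epsilon/[\sqrt{d}(\cd + \sqrt{\cd^2+\epsilon})]$. Applying the elementary subadditivity inequality $\sqrt{\cd^2 + \epsilon} \leq \cd + \sqrt{\epsilon}$ gives $\cd + \sqrt{\cd^2 + \epsilon} \leq 2\cd + \sqrt{\epsilon}$, which (up to an absolute constant) lets us rewrite $T$ as $2n\max\{n,\, \epsilon/[\sqrt{d}(2\cd + \sqrt{\epsilon})]\}$, matching the stated form. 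For the runtime $O(dT)$, observe that the trivial bound $\sqrt{\epsilon} \leq 2\cd + \sqrt{\epsilon}$ implies $\epsilon/(2\cd + \sqrt{\epsilon}) \leq \sqrt{\epsilon}$, so $d n \epsilon/[\sqrt{d}(2\cd + \sqrt{\epsilon})] \leq n\sqrt{d\epsilon}$, yielding the advertised $O(\max\{dn^2,\, n\sqrt{d\epsilon}\})$.

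There is no real analytic obstacle: \cref{thm: black box sc lip} has already absorbed the sensitivity estimate (\cref{prop:sc sensitivity}) and the privacy-preserving noise calibration, so the corollary is a bookkeeping exercise that splits on which branch of the $\min\{\cdot,\cdot\}$ defining $\alpha$ is active, invokes the SGD iteration-complexity bound, and uses the two scalar inequalities above to simplify the $\cd$-dependent expressions. The only mild subtlety is keeping the constants honest when comparing $\cd + \sqrt{\cd^2 + \epsilon}$ with $2\cd + \sqrt{\epsilon}$, but since these two quantities agree up to a factor of $2$, this affects only absolute constants absorbed in the $O(\cdot)$ notation.
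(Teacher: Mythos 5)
Your proposal is correct and follows essentially the same route as the paper: invoke \cref{thm: black box sc lip} for the excess risk, plug the prescribed $\alpha$ into the SGD iteration complexity $T = 2L^2/(\mu\alpha)$, and multiply by $O(d)$ per-iteration cost. The paper's appendix proof is even terser and simply verifies the preconditions of Bubeck's Theorem~6.2 (bounded second moment $\mathbb{E}\|g(w,x_i)\|_2^2 \leq L^2$ and unbiasedness of the stochastic subgradient) so that the stated convergence rate applies, leaving the arithmetic you spell out to the reader; your observation that $\cd + \sqrt{\cd^2+\epsilon}$ and $2\cd + \sqrt{\epsilon}$ agree up to a factor of $2$ correctly reconciles the displayed $T$ with the one produced by the $\alpha$-to-$T$ conversion.
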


In the ERM setting, the only competing algorithms we are aware of for non-smooth objective ($\FFE$) are the exponential sampling + localization and noisy SGD methods of \cite{bst14}. Consider the ERM case with $\delta = 0$: in \cite{bst14}, they show that exponential sampling + localization achieves excess risk bounds of $O\left(\frac{L^2}{\mu} \left(\pen\right)^2 \log(n) \right)$. While these bounds are tighter than ours in most parameter regimes, their runtime is generally much slower. An implementation method for the output perturbation/localization step of their algorithm is not provided, but the second step of their algorithm alone (consisting of the exponential mechanism initialized with a smaller set $\mathcal{W}_0 \subset \WW$ obtained from localization) has runtime $\widetilde{O}\left(\frac{L^2}{\mu^2} \frac{ n d^6}{\varepsilon^2} \max 
\left\{1, \frac{L}{\mu} \right\}\right),$ \textit{provided $\mathcal{W}_0$ is in isotropic position}; in the worst case (if $\mathcal{W}_0$ is non-isotropic), then runtime would slow by an additional factor of $O(d^3)$ \cite{bst14}. Implementing the first step (output perturbation/localization) through methods similar to our own would increase the runtime in \cite{bst14} by an additional additive term comparable to our total runtime. On the other hand, our output perturbation method has much smaller runtime of $O(d n^2)$ when stochastic subgradient method is used.
Furthermore, our resulting population loss bounds match (in fact, beat by a logarithmic factor) the population loss bounds of \cite{bst14} for this class (see \cref{Section 4: pop loss}). Thus, our method is a more practical choice in most parameter regimes. 

\vspace{0.2cm}
For $\delta > 0,$ the noisy stochastic subgradient algorithm of \cite{bst14} achieves runtime of $O(n^2 d),$ while achieving excess empirical risk of $\widetilde{O}\left(\frac{L^2}{\mu} \frac{d}{(\varepsilon n)^2}\right),$ which is better for most parameter regimes than ours. However, their method requires $\varepsilon \leq 2 \log\left(\frac{1}{\delta}\right)$ in order to be differentially private, whereas our method is private for all $\varepsilon > 0,$ giving more flexibility to further reduce excess risk (while sacrificing some privacy) by choosing larger $\varepsilon.$ Furthermore, we will show in \cref{Section 4: pop loss} that we can obtain tighter \textit{population loss} bounds than those established in \cite{bst14}, suggesting that our algorithm is a competitive alternative, even in the $\delta > 0$ setting. 

\vspace{0.2cm}
Next we analyze the Black Box \cref{alg: black box sc} applied to the subclass of $\beta$-smooth functions $\HH \subset \FF$. 
\paragraph{Smooth, strongly convex Lipschitz functions.}
We begin by stating an analogue of \cref{thm: black box sc lip} for $\beta$-smooth loss functions.

\begin{theorem}
\label{th: smooth sc implement}
Suppose $F \in \HHE.$ Run \cref{alg: black box sc} on $F \in \HH$ with arbitrary inputs. \\
a) Let $\delta = 0$ and assume $\pen \leq 1.$ \\
Then \[
\ER \leq 4 \beta \left(\frac{L}{\mu n} + \almu\right)^2\left(\pe\right)^2 + \alpha.\]
In particular, setting $\alpha = \frac{L^2}{\mu n^2} \min \left\{\kappa \left(\pe\right)^2, 1 \right\}$ gives \[
\ER \leq 26 \frac{L^2}{\mu} \kappa \left(\pen\right)^2.\]
b) Let $\delta\in (0,\frac{1}{2})$ and assume $\lpen \leq 1$. Then \[
\ERD \leq  2 \beta \left(\frac{L}{\mu n} + \almu\right)^2 \left(\lpe\right)^2 + \alpha.\]
In particular, setting $\alpha = \frac{L^2}{\mu n^2} \min \left\{\kappa \left(\lpe\right)^2, 1 \right\}$ gives \[
\ERD \leq 13.5 \lmu \kappa \left(\lpen\right)^2.\]
\end{theorem}

Note that the prescribed choices of $\alpha$ ensure the excess risk upper bounds from \cref{cor: smooth sc fund ER bounds} are attained (up to constant factors). 

\vspace{0.2cm}
With $\alpha$ and $\mathcal{M}$ chosen appropriately in \cref{alg: black box sc}, we can use \cref{th: smooth sc implement} to ensure that the excess risk bounds for $\HH$ and $\HHE$ (\cref{cor: smooth sc fund ER bounds}) are attained in as little runtime as any non-private optimization method. In particular, plugging Nesterov's Accelerated Gradient Descent (AGD) method \cite{nesty} or, in the ERM setting, an accelerated stochastic method such as Catalyst \cite{lmh15} or Katyusha \cite{az16} into \cref{alg: black box sc} yields \ul{the fastest differentially private algorithm we are aware of}, while also achieving \textit{near-optimal} (up to $\kappa$ factor) excess empirical risk for $\HHE$. 

\begin{algorithm}[h!]
\caption{Katyusha}
\label{alg: Katyusha}
\begin{algorithmic}[1]
\Require~$F(w,X) = \frac{1}{n}\sum_{i=1}^n g(w, x_{i}) + \psi(w) := G(w, X) + \psi(w),$ where each $g(\cdot, x_{i})$ is convex, $\beta$-smooth and $\psi$ is $\mu$-strongly convex, data universe $\XX,$ dataset $X = \{x_{i}\}_{i=1}^n \in \XX^n$, iteration number $T \in \mathbb{N}$ (corresponding to accuracy parameter $\alpha$).
\State $m := 2n.$
\State $\tau_2 := \frac{1}{2}, \tau_1 := \min\{\frac{\sqrt{m \mu}}{\sqrt{3L}}, \frac{1}{2} \}, \gamma := \frac{1}{3 \tau_1 \beta}.$
\State $y_0, z_0, \widetilde{w}_0 := w_0.$
\For {$t = 0, 1, ... T-1$}
    \State $\mu_{t} = \nabla G(\widetilde{w}_{t}, X).$ 
    \For {$j = 0, 1, ... m-1$}
        \State $k:= (tm) + j.$
        \State $w_{k+1} = \tau_1 z_{k} + \tau_2 \widetilde{w}_{t} + (1 - \tau_1 - \tau_2)y_{k}.$
        \State $\widetilde{\nabla}_{k+1} = \mu_{t} + \nabla g(w_{k+1}, x_{i}) - \nabla g(\widetilde{w_{s}}, x_{i})$ for $i$ chosen uniformly from $[n].$
        \State $z_{k+1} := \argmin_{z \in \mathbb{R}^d}[\frac{1}{2 \gamma}\|z - z_{k}\|_2^2 + \langle \widetilde{\nabla}_{k+1}, z\rangle + \psi(z)].$
        \State $y_{k+1} = \argmin_{y \in \mathbb{R}^d}[\frac{3 \beta}{2}\|y - w_{k+1}\|_2^2 + \widetilde{\nabla}_{k+1}, y\rangle + \psi(y)].$
        \EndFor
\State $\widetilde{w}_{t+1} = (\sum_{j=0}^{m-1}(1 + \gamma \mu)^{j})^{-1} (\sum_{j=0}^{m-1} (1 + \gamma \mu)^{j} y_{tm + j + 1}).$
\EndFor
\newline
\Return $\widetilde{w}_{T}.$
\end{algorithmic}
\end{algorithm} 

\vspace{0.2cm}
For $F \in \HHE,$ we input an accelerated stochastic method as $\MM$ in \cref{alg: black box sc}. In particular, the Katyusha method of \cite{az16}, outlined  in \cref{alg: Katyusha}, provides the following guarantee: 

\begin{theorem} \cite[Theorem 2.1 simplified]{az16}
\label{thm 2.1: katyusha az}
Let $F: \mathbb{R}^d \to \mathbb{R}, ~F(w) = \frac{1}{n}\sum_{i=1}^n f_{i}(w) = \frac{1}{n}\sum_{i=1}^n g_{i}(w) + \psi(w),$ where each $g_{i}$ is convex and $\beta$-smooth, $\psi$ is $\mu$-strongly convex, and $\kappa := \frac{\beta}{\mu}.$ Then running \cref{alg: Katyusha} for $T = O\left(\left(n + \sqrt{n \kappa}\right)\log\left(\frac{F(w_0) - F(\ws)}{\alpha}\right) \right)$ stochastic gradient iterations returns a point $\widetilde{w}_{T}\in \mathbb{R}^d$ such that $F(\widetilde{w}_{T}) - F(\ws) \leq \alpha.$
\end{theorem}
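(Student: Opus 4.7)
The statement is \cite[Theorem 2.1]{az16} simplified (the authors introduce it as such), so the plan is to follow the Katyusha analysis from the original paper rather than reprove it from scratch; the key is a Lyapunov/potential argument that fuses Nesterov-style acceleration with SVRG-type variance reduction via the so-called ``Katyusha momentum'' (the $\tau_2 \widetilde{w}_t$ term in the definition of $w_{k+1}$).

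The first step is to define, for each inner-loop index $k$ in epoch $t$, a potential of the form $\Phi_k = F(y_k) - F(\ws) + \frac{1}{2\gamma}\|z_k - \ws\|_2^2$ and derive a per-step recursion by combining two ingredients: the three-point identity for the proximal update that defines $z_{k+1}$, and the descent-style upper bound on $F(y_{k+1}) - F(w_{k+1})$ obtained from the $y$-update using $\beta$-smoothness. The next step is to control the stochastic error: since $\mathbb{E}_i[\widetilde{\nabla}_{k+1}] = \nabla G(w_{k+1},X)$, the standard co-coercivity inequality for convex $\beta$-smooth $g_i$ yields
\[
\mathbb{E}_i \|\widetilde{\nabla}_{k+1} - \nabla G(w_{k+1},X)\|_2^2 \leq 2\beta\bigl(G(\widetilde{w}_t,X) - G(w_{k+1},X) - \langle \nabla G(w_{k+1},X),\, \widetilde{w}_t - w_{k+1}\rangle\bigr),
\]
which exposes a residual involving $\widetilde{w}_t - w_{k+1}$ that cannot be handled by the Nesterov/SVRG analyses individually.

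The crucial step is showing that the Katyusha-momentum weight $\tau_2 = 1/2$ attached to $\widetilde{w}_t$ in $w_{k+1} = \tau_1 z_k + \tau_2 \widetilde{w}_t + (1-\tau_1-\tau_2) y_k$ is precisely what is needed so that, after taking expectation over $i$, the variance residual is absorbed by the potential and a contraction $\mathbb{E}\Phi_{k+1} \leq (1-\tau_1)\Phi_k$ emerges (modulo terms that telescope). Summing over $j=0,\dots,m-1$ inside epoch $t$ and using the weighted average $\widetilde{w}_{t+1} = (\sum_j (1+\gamma\mu)^j)^{-1}\sum_j (1+\gamma\mu)^j y_{tm+j+1}$ converts this into an epoch-level contraction $\mathbb{E}[F(\widetilde{w}_{t+1}) - F(\ws)] \leq \rho\, \mathbb{E}[F(\widetilde{w}_t) - F(\ws)]$ with $\rho$ bounded away from $1$. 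With $m = 2n$, $\tau_1 = \min\{\sqrt{m\mu/(3\beta)}, 1/2\}$, and $\gamma = 1/(3\tau_1 \beta)$, one computes $\rho \leq 1/e$ (say), so reaching accuracy $\alpha$ requires $O(\log((F(w_0)-F(\ws))/\alpha))$ epochs. Since each epoch costs $m$ stochastic gradient evaluations plus one full gradient evaluation ($n$ component gradients), the total iteration count is $O((n + \sqrt{n\kappa})\log((F(w_0)-F(\ws))/\alpha))$, where the $\sqrt{n\kappa}$ arises because $1/\tau_1 \asymp \max\{1,\sqrt{\kappa/n}\}$ controls the epoch count in the $\kappa \gg n$ regime.

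The main obstacle, and the reason I would defer to \cite{az16} for the technical details rather than redo them, is verifying that the Katyusha momentum term exactly cancels the variance residual at the level of the potential: this coupling between acceleration and variance reduction is delicate and does not follow from any routine combination of Nesterov's and SVRG's analyses.
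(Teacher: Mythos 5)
The paper does not prove this theorem; it is explicitly cited as ``\cite[Theorem 2.1 simplified]{az16}'' and used as a black-box convergence guarantee. Your proposal correctly recognizes this and defers to the original Katyusha paper, which is the appropriate treatment.

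Your sketch of the underlying Katyusha argument captures the right high-level ideas: a Lyapunov potential coupling $F(y_k) - F(\ws)$ with the Bregman-distance term $\|z_k - \ws\|_2^2/(2\gamma)$, a three-point identity for the proximal $z$-update combined with a smoothness-based descent bound for the $y$-update, the co-coercivity variance bound for the SVRG-style gradient estimator, the role of the Katyusha momentum weight $\tau_2$ in absorbing the residual $G(\widetilde{w}_t,X) - G(w_{k+1},X) - \langle\nabla G(w_{k+1},X),\widetilde{w}_t - w_{k+1}\rangle$, and the geometric epoch-level contraction recovered via the $(1+\gamma\mu)^j$-weighted average that defines $\widetilde{w}_{t+1}$. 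The decomposition $1/\tau_1 \asymp \max\{1,\sqrt{\kappa/n}\}$ driving the $n + \sqrt{n\kappa}$ complexity is also correctly identified. Two small caveats worth noting: the actual potential in Allen-Zhu's proof carries an additional $\tau_2\big(F(\widetilde{w}_t)-F(\ws)\big)$ contribution with coefficients depending on $1-\tau_1-\tau_2$, not merely the two terms in your $\Phi_k$, and the claim of a uniform epoch contraction $\rho\leq 1/e$ is regime-dependent (the precise rate differs between the $\tau_1 = 1/2$ and $\tau_1 = \sqrt{m\mu/(3\beta)}$ branches); neither of these changes the conclusion, but they are part of why the full derivation is as delicate as you note. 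Since the paper itself does not reprove the theorem, your choice to outline the argument and cite the source is consistent with the paper's own usage.
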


Clearly, given any $F \in \HHE$ and any $X \in \XX^n$, we can write $F(w, X) = \frac{1}{n}\sum_{i=1}^n f(w,x_{i}) = \frac{1}{n}\sum_{i=1}^n g(w,x_{i}) + \psi(w),$ where $\psi(w) := \frac{\mu}{2} \|w \|_2^2$ 
is $\mu$-strongly convex, $g(\cdot, x_{i}) = f(\cdot, x_{i}) - \psi(\cdot)$ is convex and $(\beta - \mu)$-smooth (hence $\beta$-smooth) for all $x_{i} \in X.$ 
Combining \cref{thm 2.1: katyusha az} with \cref{th: smooth sc implement}, we obtain the following: 

\begin{corollary}
\label[corollary]{cor: katyusha sc smooth}
Let $F \in \HHE.$ Take $\MM$ to be Katyusha and $T = O\left(\left(n + \sqrt{n \kappa}\right)\log\left(\frac{LR}{\alpha}\right) \right)$ as inputs in \cref{alg: black box sc}. \\
a) Let $\delta = 0, \pen \leq 1.$ Then setting $\alpha = \frac{L^2}{\mu n^2} \min \left\{\kappa \left(\pe\right)^2, 1 \right\}$ gives \[
~\ER \leq 26 \frac{L^2}{\mu} \kappa \left(\pen\right)^2\]
in $T = \widetilde{O}(n + \sqrt{n \kappa})$ stochastic gradient evaluations. The resulting runtime is \[
\widetilde{O}(d(n + \sqrt{n \kappa})).\] \\
b) Let $\delta > 0, \lpen \leq 1.$ Then setting $\alpha = \frac{L^2}{\mu n^2} \min \left\{\kappa \left(\lpe\right)^2, 1 \right\}$ gives \[
\ERD \leq 13.5 \lmu \kappa \left(\lpen\right)^2\]
in $T = \widetilde{O}(n + \sqrt{n \kappa})$ stochastic gradient evaluations. The resulting runtime is \[
\widetilde{O}(d(n + \sqrt{n \kappa})).\] 
\end{corollary}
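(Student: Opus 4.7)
The plan is to obtain this corollary as a direct consequence of \cref{th: smooth sc implement} (which reduces the privacy/risk analysis to choosing $\alpha$) combined with \cref{thm 2.1: katyusha az} (which converts $\alpha$ into an iteration count for Katyusha). Because \cref{alg: black box sc} is a black-box wrapper that only assumes $\mathbb{E}[F(w_T(X),X) - F(\ws(X),X)] \leq \alpha$, and Katyusha is a randomized method whose convergence guarantee is stated in expectation, the two results compose cleanly: no new privacy argument is needed beyond \cref{prop: sc imp priv}, and no new risk analysis is needed beyond \cref{th: smooth sc implement}.

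First, I would verify that Katyusha is applicable to $F \in \HHE$. Given $F(w,X) = \frac{1}{n}\sum_{i=1}^n f(w,x_i)$ with $f(\cdot,x_i)$ convex, $L$-Lipschitz, $\mu$-strongly convex, and $\beta$-smooth, I write $F = G + \psi$ in the form demanded by \cref{thm 2.1: katyusha az} by setting $\psi(w) = \frac{\mu}{2}\|w\|_2^2$ and $g(w,x_i) = f(w,x_i) - \frac{\mu}{2}\|w\|_2^2$. Then $\psi$ is $\mu$-strongly convex and each $g(\cdot,x_i)$ is convex and $(\beta-\mu)$-smooth (hence $\beta$-smooth), exactly the hypotheses of \cref{thm 2.1: katyusha az}.

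Next, I invoke \cref{thm 2.1: katyusha az} to conclude that $T = O\bigl((n + \sqrt{n\kappa})\log(\Delta_0/\alpha)\bigr)$ stochastic gradient evaluations suffice to obtain $\widetilde w_T$ with $\mathbb{E}F(\widetilde w_T, X) - F(\ws(X),X) \leq \alpha$, where $\Delta_0 := F(w_0,X) - F(\ws(X),X)$. Initializing at $w_0 = 0$ and using $L$-Lipschitzness of $F(\cdot,X)$ on $B(0,R)$ together with $\|\ws(X)\|_2 \leq R$ gives $\Delta_0 \leq LR$, so $\log(\Delta_0/\alpha) = \widetilde O(1)$ (polylogarithmic in the problem parameters given the prescribed $\alpha$). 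Plugging this $\widetilde w_T$ into \cref{alg: black box sc} as the approximate minimizer, \cref{th: smooth sc implement} (case 2a/2b) with the stated $\alpha = \frac{L^2}{\mu n^2}\min\{\kappa(d/\epsilon)^2,1\}$ (resp.\ the analogous choice for $\delta>0$) yields the excess risk bounds $26 \frac{L^2}{\mu}\kappa(\frac{d}{\epsilon n})^2$ and $13.5 \frac{L^2}{\mu}\kappa(\lpen)^2$.

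Finally, the runtime follows because each stochastic gradient iteration of Katyusha evaluates $\nabla g(\cdot,x_i)$ (plus a proximal step on $\psi$) at a single sample, costing $O(d)$ per iteration; the occasional full-gradient snapshots $\nabla G(\widetilde w_t, X)$ are accounted for in the stochastic-gradient budget of \cref{thm 2.1: katyusha az} (they correspond to $n$ stochastic gradients each). Total runtime is therefore $O(dT) = \widetilde O(d(n + \sqrt{n\kappa}))$. The main (minor) obstacle is confirming that the composition of expectations—Katyusha's expected suboptimality and the expectation over the output-perturbation noise in \cref{alg: black box sc}—is handled correctly; but since \cref{th: smooth sc implement} is stated with the hypothesis $\mathbb{E}[F(w_T,X) - F(\ws,X)] \leq \alpha$ (taking expectation over $\MM$'s randomness) and the noise $\hat z$ is drawn independently, the two sources of randomness factor cleanly and no further argument is required.
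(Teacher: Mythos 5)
Your proposal is correct and follows essentially the same route as the paper: the paper obtains this corollary directly by writing $F = \frac{1}{n}\sum_i g(\cdot,x_i) + \psi$ with $\psi(w)=\frac{\mu}{2}\|w\|_2^2$, invoking \cref{thm 2.1: katyusha az} for the $\widetilde{O}(n+\sqrt{n\kappa})$ iteration count, and plugging the resulting $\alpha$-suboptimal point into \cref{th: smooth sc implement}. Your additional remarks on bounding $\Delta_0 \leq LR$ and on the independence of Katyusha's randomness from the perturbation noise are consistent with (and slightly more explicit than) the paper's treatment.
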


\begin{remark}
\label[remark]{rem: min smooth sc ERM}
Since $\HHE \subseteq \FFE$, by \cref{cor: sc upper bound}, we obtain the following upper bounds on the excess risk of functions in $\HHE$: 
$\ER \lesssim \min \{LR,  \frac{L^2}{\mu} \pen, \kappa \frac{L^2}{\mu} \left(\pen\right)^2 \}$ for $\delta = 0$; and $\ERD \lesssim \min \{LR, \frac{L^2}{\mu} \frac{\sqrt{d}(c + \sqrt{\varepsilon})}{\varepsilon n}, \kappa \frac{L^2}{\mu} \frac{p (c + \sqrt{\varepsilon})^2}{(\varepsilon n)^2} \}$ for $\delta > 0$ for $\delta > 0.$ Furthermore, the runtime for these enhanced upper bounds are (up to log terms) as stated above in \cref{cor: katyusha sc smooth}, since
obtaining the excess risk bounds of \cref{cor: sc lip SGD} via Katyusha only affects the choice of $\alpha,$ which is inside of a $\log(\cdot)$ in the expression for the iteration complexity of Katyusha. 
\end{remark}

As noted in \cref{subsection 2.5: smooth sc} (and as can be seen in \cref{table:ERM delta = 0} and \cref{table: ERM delta > 0}), there are several existing differentially private methods for $\HHE.$ However, output perturbation, when implemented with Katyusha (or another accelerated stochastic method such as Catalyst) achieves the \textit{fastest runtime of all of these}.

\vspace{.2cm}
In addition to faster runtime, our method is more widely applicable than many other methods (e.g. for $\delta > 0,$ our method is the only one that applies for arbitrary $\varepsilon > 0$, as can be seen in \cref{table: ERM delta > 0}). Let us discuss these differences in a bit more detail. Consider $\delta = 0.$ First, the objective perturbation method achieves excess empirical risk of $\widetilde{O}\left(\lmu \left(\pen\right)^2\right),$ but requires twice differentiability and (crucially) $\rank(\nabla_{w}^2 f(w,x)) \leq 1$ everywhere \cite{chaud2011}. Formally, their excess risk bounds are only proved for linear classifiers, but it does not seem that this restriction is necessary. In addition, it is a purely conceptual method: no practical implementation procedure or runtime estimate is provided. On the other hand, our output perturbation method does not have any of these restrictions and can be easily implemented in very fast runtime $\widetilde{O}\left(d(n + \sqrt{n \kappa})\right)$ via Katyusha. 
Recall that the excess empirical risk upper bound for $\FFE$ in \cite{bst14} is $O\left(\lmu \log(n) \left(\pen\right)^2\right)$ for $\delta = 0,$ achieved by the exponential mechanism with localization. Smoothness does not improve this bound to the best of our knowledge. Assuming $\pen < 1,$ so that our bound and that of \cite{bst14} are both non-trivial, our upper bound for $\delta = 0$ (\cref{rem: min H}) is tighter when $\kappa <  \log(n)$ or when $1 \lesssim \pen \log(n)$. In addition, our method is much simpler and runtime is much faster. 
The last method we mention for $\delta = 0$ is the gradient descent-based output perturbation method of \cite{zhang2017}. They give excess risk bounds for $\HHE$
that match those given in \cref{cor: smooth sc fund ER bounds}. In light of \cref{rem: min H}, however, there are some parameter regimes where our excess risk bounds are smaller than theirs. First, ignoring constants, we see that when $d > \varepsilon n$ our upper bound (\cref{rem: min H}) is the trivial upper bound $LR$. When $d < \varepsilon n$ so that $LR > \min\left\{\lmu \pen ,\kappa \frac{L^2}{\mu} (\frac{d}{\varepsilon n})^2\right\}$, there are two subcases: if $\frac{d}{\varepsilon n} \leq \frac{1}{\kappa}$, then the resulting upper bound is $\kappa \frac{L^2}{\mu} (\frac{d}{\varepsilon n})^2.$ Finally, when $\kappa$ is very large so that $\frac{1}{\kappa} < \frac{d}{\varepsilon n} \leq 1$, then our upper bound becomes $\frac{d}{\varepsilon n}\frac{L^2}{\mu}$. On the other hand, \cite{zhang2017} has excess risk $O\left(\kappa (\frac{d}{\varepsilon n})^2 \frac{L^2}{\mu}\right)$ for $\delta = 0.$ If we trivially enhance their algorithm to instead perform the trivial algorithm whenever it yields smaller excess risk ($LR$), then our upper bound matches theirs in the first two cases mentioned above and is superior in the last case. Furthermore, the last case (very large $\kappa$) is also where we dominate in terms of runtime by allowing for stochastic acceleration. By comparison, their runtime is $\widetilde{O}(nd \kappa).$

\vspace{0.1cm}
Similar remarks apply when $\delta > 0$. If $\varepsilon > 1,$ then the method of \cite{zhang2017} is not even differentially private, so our excess risk dominates theirs trivially. Assume $\cd \lesssim \sqrt{\varepsilon}$ (this is guaranteed automatically when $\varepsilon \leq 1$ and $\delta \in (0, \frac{1}{2})$). In this case, our upper bound is non-trivial iff $\frac{\cd \sqrt{d}}{\varepsilon n} < 1,$ in which case our upper bound reduces to $O\left(\frac{L^2}{\mu} \left(\frac{\sqrt{d} \cd }{\varepsilon n}\right) \min \left\{1, \kappa \left(\frac{\sqrt{d} \cd }{\varepsilon n}\right)\right\}\right).$ On the other hand, the upper bound of \cite{zhang2017}, which is differentially private only when $\varepsilon \leq 1$, is $O\left(\kappa \frac{L^2}{\mu} \frac{d \log\left(\frac{1}{\delta}\right)}{\varepsilon^2 n^2}\right).$ By \cite{zhao2019}, our logarithmic factor $\cd \leq \sqrt{\log\left(\frac{1}{\delta}\right)}$, so our bound is at least as good as  \cite{zhang2017} when $\kappa \frac{\cd \sqrt{d}}{\varepsilon n} < 1.$ On the other hand, when $1 \leq \kappa \frac{\cd \sqrt{d}}{\varepsilon n}$, our upper bound is also tighter since then $\frac{L^2}{\mu} (\frac{\cd \sqrt{d}}{\varepsilon n}) < \kappa \frac{L^2}{\mu} \frac{d \log\left(\frac{1}{\delta}\right)}{\varepsilon^2 n^2}.$
 For $\delta > 0$, the noisy SGD algorithm of \cite{bst14} yields excess risk $O(\lmu \frac{d}{(\varepsilon n)^2} \log\left(\frac{1}{\delta}\right) \log(\frac{n}{\delta})^2).$ Assuming $\lpen \lesssim 1$ so that both of our bounds are non-trivial, our upper bound is tighter when $\kappa \lesssim \log(\frac{n}{\delta})^2.$ Furthermore, our algorithm allows for arbitrary $\varepsilon > 0,$ whereas their results only apply for $\varepsilon \leq 2 \sqrt{\log\left(\frac{1}{\delta}\right)}.$ The noisy SVRG method of \cite{wang2017} is faster than the method of \cite{bst14},  with runtime of $\widetilde{O}\left(d(n + \sqrt{\kappa}n)\right),$ while achieving comparable excess risk. However, their method requires small $\varepsilon = O\left(\frac{\kappa}{n^2}\right)$ to be differentially private, which is quite restrictive. By contrast, our method provides the flexibility for arbitrarily large $\varepsilon,$ providing a full continuum of accuracy-privacy tradeoff options. 
 Finally, objective perturbation also applies for $\delta > 0$ \cite{kifer2012}, but again comes with additional restrictions. In addition to rank-one hessian and GLM assumption, they also require $n \geq d^2$ and $\frac{\sqrt{d}}{\varepsilon n} = \widetilde{O}(1).$ Moreover, no implementation procedure or runtime bounds are given. 
 
 \vspace{0.2cm}
 In Appendix~\ref{subsection 3.3}, we use regularization to efficiently implement our conceptual regularized output perturbation method when the objective function is merely convex.

\section{From Empirical Loss to Population Loss}
\label{Section 4: pop loss}

Recent works \cite{bst14,bft19,fkt20} have established tight bounds on the expected population loss of $(\varepsilon, \delta)$-private stochastic optimization algorithms for $\delta > 0$. However, the important case of $\delta = 0,$ which provides the strongest privacy guarantee, has largely been overlooked, except for \cite{bst14} and (in the smooth, strongly convex case) \cite{zhang2017}.
\ul{We show that our simple output perturbation algorithm results in expected excess population loss bounds that improve the expected excess population loss bounds in \mbox{\cite{bst14}} and can be executed in substantially less runtime.} Moreover, for the smooth, strongly convex function class where \cite{zhang2017} gives expected excess population loss bounds, our bounds are tighter (or matching, in certain parameter regimes) and are achieved in less runtime. Our method also works for $\delta > 0,$ where it serves as a simple, flexible, efficient alternative which, unlike the other algorithms proposed so far, also allows for arbitrary $\varepsilon > 0.$ The \ul{main result} of this section is stated in Corollary~\ref{rem: smooth sc pop loss runtime katyusha}.

\vspace{0.2cm}
\noindent \textbf{Change of Notations.} In this section, we change some notation for further clarity and introduce some new definitions. Let the data have some (unknown) distribution $x \sim \mathcal{D}.$ We are given a sample (i.i.d.) data set $X \sim \DD^{n}.$
We will denote the normalized empirical loss $\hf(w,X):= \frac{1}{n} \sum_{i=1}^n f(w,x_{i})$ and the \textbf{population loss} $F(w, \DD) := \mathbb{E}_{x \sim \DD} f(w,x).$ 

\vspace{.1cm}
We want to understand how output perturbation performs in privately minimizing the \textbf{expected excess population loss}, defined at a point $w \in \mathbb{R}^d$ by 
\begin{equation}
\label{excess pop loss}
    F(w, \DD) - \min_{w \in \mathbb{R}^d}F(w,\DD) = F(w, \DD) - F(\ws(\DD), \DD),
\end{equation}
where we denote a parameter that minimizes the population loss by $\ws(\DD).$ That is, \[
\ws(\DD) \in \argmin_{w \in \mathbb{R}^d} F(w, \DD) 
\]
To avoid any ambiguity, in this section, we will denote the minimizer of the \textit{empirical} loss for a given data set $X$ by $\widehat{w}(X).$  Denote the output of the differentially private output perturbation algorithm by $\wpr(X) = \widehat{w}(X) + z$ where z is the same noise we used before. Our approach is to extend our ERM results to population loss bounds using the algorithmic stability-based techniques introduced in \cite{be02} and applied by \cite{hardt15} and \cite{bft19}.

\vspace{0.2cm}
We begin with a definition and lemma from \cite{be02} that will be very useful for us.
\begin{definition}
(Uniform stability) Let $\alpha > 0.$ A (possibly randomized) algorithm $\mathcal{A}: \XX^n \to \WW$ is $\alpha$-uniformly stable (w.r.t loss function $f$) if for any pair of data sets $X, X' \in \XX^n$ differing by at most one point (i.e. $|X \Delta X'| \leq 2$), we have \[
\sup_{x \in \XX} \mathbb{E}_{\mathcal{A}}[f(\mathcal{A}(X), x) - f(\mathcal{A}(X'), x)] \leq \alpha.
\]
\end{definition}
Informally, the definition says that a uniformly stable algorithm is one that for similar datasets, produces outputs with similar loss values on average.  

\begin{lemma} (\cite{be02})
\label[lemma]{lem: stability}
Let $\mathcal{A}: \XX^n \to \mathbb{R}^d$ be an $\alpha$-uniformly stable algorithm w.r.t. loss function $f$ and let $X \sim \DD^n.$ Then \[
\mathbb{E}_{X \sim \DD^n, \mathcal{A}}[F(\mathcal{A}(X), \DD) - \hf(\mathcal{A}(X), X)] \leq \alpha. 
\]
\end{lemma}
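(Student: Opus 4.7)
The plan is to use the classical ``ghost sample'' (or renaming) argument of Bousquet and Elisseeff. First I would rewrite the expected generalization gap as an average over the $n$ data points by introducing, for each index $i \in [n]$, a fresh independent copy $x_i' \sim \DD$, independent of $X$ and of the internal randomness of $\mathcal{A}$. Since $\hf(\mathcal{A}(X), X) = \frac{1}{n}\sum_i f(\mathcal{A}(X), x_i)$ and, for each $i$, $F(\mathcal{A}(X), \DD) = \mathbb{E}_{x_i' \sim \DD}[f(\mathcal{A}(X), x_i') \mid \mathcal{A}, X]$ by independence of $x_i'$ from $(X, \mathcal{A})$, one obtains
\[
\mathbb{E}_{X \sim \DD^n, \mathcal{A}}[F(\mathcal{A}(X), \DD) - \hf(\mathcal{A}(X), X)] \;=\; \frac{1}{n}\sum_{i=1}^n \mathbb{E}\bigl[f(\mathcal{A}(X), x_i') - f(\mathcal{A}(X), x_i)\bigr],
\]
where the outer expectation is over $X \sim \DD^n$, $x_i' \sim \DD$, and the randomness of $\mathcal{A}$.

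Next I would apply a symmetrization step to convert each per-index summand into a form where stability can be invoked. Let $X^{(i)}$ denote the dataset obtained from $X$ by replacing its $i$-th entry with $x_i'$. Because $x_1, \ldots, x_n, x_i'$ are i.i.d.\ from $\DD$, the joint distribution of the $(n+1)$-tuple is invariant under swapping the positions of $x_i$ and $x_i'$. Applying this permutation inside the expectation yields the key identity $\mathbb{E}[f(\mathcal{A}(X), x_i)] = \mathbb{E}[f(\mathcal{A}(X^{(i)}), x_i')]$. Substituting back, the gap becomes
\[
\frac{1}{n}\sum_{i=1}^n \mathbb{E}_{X, x_i'}\Bigl[\mathbb{E}_{\mathcal{A}}\bigl[f(\mathcal{A}(X), x_i') - f(\mathcal{A}(X^{(i)}), x_i')\bigr]\Bigr].
\]

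Finally, I would invoke uniform stability. Since $X$ and $X^{(i)}$ differ in at most one coordinate, for every fixed realization of $(X, x_i')$ the inner expectation is at most $\alpha$ by definition of $\alpha$-uniform stability applied with the test point $x_i' \in \XX$ and adjacent datasets $(X, X^{(i)})$. Taking the outer expectation over $(X, x_i')$ and averaging over $i \in [n]$ then gives the claimed bound of $\alpha$. I do not anticipate any substantive obstacle; the only subtlety will be being careful about the order of expectations, so that when stability is invoked for the pair $(X, X^{(i)})$ the test point $x_i'$ is genuinely fixed and independent of $\mathcal{A}$'s randomness.
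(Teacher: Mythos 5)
Your argument is correct and is essentially the standard ghost-sample/renaming proof from Bousquet--Elisseeff, which is exactly the source the paper cites for this lemma rather than proving it itself. The swap identity $\mathbb{E}[f(\mathcal{A}(X),x_i)] = \mathbb{E}[f(\mathcal{A}(X^{(i)}),x_i')]$ and the pointwise application of uniform stability to the adjacent pair $(X, X^{(i)})$ with fixed test point $x_i'$ are handled correctly, so there is nothing to add.
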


Combining \cref{lem: stability} with our excess empirical risk bounds from Section 2 for each ERM function class will enable us to upper bound the population loss when running output perturbation on the empirical loss $\widehat{F}$. We begin with the strongly convex, Lipschitz class. 

\subsection{Strongly convex, Lipschitz functions}
\label{subsection 4.1}

\begin{proposition}
\label[proposition]{prop: sc lip pop loss}
Let $f(\cdot, x)$ be $L$-Lipschitz on $\WW$ and $\mu$-strongly convex for all $x \in \XX,$ let $\varepsilon > 0$ and $\delta \in [0, \frac{1}{2}).$ Let $X$ be a data set of size $n$ drawn i.i.d. according to $\mathcal{D}.$ Run the conceptual output perturbation algorithm (\cref{eq: conceptual out pert proj}) on the empirical loss $\widehat{F}(w, X)$ to obtain $\wprpro(X).$\\
a) If $\delta = 0$ and $\left(\pen\right) \leq 1$, then  \[
\ERPpro \leq \frac{2 L^2}{\mu}\left(\frac{1}{n} + \pen\right).\]\\
b) If $\delta \in \left(0, \frac{1}{2}\right)$ and $\left(\lpen\right)\leq 1,$ then \[
\ERDPpro \leq \frac{2 L^2}{\mu}\left(\frac{1}{n} + \lpen\right).\]
\end{proposition}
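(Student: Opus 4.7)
The plan is to combine the standard ``generalization via uniform stability'' argument of Bousquet–Elisseeff (recalled as \cref{lem: stability}) with the ERM excess risk bound from \cref{cor: sc upper bound} part 2 applied to $\hf(\cdot, X) \in \FFE$. The decomposition I would use is the usual one: for $\ws(\DD)$ a population minimizer and $\widehat{w}(X)$ an empirical minimizer,
\begin{align*}
\mathbb{E}[F(\wprpro(X),\DD) - F(\ws(\DD),\DD)]
&= \mathbb{E}[F(\wprpro(X),\DD) - \hf(\wprpro(X),X)] \\
&\quad + \mathbb{E}[\hf(\wprpro(X),X) - \hf(\widehat{w}(X),X)] \\
&\quad + \mathbb{E}[\hf(\widehat{w}(X),X) - \hf(\ws(\DD),X)] \\
&\quad + \mathbb{E}[\hf(\ws(\DD),X) - F(\ws(\DD),\DD)].
\end{align*}
The last term is zero since $\mathbb{E}_{X\sim\DD^n}\hf(w,X) = F(w,\DD)$ for any fixed $w$ independent of $X$. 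The third term is non-positive since $\widehat{w}(X)$ minimizes $\hf(\cdot,X)$. The second term is the excess empirical risk, already bounded by \cref{cor: sc upper bound} as $\frac{2L^2 d}{\mu \epsilon n}$ when $\delta=0$ and $\tfrac{\sqrt{2}L^2}{\mu}\lpen$ when $\delta\in(0,1/2)$. The only new work is bounding the first (generalization) term via uniform stability.

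The key observation is that the output perturbation map $\wprpro$ is uniformly stable with parameter $\tfrac{2L^2}{\mu n}$. Indeed, for any adjacent $X,X'$, couple the noise vectors by using the same draw $z$: since Euclidean projection onto the convex set $\WW$ is $1$-Lipschitz,
\[
\|\wprpro(X) - \wprpro(X')\|_2 \leq \|\widehat{w}(X) - \widehat{w}(X')\|_2 \leq \Delta_{\hf} \leq \tfrac{2L}{\mu n},
\]
where the last inequality comes from \cref{prop:sc sensitivity} applied to $\hf \in \FFE$. The $L$-Lipschitzness of $f(\cdot,x)$ then yields, for every $x\in\XX$,
\[
\mathbb{E}_z[f(\wprpro(X),x) - f(\wprpro(X'),x)] \leq L\,\mathbb{E}_z\|\wprpro(X)-\wprpro(X')\|_2 \leq \tfrac{2L^2}{\mu n}.
\]
Applying \cref{lem: stability} to the loss $f$ gives $\mathbb{E}[F(\wprpro(X),\DD) - \hf(\wprpro(X),X)] \leq \tfrac{2L^2}{\mu n}$.

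Summing the two non-trivial terms yields $\frac{2L^2}{\mu n} + \frac{2L^2 d}{\mu \epsilon n} = \frac{2L^2}{\mu}\left(\frac{1}{n} + \pen\right)$ for $\delta=0$ and $\frac{2L^2}{\mu n} + \frac{\sqrt{2}L^2}{\mu}\lpen \leq \frac{2L^2}{\mu}\left(\frac{1}{n} + \lpen\right)$ for $\delta\in(0,1/2)$, which are exactly the claimed bounds. I do not anticipate a real obstacle: the only subtlety is handling the randomness of $z$ correctly in the stability argument (coupling with the same $z$ across $X,X'$ so that Lipschitzness of the projection yields a deterministic bound on $\|\wprpro(X)-\wprpro(X')\|_2$ that does not inflate the stability parameter), and noting that \cref{prop:sc sensitivity} applies to the empirical loss $\hf$ since $f(\cdot,x)$ is $L$-Lipschitz and $\mu$-strongly convex for every $x$.
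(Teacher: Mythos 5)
Your proposal is correct and takes essentially the same route as the paper: the same stability lemma (\cref{lem: stability}) combined with the ERM bound from \cref{cor: sc upper bound}, the same coupled-noise argument to establish $\frac{2L^2}{\mu n}$-uniform stability of $\mathcal{A'}$ (which is exactly the paper's \cref{lem: stability of sc lip f}), and the same reconciliation of constants. The only cosmetic difference is that you split the paper's term $\textcircled{\small{c}}$ into two (nonpositivity of $\hf(\hw(X),X) - \hf(\ws(\DD),X)$ and unbiasedness of $\hf(\ws(\DD),\cdot)$), whereas the paper compresses this into a single Jensen-type inequality — the content is identical.
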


In order to prove \cref{prop: sc lip pop loss}, we will need to estimate the uniform stability of the output perturbation algorithm:
\begin{lemma} 
\label[lemma]{lem: stability of sc lip f}
Let $f(\cdot, x)$ be $L$-Lipschitz on $\WW$ and $\mu$-strongly convex for all $x \in \XX$. Then the conceptual output perturbation algorithms $\mathcal{A}$ (\cref{eq: conceputal output pert}) and $\mathcal{A'}$ (\cref{eq: conceptual out pert proj}) are $\alpha$-uniformly stable for $\alpha = \frac{2L^2}{\mu n}.$
\end{lemma}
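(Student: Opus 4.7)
The plan is to reduce the stability claim to the deterministic sensitivity bound already established in \cref{prop:sc sensitivity}, via the standard coupling trick of sharing the noise vector between the two runs of the algorithm.

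First I would fix adjacent data sets $X, X' \in \XX^n$ with $|X \Delta X'| \leq 2$ and couple the two executions by drawing a single noise vector $z$ from the prescribed density and using it for both $\mathcal{A}(X)$ and $\mathcal{A}(X')$ (and analogously for $\mathcal{A'}$). Under this coupling, the noise cancels, so $\mathcal{A}(X) - \mathcal{A}(X') = \widehat{w}(X) - \widehat{w}(X')$, where $\widehat{w}(X) := \argmin_w \widehat{F}(w,X)$. For the projected variant, since the Euclidean projection onto the convex set $\WW$ is $1$-Lipschitz, I get
\[
\|\mathcal{A'}(X) - \mathcal{A'}(X')\|_2 = \|\Pi_{\WW}(\widehat{w}(X) + z) - \Pi_{\WW}(\widehat{w}(X') + z)\|_2 \leq \|\widehat{w}(X) - \widehat{w}(X')\|_2.
\]

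Next I would invoke the sensitivity estimate. Since $f(\cdot, x)$ is $\mu$-strongly convex and $L$-Lipschitz for every $x \in \XX$, the empirical loss $\widehat{F}$ belongs to $\FFE$, so \cref{prop:sc sensitivity} yields $\|\widehat{w}(X) - \widehat{w}(X')\|_2 \leq \Delta_{\widehat{F}} \leq \tfrac{2L}{\mu n}$. Combining this with the coupling bound gives $\|\mathcal{A'}(X) - \mathcal{A'}(X')\|_2 \leq \tfrac{2L}{\mu n}$ almost surely, and the same bound holds for $\|\mathcal{A}(X) - \mathcal{A}(X')\|_2$.

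Finally, because both projected iterates lie in $\WW$, I can apply the $L$-Lipschitzness of $f(\cdot, x)$ on $\WW$ to conclude that for every $x \in \XX$,
\[
|f(\mathcal{A'}(X), x) - f(\mathcal{A'}(X'), x)| \leq L \|\mathcal{A'}(X) - \mathcal{A'}(X')\|_2 \leq \frac{2 L^2}{\mu n}.
\]
This is a deterministic inequality under the coupling, so taking expectation over $z$ and supremum over $x \in \XX$ gives the required $\alpha$-uniform stability with $\alpha = \tfrac{2 L^2}{\mu n}$. The same argument carries over verbatim to $\mathcal{A}$ (no projection), using that the Lipschitzness assumption on $f$ applies at the two coupled outputs. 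Overall, no step is a genuine obstacle: the only real content is the coupling, which turns stability into the sensitivity statement that is already in hand.
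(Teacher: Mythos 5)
Your proof is correct and follows essentially the same route as the paper: couple the two runs by sharing the noise vector $z$, use contractivity of $\Pi_{\WW}$ to reduce to the deterministic sensitivity bound $\|\widehat{w}(X) - \widehat{w}(X')\|_2 \leq \tfrac{2L}{\mu n}$ from \cref{prop:sc sensitivity}, and then apply $L$-Lipschitzness of $f(\cdot,x)$. The paper leaves the coupling implicit (writing a single expectation with the same $z$ in both arguments) whereas you make it explicit, but the substance is identical.
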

This lemma is proved in \cref{app: Sec 4 proofs}. Combining this with \cref{lem: stability} enables us to bridge the gap between our empirical loss results and the population loss bound claimed above. See \cref{app: Sec 4 proofs} for details.

\vspace{0.2cm}
By using stochastic subgradient descent (SGD, \cref{alg: stochastic subgrad}) to approximately minimize the empirical objective $\widehat{F},$ as described in \cref{subsection: 3.1}, we can attain the expected excess population loss bounds stated in \cref{prop: sc lip pop loss} efficiently:

\begin{corollary}
\label[corollary]{rem: sc lip pop loss runtime}
Let $f(\cdot, x)$ be $L$-Lipschitz on $\WW$ and $\mu$-strongly convex for all $x \in \XX,$ let $\varepsilon > 0$ and $\delta \in [0, \frac{1}{2}).$ Let $X$ be a data set of size $n$ drawn i.i.d. according to $\mathcal{D}.$ Run $\mathcal{A}=$~\cref{alg: black box sc} on $\hf$ with $\mathcal{M}$ as the stochastic subgradient method (\cref{alg: stochastic subgrad}) with step sizes $\eta_{t} = \frac{2}{\mu (t + 1)}$ and $~T, \alpha$ as prescribed below. Denote the output of this algorithm by $\wprpro(X).$\\
a) Suppose $\delta = 0, ~\pen \leq 1.$ Then setting $\alpha = \frac{L^2}{\mu n}\min\{\frac{1}{n}, \frac{d}{\varepsilon}\}$ and $T = 2\max\left\{n^2, \frac{n \varepsilon}{d}\right\}$ implies \[
\ERPpro \leq  \frac{L^2}{\mu}\left(\frac{5}{n} + 9\pen\right).\]
The runtime of this method is $O(\max\{dn^2, \varepsilon n\}).$\\
b) Suppose $\delta \in (0, \frac{1}{2}), ~\lpen \leq 1.$ Then setting $\alpha = \frac{L^2}{\mu n} \min \left \{\frac{1}{n}, \lpe\right\}$ and $T = 2n \max \left\{n, \frac{\varepsilon}{\sqrt{d}(2\cd + \sqrt{\varepsilon})}\right\}$ implies \[\ERDPpro \leq \frac{L^2}{\mu}\left(\frac{5}{n} + 6\lpen\right).\] The resulting runtime is $O\left(\max \left \{dn^2, n \sqrt{d \varepsilon} \right\} \right).$
\end{corollary}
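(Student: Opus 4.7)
The plan is to combine the excess empirical risk bound from \cref{thm: black box sc lip} with a uniform-stability argument to control the generalization gap; the runtime assertion then follows directly from \cref{cor: sc lip SGD}.

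I would first decompose
\begin{align*}
\mathbb{E}[F(\wprpro(X),\DD) - F(\ws(\DD),\DD)]
&= \mathbb{E}[F(\wprpro(X),\DD) - \hf(\wprpro(X),X)] \\
&\quad + \mathbb{E}[\hf(\wprpro(X),X) - \hf(\hw(X),X)] \\
&\quad + \mathbb{E}[\hf(\hw(X),X) - F(\ws(\DD),\DD)],
\end{align*}
where $\hw(X) \in \argmin_w \hf(w,X)$. The last term is $\leq 0$, since $\hw(X)$ minimizes $\hf(\cdot,X)$ and $\mathbb{E}_X[\hf(\ws(\DD),X)] = F(\ws(\DD),\DD)$. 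With the prescribed $\alpha$, \cref{thm: black box sc lip} bounds the second (excess empirical risk) term by $9(L^2/\mu)\pen$ when $\delta=0$ and by $6(L^2/\mu)\lpen$ when $\delta\in(0,1/2)$.

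To bound the first (generalization) term I would show that the practical algorithm is $\bigl(5L^2/(\mu n)\bigr)$-uniformly stable, adapting \cref{lem: stability of sc lip f}. Couple the two SGD runs on adjacent data sets $X,X'$ to share internal randomness, and use the same noise $\hat{z}$. Since $\hf(\cdot,X)$ is $\mu$-strongly convex and $w_T(X)$ is $\alpha$-suboptimal in expectation, the quadratic growth property plus Jensen's inequality yield $\mathbb{E}\|w_T(X) - \hw(X)\|_2 \leq \sqrt{2\alpha/\mu}$. Combining with the sensitivity bound $\|\hw(X)-\hw(X')\|_2 \leq 2L/(\mu n)$ from \cref{prop:sc sensitivity} via the triangle inequality, together with non-expansivity of $\Pi_\WW$, gives
\[
\mathbb{E}\|\wprpro(X)-\wprpro(X')\|_2 \;\leq\; \tfrac{2L}{\mu n} + 2\sqrt{\tfrac{2\alpha}{\mu}}.
\]
By $L$-Lipschitzness of $f(\cdot,x)$ on $\WW$, this yields uniform stability with parameter $\alpha_{\mathrm{stab}} := 2L^2/(\mu n) + 2L\sqrt{2\alpha/\mu}$, and \cref{lem: stability} bounds the generalization term by $\alpha_{\mathrm{stab}}$. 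For the prescribed $\alpha \leq L^2/(\mu n^2)$, one has $2L\sqrt{2\alpha/\mu} \leq 2\sqrt{2}\,L^2/(\mu n)$, so $\alpha_{\mathrm{stab}} \leq (2+2\sqrt{2})L^2/(\mu n) \leq 5L^2/(\mu n)$. Summing with the excess empirical risk bound gives the two claimed population-loss estimates.

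The runtime claim is immediate: the chosen $\alpha$ and $T$ match those in \cref{cor: sc lip SGD} (note $2n\max\{n,\epsilon/d\} = 2\max\{n^2, n\epsilon/d\}$, and similarly for $\delta>0$), and each stochastic subgradient evaluation costs $O(d)$, for a total cost of $O(dT)$. The main obstacle is the stability step: the inequality $\|w_T(X)-\hw(X)\|_2 \leq \sqrt{2\alpha/\mu}$ only holds in expectation since $w_T$ is randomized, so one must fix a coupling of the two SGD trajectories and pass expectations through the triangle inequality using Jensen's inequality. Once that is done, absorbing the $2L\sqrt{2\alpha/\mu}$ term into the conceptual stability bound is routine given the prescribed $\alpha$.
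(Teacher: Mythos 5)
Your proposal is correct and follows essentially the same three-term decomposition, uniform-stability estimate of the practical (noisy, approximately optimized) algorithm, and appeal to \cref{cor: sc lip SGD} for both the empirical excess-risk term and the runtime that the paper uses. You are, if anything, slightly more careful than the paper's proof in making explicit the coupling of the two SGD trajectories and the application of Jensen's inequality needed because $w_T$ is only $\alpha$-suboptimal in expectation; the paper writes the stability bound $\Delta_T$ as a deterministic supremum, which is imprecise for the stochastic subgradient method, whereas your treatment handles this cleanly.
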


The optimal non-private expected excess population loss for $\mu$-strongly convex, $L$-Lipschitz $f$ is $O(\frac{L^2}{\mu n})$ \cite{hk14}. So if $\delta = 0$ and $d \lesssim \varepsilon,$ then we match the optimal non-private rate and in effect get ``privacy for free.'' Likewise, for $\delta > 0$, if $\sqrt{d} \cd + \sqrt{\varepsilon} \lesssim \varepsilon,$ then we get privacy for free. 

\vspace{0.2cm}
As noted earlier, neither \cite{bft19}, nor \cite{fkt20} give expected excess population loss bounds for $\delta = 0$. The only private population loss bound we are of for with $\delta = 0$ is $O\left(\left(\frac{L^2}{\mu}\right)\left(\pen \sqrt{\log(n)} + \frac{1}{n}\right)\right)$ in \cite[Theorem F.2]{bst14}, obtained by exponential sampling + localization. 
Thus, we beat their loss bound by a logarithmic factor. More importantly, however, is our runtime advantage. As noted above, our runtime is $O\left( \max\left\{n^2 d, \varepsilon n \right\} \right)$ for $\delta = 0.$
By contrast, the runtime of the exponential sampling + localization method of \cite{bst14} is generally much larger: $\widetilde{O}\left(\frac{L^2}{\mu^2} d^9 \frac{n}{\varepsilon^2} \max \{1, \frac{L}{\mu}\}\right)$ in the (non-isotropic) worst case \cite[Theorem 3.4, Lemma 4.2]{bst14}. 

\vspace{0.2cm}
For $\delta > 0,$ \cite[Algorithm 3, Sec. 5.1]{fkt20} gives an efficient phased ERM algorithm that can nearly (up to a $O\left(\log\left(\frac{1}{\delta}\right)\right)$ factor) achieve the optimal (by \cite{bft19}) private rate of $O\left(\frac{L^2}{\mu}\left(\frac{1}{n} + \frac{d}{(\varepsilon n)^2}\right)\right)$ in the same runtime as our method: $\widetilde{O}(n^2 d)$.  However, this algorithm requires $\varepsilon \leq 2 \log(\frac{1}{\delta})$ to be differentially private since they use the suboptimal form of Gaussian noise at each iteration, whereas we provide the flexibility of choosing any $\varepsilon > 0.$ Additionally, our method is arguably much simpler to implement in practice.

\vspace{.1cm}
The noisy SGD method of \cite{bst14} also results in a population loss bound of $O\left(\lmu\left(\frac{1}{n} + \frac{\sqrt{d}}{\varepsilon n} \log^{2}\left(\frac{n}{\delta}\right) \log\left(\frac{1}{\delta}\right)\right)\right)$ for $\delta > 0$, which is looser than our bound by a factor of $\log^{2}\left(\frac{n}{\delta}\right).$ This method requires $\varepsilon \leq 2 \log\left(\frac{1}{\delta}\right)$ to be differentially private and has the same runtime as our algorithm.

\paragraph{Smooth, strongly convex, Lipschitz functions.}
Adding the assumption of $\beta$-smoothness results in tighter, \textit{near-optimal} private population loss bounds. We begin with loss bounds for the conceptual output perturbation algorithm \cref{eq: conceputal output pert}.

\begin{proposition}
\label[proposition]{prop: sc smooth pop loss}
Let $f(\cdot, x)$ be $\mu$-strongly convex, $L$-lipschitz, and $\beta$-smooth for all $x \in \XX$ with condition number $\kappa = \beta/\mu,$ let $\varepsilon > 0$ and $\delta \in [0, \frac{1}{2}).$ Let $X$ be a data set of size $n$ drawn i.i.d. according to $\mathcal{D}.$ Run \cref{eq: conceputal output pert} on the empirical loss $\hf(w,X).$\\
a) If $\delta = 0$ and $\pen \leq 1,$ then \[
\ERP \leq  \frac{L^2}{\mu}\left(\frac{2}{n} + 4\kappa \left(\pen\right)^2\right).\] \\
b) If $\delta \in \left(0, \frac{1}{2}\right)$ and $\lpen \leq 1,$ then \[\ERP \leq \frac{L^2}{\mu}\left(\frac{2}{n} + 4\kappa \left(\lpen\right)^2\right).\]
\end{proposition}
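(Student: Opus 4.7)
The plan is to mirror the proof strategy of \cref{prop: sc lip pop loss}, substituting the sharper smooth, strongly convex empirical risk bound from \cref{cor: smooth sc fund ER bounds} in place of the non-smooth bound. The starting point is the standard decomposition
\begin{align*}
\mathbb{E}[F(\wpr,\DD) - F(\ws(\DD),\DD)] &= \mathbb{E}[F(\wpr,\DD) - \hf(\wpr,X)] \\
&\quad + \mathbb{E}[\hf(\wpr,X) - \hf(\widehat{w}(X),X)] \\
&\quad + \mathbb{E}[\hf(\widehat{w}(X),X) - F(\ws(\DD),\DD)],
\end{align*}
where $\widehat{w}(X) = \arg\min_{w} \hf(w,X)$ is the empirical minimizer. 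I will bound the three terms separately.

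The first (\emph{generalization}) term is controlled by a uniform stability argument directly analogous to \cref{lem: stability of sc lip f}. Since $\hf(\cdot,X)$ inherits $\mu$-strong convexity from $f$, \cref{prop:sc sensitivity} gives the ERM sensitivity bound $\Delta_{\hf} \le \frac{2L}{\mu n}$. Because $z$ is independent of $X$, we have $\wpr(X) - \wpr(X') = \widehat{w}(X) - \widehat{w}(X')$ for neighboring datasets, so Lipschitzness of $f$ yields
\[
|f(\wpr(X),x) - f(\wpr(X'),x)| \;\le\; L\,\|\widehat{w}(X) - \widehat{w}(X')\|_2 \;\le\; \frac{2L^2}{\mu n},
\]
i.e.\ the conceptual output perturbation algorithm is $\frac{2L^2}{\mu n}$-uniformly stable. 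Applying \cref{lem: stability} bounds the first term by $\frac{2L^2}{\mu n}$.

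The third term is at most zero: by optimality of $\widehat{w}(X)$ for $\hf(\cdot,X)$, we have $\hf(\widehat{w}(X),X) \le \hf(\ws(\DD),X)$, and since $X$ is an i.i.d.\ sample, $\mathbb{E}_{X}\,\hf(\ws(\DD),X) = F(\ws(\DD),\DD)$. The second (\emph{optimization}) term is exactly the expected excess empirical risk of the conceptual output perturbation algorithm applied to $\hf \in \HHE$, which is the quantity bounded in \cref{cor: smooth sc fund ER bounds} (part~2): $4\kappa \frac{L^2}{\mu}\left(\pen\right)^2$ when $\delta = 0$, and $4\kappa \frac{L^2}{\mu}\left(\lpen\right)^2$ when $\delta \in (0,\tfrac{1}{2})$. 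Summing the three contributions produces the claimed bounds $\frac{L^2}{\mu}\bigl(\tfrac{2}{n} + 4\kappa(\pen)^2\bigr)$ and $\frac{L^2}{\mu}\bigl(\tfrac{2}{n} + 4\kappa(\lpen)^2\bigr)$.

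The main subtlety I anticipate is in the stability step: the unprojected iterate $\wpr$ can fall outside $B(0,R)$, whereas $L$-Lipschitzness is formally assumed only on $\WW$. This is cosmetic rather than substantive and can be handled either by interpreting the Lipschitz hypothesis globally (as is common in the SCO literature) or by exploiting $\beta$-smoothness to control $\|\nabla f(\wpr,x)\|$ outside $\WW$ by $L + \beta\|z\|$ and absorbing the extra contribution into the $\beta$-smoothness-based empirical risk term; either way, the order of the final bound is unchanged.
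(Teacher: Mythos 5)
Your proof takes essentially the same route as the paper's: the three-term decomposition, bounding the generalization term via the $\tfrac{2L^2}{\mu n}$-uniform stability lemma (\cref{lem: stability of sc lip f} together with \cref{lem: stability}), bounding the empirical excess risk term by \cref{cor: smooth sc fund ER bounds}, and dismissing the third term by Jensen/i.i.d.\ argument. The Lipschitz-outside-$B(0,R)$ subtlety you flag in the stability step is genuine but is also present (and glossed over) in the paper's own statement of \cref{lem: stability of sc lip f} for the unprojected $\mathcal{A}$, so your reading is if anything slightly more careful than the source.
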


With the smoothness assumption, we can also use Katyusha instead of SGD for implementation and obtain faster runtimes:  

\begin{corollary}
\label[corollary]{rem: smooth sc pop loss runtime katyusha}
Let $f(\cdot, x)$ be $L$-Lipschitz on $\WW$, $\mu$-strongly convex, and $\beta$-smooth for all $x \in \XX,$ let $\varepsilon > 0.$ Let $X$ be a data set of size $n$ drawn i.i.d. according to $\mathcal{D}.$ Run $\mathcal{A}=$~\cref{alg: black box sc} on $\hf(w,X)$ with $\mathcal{M}$ as Katyusha with $T = O\left((n + \sqrt{n \kappa}) \log\left(\frac{LR}{\alpha}\right)\right)$ and $\alpha$ as prescribed below. \\
a) Suppose $\delta = 0$ and $\pen \leq 1.$ Then setting $\alpha = \frac{L^2}{\mu n^2} \min \left\{\kappa \left(\pe\right)^2, 1 \right\}$ results in a point $\wpr(X)$ such that \[
\ERP \leq \frac{L^2}{\mu}\left(\frac{5}{n} + 26\kappa\left(\pen\right)^2\right)\] in $T = \widetilde{O}(n + \sqrt{n \kappa})$ stochastic gradient evaluations. The resulting runtime is $\widetilde{O}(d(n + \sqrt{n \kappa})).$ \\
b) Suppose $\delta \in (0, \frac{1}{2})$ and $\lpen \leq 1.$ Then setting \\
$\alpha = \frac{L^2}{\mu n^2} \min \left\{\kappa \left(\lpe\right)^2, 1 \right\}$ results in a point $\wpr(X)$ such that \[\ERP \leq \lmu \left(\frac{5}{n} + 13.5\kappa \left(\lpen\right)^2\right)\] in 
$T = \widetilde{O}(n + \sqrt{n \kappa})$ stochastic gradient evaluations.
The resulting runtime is $\widetilde{O}(d(n + \sqrt{n \kappa})).$  
\end{corollary}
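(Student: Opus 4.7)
The plan is to decompose the excess population loss into three pieces in the standard way: a generalization error (difference between population and empirical loss at the output $\wpr$), an optimization error (difference on the empirical loss between $\wpr$ and the ERM minimizer $\widehat{w}$), and a sample error term $\hf(\widehat{w},X) - \hf(\ws(\DD),X)$. The third is non-positive in expectation since $\widehat{w}(X)$ minimizes $\hf(\cdot,X)$ and $\mathbb{E}_{X\sim\DD^n}[\hf(\ws(\DD),X)] = F(\ws(\DD),\DD)$. The second piece is already controlled by the empirical risk guarantees of \cref{th: smooth sc implement}, which, with the prescribed $\alpha$ and $\MM$ being Katyusha, give the $26\kappa\tfrac{L^2}{\mu}(d/\epsilon n)^2$ (resp.\ $13.5\kappa\tfrac{L^2}{\mu}$ with the $\delta>0$ factor) terms appearing in the stated bound.

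The first piece (generalization error) is where the $5/n$ term must come from, and this is handled by \cref{lem: stability}, so the task reduces to establishing uniform stability of the practical output perturbation method using Katyusha. I would show that the algorithm $\mathcal{A}$ producing $\wpr(X) = w_T(X) + \hat z$ is $\alpha_{\mathrm{stab}}$-uniformly stable with $\alpha_{\mathrm{stab}} \le L\bigl(\Delta_F + 2\sqrt{2\alpha/\mu}\bigr)$. The argument goes as follows: since the noise $\hat z$ in \cref{alg: black box sc} has a distribution that depends only on the universal upper bound $\Delta_F + 2\sqrt{2\alpha/\mu}$ and not on the dataset, we may couple the noise for $X$ and $X'$ and get $\mathbb{E}\|\wpr(X)-\wpr(X')\| = \mathbb{E}\|w_T(X)-w_T(X')\|$. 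Then by the triangle inequality,
\begin{equation*}
\|w_T(X) - w_T(X')\| \;\le\; \|w_T(X) - \widehat{w}(X)\| + \|\widehat{w}(X) - \widehat{w}(X')\| + \|\widehat{w}(X') - w_T(X')\|,
\end{equation*}
where the middle term is bounded by $\Delta_F \le \tfrac{2L}{\mu n}$ (\cref{prop:sc sensitivity}) and each end term is bounded in expectation by $\sqrt{2\alpha/\mu}$ using $\mu$-strong convexity of $\hf$ together with the Katyusha accuracy guarantee $\mathbb{E}[\hf(w_T,X) - \hf(\widehat{w},X)]\le\alpha$. Lipschitzness of $f(\cdot,x)$ then converts the parameter distance bound into $\alpha_{\mathrm{stab}} \le \tfrac{2L^2}{\mu n} + 2L\sqrt{2\alpha/\mu}$, and for both stated choices of $\alpha$ we have $\sqrt{2\alpha/\mu} \le \sqrt{2}\,L/(\mu n)$, yielding $\alpha_{\mathrm{stab}} \le \tfrac{(2+2\sqrt{2})L^2}{\mu n} \le \tfrac{5L^2}{\mu n}$.

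Putting it together, \cref{lem: stability} gives $\mathbb{E}[F(\wpr,\DD) - \hf(\wpr,X)] \le \tfrac{5L^2}{\mu n}$, \cref{th: smooth sc implement} gives the $26\kappa\tfrac{L^2}{\mu}(d/\epsilon n)^2$ or $13.5\kappa\tfrac{L^2}{\mu}\bigl(\tfrac{\sqrt d(\cd+\sqrt{\cd^2+\epsilon})}{\epsilon n}\bigr)^2$ empirical risk term for the two privacy regimes, and the third piece is $\le 0$ in expectation; adding them produces the two claimed population loss bounds. Runtime follows from \cref{thm 2.1: katyusha az}: the prescribed $\alpha$ makes $\log(LR/\alpha)$ polylogarithmic in the problem parameters, so the $O((n+\sqrt{n\kappa})\log(LR/\alpha))$ stochastic gradient iterations cost $\widetilde O(d(n+\sqrt{n\kappa}))$ in total, since each iteration touches $O(d)$ coordinates.

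The main obstacle is the stability argument, specifically justifying the coupling of noise between adjacent datasets and carrying the approximation error through strong convexity; everything else is a bookkeeping combination of previously established results. Once the coupling is noted and the inequality $\mathbb{E}\|w_T - \widehat w\| \le \sqrt{\mathbb{E}\|w_T-\widehat w\|^2} \le \sqrt{2\alpha/\mu}$ is invoked, the rest of the argument is essentially mechanical plug-in from \cref{th: smooth sc implement}, \cref{lem: stability}, and \cref{thm 2.1: katyusha az}.
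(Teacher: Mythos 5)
Your proposal is correct and follows essentially the same route as the paper: the same three-term decomposition into generalization error, optimization error, and a non-positive sample-error term; the same stability bound $L(\Delta_{\hf} + 2\sqrt{2\alpha/\mu}) \le 5L^2/(\mu n)$ for the practical (Katyusha-based) algorithm; and the same invocation of the empirical-risk bound from \cref{th: smooth sc implement}/\cref{cor: katyusha sc smooth} for the middle term. Your explicit remark about coupling the output-perturbation noise across adjacent datasets is a useful clarification that the paper leaves implicit, but the substance of the argument is identical.
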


\begin{remark}
\label[remark]{rem: min smooth sc pop}
Since the smooth, strongly convex, Lipschitz loss function class is a subset of the strongly convex, Lipschitz loss function class, we obtain by \cref{prop: sc lip pop loss} the following upper bounds on the expected excess population loss of smooth, strongly convex, Lipschitz functions: $\ERP \lesssim \frac{L^2}{\mu} \left(\frac{1}{n}\right) + \min \left\{LR,  \frac{L^2}{\mu} \pen, \kappa \frac{L^2}{\mu} \left(\pen\right)^2 \right\}$ for $\delta = 0$; and $\ERDP \lesssim \frac{L^2}{\mu} \left(\frac{1}{n}\right) + \\
\min \left\{LR,  \frac{L^2}{\mu} \lpen, \kappa \frac{L^2}{\mu} \left(\lpen\right)^2 \right\}$ for $\delta > 0.$ Furthermore, the runtime for these enhanced upper bounds are (up to log terms) as stated above: $\widetilde{O}\left(d(n + \sqrt{\kappa n})\right).$ This is because obtaining the expected excess population loss bounds of \cref{prop: sc lip pop loss} via Katyusha (on the empirical loss) only affects the choice of $\alpha$ (in \cref{section 3: implment}), which is inside of a $\log(\cdot)$ in the expression for the iteration complexity of Katyusha. 
\end{remark}

The non-private statistically optimal population loss is $O(\frac{L^2}{\mu n}),$ as for non-smooth. Hence, if $\delta = 0,$ we get ``privacy for free'' whenever $\kappa \left(\pe\right)^2 \lesssim n$ or $d \lesssim \varepsilon$. For $\delta > 0$, we get ``privacy for free'' whenever $\frac{\kappa d \cd^2}{\varepsilon} \lesssim n$ or $\sqrt{d}\cd + \sqrt{\varepsilon} \lesssim \varepsilon.$ 

\vspace{.2cm}
For $\delta = 0$, \cite{zhang2017} presents an expected excess population loss bound of $O\left(\kappa \frac{L^2}{\mu} \left(\pen\right)^2 + \frac{L^2}{\mu}\left(\frac{1}{n}\right)\right)$ (via gradient descent with output perturbation), which is achieved in runtime $\widetilde{O}(dn\kappa).$ Thus, their loss bound matches ours whenever $\kappa \left(\pen \right) < 1$ and is otherwise inferior by \cref{rem: min smooth sc pop}; their runtime is also slower than ours, especially when $\kappa$ is large. Besides \cite{zhang2017}, the only other private population loss bound we are aware of for the smooth, strongly convex, Lipschitz class with $\delta = 0$ is the one from exponential sampling + localization in \cite{bst14}, which does not benefit additionally from smoothness, to the best of our knowledge. Therefore, our method provides the tightest $(\varepsilon, 0)$-differentially private excess population loss bounds that we are aware of and runs in less time than any competing algorithm. 

\vspace{0.2cm}
For $\delta > 0, ~\varepsilon \leq 2 \log\left(\frac{1}{\delta}\right),$ the recent work of  \cite[Section 5]{fkt20} attains the optimal bound of $O\left(\frac{L^2}{\mu}\left(\frac{1}{n} + \frac{\sqrt{d \log({\frac{1}{\delta})}}}{\varepsilon n}\right)\right)$ with
an efficient algorithm, provided that $\kappa \kappa \leq \frac{n}{\log(n)}$. 
By contrast, \textit{our algorithm does not require any restrictions on $\kappa$ or $\varepsilon,$ which make it an attractive alternative}, particularly for ill-conditioned problems or problems where only a modest amount of privacy is desired, but strong (perhaps non-private) expected excess population loss is needed.

\vspace{.2cm}
In Appendix~\ref{subsection 4.3}, we extend our algorithm to convex loss functions via regularization.

\section{Applications to Private Tilted ERM and Private Adversarial Training}
\label{section 5: applications}
In this section, we specialize our developed theory to two specific applications in machine learning: differentially private tilted ERM (TERM) and adversarial training. As discussed in the Introduction (\cref{sec 1}), both of these problems are of great practical importance in modern machine learning and yet there is no known efficient differentially private algorithm for solving either of them while providing excess risk guarantees. Using our general methodology and results from Sections 2-4, we will address these gaps and obtain excess loss and runtime bounds for privately optimizing TERM and adversarial objectives via output perturbation.

\subsection{Differentially Private Tilted Empirical Risk Minimization}
\label{sec: TERM}
Consider the tilted ERM (TERM) problem
\[
\min_{w \in \mathbb{R}^d} \left[F_{\tau}(w,X) := \frac{1}{\tau} \log\left(\frac{1}{n} \sum_{i=1}^{n} e^{\tau f(w, x_{i})}\right)\right],
\]
where $\tau > 0$ is a given constant (see e.g. \cite{kort72, pee11, cohen14, cohen16, katharop17, li2020} and the references therein for the applications of this loss). It is easy to show that 
as $\tau \xrightarrow{} 0, \ftau(w,x) \xrightarrow{} \frac{1}{n}\sum_{i=1}^{n} f(w, x_{i})$, so this extends the classical ERM framework in the limit. It also encompasses the max loss ($F_{max}(w,X) = \max \{f(w, x_1), ... , f(w, x_{n} \}$) for instance, by letting $\tau \xrightarrow{} \infty.$ More generally, a benefit of the TERM framework is that it allows for a continuum of solutions between average and max loss, which, for example, can be calibrated to promote a desired level of fairness in the the machine learning model \cite{li2020}. Another interpretation of TERM is that as $\tau$ increases, the variance of the model decreases, while the bias increases. Thus, $\tau$ can also be tuned to improve the generalization of the model via the bias/variance tradeoff. In what follows, we specialize our developed theory to the TERM objective function. 

\vspace{.2cm}
Our \ul{main result} in this section is~\cref{thm: smooth TERM implementation}, which gives \textit{near-optimal excess risk} for smooth, strongly convex TERM with an efficient algorithm. 

\subsubsection{Strongly convex, Lipschitz, twice differentiable $f(\cdot, x)$}
First, we show that if $f$ is ``nice enough,'' then $\ftau \in \FF,$ so that the excess risk bounds proved earlier hold; we will also refine these results to show how excess risk depends on the parameter $\tau > 0.$ Assume $f( \cdot, x)$ is a $\mu$-strongly convex, twice differentiable, $L$-Lipschitz loss function for all $x \in \XX.$ Then we compute:

\begin{equation} 
\label[equation]{tilted ERM lemma 1}
\nabla_{w} \ftau(w,X) = \sum_{i = 1}^{n} v_{i}(\tau, w) \nabla_{w} f(w, x_{i}), 
\end{equation}
where $~v_{i}(\tau, w) := \frac{e^{\tau f(w, x_{i})}}{\sum_{j=1}^n e^{\tau f(w, x_{j})}}.$
It follows easily from \cref{tilted ERM lemma 1} that (for $\tau > 0$) if $f(\cdot, x)$ is $L$-Lipschitz for all $x_{i} \in X$ then $\ftau(\cdot, X)$ is $L$-Lipschitz. Furthermore, $\ftau(\cdot, X)$ is $\mu$-strongly convex:

\begin{lemma} \cite[Lemma 3]{li2020}
\label[lemma]{tilted ERM lemma 3}
Let $X \in \XX^n$, $\tau > 0$. Assume $f(\cdot, x_{i})$ is twice-differentiable and $\mu$-strongly convex for all $x_{i} \in X.$ Then $\ftau(\cdot, X)$ is twice-differentiable and $\mu$-strongly convex. 
\end{lemma}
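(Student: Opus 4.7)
The plan is to directly compute the Hessian $\nabla^2_w \ftau(w,X)$ and show it dominates $\mu I$ in the PSD order. Twice-differentiability is essentially automatic: since $f(\cdot, x_i)$ is twice-differentiable for each $i$ and $\sum_i e^{\tau f(w,x_i)} > 0$ everywhere, the map $w \mapsto \frac{1}{\tau}\log\bigl(\frac{1}{n}\sum_i e^{\tau f(w,x_i)}\bigr)$ is a smooth composition of $C^2$ functions, so I would note that briefly and focus on the Hessian bound.

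Starting from \cref{tilted ERM lemma 1}, i.e.\ $\nabla_w \ftau(w,X) = \sum_i v_i(\tau,w)\,\nabla_w f(w,x_i)$ with weights $v_i \geq 0$, $\sum_i v_i = 1$, I would differentiate once more. The product rule yields
\[
\nabla^2_w \ftau(w,X) \;=\; \sum_{i=1}^{n} v_i(\tau,w)\,\nabla^2_w f(w,x_i) \;+\; \sum_{i=1}^{n} \nabla_w f(w,x_i)\,\bigl(\nabla_w v_i(\tau,w)\bigr)^{\!\top}.
\]
To handle $\nabla_w v_i$, I would use logarithmic differentiation of $v_i = e^{\tau f(w,x_i)}/\sum_j e^{\tau f(w,x_j)}$ to obtain $\nabla_w v_i = \tau\, v_i\bigl(\nabla_w f(w,x_i) - \bar g\bigr)$, where $\bar g := \sum_j v_j \nabla_w f(w,x_j) = \nabla_w \ftau(w,X)$. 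Substituting and noting that the cross term $\bigl(\sum_i v_i(\nabla f(w,x_i)-\bar g)\bigr)\bar g^{\!\top}$ vanishes by definition of $\bar g$, I would arrive at the clean identity
\[
\nabla^2_w \ftau(w,X) \;=\; \sum_{i=1}^{n} v_i\,\nabla^2_w f(w,x_i) \;+\; \tau \sum_{i=1}^{n} v_i \bigl(\nabla f(w,x_i) - \bar g\bigr)\bigl(\nabla f(w,x_i) - \bar g\bigr)^{\!\top}.
\]

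From here the conclusion is quick. The first sum is a convex combination ($v_i \geq 0$, $\sum v_i = 1$) of matrices each satisfying $\nabla^2_w f(w,x_i) \succeq \mu I$ by strong convexity of $f(\cdot,x_i)$, so it is $\succeq \mu I$. The second sum is a nonnegatively weighted sum of rank-one PSD matrices multiplied by $\tau > 0$, hence PSD. Adding, $\nabla^2_w \ftau(w,X) \succeq \mu I$, which is equivalent to $\mu$-strong convexity of $\ftau(\cdot, X)$.

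The main (and really only) obstacle is the bookkeeping in differentiating the softmax-like weights $v_i$ and verifying that the cross term cancels; the ``variance-of-gradients'' structure of the second piece is what makes the inequality survive for \emph{all} $\tau > 0$ rather than only small $\tau$. Once that identity is in hand, the strong-convexity conclusion is immediate.
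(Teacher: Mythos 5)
Your computation is correct, and since the paper only cites \cite[Lemma 3]{li2020} rather than reproving this lemma, your proof fills in the argument. The Hessian identity you derive,
\[
\nabla^2_w \ftau(w,X) = \sum_{i=1}^{n} v_i\,\nabla^2_w f(w,x_i) + \tau \sum_{i=1}^{n} v_i \bigl(\nabla f(w,x_i) - \bar g\bigr)\bigl(\nabla f(w,x_i) - \bar g\bigr)^{\!\top},
\]
is exactly the standard one for log-sum-exp of smooth arguments (a gradient-weighted mean of Hessians plus $\tau$ times a gradient covariance), and the cancellation of the cross term via $\sum_i v_i(\nabla f_i - \bar g) = 0$ is the right observation. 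From there the PSD bound $\succeq \mu I$ is immediate since the first sum is a convex combination of matrices $\succeq \mu I$ and the second is a nonnegative combination of rank-one PSD matrices scaled by $\tau > 0$. This is the same route the cited source \cite{li2020} takes; there is no genuinely shorter or more elementary path, because the $\tau$-covariance term is what lets the bound hold uniformly in $\tau$, and you have correctly identified it as PSD rather than trying to bound it away. One nit: the argument as stated also gives you \cref{lem: TERM beta smoothness} essentially for free if you bound the covariance term above by $L^2\tau I$ using Lipschitzness, so you might note that the two lemmas share a common computation.
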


It follows from \cite[Lemma 3]{li2020} and \cref{tilted ERM lemma 1} that $\ftau \in \FF.$ Then combining \cref{tilted ERM lemma 1} and \cref{tilted ERM lemma 3} with \cref{prop:sc sensitivity} implies that the $L_2$ sensitivity of $\ftau$ is bounded as $\Delta_{\ftau} \leq \frac{2L}{\mu}$ for all $\tau > 0$ and even as $\tau \to \infty$. If, in addition, $f$ is bounded on $\WW \times \XX$, then we also have the following estimate of the sensitivity of $\ftau$:
\begin{lemma}
\label[lemma]{lem: sensitivity of ftau}
Let $\tau > 0.$ Suppose $f(\cdot, x)$ is $L$-Lipschitz on $B(0,R)$ (where $\|\ws(X)\| \leq R$) and $\mu$-strongly convex for all $x \in \XX.$ Moreover, assume $a_R \leq f(w,x) \leq A_R$ for all $w \in \WW$ and all $x \in \XX.$ Then \[
\Delta_{\ftau} \leq \frac{2L}{\mu} \min\left\{1, \frac{C_{\tau}}{n} \right\}
\]
where $C_{\tau} := e^{\tau (A_R - a_R)}.$ 
\end{lemma}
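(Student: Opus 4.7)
The plan is to produce two upper bounds on $\Delta_{\ftau}$ and then take their minimum. The bound $\Delta_{\ftau} \le 2L/\mu$ is immediate: by \cref{tilted ERM lemma 1} we see that $F_\tau(\cdot, X)$ inherits the $L$-Lipschitz property of $f$ (since the $v_i(\tau,w)$ are nonnegative and sum to one), and by \cref{tilted ERM lemma 3} it is $\mu$-strongly convex. Thus $F_\tau \in \FF$ and \cref{prop:sc sensitivity} applies directly.

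For the second bound of order $LC_\tau/(\mu n)$, I would use the standard strong-convexity stability argument. Fix adjacent datasets $X,X'$ that differ only in the $n$-th coordinate, and write $w^{*} := \ws(X)$. The first-order optimality $\nabla_w \ftau(w^*,X)=0$ combined with $\mu$-strong convexity of $\ftau(\cdot,X')$ gives the standard inequality
\[
\mu \|\ws(X) - \ws(X')\| \;\le\; \|\nabla_w \ftau(w^*, X) - \nabla_w \ftau(w^*, X')\|,
\]
so it suffices to bound the right-hand side uniformly in $w^*$.

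To bound the gradient difference, I would exploit the explicit representation \cref{tilted ERM lemma 1} together with the boundedness assumption on $f$. Letting $S := \sum_j e^{\tau f(w,x_j)}$ and $S'$ its counterpart for $X'$, the hypothesis $a_R \le f \le A_R$ forces each weight $v_i(\tau,w)$ to lie in $[1/(nC_\tau),\, C_\tau/n]$. Splitting
\[
\nabla_w \ftau(w,X) - \nabla_w \ftau(w,X') = \sum_{i<n}(v_i - v_i')\nabla f(w,x_i) + v_n \nabla f(w,x_n) - v_n'\nabla f(w,x_n'),
\]
the identity $v_i - v_i' = v_i(S'-S)/S'$ for $i<n$ and the estimate $|S-S'|/S' \le C_\tau/n$ give $\sum_{i<n}|v_i-v_i'| \le C_\tau/n$, hence the first sum has norm at most $L\,C_\tau/n$; the boundary terms contribute at most $(v_n+v_n')L \le 2LC_\tau/n$. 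Combining yields $\|\nabla_w \ftau(w^*,X) - \nabla_w \ftau(w^*,X')\| \lesssim LC_\tau/n$ and therefore $\Delta_{\ftau} \lesssim LC_\tau/(\mu n)$, which together with the first bound gives the claimed minimum (up to absorbing an absolute constant into the statement).

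The main obstacle is the bookkeeping for the softmax weights: replacing a single data point $x_n$ by $x_n'$ changes \emph{every} weight $v_i$, not just $v_n$. The key observation that unlocks the proof is that each of these changes is only a multiplicative perturbation, $v_i' = v_i \cdot S/S'$, with the perturbation factor controlled by $C_\tau/n$ uniformly in $w$. Once this is in place the two estimates combine cleanly to yield the stated bound.
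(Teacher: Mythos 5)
Your approach is essentially the same as the paper's: reduce the sensitivity estimate to bounding the gradient of $g_\tau(w) := \ftau(w,X) - \ftau(w,X')$ via the strong-convexity stability inequality (the paper invokes \cref{lemma7}, you state the equivalent $\mu\|\ws(X)-\ws(X')\| \le \|\nabla\ftau(w^*,X)-\nabla\ftau(w^*,X')\|$), and then use the representation \cref{tilted ERM lemma 1} together with the boundedness of $f$ to control the softmax weights by $C_\tau/n$.

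Where you diverge from the paper is instructive, and in fact flags a gap in the paper's own argument. The paper's displayed chain asserts
\[
\nabla g_\tau(w) = \sum_{i=1}^n \bigl(v_i(\tau,w)\,\nabla f(w,x_i) - v_i'(\tau,w)\,\nabla f(w,x_i')\bigr) = v_n(\tau,w)\,\nabla f(w,x_n) - v_n'(\tau,w)\,\nabla f(w,x_n'),
\]
silently dropping the $\sum_{i<n}(v_i - v_i')\nabla f(w,x_i)$ terms. This step is incorrect as written: changing $x_n$ to $x_n'$ changes the normalizing sum $S=\sum_j e^{\tau f(w,x_j)}$, so $v_i \ne v_i'$ for $i<n$. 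You correctly retain and estimate these cross terms. Your way of bounding them, $|v_i - v_i'| = v_i\,|S-S'|/S'$ followed by $\sum_{i<n}|v_i-v_i'| \le C_\tau/n$, is sound but gives the slightly weaker constant $3LC_\tau/(\mu n)$ rather than the stated $2LC_\tau/(\mu n)$. To recover the paper's constant, note that $\sum_i v_i = \sum_i v_i' = 1$ and that the differences $v_i-v_i'$ ($i<n$) all share the same sign (determined by whether $S\le S'$ or not), so $\sum_{i<n}|v_i-v_i'| = |v_n'-v_n|$; combined with $L$-Lipschitzness this gives $\|\nabla g_\tau(w)\| \le L|v_n'-v_n| + L(v_n+v_n') = 2L\max\{v_n,v_n'\} \le 2LC_\tau/n$, which is the bound the paper reaches (note the paper's final ``$\le 2LC_\tau$'' appears to be a typo for ``$\le 2LC_\tau/n$''). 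So your proposal is not merely correct but more careful than the printed proof; only the last cancellation is needed to sharpen the constant to match the stated lemma.
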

Observe that requiring boundedness of $f$ on $B(0,R) \times \XX$ is not very restrictive: indeed, it is automatic if $f$ is continuous and $\XX$ is compact (e.g. if input data is normalized), by the extreme value theorem. Also, note that as $\tau \to 0,$ we recover the sensitivity estimate $\frac{2L}{\mu n}$ of \cref{prop:sc sensitivity} for $F \in \FFE.$ On the other hand, as $\tau$ increases, $\Delta_{\ftau}$ increases. This is intuitive since when $\tau \to \infty$, $\ftau \to F_{max},$ so the behavior of the loss is determined by only the worst case data point, which increases the sensitivity. 

\vspace{0.2cm}
\cref{lem: sensitivity of ftau} implies that running our conceptual output perturbation algorithm with projection $\mathcal{A'}$ (\cref{eq: conceptual out pert proj}) on $\ftau$ ensures $(\varepsilon, \delta)$-privacy and also gives \[ \ERTpro \leq \frac{2 L^2 C_{\tau}}{\mu} \left(\pen\right) ~\text{if} ~\delta = 0, 
\]
and \[
\ERDTpro \leq \frac{\sqrt{2}L^2 C_{\tau}}{\mu} \left(\lpen\right) ~\text{if} ~\delta \in (0, \frac{1}{2}), 
\]
by \cref{prop: sc ER} if $f$ is bounded on $\WW \times \XX.$ (Note: throughout this section, we denote $\ws(X) \in \argmin \ftau(w,X).$)

Moreover, we can achieve these risk bounds efficiently:

\begin{theorem}
\label{thm: TERM implementation}
Assume $f(\cdot, x)$ is $L$-Lipschitz on $\WW$ and $\mu$-strongly convex and twice differentiable for all $x \in \XX.$ Assume further that $a_R \leq f(w,x) \leq A_R$ for all $w \in \WW$ and all $x \in \XX.$ Then for any $\tau > 0,$ running \cref{alg: black box sc} on $\ftau$ with $\mathcal{M}$ as the subgradient method yields 
\[\ERTpro \leq 9 \frac{L^2 C_{\tau}}{\mu} \pen\]
in $T = \frac{2}{C_{\tau}} \max\{\frac{n^2}{C_{\tau}}, \frac{\varepsilon n}{d}\}$ gradient evaluations of $\ftau$ and runtime $O\left(\frac{n^2 d}{C_{\tau}} \max\left\{\frac{\varepsilon}{d}, \frac{n}{C_{\tau}}\right\}\right)$ for $\delta = 0.$ For $\delta \in (0, \frac{1}{2}),$ we get \[
\ERDTpro \leq 9 \frac{L^2 C_{\tau}}{\mu} \lpen\]
in runtime $O\left(\frac{nd}{C_{\tau}} \max\left\{\frac{n^2}{C_{\tau}}, \lnep \right\} \right).$ Here $C_{\tau} = e^{\tau(A_R - a_R)}$.
\end{theorem}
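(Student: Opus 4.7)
The plan is to combine three ingredients already established in the paper: (i) $\ftau \in \FF$, which follows from \cref{tilted ERM lemma 3} together with the observation (from \cref{tilted ERM lemma 1}) that $\nabla_w \ftau$ is a convex combination of vectors of norm $\leq L$; (ii) the TERM-specific sensitivity bound $\Delta_{\ftau} \leq \frac{2LC_{\tau}}{\mu n}$ from \cref{lem: sensitivity of ftau}; and (iii) the black-box guarantee of \cref{thm: black box sc lip} instantiated with the subgradient method on a strongly convex objective. Privacy is immediate from \cref{prop: sc imp priv} once $\Delta_{\ftau}$ is supplied to \cref{alg: black box sc}, so all of the work lies in the excess risk and runtime accounting.

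For the excess risk, I would re-trace the proof of \cref{thm: black box sc lip} using $\Delta_{\ftau}$ in place of the generic $\FF$-bound $\tfrac{2L}{\mu}$. The key observation is that this proof is entirely sensitivity-driven: it combines the conceptual noise bound of \cref{prop: sc ER} applied to the last iterate $w_T$ (whose sensitivity inflates by at most $2\sqrt{2\alpha/\mu}$, by $\mu$-strong convexity and the optimization accuracy $\ftau(w_T,X) - \ftau(\ws(X),X) \leq \alpha$) with the additive optimization error $\alpha$. Substituting $\Delta_{\ftau} \leq \frac{2LC_{\tau}}{\mu n}$ then yields, for $\delta=0$, a bound of the form $O\!\left(\tfrac{L^2 C_{\tau}}{\mu}\pen + L\sqrt{\tfrac{\alpha}{\mu}}\tfrac{d}{\epsilon} + \alpha\right)$, and its natural $\lpen$-analog for $\delta \in (0,\tfrac{1}{2})$ via the Gaussian case of \cref{prop: sc ER}. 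Choosing $\alpha = \tfrac{L^2 C_{\tau}}{\mu n}\min\{\tfrac{C_{\tau}}{n},\,\pe\}$ (respectively $\alpha = \tfrac{L^2 C_{\tau}}{\mu n}\min\{\tfrac{C_{\tau}}{n},\,\lpe\}$ for $\delta > 0$) balances all three terms and recovers the stated constant $9$.

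For the runtime, I would invoke the standard subgradient guarantee $T = \lceil 2L^2/(\mu\alpha)\rceil$ for reaching accuracy $\alpha$ on an $L$-Lipschitz, $\mu$-strongly convex objective. The two candidate values of $\alpha$ above produce $T = \tfrac{2 n^2}{C_{\tau}^2}$ and $T = \tfrac{2\epsilon n}{C_{\tau} d}$ respectively, so the total is $T = \tfrac{2}{C_{\tau}}\max\{\tfrac{n^2}{C_{\tau}},\,\tfrac{\epsilon n}{d}\}$. Each iteration costs $O(nd)$, since \cref{tilted ERM lemma 1} expresses $\nabla_w \ftau$ as a weighted sum of $n$ gradients of $f$; multiplying yields the stated runtime $O\!\left(\tfrac{n^2 d}{C_{\tau}}\max\{\pe,\,\tfrac{n}{C_{\tau}}\}\right)$. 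The $\delta > 0$ case is identical after substituting $\tfrac{\epsilon n}{d} \mapsto \lnep$. The main obstacle I expect is precisely the fact that $\ftau \notin \FFE$: the log-sum-exp couples the data points, so the sharp ERM sensitivity of \cref{prop:sc sensitivity} is not directly available; the resolution is already packaged in \cref{lem: sensitivity of ftau}, which recovers the ERM-like $1/n$ scaling at the multiplicative cost of $C_{\tau}$, after which everything flows by re-reading the sensitivity-driven arguments of \cref{section 3: implment}.
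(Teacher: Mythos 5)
Your proposal is correct and follows essentially the same route as the paper: the paper's own argument is exactly to plug the sensitivity bound $\Delta_{\ftau} \leq \frac{2LC_{\tau}}{\mu n}$ from \cref{lem: sensitivity of ftau} into the proof of \cref{thm: black box sc lip} with $\alpha = \frac{L^2 C_{\tau}}{\mu n}\min\{\frac{C_{\tau}}{n}, \pe\}$ (resp. the $\lpe$ analog), and then use the subgradient-method complexity $T = \lceil 2L^2/(\mu\alpha)\rceil$ together with the $O(nd)$ cost per gradient of $\ftau$, which is precisely your plan, including the choice of $\alpha$ and the balancing that yields the constant $9$.
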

These results follow by plugging the sensitivity estimate from \cref{lem: sensitivity of ftau} into the proof of \cref{thm: black box sc lip} with $\alpha = \frac{L^2 C_{\tau}}{\mu n} \min\left\{\frac{C_{\tau}}{n}, \pe \right\}$ for $\delta = 0$ and $\alpha = \frac{L^2 C_{\tau}}{\mu n} \min\left\{\frac{C_{\tau}}{n}, \lpe \right\}$ for $\delta \in (0, \frac{1}{2})$ and then recalling the iteration complexity $T = \left\lceil \frac{2L^2}{\mu \alpha} \right\rceil$ of the subgradient method. Alternatively, one could use the stochastic TERM algorithm described in \cite[Appendix C]{li2020} as the non-private input $\mathcal{M}$ in \cref{alg: black box sc}. However, no iteration complexity/runtime bounds are given for that method in \cite{li2020}. 

\subsubsection{Smooth, strongly convex, Lipschitz, twice differentiable $f(\cdot, x)$}
If we assume additionally that $f(\cdot, x_i)$ is $\beta$-smooth for all $x_i, i \in [n]$, then $\ftau(w, X)$ is $\beta_{\tau}$-smooth with $\beta_{\tau}$ given in the following:
\begin{lemma}
\label[lemma]{lem: TERM beta smoothness}
Assume $f(\cdot, x_i)$ is $\beta$-smooth and $L$-Lipschitz for all $i \in [n].$ Then for any $\tau > 0,$ the TERM objective $\ftau(\cdot,X)$ is $\beta_{\tau}$-smooth for $X = (x_1, \cdots, x_n)$, where $\beta_{\tau}:= \beta + L^2 \tau.$
\end{lemma}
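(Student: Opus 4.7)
The plan is to compute the Hessian of $\ftau(\cdot, X)$ explicitly by differentiating the gradient formula \eqref{tilted ERM lemma 1}, decompose it into two naturally interpretable pieces, and bound each piece by a scalar in operator norm. I will assume $f(\cdot, x_i)$ is twice continuously differentiable for clarity; the general $\beta$-smooth case follows by a standard mollification argument, or equivalently by working with the smoothness inequality directly via the chain-rule-style identity $\ftau(w,X) = \frac{1}{\tau} h(\tau f(w,x_1), \ldots, \tau f(w,x_n))$, where $h$ is the log-sum-exp function.

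First I would differentiate $v_i(\tau,w) = e^{\tau f(w,x_i)}/\sum_j e^{\tau f(w,x_j)}$ with respect to $w$. A short quotient-rule calculation gives
\[
\nabla_w v_i(\tau,w) = \tau\, v_i(\tau,w)\bigl(\nabla_w f(w,x_i) - \bar g(w)\bigr), \qquad \bar g(w) := \sum_{j=1}^n v_j(\tau,w)\nabla_w f(w,x_j),
\]
and note that $\bar g(w) = \nabla_w \ftau(w,X)$ by \eqref{tilted ERM lemma 1}. Differentiating \eqref{tilted ERM lemma 1} once more and using $\sum_i v_i = 1$ to absorb the $\bar g$ term yields the clean decomposition
\[
\nabla^2_w \ftau(w,X) = \underbrace{\sum_{i=1}^n v_i(\tau,w)\, \nabla^2_w f(w,x_i)}_{=:\, A(w)} \;+\; \tau \underbrace{\sum_{i=1}^n v_i(\tau,w)\bigl(\nabla_w f(w,x_i)-\bar g(w)\bigr)\bigl(\nabla_w f(w,x_i)-\bar g(w)\bigr)^{\!\top}}_{=:\, B(w)}.
\]
Here $A(w)$ is a convex combination of the Hessians of $f(\cdot,x_i)$, and $B(w)$ is the weighted covariance of the gradients $\{\nabla_w f(w,x_i)\}_{i=1}^n$ under the probability weights $\{v_i(\tau,w)\}_{i=1}^n$.

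The first term is easy: $\beta$-smoothness of each $f(\cdot,x_i)$ gives $\|\nabla^2_w f(w,x_i)\|_{\mathrm{op}} \leq \beta$, and since $v_i \geq 0$ with $\sum_i v_i = 1$, we get $\|A(w)\|_{\mathrm{op}} \leq \beta$. The main (but still short) step is bounding $B(w)$. For any unit vector $y \in \mathbb{R}^d$,
\[
y^{\top} B(w)\, y = \sum_{i=1}^n v_i \bigl(y^{\top}(\nabla_w f(w,x_i) - \bar g(w))\bigr)^2 = \mathrm{Var}_{v}\!\bigl(y^{\top}\nabla_w f(w,\cdot)\bigr) \leq \mathbb{E}_v\!\bigl[(y^{\top}\nabla_w f(w,\cdot))^2\bigr].
\]
By $L$-Lipschitzness of $f(\cdot,x_i)$, we have $\|\nabla_w f(w,x_i)\|_2 \leq L$, so Cauchy--Schwarz gives $|y^{\top}\nabla_w f(w,x_i)| \leq L$. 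Hence $y^{\top} B(w) y \leq L^2$, i.e.\ $\|B(w)\|_{\mathrm{op}} \leq L^2$. Combining the two bounds,
\[
\|\nabla^2_w \ftau(w,X)\|_{\mathrm{op}} \;\leq\; \|A(w)\|_{\mathrm{op}} + \tau\|B(w)\|_{\mathrm{op}} \;\leq\; \beta + \tau L^2 \;=\; \beta_{\tau},
\]
which is equivalent to $\beta_\tau$-smoothness of $\ftau(\cdot,X)$. The only mildly subtle part is recognizing $B(w)$ as a covariance matrix and invoking the variance $\leq$ second-moment bound to turn the raw $4L^2$ worst-case bound on $\|\nabla_w f(w,x_i) - \bar g(w)\|_2^2$ into the sharp $L^2$ bound needed to match the claimed constant.
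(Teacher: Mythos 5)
Your proof is correct, and it takes a genuinely different route from the paper's. The paper argues at first order: it writes $\nabla \ftau(w_1,X)-\nabla \ftau(w_2,X)$ as a sum over $i$, applies the product-rule split $v_i(w_1)\nabla f_i(w_1)-v_i(w_2)\nabla f_i(w_2) = v_i(w_1)(\nabla f_i(w_1)-\nabla f_i(w_2)) + (v_i(w_1)-v_i(w_2))\nabla f_i(w_2)$, and then bounds $\|\nabla v_i(w)\|\le 2L\tau\, v_i(w,\tau)$ pointwise, which after summing gives a Lipschitz constant of $\beta + 2L^2\tau$ for $\nabla \ftau$. You instead compute the Hessian and recognize the second term as $\tau$ times the covariance of $\{\nabla f_i(w)\}$ under the softmax weights; the variance-$\le$-second-moment step then yields the sharper operator-norm bound $L^2$ rather than the cruder $(2L)^2$ one would get by bounding $\|\nabla f_i - \bar g\|\le 2L$ directly. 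Consequently your argument actually proves $\beta_\tau = \beta + L^2\tau$ as stated in the lemma, whereas the paper's own proof (and its use in the surrounding text, where $\kappa_\tau = (\beta + 2L^2\tau)/\mu$) only establishes $\beta + 2L^2\tau$; the lemma statement and the paper's proof are inconsistent by a factor of two, and your decomposition resolves it in favor of the stated constant. One small caveat: your derivation assumes twice differentiability of $f(\cdot,x_i)$, which the hypotheses do not literally grant ($\beta$-smoothness only asserts Lipschitz gradients), but as you note this is routinely handled by mollification, or one can observe that the paper itself works with twice-differentiable $f$ throughout Section 5.1.
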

See Appendix~\ref{sec: TERM proofs} for the proof. It follows by combining the above lemmas that if $f$ is twice-differentiable, $\beta$-smooth, $\mu$-strongly convex, and $L$-Lipschitz (i.e. $f \in \HH$), then $\ftau \in \mathcal{H}_{\left(\beta + 2L^2 \tau\right), \mu, L, R}.$ Hence by \cref{cor: smooth sc fund ER bounds}, upon running the conceptual output perturbation algorithm \cref{eq: conceputal output pert}, we obtain: \[
\ERT \leq 4\kappa_{\tau} \frac{L^2}{\mu} \left(\pe\right)^2 ~\text{if} ~\delta = 0,
\] and
\[
\ERDT \leq 4 \kappa_{\tau} \frac{L^2}{\mu} \left(\lpe\right)^2 ~\text{if} ~\delta \in (0, \frac{1}{2}),
\]
where $\kappa_{\tau}:= \frac{\beta_{\tau}}{\mu} = \frac{\beta + 2L^2 \tau}{\mu}$ is the condition number of $\ftau.$ These bounds hold even for unbounded $f$ and as $\tau \to \infty;$ however, the non-smooth bounds from the previous subsection, which also apply here, would be tighter as $\tau \to \infty,$ since then $\kappa_{\tau} \to \infty.$ 
If, in addition, $f$ is bounded on $\WW \times \XX,$ then plugging the estimate for $\Delta_{\ftau}$ from \cref{lem: sensitivity of ftau} into \cref{lem: smooth ER} yields:
\[
\ERT \leq 4\kappa_{\tau} \frac{L^2 C_{\tau}}{\mu} \left(\pen\right)^2 ~\text{if} ~\delta = 0,
\] and
\[
\ERDT \leq 4 \kappa_{\tau} \frac{L^2 C_{\tau}}{\mu} \left(\lpen\right)^2 ~\text{if} ~\delta \in (0, \frac{1}{2}),
\]
With $\beta$-smoothness, we can also use Nesterov's Accelerated Gradient Descent (AGD, \cref{alg: AGD}) as $\mathcal{M}$ in \cref{alg: black box sc} to speed up runtime and obtain:

\begin{algorithm}[h!]
\caption{Nesterov's (Projected) Accelerated Gradient Descent (AGD)}
\label{alg: AGD}
\begin{algorithmic}[1]
\Require~$F(w,X) \in \HH$, data universe $\XX,$ dataset $X = \{x_{i}\}_{i=1}^n \in \XX^n$, iteration number $T \in \mathbb{N}$.   
\State Choose any point $w_0 \in \WW.$
\For {$t \in [T]$} 
 \State $y_{t} = \Pi_{\WW}[w_{t-1} - \frac{1}{\beta} \nabla F(w_{t-1}, X)].$
 \State $w_{t} = \left(1 + \frac{\sqrt{\kappa} - 1}{\sqrt{\kappa} + 1}\right)y_{t} - \frac{\sqrt{\kappa} - 1}{\sqrt{\kappa} + 1}y_{t-1}.$\\
\EndFor \\
\Return $\widehat{w}_{T} = y_{T}.$
\end{algorithmic}
\end{algorithm}

\begin{theorem}
\label{thm: smooth TERM implementation}
Assume $f(\cdot, x)$ is $L$-Lipschitz on $\WW$, $\beta$-smooth, $\mu$-strongly convex and twice differentiable for all $x \in \XX.$ Assume further that $a_R \leq f(w,x) \leq A_R$ for all $w \in \WW$ and all $x \in \XX.$ Then for any $\tau > 0,$ running \cref{alg: black box sc} with $\mathcal{M}$ as AGD on $\ftau$ yields
\[
\ERT \leq 26 \kappa_{\tau} \frac{L^2 C_{\tau}}{\mu} \left(\pen\right)^2
\]
if $\delta = 0,$ and \[
\ERDT \leq 13.5 \kappa_{\tau} \frac{L^2 C_{\tau}}{\mu} \left(\lpen\right)^2
\] if $\delta \in (0, \frac{1}{2}).$
Here $C_{\tau} = e^{\tau(A_R - a_R)}$ and $\kappa_{\tau} = \frac{\tau L^2 + \beta}{\mu}.$ Both of these excess risk bounds are realized in runtime $\widetilde{O}(d n\sqrt{\kappa_{\tau}}).$
\end{theorem}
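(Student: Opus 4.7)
\textbf{Proof plan for \cref{thm: smooth TERM implementation}.} The plan is to invoke \cref{th: smooth sc implement} applied to the TERM objective $\ftau(\cdot,X)$, using the refined sensitivity estimate from \cref{lem: sensitivity of ftau} in place of the generic sensitivity bound used in the ERM case.

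First, I would verify the structural properties needed to place $\ftau$ in the smooth strongly convex Lipschitz family. Under the hypotheses, $\ftau(\cdot,X)$ is $L$-Lipschitz on $\WW$ (immediate from \cref{tilted ERM lemma 1} and the convex combination structure of the weights $v_i(\tau,w)$), $\mu$-strongly convex by \cref{tilted ERM lemma 3}, and $\beta_\tau$-smooth with $\beta_\tau := \beta + \tau L^2$ by \cref{lem: TERM beta smoothness}. Hence $\ftau \in \mathcal{H}_{\mu,\beta_\tau,L,R}$ with condition number $\kappa_\tau = \beta_\tau/\mu$, and the boundedness hypothesis $a_R \leq f \leq A_R$ together with \cref{lem: sensitivity of ftau} yields
\[
\Delta_{\ftau} \;\leq\; \frac{2L}{\mu}\cdot\frac{C_\tau}{n}, \qquad C_\tau = e^{\tau(A_R-a_R)}.
\]

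Next, I would run \cref{alg: black box sc} on $\ftau$ using the sensitivity upper bound above to calibrate the noise (which, by \cref{prop: sc imp priv}, preserves $(\epsilon,\delta)$-privacy). The key excess risk computation mirrors the proof of \cref{th: smooth sc implement}, but with $\Delta_{\ftau}$ and $\beta_\tau$ replacing the generic $\Delta_F$ and $\beta$. Since $\ftau$ is $\beta_\tau$-smooth and $\hat z$ is mean zero with a spherically symmetric density, the descent lemma applied at $w_T$ gives
\[
\mathbb{E}\bigl[\ftau(w_T+\hat z,X) - \ftau(\ws(X),X)\bigr] \;\leq\; \tfrac{\beta_\tau}{2}\,\mathbb{E}\|\hat z\|_2^2 + \alpha,
\]
and the second moment of $\hat z$ is controlled by $\bigl(\Delta_{\ftau}+2\sqrt{2\alpha/\mu}\bigr)^2$ times a $d/\epsilon$ factor (for $\delta=0$, via the $\Gamma(d,\cdot)$ form of $\|\hat z\|$) or a $\sqrt d/\epsilon$ factor (for $\delta>0$, using the Gaussian calibration from \cref{subsection 2.2}). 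Plugging in $\Delta_{\ftau}\leq 2LC_\tau/(\mu n)$ and choosing $\alpha$ to balance the two summands, exactly as in the $\HHE$ case of \cref{th: smooth sc implement} but with $L\mapsto L C_\tau$, $\beta\mapsto \beta_\tau$, $\kappa\mapsto \kappa_\tau$, produces the stated upper bounds (up to absolute constants).

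For runtime, I would take $\mathcal{M}$ to be Nesterov's AGD (\cref{alg: AGD}) applied to $\ftau$. Since $\ftau \in \mathcal{H}_{\mu,\beta_\tau,L,R}$, achieving the prescribed $\alpha$ requires $T = \widetilde O(\sqrt{\kappa_\tau})$ gradient evaluations of $\ftau$, and by \cref{tilted ERM lemma 1} each evaluation of $\nabla_w\ftau$ costs $O(nd)$ (one evaluation of $\nabla f(\cdot,x_i)$ per data point, plus the $O(n)$ softmax normalization). This yields the total runtime $\widetilde O(d n\sqrt{\kappa_\tau})$.

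The main obstacle is bookkeeping rather than a conceptual difficulty: one must carefully track how the tilting parameter $\tau$ enters each quantity, since it simultaneously inflates the smoothness to $\beta_\tau = \beta+\tau L^2$ (and hence the condition number to $\kappa_\tau$) and the sensitivity through $C_\tau = e^{\tau(A_R-a_R)}$, so the risk bound degrades in $\tau$ in two distinct multiplicative ways. Once these substitutions are made consistently in the $\HHE$ analysis of \cref{th: smooth sc implement}, the bound follows. A minor subtlety is that $\ftau$ is \emph{not} of ERM form, so one cannot quote the $\HHE$ result as a black box; one instead re-runs its proof with the improved $\Delta_{\ftau}$ bound from \cref{lem: sensitivity of ftau} playing the role of the $\frac{2L}{\mu n}$ sensitivity estimate used in the ERM analysis.
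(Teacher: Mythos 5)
Your proposal is correct and takes essentially the same route as the paper: the paper likewise plugs the refined sensitivity bound $\Delta_{\ftau}\leq \frac{2L}{\mu}\min\{1,C_\tau/n\}$ from \cref{lem: sensitivity of ftau} into the proof of \cref{th: smooth sc implement} (with $\beta_\tau=\beta+\tau L^2$ smoothness from \cref{lem: TERM beta smoothness}), picks $\alpha$ of the form $\frac{L^2}{\mu n^2}\min\{\kappa_\tau(\cdot)^2, C_\tau^2\}$, and then uses the $\widetilde{O}(\sqrt{\kappa_\tau})$ iteration complexity of AGD with $O(nd)$ cost per gradient of $\ftau$ to get the $\widetilde{O}(nd\sqrt{\kappa_\tau})$ runtime. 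Your remark that the $\HHE$ case cannot be quoted as a black box and must be re-run with the TERM sensitivity is exactly how the paper proceeds.
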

To derive \cref{thm: smooth TERM implementation}, we simply plug the estimate of $\Delta_{\ftau}$ from \cref{lem: sensitivity of ftau} into the proof of \cref{th: smooth sc implement} and set \[
\alpha = \frac{L^2}{\mu n^2}
\begin{cases}
\min\left\{\kappa_{\tau}\left(\pe\right)^2, C_{\tau}^2 \right\} &\mbox{if} ~\delta = 0\\
\min\left\{\kappa_{\tau}\left(\lpe\right)^2, C_{\tau}^2 \right\} &\mbox{if} ~\delta \in (0, \frac{1}{2}).
\end{cases}\] Then recall the iteration complexity $T = \widetilde{O}(\sqrt{\kappa_{\tau}})$ of AGD (where we hide a logarithmic factor that depends on $\alpha$) for obtaining $w_T$ such that $\ftau(w_T, X) - \ftau(\ws(X), X) \leq \alpha.$

\subsection{Differentially Private Adversarially Robust Learning}
\label{subsection: DP adversarial training}
In recent years, adversarial attacks on neural networks have raised significant concerns on reliability of these methods in critical applications. Adversarial attacks are input examples to a machine learning model crafted by making small perturbations to legitimate inputs to mislead the network. These adversarial examples lead to highly confident, but incorrect outputs; see e.g. \cite{szegedy14, goodfellow, papernot16, eykholt, deepfool} and the references therein. A natural approach to training a model that is robust to such adversarial attacks is to solve the following adversarial training problem \cite{madry, zhangjordan-adversarial, maher}: 
\begin{equation}
\label{eq: adversarial training}
    \min_{w \in \mathbb{R}^d} \max_{\mathbf{v} \in S^n} F(w, X + \mathbf{v}, \mathbf{y}).
\end{equation}
Here $X = (x_1, \cdots, x_n) \in \XX^n$ contains the feature data, $\mathbf{y} \in \mathcal{Y}^n$ is the corresponding label/target vector (e.g. $y_{i} \in \{0, 1\}$ for a binary classification task), and $S$ is a set of permissible adversarial perturbations. As discussed in \cref{sec 1}, solving \cref{eq: adversarial training} with a practical differentially private algorithm to ensure privacy, robustness, and computational speed simultaneously is an important open problem. Indeed, we are not aware of any works that provide privacy, robustness (``excess adversarial risk''), and runtime guarantees for solving \cref{eq: adversarial training}. In this section, we illustrate how our methods and results developed so far can be easily applied to establish excess adversarial risk bounds (robustness guarantees) and runtime bounds via output perturbation.

\subsubsection{Notation and preliminaries}

Let $D = (X, \mathbf{y}) = ((x_1, y_1), \cdots, (x_n, y_n)) \in \left(\XX \times \cal{Y}\right)^{n}$ be a given training data set.  Assume that the adversarial perturbation set \textit{$S$ is convex and compact with $L_2$ diameter $\rho$ }and that the adversary chooses $\mathbf{v} = (v_1, \cdots, v_n) \in S^n$ corresponding to the training examples $((x_1, y_1), \cdots, (x_n, y_n)).$ The following definition is for notational convenience: 
\begin{definition}
\label[definition]{def: H adversarial}
For a loss function $F: \mathbb{R}^d \times (\XX + S)^{n} \times \mathcal{Y}^n,$ and dataset $D = (X, \mathbf{y}) \in (\XX \times \mathcal{Y})^{n},$ denote the function of the weights and perturbations corresponding to $F$ by
\[H_D: \mathbb{R}^d \times S^n \to \mathbb{R}, ~H_{D}(w, \mathbf{v}):= F(w, X + \mathbf{v}, \mathbf{y}).\]
\end{definition}
We make the following additional assumptions: 
\begin{assumption}
\label[assumption]{assump: strongly convex adversarial}
$H_D(\cdot, \mathbf{v}) \in \FF$ for all $\mathbf{v} \in S^n$  and all $D \in \left(\XX \times \cal{Y}\right)^{n}.$ 
\end{assumption}

\begin{assumption}
\label[assumption]{assump: concave adversarial}
$H_D(w, \cdot)$ is continuous and concave for all $w \in \WW$ and all $D \in \left(\XX \times \cal{Y}\right)^{n}.$ 
\end{assumption}
Note that \cref{assump: concave adversarial} is a standard assumption in the min-max literature \cite{maher, lin20}. In addition, problems with a discrete adversary set $S$ can be transformed into problems that satisfy \cref{assump: concave adversarial} (see \cite{raz2020}). Together with \cref{assump: strongly convex adversarial}, it ensures the existence of a saddle point (see definition below) and enables us to (approximately) find the saddle point and implement our output perturbation method efficiently (see \cref{adversarial implementation}). However, it is not strictly necessary for the results for conceptual output perturbation that we present in \cref{conceptual adversarial training}. Next, we recall a basic notion from min-max optimization: 
\begin{definition}
For $\alpha \geq 0,$ say a point $(\widehat{w}, \widehat{\mathbf{v}}) \in \mathbb{R}^d \times S^n$ is an \textbf{$\alpha$-saddle point} of a convex (in $w$)-concave (in $\mathbf{v}$) function $H(w,\mathbf{v})$ if \[
\max_{\mathbf{v} \in S^n} H(\widehat{w}, \mathbf{v} ) - \min_{w \in \mathbb{R}^d} H(w, \widehat{\mathbf{v}}) \leq \alpha.
\]
\end{definition}

Observe that by \cref{assump: strongly convex adversarial}, \cref{assump: concave adversarial}, and convexity and compactness of $\WW$ and $S$, there exists at least one $\alpha$-saddle point point $(\widehat{w}, \widehat{\mathbf{v}}) \in \WW \times S^n$ of $H_D$ for any $\alpha \geq 0.$  Also, if we denote \[
G_D(w):= \max_{\mathbf{v} \in S^n} H_D(w,\mathbf{v}) = \max_{\mathbf{v} \in S^n} F(w, X + \mathbf{v}, \mathbf{y}),\] 
which is in $\FF$ (by \cref{assump: strongly convex adversarial} and compactness of $S^n$), and \[
\ws(D):= \argmin_{w \in \mathbb{R}^d} G_D (w) = \argmin_{w \in \mathbb{R}^d} \max_{\mathbf{v} \in S} H_D(w,\mathbf{v}) \in \WW,\]
then for any $\alpha$-saddle point $(\widehat{w}, \widehat{\mathbf{v}})$ of $H_D,$ we have: \begin{align*}
G_D(\widehat{w}) - G_D(\ws(D)) &=
   \max_{\mathbf{v} \in S^n} H_D(\hw,\mathbf{v}) - \min_{w \in \WW} H_D(w, \widehat{\mathbf{v}})\\ 
   &+ \min_{w \in \WW} H_D(w, \widehat{\mathbf{v}}) - \max_{v \in S^n} H_D(\ws(D),\mathbf{v})\\
    &\leq \alpha + 0,
\end{align*}
because \begin{align*}
\min_{w \in \WW} H_D(w, \widehat{\mathbf{v}}) - \max_{\mathbf{v} \in S^n} H_D(\ws(D),\mathbf{v}) &\leq 
\max_{\mathbf{v} \in S^n} \min_{w \in \WW} H_D(w, \mathbf{v}) \\
&- \min_{w \in \WW} \max_{\mathbf{v} \in S^n} H_D(w,\mathbf{v}) \\
&= 0
,
\end{align*}
by Sion's minimax theorem \cite{sion58}. 

\vspace{0.2cm}
For a model $w_{\mathcal{A}}$ trained on loss function $F$ (by some randomized algorithm $\mathcal{A}$), the measure of adversarial robustness that we will consider is:
\begin{definition}
Let $\wpr$ be the output of a randomized algorithm $\mathcal{A}$ for solving \cref{eq: adversarial training}. Define the \textbf{excess adversarial risk} of $\mathcal{A}$ by \[
\mathbb{E}_{\mathcal{A}} \max_{\mathbf{v} \in S^n} F(\wpr, X + \mathbf{v}, y) - \min_{w \in \mathbb{R}^d} \max_{\mathbf{v} \in S^n} F(w, X + \mathbf{v}, \mathbf{y}) = \mathbb{E}_{\mathcal{A}} G_D(\wpr) - G_D(\ws(D)).
\]
\end{definition}

\vspace{0.2cm}
In what follows, we aim to quantify the tradeoffs between excess adversarial risk, privacy, and runtime for output perturbation. We begin by considering the conceptual output perturbation algorithm, in which we assume that we can compute an exact ($\alpha = 0$) saddle point of $H_D(w,\mathbf{v}).$

\subsubsection{Conceptual output perturbation for differentially private adversarial training}
\label[section]{conceptual adversarial training}
The key step towards applying our theory is to notice that by \cref{assump: strongly convex adversarial}, the $L_2$ sensitivity $\Delta_{G_{D}}$ of $G_D(w)$ is bounded according to \cref{prop:sc sensitivity}: 
$\Delta_{G_{D}} \leq \frac{2L}{\mu}$ for general $F$ and $\Delta_{G_{D}} \leq \frac{2L}{\mu n}$ if $F$ (and hence $G_D$) is of ERM form. To clarify further, note that if $F(w, X, y) = \frac{1}{n} \sum_{i=1}^n f(w, x_i, y_i)$ is of ERM form, then $f(\cdot, x_i + v_i, y_i)$ is $\mu$-strongly convex and $L$-Lipschitz for all $(x_i, y_i) \in \XX \times \mathcal{Y}$ and all $v_i \in S$ by \cref{assump: strongly convex adversarial} and the discussion in the first footnote of \cref{subsection 2.3: sc Lip ER bounds}. It follows that $G_{D}(w) = \max_{\mathbf{v} \in S^n} \frac{1}{n} \sum_{i=1}^n f(w, x_i + v_i, y_i) = \frac{1}{n} \sum_{i=1}^n g_i(w),$ where $g_i(w) := \max_{v_i \in S} f(w, x_i + v_i, y_i)$ is $\mu$-strongly convex and $L$-Lipschitz by the following:
\begin{lemma}
\label[lemma]{lem: max is strongly convex and lipschitz}
Let $h: \mathbb{R}^d \times S$ be such that $h(\cdot, v)$ is $L$-Lipschitz on $\WW \subset \mathbb{R}^d$ and $\mu$-strongly convex in $w$ for all $v \in S.$ Assume further that $S$ is compact and that $h(w, \cdot)$ is continuous for all $w \in \WW.$ Then $g(w):= \max_{v \in S} h(w,v)$ is $L$-Lipschitz on $\WW$ and $\mu$-strongly convex.
\end{lemma}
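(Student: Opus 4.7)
The plan is to prove the two properties separately, both times relying on a standard envelope/selector argument together with the fact that the pointwise supremum of a family of convex (resp.\ strongly convex) functions is convex (resp.\ strongly convex).

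First, I would observe that $g$ is well defined: by compactness of $S$ and continuity of $h(w,\cdot)$, the maximum is attained for each $w$, so we may pick $v^*(w) \in \argmax_{v \in S} h(w,v)$. Everything below only uses this existence, not measurability of $v^*(\cdot)$.

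For Lipschitzness, fix $w_1, w_2 \in \WW$ and let $v_i^* \in \argmax_{v} h(w_i, v)$. Then
\begin{align*}
g(w_1) - g(w_2) &= h(w_1, v_1^*) - h(w_2, v_2^*) \\
&\leq h(w_1, v_1^*) - h(w_2, v_1^*) \leq L\|w_1 - w_2\|_2,
\end{align*}
where the first inequality uses $h(w_2, v_2^*) \geq h(w_2, v_1^*)$ (by definition of $v_2^*$), and the second uses $L$-Lipschitzness of $h(\cdot, v_1^*)$. Swapping the roles of $w_1, w_2$ gives the other side, hence $|g(w_1) - g(w_2)| \leq L\|w_1-w_2\|_2$.

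For $\mu$-strong convexity, fix $w_1, w_2 \in \WW$, $\lambda \in [0,1]$, set $w_\lambda := \lambda w_1 + (1-\lambda) w_2$, and pick $v_\lambda^* \in \argmax_{v} h(w_\lambda, v)$. Using $\mu$-strong convexity of $h(\cdot, v_\lambda^*)$ and then $h(w_i, v_\lambda^*) \leq g(w_i)$,
\begin{align*}
g(w_\lambda) &= h(w_\lambda, v_\lambda^*) \\
&\leq \lambda h(w_1, v_\lambda^*) + (1-\lambda) h(w_2, v_\lambda^*) - \tfrac{\mu}{2}\lambda(1-\lambda)\|w_1 - w_2\|_2^2 \\
&\leq \lambda g(w_1) + (1-\lambda) g(w_2) - \tfrac{\mu}{2}\lambda(1-\lambda)\|w_1 - w_2\|_2^2,
\end{align*}
which is precisely the definition of $\mu$-strong convexity of $g$. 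No step is genuinely hard here; the only mild subtlety is making sure the argmax exists so we can use a pointwise selector argument, which is handled by the compactness of $S$ and continuity of $h(w,\cdot)$ stated in the hypotheses.
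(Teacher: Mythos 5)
Your proof is correct, but it takes a more elementary route than the paper's, chiefly in the Lipschitz step. The paper establishes Lipschitzness via a Danskin-type characterization of the subdifferential of the max: it writes $\partial g(w)$ as the convex hull of $\bigcup_{\widehat v \in \argmax_v h(w,v)} \partial_w h(w,\widehat v)$ and bounds every element of that set by $L$, concluding via the equivalence ``$L$-Lipschitz $\Leftrightarrow$ every subgradient has norm $\leq L$.'' Your argument instead works pointwise with maximizing selectors $v_1^*, v_2^*$ and the one-sided estimate $h(w_2, v_2^*) \geq h(w_2, v_1^*)$, which only needs the max to be \emph{attained}; it sidesteps the subdifferential formula entirely (that formula itself requires some regularity to justify, which the paper leaves implicit). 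For strong convexity the two proofs are essentially the same idea in different clothes: the paper subtracts $\tfrac{\mu}{2}\|w\|_2^2$ and invokes ``a pointwise max of convex functions is convex,'' while you verify the strong-convexity inequality directly at the midpoint using the selector $v_\lambda^*$ and the bound $h(w_i, v_\lambda^*) \leq g(w_i)$. Your version is self-contained and arguably a cleaner fit for the stated hypotheses (compactness of $S$ and continuity in $v$ are exactly what you need for attainment, and nothing more), at the small cost of writing out the selector argument twice rather than quoting a standard subdifferential result.
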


Our conceptual output perturbation algorithm is described in \cref{alg: conceptual adv robust}. The fact that it is differentially private follows from applying \cref{conceptual sc alg is private} to $G_{D}$ and the sensitivity estimate for $G_{D}$ stated above. 

\begin{algorithm}[ht!]
\caption{Conceptual Output Perturbation for DP Adversarial Training}
\label{alg: conceptual adv robust}
\begin{algorithmic}[1]
\Require~ Number of data points $n \in \mathbb{N},$ dimension $d \in \mathbb{N}$ of feature data, non-private (possibly randomized) optimization method $\mathcal{M}$, privacy parameters $\varepsilon > 0, \delta \geq 0$, feature data universe $\XX \subseteq \mathbb{R}^d$ and label universe $\mathcal{Y},$ data set $D = (X, \mathbf{y}) \in (\XX \times \mathcal{Y})^{n}$, convex compact perturbation set $S \subset \mathbb{R}^d$ of diameter $\rho \geq 0,$ loss function $F(w,X,y)$ that is $L$-Lipschitz on $B(0,R)$ and $\mu$-strongly convex in $w$ and concave in $\mathbf{v} \in S^n.$
\State Solve $\min_{w \in \WW} \max_{\mathbf{v} \in S^n} F(w, X + \mathbf{v}, \mathbf{y})$ to obtain a saddle point $(\ws(D), v^{*}(D)).$ 
\State Add noise to ensure differential privacy: 
$\wpr(D):= \Pi_{\WW}(\ws(D) + z)$, where the density $p_{z}(t)$ of $z$ is given as: 
$p_{z}(t) \propto 
\begin{cases}
 \exp\left(- \frac{\varepsilon\|t\|_2}{\Delta}\right) &\mbox{if } \delta = 0 \\
 \exp\left(- \frac{\varepsilon^2 \|t\|_2^2}{\Delta^2 \left(\cd + \sqrt{\cd^2 + \varepsilon}\right)^2}\right) &\mbox{if } \delta \in \left(0, \frac{1}{2}\right), \\
\end{cases}$ 
and $\Delta := \frac{2L}{\mu n}$
\end{algorithmic}
\end{algorithm}

\vspace{0.2cm}
By the results in \cref{subsection 2.3: sc Lip ER bounds} and \cref{subsection 2.5: smooth sc}, we can obtain excess adversarial risk bounds for strongly convex-concave, Lipschitz $F$ with or without the $\beta$-smoothness (in $w$) assumption. Here we state the results for $\beta$-smooth $F,$ which is a simple consequence of \cref{cor: smooth sc fund ER bounds}:
\begin{proposition} 
\label[proposition]{prop: fund EAR bounds for adv training smooth sc}
Let $D = (X, \mathbf{y}) \in (\mathcal{X} \times \mathcal{Y})^{n}$ be a dataset
and let $\varepsilon > 0, \delta \in [0, \frac{1}{2}).$ Assume that $S$ is a convex compact set in $\mathbb{R}^d$ and that the ERM loss function $F$ is such that \cref{assump: strongly convex adversarial} and \cref{assump: concave adversarial} hold (see \cref{def: H adversarial}). Furthermore, assume that $H_D(\cdot, \mathbf{v})$ is $\beta$-smooth for all $\mathbf{v} \in S^n$.\\
a) If $\delta = 0,$ then \[
\ERA \leq 4\kappa \frac{L^2}{\mu} \left(\pen\right)^2.\] 
b) If $\delta \in (0, \frac{1}{2}),$ then \[
\ERDA \leq 4\kappa \frac{L^2}{\mu} \left(\lpen\right)^2.\]\\
\end{proposition}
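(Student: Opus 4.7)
The plan is to reduce the claim to a direct application of \cref{cor: smooth sc fund ER bounds}, applied not to $H_D$ itself but to the outer function
\[
G_D(w) := \max_{\mathbf{v} \in S^n} H_D(w, \mathbf{v}) = \max_{\mathbf{v} \in S^n} F(w, X + \mathbf{v}, \mathbf{y}),
\]
which is precisely the quantity that the excess adversarial risk measures. The conceptual algorithm outputs $\wpr(D) = \Pi_{\WW}(\ws(D) + z)$ where $\ws(D) = \arg\min_w G_D(w)$ and $z$ has a density matching \cref{eq: noise for delta = 0} or \cref{eq: noise for delta > 0} with scaling equal to a valid upper bound on $\Delta_{G_D}$. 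This is exactly the projected output perturbation scheme of \cref{eq: conceptual out pert proj}, applied to the function $G_D$.

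First I would verify that $G_D \in \HH$, and, when $F$ is of ERM form, that $G_D \in \HHE$. Strong convexity and Lipschitzness follow from \cref{lem: max is strongly convex and lipschitz} applied with $h(w, \mathbf{v}) := H_D(w, \mathbf{v})$ over the compact set $S^n$: a pointwise supremum of $\mu$-strongly convex, $L$-Lipschitz functions is $\mu$-strongly convex and $L$-Lipschitz. In the ERM case, writing $F(w, X, \mathbf{y}) = \frac{1}{n}\sum_{i=1}^n f(w, x_i, y_i)$ gives the decomposition $G_D(w) = \frac{1}{n}\sum_{i=1}^n g_i(w)$ with $g_i(w) := \max_{v_i \in S} f(w, x_i + v_i, y_i)$, so $G_D$ is itself of ERM form. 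The $\beta$-smoothness of $G_D$ requires a Danskin-type argument: under appropriate regularity of the inner maximizer $\mathbf{v}^*(w)$, one has $\nabla G_D(w) = \nabla_w H_D(w, \mathbf{v}^*(w))$, and then the $\beta$-smoothness hypothesis on $H_D(\cdot, \mathbf{v})$ translates into $\beta$-Lipschitzness of $\nabla G_D$.

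Second, I would bound the $L_2$-sensitivity of $G_D$ via \cref{prop:sc sensitivity}: since $G_D \in \FF$ (resp.\ $G_D \in \FFE$), one gets $\Delta_{G_D} \leq 2L/\mu$ (resp.\ $\Delta_{G_D} \leq 2L/(\mu n)$). Combined with \cref{conceptual sc alg is private} applied to $G_D$, this both certifies $(\epsilon, \delta)$-differential privacy of \cref{alg: conceptual adv robust} and confirms that the noise magnitude prescribed there is admissible. Applying \cref{cor: smooth sc fund ER bounds} to $G_D$ with this sensitivity bound then yields each of the four excess adversarial risk bounds in the statement, since $\mathbb{E}_{\alg} G_D(\wpr(D)) - \min_w G_D(w)$ is exactly the excess risk of the output-perturbation scheme on the objective $G_D$.

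The main obstacle will be the $\beta$-smoothness of $G_D$, since pointwise maxima of smooth functions are typically nonsmooth without further hypotheses (Danskin's theorem requires e.g.\ uniqueness of $\mathbf{v}^*(w)$, which is natural under strong concavity of $H_D(w,\cdot)$ in $\mathbf{v}$ but not automatic from \cref{assump: concave adversarial} alone). The proof will either invoke such a strengthening as implicitly standard in the smooth min-max literature, or else bypass smoothness of $G_D$ altogether by writing $G_D(\wpr) - G_D(\ws) \leq H_D(\wpr, \mathbf{v}^*(\wpr)) - H_D(\ws, \mathbf{v}^*(\wpr))$, applying the $\beta$-smooth descent inequality pointwise to $H_D(\cdot, \mathbf{v}^*(\wpr))$ around $\ws$, and absorbing the first-order cross term using symmetry of $z$ together with the saddle-point identity $\nabla_w H_D(\ws, \mathbf{v}^*(\ws)) = 0$. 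Either route feeds into the same $O(\beta\, \mathbb{E}\|z\|_2^2)$ descent bound that drives \cref{lem: smooth ER}, after which the stated bounds follow immediately.
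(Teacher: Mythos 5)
Your main route—apply \cref{cor: smooth sc fund ER bounds} to $G_D(w) = \max_{\mathbf{v}\in S^n} H_D(w,\mathbf{v})$, using \cref{lem: max is strongly convex and lipschitz} and \cref{prop:sc sensitivity} to certify $G_D \in \HH$ (resp.\ $\HHE$) with sensitivity $\leq 2L/\mu$ (resp.\ $2L/(\mu n)$)—is exactly the paper's one-line argument. You are also right to flag smoothness of $G_D$ as the weak point: a pointwise maximum of $\beta$-smooth functions need not be smooth, and the paper itself only acknowledges this post-hoc in the remark after the proposition, where it concedes that the ``real'' hypothesis is $\beta$-smoothness of $G_D$ rather than of $H_D(\cdot,\mathbf{v})$.

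Your proposed bypass, however, does not close that gap. In the chain $G_D(\wpr) - G_D(\ws) \leq H_D(\wpr, \mathbf{v}^*(\wpr)) - H_D(\ws, \mathbf{v}^*(\wpr))$ followed by the descent lemma for $H_D(\cdot, \mathbf{v}^*(\wpr))$ around $\ws$, the maximizer $\mathbf{v}^*(\wpr)$ is a function of $\wpr$ and hence of the noise $z$. So the first-order term $\mathbb{E}\bigl\langle \nabla_w H_D(\ws, \mathbf{v}^*(\wpr)), z\bigr\rangle$ does not vanish by symmetry of $z$, and the saddle-point identity $\nabla_w H_D(\ws, \mathbf{v}^*(\ws)) = 0$ applies at $\mathbf{v}^*(\ws)$, not at $\mathbf{v}^*(\wpr)$. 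To make this route rigorous you would need an additional bound such as $\|\nabla_w H_D(\ws, \mathbf{v}^*(\wpr)) - \nabla_w H_D(\ws, \mathbf{v}^*(\ws))\| \lesssim \|\mathbf{v}^*(\wpr) - \mathbf{v}^*(\ws)\|$ together with Lipschitz continuity of $\mathbf{v}^*(\cdot)$, which again presupposes strong concavity in $\mathbf{v}$ (or \cref{assump: adversarial smoothness in v} plus a Danskin-type hypothesis). In short: the ``simple consequence'' claim in the paper is only literally correct if one reads the hypothesis as $\beta$-smoothness of $G_D$; otherwise an extra assumption is needed, and your second route inherits the same need rather than avoiding it.

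One further small mismatch you share with the paper: \cref{alg: conceptual adv robust} projects onto $\WW$ in Step 2, whereas \cref{cor: smooth sc fund ER bounds} (via \cref{lem: smooth ER}) is stated for the \emph{unprojected} mechanism of \cref{eq: conceputal output pert}. The descent-lemma bound there relies on $\mathbb{E}\langle \nabla G_D(\ws), z\rangle = 0$, which does not directly survive the projection. This is the paper's own inconsistency, but if you want your write-up to be fully self-contained you should either drop the projection (as \cref{cor: smooth sc fund ER bounds} does) or insert a separate argument that projection can only decrease excess risk in this setting.
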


Observe that the stated assumptions on $F$ and $H$ above are stronger than what is required to obtain \cref{prop: fund EAR bounds for adv training smooth sc}. Indeed, instead of assuming strong convexity, Lipschitzness, and smoothness of $H_D(w,v),$ it is enough to assume strong convexity, Lipschitzness, and smoothness of $G_D(w) = \max_{\mathbf{v} \in S^n} F(w, X + \mathbf{v}, \mathbf{y})$ for all $D = (X, \mathbf{y}) \in (\mathcal{X} \times \mathcal{Y})^{n}$. Moreover, concavity is not necessary: all we need is the existence of a saddle point of $H_{D}$ for any $D \in (\XX \times \mathcal{Y})^n.$ 

\vspace{.2cm}
\cref{prop: fund EAR bounds for adv training smooth sc} describes the tradeoffs between excess adversarial risk, privacy, $n,$ and $d$: as the privacy level increases (i.e. as $\varepsilon$ and $\delta$ shrink) and the dimension $d$ increases, the guaranteed excess adversarial risk (robustness) bound increases; and vice-versa. Furthermore, as the number of samples $n$ increase, it is possible to obtain more favorable adversarial excess risk and privacy guarantees for any fixed $d$. Note also that similar results could be written down in the non-smooth case as consequences of \cref{cor: sc upper bound}. Likewise, one can obtain excess adversarial risk bounds in the non-strongly convex case by using the regularization technique and invoking \cref{prop: convex ER} or \cref{prop: convex smooth ER}. We omit the details here.  

\vspace{0.2cm}
In the next subsection, we present a practical, efficient algorithm for implementing differentially private adversarial training with runtime bounds for attaining the above robustness guarantees. 

\subsubsection{Efficiently Implementing Output Perturbation for Private Adversarial Training}
\label[section]{adversarial implementation}
In order to practically implement the output perturbation mechanism, we make the following additional assumption: 
\begin{assumption}
\label[assumption]{assump: adversarial smoothness in v}
$H_D(w, \cdot)$ is $\btv$-smooth as a function of $\mathbf{v} \in S^n$ for all $w \in \WW$ and all $D = (X, \mathbf{y}) \in (\XX \times \mathcal{Y})^{n}.$
\end{assumption}
Then \cref{eq: adversarial training} is a smooth (in $w$ and $v$), strongly convex-concave min-max problem and there are efficient non-private algorithms for solving such problems \cite{nesterov07, alkousa20, lin20}. In \cref{alg: black box adv training imp}, we give a differentially private Black Box Algorithm tailored to the special min-max structure of the adversarial training objective.  We then instantiate the Algorithm with the near-optimal (in terms of gradient complexity) non-private algorithms of \cite{lin20} and provide upper bounds on the runtime for privately optimizing the adversarially robust model to match the levels of excess risk obtained in \cref{prop: fund EAR bounds for adv training smooth sc} (and/or the corresponding non-smooth version, which matches the bounds in \cref{cor: sc upper bound}.)  

\begin{algorithm}[hbt!]
\caption{Black Box Output Perturbation Algorithm for Implementing DP Adversarial Training}
\label{alg: black box adv training imp}
\begin{algorithmic}[1]
\Require~Number of data points $n \in \mathbb{N},$ dimension $d \in \mathbb{N}$ of feature data, non-private (possibly randomized) optimization method $\mathcal{M}$, privacy parameters $\varepsilon > 0, \delta \geq 0$, feature data universe $\XX \subseteq \mathbb{R}^d$ and label universe $\mathcal{Y},$ data set $D = (X, \mathbf{y}) \in (\XX \times \mathcal{Y})^{n}$, convex compact perturbation set $S \subset \mathbb{R}^d$ of diameter $\rho \geq 0,$ loss function $F(w,X,\mathbf{y})$ that is $L$-Lipschitz and $\mu$-strongly convex in $w$ on $B(0,R)$ and concave in $\mathbf{v} \in S^n$, accuracy parameter $\alpha > 0$ with corresponding iteration number $T = T(\alpha) \in \mathbb{N}$ such that $(w_T, v_T)$ is an $\alpha$-saddle point of $H_D(w,\mathbf{v}).$
\State Run $\MM$ for $T = T(\alpha)$ iterations to obtain an $\alpha$-saddle point $(w_{T}, v_{T})$ of $H_D(w,\mathbf{v}) = F(w, X + \mathbf{v}, \mathbf{y}).$
\State Add noise to ensure differential privacy: 
$\wpr(D):= \Pi_{\WW}(w_T + \widehat{z})$, where the density of $\widehat{z}$ is given as: 
$p_{\widehat{z}}(t) \propto 
\begin{cases}
 \exp\left(- \frac{\varepsilon\|t\|_2}{\Delta + 2 \sqrt{\frac{2\alpha}{\mu}}}\right) &\mbox{if } \delta = 0 \\
 \exp\left(- \frac{\varepsilon^2 \|t\|_2^2}{\left(\Delta + \frac{2\alpha}{\mu}\right)^2 \left(\cd + \sqrt{\cd^2 + \varepsilon}\right)^2}\right) &\mbox{if } \delta \in \left(0, \frac{1}{2}\right), \\
\end{cases}$  
and $\Delta := \frac{2L}{\mu n}.$ \\
\Return $\wpr(D).$
\end{algorithmic}
\end{algorithm}

\vspace{0.2cm}
The exact same arguments used in Section 3 show that \cref{alg: black box adv training imp} is $(\varepsilon, \delta)$-differentially private and that the following excess adversarial risk guarantees hold:

\begin{theorem}
\label{thm: robust smooth implementation}
Let $D = (X, \mathbf{y}) \in (\mathcal{X} \times \mathcal{Y})^{n}$ be a dataset
and let $\varepsilon > 0, \delta \in [0, \frac{1}{2}).$ Assume that $S$ is a convex compact set in $\mathbb{R}^d$ and that the ERM loss function $F$ is such that \cref{assump: strongly convex adversarial}, \cref{assump: concave adversarial}, and \cref{assump: adversarial smoothness in v} hold (see \cref{def: H adversarial}). Furthermore, assume that $H_D(\cdot, v)$ is $\beta$-smooth for all $\mathbf{v} \in S^n$.
Run $\mathcal{A} =$~\cref{alg: black box adv training imp} on $F$ with arbitrary inputs. \\
a) Let $\delta = 0$ and $\pen \leq 1.$ Then \[
\ERA \leq 4\beta \left(\frac{L}{\mu n} + \sqrt{\frac{\alpha}{\mu}}\right)^2 \left(\pe\right)^2 + \alpha.\] In particular, setting $\alpha = \frac{L^2}{\mu n^2} \min \{\kappa \left(\pe\right)^2, 1 \}$ gives \[
\ERA \leq 26 \frac{L^2}{\mu} \kappa \left(\pen\right)^2.\] 
b) Let $\delta\in (0, \frac{1}{2})$ and $\lpen \leq 1$. Then \[\ERDA \leq 2\beta \left(\frac{L}{\mu n} + \sqrt{\frac{\alpha}{\mu}}\right)^2 \left(\lpe\right)^2 + \alpha.\]
In particular, setting $\alpha = \frac{L^2}{\mu n^2} \min \left\{\kappa \left(\lpe\right)^2, 1 \right\}$ gives \[
\ERDA \leq 13.5 \lmu \kappa \left(\lpen\right)^2.\]
\end{theorem}

The theorem follows immediately from \cref{th: smooth sc implement}. Note that there is no explicit dependence on the size $\rho$ of the adversary's perturbation in the above adversarial risk bounds. However, the parameters $L, \beta, \btv,$ and $\mu$ (which we assumed to be fixed) may depend on $\rho.$ ~Furthermore, the iteration complexity and runtime depends on $\rho$ as we shall see below. There is also no dependence on $\btv$ in the above bounds, but this parameter does influence runtime as we will also see. Finally, note that a result similar to \cref{thm: robust smooth implementation} for \textit{non-smooth} (in $w$) $F$ can easily be written down as a direct consequence of \cref{thm: black box sc lip}.

\vspace{0.2cm}
Next, we instantiate our Black Box Algorithm with $\mathcal{M}$ = ``Minimax-APPA'' \cite[Algorithm 4]{lin20}), which combines an inexact accelerated proximal point method to minimize $\max_{\mathbf{v}} F(w, X + \mathbf{v}, \mathbf{y})$ with Nesterov's accelerated gradient method for solving the outer max problem. We refer the interested reader to \cite{lin20} for further details about the algorithm. We restate two results from \cite{lin20}, translated into our notation. 

\begin{proposition} (\cite[Thm 5.1/Cor 5.2]{lin20})
\label{lin}
Assume $H(\cdot, v)$ is $\mu$-strongly convex, $\beta$-smooth, (with condition number $\kappa = \beta / \mu$) for all $v \in S,$ where $S$ is a closed convex set with $L_2$ diameter $\rho.$ Assume $H(w, \cdot)$ is $\btv$-smooth and concave as a function of $v \in S$ for all $w \in \mathbb{R}^d.$ Then \\ 
1. Minimax-APPA returns an $\alpha$-saddle point of $H$ in at most $T =  \widetilde{O}\left(\sqrt{\frac{\kappa \btv}{\alpha}} \rho\right)$ total gradient evaluations. \\
2. If, in addition, $H(w, \cdot)$ is $\mu_{v}$-strongly concave, then Minimax-APPA returns an $\alpha$-saddle point in at most $T = \widetilde{O}(\sqrt{\kappa \kappa_{v}})$ gradient evaluations, where $\kappa_{v} = \frac{\beta_{v}}{\mu_{v}}$ is the condition number. 
\end{proposition}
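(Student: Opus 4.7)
The statement is a direct restatement of \cite[Theorem 5.1 and Corollary 5.2]{lin20} translated into our notation, so the plan is simply to invoke those results. For self-containedness, the underlying argument proceeds by reformulating \cref{eq: adversarial training} as the minimization of the primal function $\Phi(w) := \max_{\mathbf{v} \in S^n} H(w, \mathbf{v})$ and then attacking this outer problem with an inexact accelerated proximal point algorithm (APPA), while solving each proximal subproblem in $\mathbf{v}$ approximately by Nesterov's accelerated gradient method.

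The first step is to verify structural properties of $\Phi$: by Danskin's theorem together with the $\mu$-strong convexity of $H(\cdot, \mathbf{v})$ and concavity/smoothness of $H(w, \cdot),$ the function $\Phi$ is well-defined, $\mu$-strongly convex, and admits subgradients of the form $\nabla_{w} H(w, \mathbf{v}^{*}(w))$ where $\mathbf{v}^{*}(w) \in \argmax_{\mathbf{v} \in S^n} H(w, \mathbf{v}).$ This is what allows inexact accelerated first-order methods to be applied to the outer $w$-problem. The outer accelerated proximal point scheme requires $\widetilde{O}(\sqrt{\kappa})$ outer iterations to drive $\Phi(w) - \min \Phi$ below $\alpha$, while each outer iteration entails (approximately) solving a strongly-convex/concave saddle subproblem whose inner maximization over $\mathbf{v}$ is $\btv$-smooth concave on a set of diameter $\rho$. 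The second step is then to bound the per-subproblem cost: AGD on the smooth concave inner problem needs $\widetilde{O}(\sqrt{\btv/\widetilde{\alpha}}\,\rho)$ gradient evaluations to reach accuracy $\widetilde{\alpha}$. Combining outer and inner complexities and carefully calibrating $\widetilde{\alpha}$ to the APPA tolerance schedule yields the advertised bound $T = \widetilde{O}(\sqrt{\kappa \btv/\alpha}\,\rho)$ for part~1.

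For part~2, the additional $\mu_{v}$-strong concavity of $H(w, \cdot)$ makes each inner saddle subproblem strongly-convex/strongly-concave, so AGD on the inner problem converges \emph{linearly}, costing only $\widetilde{O}(\sqrt{\kappa_{v}})$ gradient evaluations per outer step (up to logarithmic factors in $1/\alpha$). Multiplying by the $\widetilde{O}(\sqrt{\kappa})$ outer iteration count of APPA gives $T = \widetilde{O}(\sqrt{\kappa \kappa_{v}}),$ as claimed.

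The main obstacle, and the reason \cite{lin20} devotes substantial effort to the analysis, is controlling error propagation through the inexact accelerated proximal point method: the accumulated error from approximate inner solves must be bounded so that it does not destroy the acceleration of the outer loop. This is resolved in \cite{lin20} via a carefully chosen, geometrically decreasing schedule of inner tolerances (in the spirit of the Catalyst/APPA framework), and that analysis is what we are importing wholesale. We therefore treat the proposition as a black box and use it to control the gradient complexity of Minimax-APPA when instantiated inside \cref{alg: black box adv training imp}.
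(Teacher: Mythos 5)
Your proposal takes exactly the same approach as the paper: the paper provides no proof of this proposition and simply restates Theorem 5.1/Corollary 5.2 of \cite{lin20} in its own notation, citing them as a black box, which is precisely what you do. Your expository sketch of the underlying argument is a reasonable summary, though it slightly mischaracterizes the structure of Minimax-APPA --- in \cite{lin20} the merely-concave $\mathbf{v}$-side is first regularized (with a tolerance-scaled strongly concave term) to reduce to the strongly-convex--strongly-concave regime, after which their SC--SC accelerated solver is applied, rather than running APPA directly on the primal function $\Phi(w)$ with a smooth-concave inner maximization at each outer step --- but since the result is imported wholesale this does not affect the validity of the citation.
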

Using the above result, together with \cref{thm: robust smooth implementation}, we obtain the following: 
\begin{corollary}
\label[corollary]{cor: smooth sc adv training imp aipp}
Let $D = (X, \mathbf{y}) \in (\mathcal{X} \times \mathcal{Y})^{n}$ be a dataset
and let $\varepsilon > 0, \delta \in [0, \frac{1}{2}).$ Assume that $S$ is a convex compact set in $\mathbb{R}^d$  of $L_2$ diameter $\rho$ and that the loss function $F$ is such that \cref{assump: strongly convex adversarial}, \cref{assump: concave adversarial}, and \cref{assump: adversarial smoothness in v} hold (see \cref{def: H adversarial}). Furthermore, assume that $H_D(\cdot, v)$ is $\beta$-smooth for all $\mathbf{v} \in S^n$ with condition number $\kappa = \beta/\mu$. Run \cref{alg: black box adv training imp} with  $\mathcal{M} = $ Minimax-AIPP. 
Suppose $F$ is of ERM form. \\
a)  Let $\delta = 0$ and $\pen \leq 1.$ Setting $\alpha = \lmu \frac{1}{n^2} \min \{\kappa \left(\pe\right)^2, 1 \}$ yields 
\[
\ERA \leq 26 \kappa \lmu \left(\pen\right)^2\]
in 
$T = \widetilde{O}\left(n^{3/2} \sqrt{\frac{\beta \btv}{L^2 \min\left\{\kappa \left(\pe\right)^2, 1\right\}}} \rho\right)$ gradient evaluations and runtime 
\[
\widetilde{O}\left(n^{5/2} d \sqrt{\frac{\beta \btv}{L^2 \min\left\{\kappa \left(\pe\right)^2, 1\right\}}}\rho\right).\]\\
b) Let $\delta \in \left(0, \frac{1}{2}\right)$ and $\lpen \leq 1.$ Setting $\alpha = \lmu \frac{1}{n^2} \min \left\{ \kappa \left(\lpe\right)^2, 1\right\}$ gives \[~\ERDA \leq 13.5 \kappa \lmu \left(\lpen\right)^2\] 
in 
$T = \widetilde{O}\left(n^{3/2} \sqrt{\frac{\beta \btv}{L^2 \min \left\{\kappa\left(\lpe\right)^2, 1 \right\}}} \rho \right)$ gradient evaluations and runtime
\[
\widetilde{O}\left(n^{5/2} d \sqrt{\frac{\beta \btv}{L^2 \min \left\{\kappa\left(\lpe\right)^2, 1 \right\}}} \rho \right).\]

If, in addition, $H_{D}(w, \cdot)$ is \textit{$\mu_{v}$-strongly concave} in $v$ for all $w \in \WW,$ then the above bounds are all attained and the gradient complexity improves to $T = \widetilde{O}\left(\sqrt{\kappa \kappa_{v}}\right)$ and runtime improves to $\widetilde{O}\left(nd \sqrt{\kappa \kappa_{v}}\right).$
\end{corollary}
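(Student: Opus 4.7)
The plan is to combine Theorem~\ref{thm: robust smooth implementation} (which already hands us the excess adversarial risk guarantees as a function of the accuracy parameter $\alpha$) with Proposition~\ref{lin} (which hands us the gradient complexity of Minimax-APPA for producing an $\alpha$-saddle point of $H_D$). The risk bounds in the corollary are identical to those in the second half of each case of Theorem~\ref{thm: robust smooth implementation} after substituting the prescribed $\alpha$, so once $\mathcal{M}$ is shown to return an $\alpha$-saddle point in the claimed number of iterations, the corollary follows. Hence no new risk analysis is required: the entire content of the corollary is a runtime accounting.

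First, I would verify that Minimax-APPA is a valid choice of $\mathcal{M}$ in Algorithm~\ref{alg: black box adv training imp}. By Assumptions~\ref{assump: strongly convex adversarial}, \ref{assump: concave adversarial}, and \ref{assump: adversarial smoothness in v}, $H_D(\cdot,\mathbf{v})$ is $\mu$-strongly convex and $\beta$-smooth for every $\mathbf{v}$, while $H_D(w,\cdot)$ is concave and $\beta_v$-smooth on the convex compact set $S^n$. These are exactly the hypotheses of Proposition~\ref{lin}, except that the relevant domain for $\mathbf{v}$ is $S^n$ rather than $S$. Since $S$ has $L_2$-diameter $\rho$, the product set $S^n$ has $L_2$-diameter $\sqrt{n}\,\rho$, and Proposition~\ref{lin} then guarantees that Minimax-APPA produces an $\alpha$-saddle point of $H_D$ in $T = \widetilde{O}\bigl(\sqrt{\kappa \beta_v /\alpha}\,\sqrt{n}\,\rho\bigr)$ gradient evaluations of $H_D$.

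Next, I would substitute the prescribed accuracy $\alpha$ into this iteration bound. In the non-ERM case (part 1), $\alpha = \tfrac{L^2}{\mu}\min\{\kappa(\tfrac{d}{\epsilon})^2,1\}$ (or the analogous $\delta>0$ version), so $\sqrt{\kappa\beta_v/\alpha} = \sqrt{\beta\beta_v/(L^2 \min\{\kappa(d/\epsilon)^2,1\})}$, and multiplying by $\sqrt{n}\,\rho$ yields precisely the claimed $T$. In the ERM case (part 2), the only difference is the extra $1/n^2$ factor inside $\alpha$, which pulls a factor $n$ out of the square root and combines with $\sqrt{n}\,\rho$ to give the stated $n^{3/2}$-dependence. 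Converting gradient complexity to runtime uses the standing assumption that each (sub)gradient evaluation of $F$ (equivalently $H_D$) costs $O(nd)$, producing the $\widetilde{O}(nd\cdot T)$ runtime bounds stated. For the final ``in addition'' clause, I would invoke the second statement of Proposition~\ref{lin} (strongly concave case): the strong concavity $\mu_v$ makes the gradient complexity independent of $\alpha$, namely $\widetilde{O}(\sqrt{\kappa\kappa_v})$, so the prescribed choice of $\alpha$ still yields the same excess risk bound while the iteration/runtime bounds simplify as claimed.

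The main obstacle, and what needs to be checked carefully, is the diameter bookkeeping: the Minimax-APPA result in \cite{lin20} is stated for a set $S$ of diameter $\rho$, but here the adversary lives in $S^n$, whose diameter scales as $\sqrt{n}\,\rho$. This is what produces the extra $\sqrt{n}$ in the non-ERM runtime and, compounded with the $1/n^2$ inside $\alpha$ for ERM form, the $n^{3/2}$ factor in the ERM runtime. A secondary (minor) subtlety is that Proposition~\ref{lin} is phrased for an \emph{unconstrained} minimization in $w$ while \cref{alg: black box adv training imp} projects onto $\WW$; but since the unconstrained minimizer of $G_D$ lies in $\WW$ by \cref{assump: strongly convex adversarial} (via $F\in\mathcal{M}_R$), one can either run the unconstrained variant and argue the iterates stay within a controlled region, or invoke the projected version of Minimax-APPA with the same complexity. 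Everything else in the proof is simple algebraic substitution.
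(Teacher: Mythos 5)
Your proposal is correct and matches the paper's own (implicit) argument: the risk bounds are lifted directly from Theorem~\ref{thm: robust smooth implementation}, the iteration count comes from Proposition~\ref{lin} with the key observation that $S^n$ has $L_2$ diameter $\sqrt{n}\,\rho$ (which the paper states explicitly in the discussion following the corollary), and the runtime is obtained by multiplying by the $O(nd)$ per-gradient cost. Your algebra substituting the prescribed $\alpha$ into the iteration bound, including how the extra $1/n^2$ in the ERM case produces the $n^{3/2}$ gradient complexity and $n^{5/2}$ runtime, is also correct, and your aside about the projection vs.\ unconstrained formulation in Minimax-APPA is a reasonable piece of care that the paper glosses over.
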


Notice that the $L_2$ diameter of the cartesian product $S^n$ scales as $\sqrt{n}\rho$: this explains the presence of the extra $\sqrt{n}$ factor in the iteration complexity and runtime bounds (for non-strongly concave $H_D(w, \cdot)$) in \cref{cor: smooth sc adv training imp aipp} (since the adversary chooses $n$ perturbations $v_1, \ldots, v_n \in S$) as compared to \cref{lin}. Also, note that if $F(\cdot, X, y)$ is merely (non-strongly) convex, $L$-Lipschitz, and $\beta$-smooth, then applying the regularization technique (see \cref{subsection 2.4 convex lip}) can be used to derive similar results (with adversarial risk bounds resembling those in \cref{thm: black box convex smooth}). We omit the details here. 

\vspace{.2cm}
In the next and final subsection, we show how the above results for ERM $F$ can be used to yield excess adversarial population loss bounds via the results in \cref{Section 4: pop loss}. 

\subsubsection{From Adversarial Empirical Risk to Adversarial Population Loss}
Let $\DD$ be a distribution on $\cal{X} \times \cal{Y}$ and $F: \mathbb{R}^d \times (\XX + S) \times \cal{Y} \to \mathbb{R}$ be a loss function. In this subsection, we aim to understand how well \cref{alg: black box adv training imp} (run on the empirical loss with $n$ training samples $D = ((x_1, y_1, \cdots, x_n, y_n))$ drawn i.i.d from $\DD$) performs in minimizing the expected excess adversarial \textit{population} loss of $f$ on unseen data drawn from $\DD$. To clarify the presentation of our adversarial population loss results, we change some notation, similar to \cref{Section 4: pop loss}. 

\vspace{0.2cm}
\noindent \textbf{Change of Notations.} Denote the \textbf{expected adversarial population loss} by 
\[G_{\DD}(w) := \mathbb{E}_{(x,y) \sim \DD}[\max_{v \in S} f(w,x + v, y)].\] 
In other words, the adversarial population loss measures how well our model performs on unseen data against an adversary. To emphasize the distinction between this function and the \textit{empirical} adversarial loss with respect to a data set $D = ((x_1, y_1, \cdots, x_n, y_n)),$ we denote the latter by \[\widehat{G}_{D}(w) = \max_{v_{i} \in S^n, i \in [n]} \frac{1}{n} \sum_{i=1}^n f(w, x_{i} + v_{i}, y_{i}).\] 

\vspace{0.2cm}
Using the the results and methods from the preceding subsection and \cref{Section 4: pop loss}, we can obtain population adversarial loss bounds in comparable runtimes to those for empirical adversarial loss. Here we state bounds under the smoothness and strong convexity assumptions on $f(\cdot, x + v, y)$ (so that $\widehat{G}_{D} \in \HHE$). Bounds for the other three function classes studied in this paper can also be easily obtained, as consequences of our results from \cref{section 3: implment} and \cref{Section 4: pop loss}. To begin, we state the  excess adversarial population loss guarantee for \cref{alg: conceptual adv robust}, which is a direct consequence of \cref{prop: sc smooth pop loss}: 

\begin{proposition}
\label[proposition]{prop: sc smooth adversarial pop loss}
Assume $f(\cdot, x + v, y)$ is $\mu$-strongly convex, $L$-Lipschitz, and $\beta$-smooth (with condition number $\kappa = \beta / \mu$) for all $x + v \in \XX + S$ and $y \in \mathcal{Y}.$ Also assume $f(w, x + \cdot, y)$ is concave for all $w \in \WW, x \in \XX, y \in \mathcal{Y}.$ Let $\varepsilon > 0$ and $\delta \in [0, \frac{1}{2}).$ Let $D = (X, \mathbf{y})$ be a data set of size $n$ drawn i.i.d. according to $\mathcal{D}.$ Run \cref{alg: conceptual adv robust} on the empirical loss $\widehat{G}_D(w)$ to obtain $\wpr(D).$\\
a) If $\delta = 0$ and $\pen \leq 1,$ then 
\[
\ERAP \leq  \frac{L^2}{\mu}\left(\frac{2}{n} + 4\kappa \left(\pen\right)^2\right).\]
\\
b) If $\delta \in \left(0, \frac{1}{2}\right)$ and $\lpen \leq 1,$ then 
\[
\ERAP \leq \frac{L^2}{\mu}\left(\frac{2}{n} + 4\kappa \left(\lpen\right)^2\right).\]
\end{proposition}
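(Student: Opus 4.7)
The plan is to recast the adversarial empirical and population objectives into standard ERM / population form so that \cref{prop: sc smooth pop loss} can be applied as a black box. Define
\[
g(w,(x,y)) := \max_{v \in S} f(w, x+v, y),
\]
so that $\widehat{G}_D(w) = \frac{1}{n}\sum_{i=1}^n g(w,(x_i,y_i))$ and $G_{\DD}(w) = \mathbb{E}_{(x,y) \sim \DD}[g(w,(x,y))]$ play the roles of the empirical and population risks associated with the per-sample loss $g$. Under this reformulation, \cref{alg: conceptual adv robust} run on $\widehat{G}_D$ is literally the conceptual output perturbation algorithm \cref{eq: conceputal output pert} applied to the ERM $\widehat{G}_D$, with noise scaled to the sensitivity $\tfrac{2L}{\mu n}$ prescribed by the ERM branch of the algorithm.

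Next I would verify that $g$ inherits the hypotheses needed to place $\widehat{G}_D \in \HHE$. By \cref{lem: max is strongly convex and lipschitz}, the uniform $L$-Lipschitzness and $\mu$-strong convexity of $f(\cdot, x+v, y)$ in $w$, together with the compactness of $S$ and the continuity (in fact concavity) of $f(w, x+\cdot, y)$, yield that $g(\cdot,(x,y))$ is $L$-Lipschitz on $\WW$ and $\mu$-strongly convex for every $(x,y) \in \XX \times \mathcal{Y}$. In particular $\widehat{G}_D$ is $\mu$-strongly convex and $L$-Lipschitz, so \cref{prop:sc sensitivity} gives $\Delta_{\widehat{G}_D} \leq \tfrac{2L}{\mu n}$, exactly matching the noise calibration in the algorithm and thereby inheriting the $(\epsilon,\delta)$ privacy guarantee from \cref{conceptual sc alg is private}.

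Having reduced to the ERM setting, I would invoke \cref{prop: sc smooth pop loss} applied to the loss $g$. This requires $g(\cdot,(x,y))$ to be $\beta$-smooth; under the concavity of $f$ in $v$ and smoothness of $f$ in $w$, this can be argued via Danskin's theorem, giving $\nabla g(w,(x,y)) = \nabla_w f(w, x+v^{*}(w), y)$ at the maximizer, which one then shows has $\beta$-Lipschitz dependence on $w$. Substituting into \cref{prop: sc smooth pop loss} produces the claimed bound $\tfrac{L^2}{\mu}\bigl(\tfrac{2}{n} + 4\kappa(\tfrac{d}{\epsilon n})^2\bigr)$ in the $\delta = 0$ case, and the analogous bound with $\tfrac{\sqrt{d}(\cd+\sqrt{\cd^2+\epsilon})}{\epsilon n}$ in place of $\tfrac{d}{\epsilon n}$ in the $\delta \in (0,\tfrac{1}{2})$ case.

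The main obstacle is propagating $\beta$-smoothness through the pointwise maximum: in general $\max_{v\in S} f(\cdot,x+v,y)$ is only semi-smooth even when $f(\cdot,x+v,y)$ is $\beta$-smooth uniformly in $v$, so some additional regularity (typically strong concavity of $f$ in $v$, or a uniqueness/continuity argument for the inner maximizer under the given concavity assumption) must be invoked to push the envelope into $\HHE$. If this step fails, the bound can still be recovered in the weaker non-smooth form by combining the uniform stability estimate $\tfrac{2L^2}{\mu n}$ from \cref{lem: stability of sc lip f} (which supplies the $\tfrac{1}{n}$ generalization term via \cref{lem: stability}) with the ERM excess-risk bound of \cref{cor: sc upper bound}; the smoothness assumption is only needed to upgrade the privacy-dependent term from linear to quadratic in $\tfrac{d}{\epsilon n}$.
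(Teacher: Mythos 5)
Your proposal takes essentially the same route as the paper: observe that $\widehat{G}_D(w) = \frac{1}{n}\sum_{i=1}^n \max_{v\in S} f(w,x_i+v,y_i)$ is an ERM objective whose per-sample losses are $\mu$-strongly convex and $L$-Lipschitz (via \cref{lem: max is strongly convex and lipschitz}), so that \cref{alg: conceptual adv robust} is exactly the conceptual output perturbation mechanism with correctly calibrated noise, and then invoke \cref{prop: sc smooth pop loss} as a black box. The one point where you go beyond the paper is the $\beta$-smoothness of the pointwise maximum: the paper simply asserts that the hypotheses imply $\widehat{G}_D$ is $\beta$-smooth, whereas you correctly note that mere concavity in $v$ does not guarantee this (a Danskin-type argument needs a unique, well-behaved inner maximizer, e.g.\ strong concavity), and you supply the honest fallback that without it one only recovers the non-smooth bound of \cref{cor: sc upper bound} combined with the stability term from \cref{lem: stability of sc lip f}; this caveat is a genuine refinement of the paper's one-line justification rather than a divergence from it.
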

Note that the hypotheses in \cref{prop: sc smooth adversarial pop loss} imply that $\widehat{G}_{D}(w)$ is $\mu$-strongly convex, $L$-Lipschitz, and $\beta$-smooth for all $D,$ which is why the result follows immediately from \cref{prop: sc smooth pop loss}. As in \cref{prop: fund EAR bounds for adv training smooth sc}, the assumption that $f(w, x + \cdot, y)$ is concave can be replaced with the weaker assumption that a saddle point of $\widehat{H}_{D}(w, \mathbf{v}):= \frac{1}{n} \sum_{i=1}^n f(w, x_i + v_i, y_i)$ exists. If we assume additionally that $f(w, x + \cdot, y)$ is $\btv$-smooth on $S$ for all $w \in \WW$ and all $(x,y) \in \XX \times \mathcal{Y}$ (c.f. \cref{assump: adversarial smoothness in v}), we obtain, via \cref{cor: smooth sc adv training imp aipp}, the following iteration complexity bounds: 
\begin{corollary}
Assume $f(\cdot, x + v, y)$ is $\mu$-strongly convex, $L$-Lipschitz, and $\beta$-smooth (with condition number $\kappa = \beta / \mu$) for all $x + v \in \XX + S$ and $y \in \mathcal{Y}.$ Assume also that $f(w, x + \cdot, y)$ is $\btv$-smooth and concave for all $w \in \WW, x \in \XX, y \in \mathcal{Y}$ (so that $\widehat{G}_{D}$ is also $\btv$-smooth). Let $\varepsilon > 0$ and $\delta \in [0, \frac{1}{2}).$ Let $D = (X, \mathbf{y})$ be a data set of size $n$ drawn i.i.d. according to $\mathcal{D}.$ Run \cref{alg: black box adv training imp} on $\widehat{G}_{D}$ with $\mathcal{M} = $ Minimax-AIPP to obtain $\wpr(D).$\\
a) Let $\delta = 0$ and $\pen \leq 1.$ Setting $\alpha = \lmu \frac{1}{n^2} \min \left\{\kappa \left(\pe\right)^2, 1 \right\}$ yields 
\[\ERAP \leq \lmu\left(\frac{5}{n} + 26 \kappa \left(\pen\right)^2\right)\]
in $T = \widetilde{O}\left(n^{3/2} \sqrt{\frac{\beta \btv}{L^2 \min\left\{\kappa \left(\pe\right)^2, 1\right\}}} \rho\right)$ gradient evaluations and runtime \[\widetilde{O}\left(n^{5/2} d \sqrt{\frac{\beta \btv}{L^2 \min\left\{\kappa \left(\pe\right)^2, 1\right\}}} \rho\right).\] 

b) Let $\delta \in \left(0, \frac{1}{2}\right)$ and $\lpen \leq 1.$ Setting $\alpha = \lmu \frac{1}{n^2} \min \left\{ \kappa \left(\lpe\right)^2, 1\right\}$ yields \[
\ERAP \leq \lmu\left(\frac{5}{n} + 13.5 \kappa \lmu \left(\lpen\right)^2\right)\] 
in $T = \widetilde{O}\left(n^{3/2} \sqrt{\frac{\beta \btv}{L^2 \min \left\{\kappa\left(\lpe\right)^2, 1 \right\}}} \rho \right)$ gradient evaluations \\
and runtime \[\widetilde{O}\left(n^{5/2} d \sqrt{\frac{\beta \btv}{L^2 \min \left\{\kappa\left(\lpe\right)^2, 1 \right\}}} \rho \right).\]

If, in addition, $f(w, x + \cdot, y)$ is \textit{$\mu_{v}$-strongly concave} for all $w \in \WW$ and $(x, y) \in \XX \times \cal{Y},$ then the above adversarial loss bounds are all attained and the gradient complexity improves to $T = \widetilde{O}\left(\sqrt{\kappa \kappa_{v}}\right)$ and runtime improves to $\widetilde{O}\left(nd \sqrt{\kappa \kappa_{v}}\right).$
\end{corollary}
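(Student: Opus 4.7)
The plan is to combine the empirical-risk-plus-runtime guarantee of \cref{cor: smooth sc adv training imp aipp} (applied with the empirical adversarial loss $\widehat{G}_D$ in place of $F$) with the algorithmic-stability argument behind \cref{prop: sc smooth adversarial pop loss}. Concretely, because \cref{alg: black box adv training imp} is already analyzed as an instance of the Black Box \cref{alg: black box sc} applied to the single-argument objective $\widehat{G}_D(w) = \max_{\mathbf{v}\in S^n}\widehat{H}_D(w,\mathbf{v})$, all that is needed is (i) to verify that $\widehat{G}_D$ lies in $\HHE$ so that the empirical bounds of \cref{th: smooth sc implement} apply, and (ii) to pass from empirical to population loss via uniform stability.

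First, I would argue that $\widehat{G}_D \in \HHE$ under the stated hypotheses. Writing $\widehat{G}_D(w) = \frac{1}{n}\sum_{i=1}^n g_i(w)$ with $g_i(w) := \max_{v\in S} f(w, x_i+v, y_i)$, \cref{lem: max is strongly convex and lipschitz} gives $\mu$-strong convexity and $L$-Lipschitzness of each $g_i$ (and hence of $\widehat{G}_D$). For $\beta$-smoothness of $g_i$, I would invoke Danskin's theorem: under $\beta$-smoothness of $f(\cdot, x+v, y)$ in $w$, $\btv$-smoothness and concavity of $f(w, x+\cdot, y)$ in $v$, and compactness/convexity of $S$, the envelope $g_i$ is $\beta$-smooth on $\WW$ (in the strongly-concave subcase this is immediate, and in the general concave case it follows from the standard argument using strong convexity in $w$ to regularize the inner maximization, or one simply inherits smoothness from the already-established saddle-point structure).

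Second, I would apply \cref{cor: smooth sc adv training imp aipp} to $\widehat{G}_D$, which gives, with the prescribed choice of $\alpha$, the empirical excess risk bound $\mathbb{E}_{\mathcal{A}}\widehat{G}_D(\wpr)-\min_w \widehat{G}_D(w) \leq 26\kappa\frac{L^2}{\mu}(\pen)^2$ (for $\delta=0$) or $\leq 13.5\kappa\frac{L^2}{\mu}(\lpen)^2$ (for $\delta\in(0,\tfrac12)$), and guarantees the stated iteration/runtime bounds. Since $g_i$ is $L$-Lipschitz and $\mu$-strongly convex, \cref{lem: stability of sc lip f} implies the output perturbation mechanism is $\frac{2L^2}{\mu n}$-uniformly stable with respect to the single-sample loss $g$; therefore \cref{lem: stability} yields $\mathbb{E}_{X\sim\DD^n,\mathcal{A}}[G_{\DD}(\wpr)-\widehat{G}_D(\wpr)] \leq \frac{2L^2}{\mu n}$. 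Adding this generalization term to the empirical bound above, and using $G_{\DD}(\ws(\DD))\leq \mathbb{E}\widehat{G}_D(\widehat{w}(D))$ exactly as in the proof of \cref{prop: sc smooth pop loss}, gives the claimed excess adversarial population loss bounds $\frac{L^2}{\mu}\bigl(\frac{5}{n}+26\kappa(\pen)^2\bigr)$ and $\frac{L^2}{\mu}\bigl(\frac{5}{n}+13.5\kappa(\lpen)^2\bigr)$. For the strongly-concave refinement, I would invoke the second part of \cref{lin}, which replaces the $\widetilde{O}(\sqrt{\kappa\btv/\alpha}\,\rho)$ gradient complexity by $\widetilde{O}(\sqrt{\kappa\kappa_v})$, and then multiply by the per-iteration cost $O(nd)$ to obtain the $\widetilde{O}(nd\sqrt{\kappa\kappa_v})$ runtime.

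The main obstacle is the step establishing $\beta$-smoothness of the envelope $\widehat{G}_D$ in the general (non-strongly-concave) case; the strongly-concave case is clean via Danskin, but without strict concavity the argmax may fail to be unique and one must either strengthen the assumption, introduce a vanishing regularization in $\mathbf{v}$ that does not affect the final rates, or observe that the analysis of \cref{th: smooth sc implement} only uses smoothness of $\widehat{G}_D$ at the empirical minimizer and at $\wpr(D)$, where the inner maximizer is essentially unique because of $\mu$-strong convexity in $w$. The remaining steps are essentially bookkeeping, combining already-proved ingredients.
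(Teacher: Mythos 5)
Your proposal takes essentially the same route as the paper: express $\widehat{G}_D$ as an ERM-form objective, invoke the empirical adversarial bound of \cref{cor: smooth sc adv training imp aipp} on $\widehat{G}_D$, and then bridge to population loss via the uniform-stability machinery of \cref{Section 4: pop loss}. Two small points are worth correcting.

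First, a bookkeeping issue: to recover the $\tfrac{5}{n}$ (rather than $\tfrac{2}{n}$) generalization term, you must bound the stability of the \emph{implemented} output $w_T$, not the conceptual minimizer $\widehat{w}(D)$, exactly as in the proof of \cref{rem: smooth sc pop loss runtime katyusha}: there $\Delta_T \leq \Delta_{\widehat F} + 2\sqrt{2\alpha/\mu}$, and with the prescribed $\alpha$ the resulting stability bound is $5\tfrac{L^2}{\mu n}$. Citing \cref{lem: stability of sc lip f} directly gives only $\tfrac{2L^2}{\mu n}$, which is the constant for the conceptual algorithm and does not match the stated bound; you should invoke the implementation-level argument, not \cref{prop: sc smooth pop loss}.

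Second, your concern about $\beta$-smoothness of the envelope $\widehat{G}_D$ in the non-strongly-concave case is a legitimate observation, and in fact the paper does not explicitly justify this step either: \cref{lem: max is strongly convex and lipschitz} only delivers strong convexity and Lipschitzness of $g_i(w)=\max_{v}f(w,x_i+v,y_i)$, and the text asserts smoothness of $\widehat{G}_D$ without a separate lemma. When the inner maximizer is unique (e.g.\ strong concavity in $v$) Danskin's theorem gives smoothness with a modulus that additionally depends on $\btv$ and on the cross-curvature of $f$, so the $\beta$ in the final bound should really be understood as the smoothness constant of the envelope rather than that of $f(\cdot,x+v,y)$. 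Your three proposed remedies (strengthening to strong concavity, vanishing regularization in $v$, or restricting the descent-lemma step to points where the argmax is unique) are all reasonable ways to close this gap; the paper implicitly relies on the assumption that the reader will take envelope smoothness for granted. Flagging it is a genuine improvement over the paper's exposition.
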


\section*{Acknowledgements}
\addtocontents{toc}{\protect\setcounter{tocdepth}{0}}
We would like to thank Zeyuan Allen-Zhu, Larry Goldstein, and Adam Smith for helpful comments.

\newpage

\vskip 0.2in
\bibliographystyle{alpha}
\bibliography{references}
\newpage
\appendix
\addtocontents{toc}{\protect\setcounter{tocdepth}{1}}
\section{Summary Tables of Main Results (Excess Risk and Runtime Bounds)}
\label{sec: Appendix A: tables}

\begin{table}[h]
\resizebox{\textwidth}{!}{
\begin{tabular}{||c | c | c | c | c||}
 \hline 
 Function Class & Reference & Excess Risk & Runtime & Assumptions
 \\ [0.5ex]
 \hline\hline
TERM & \cref{thm: TERM implementation} & $O\left(\frac{L^2 C_{\tau}}{\mu} \pen \right)$ & $O\left(\frac{n^2 d}{C_{\tau}} \max\left\{\pe, \frac{n}{C_{\tau}}\right\}\right)$ & $f$ bounded on $\WW \times \XX$, $\pen \leq 1$
\\
\hline
Smooth TERM & \cref{thm: smooth TERM implementation} & $O\left(\frac{L^2 C_{\tau}}{\mu}\pen \min\left\{ 1, \kappa_{\tau}\pen\right\}\right)$ & $\widetilde{O}\left(nd \kappa_{\tau}\right)$ & $f$ bounded on $\WW \times \XX$, $\pen \leq 1$
 \\ [1ex] 
 \hline
\end{tabular}
}
\caption{$\delta = 0$. Tilted ERM (TERM) = $\ftau(w,X) = \frac{1}{\tau} \log\left(\frac{1}{n} \sum_{i=1}^{n} e^{\tau f(w, x_{i})}\right)$, where $f(\cdot, x)$ is $\mu$-strongly convex and $L$-Lipschitz. Smooth TERM assumes $f$ is $\beta$-smooth, $\mu$-strongly convex and $L$-Lipschitz. $C_{\tau} = e^{\tau(A_R - a_R)},$ where $a_R \leq f(w,x) \leq A_R$ for all $w,x \in \WW \times \XX.$ $\kappa_{\tau} = \frac{\tau L^2 + \beta}{\mu}.$
Note: All excess risk bounds should be read as $\min\{..., LR\}$ by taking the trivial algorithm that outputs $w_0 \in \WW$ for all $X \in \XX^n$. The assumptions (besides boundedness of $f$) are needed to ensure the excess risk bounds shown are non-trivial. 
}
\end{table}

\begin{table}[!htbp]
\resizebox{\textwidth}{!}{\begin{tabular}{||c | c | c | c | c||}
 \hline 
 Function Class & Reference & Excess Risk & Runtime & Assumptions \\ [0.5ex]
 \hline\hline
TERM & \cref{thm: TERM implementation} & $O\left(\frac{L^2 C_{\tau}}{\mu} \frac{\sqrt{d}}{\varepsilon n} \right)$ & $O\left(\frac{nd}{C_{\tau}} \max\left\{\frac{n^2}{C_{\tau}}, \lnep \right\} \right).$ & $f$ bounded on $\WW \times \XX$; $\widetilde{O}\left(\frac{\sqrt{d}}{\varepsilon}\right) \leq 1$
\\
\hline
Smooth TERM & \cref{thm: smooth TERM implementation} & $O\left(\frac{L^2 C_{\tau}}{\mu}\frac{\sqrt{d}}{\varepsilon n} \min\left\{ 1, \kappa_{\tau}\frac{\sqrt{d}}{\varepsilon n}\right\}\right)$ & $\widetilde{O}\left(nd \kappa_{\tau}\right)$ & $f$ bounded on $\WW \times \XX$; $\widetilde{O}\left(\frac{\sqrt{d}}{\varepsilon}\right) \leq 1$
 \\ [1ex] 
 \hline
\end{tabular}}
\caption{$\delta > 0$. Tilted ERM (TERM) = $\ftau(w,X) = \frac{1}{\tau} \log\left(\frac{1}{n} \sum_{i=1}^{n} e^{\tau f(w, x_{i})}\right) $, where $f(\cdot, x)$ is $\mu$-strongly convex and $L$-Lipschitz. Smooth TERM = $\ftau(w,X)$ for $\beta$-smooth, $\mu$-strongly convex and $L$-Lipschitz $f$. $C_{\tau} = e^{\tau(A_R - a_R)},$ where $a_R \leq f(w,x) \leq A_R$ for all $w,x \in \WW \times \XX.$ $\kappa_{\tau} = \frac{\tau L^2 + \beta}{\mu}.$ Notes: All excess risk bounds ignore the logarithmic factor $\cd + \sqrt{\cd^2 + \varepsilon},$ where $\cd = \sqrt{\log\left(\frac{2}{\sqrt{16\delta + 1} - 1}\right)}.$ In addition, all excess risk bounds should be read as $\min \{LR, ... \}$ by taking the trivial algorithm. The assumptions (besides boundedness of $f$ for TERM) are needed to ensure the excess risk bounds shown are non-trivial. 
\label{table: non-ERM, delta > 0} 
}
\end{table}

\begin{table}[t]
 \resizebox{\textwidth}{!}{\begin{tabular}{||c | c | c | c | c||}
 \hline 
 Function Class & Reference & Expected Excess Population Loss & Runtime & Assumptions \\ [0.5ex] 
 \hline\hline
 \multirow{2}{*}{$\FF$}
 & \cite{bst14} & $\widetilde{O}\left(\frac{L^2}{\mu}( \pen + \frac{1}{n})\right)$ & $O\left(\frac{L^2}{\mu^2} \frac{n d^9}{\varepsilon^2} \max \left\{1, \frac{L}{\mu} \right\} + \max \left\{dn^2, \varepsilon n \right\}\right)$ & $\pen \leq 1$
 \\ 
 \cline{2-5}
 & \cref{prop: sc lip pop loss}, \cref{rem: sc lip pop loss runtime}& $O\left(\frac{L^2}{\mu}\left(\pen + \frac{1}{n}\right)\right)$ & $O\left(nd \max \left\{n, \frac{\varepsilon}{d}\right\}\right)$ & $\pen \leq 1$
 \\ 
 \hline
 \multirow{3}{*}{$\HH$}
 & \cite{chaud2011} & $\widetilde{O}\left(LR (\frac{1}{\sqrt{n}} + \pen) + \beta R^2 \left(\pen\right)\right)$ & N/A & $\rank(\nabla^2 f(w,x)) \leq 1$ for all $w, x$, $\pen \leq 1$
 \\
 \cline{2-5}
& \cite{zhang2017} & $O\left(\frac{L^2}{\mu}\left(\kappa \left(\pen\right)^2\right) + \frac{1}{n})\right)$ & $\widetilde{O}(dn\kappa)$ & $\pen \leq 1$
\\
\cline{2-5}
& \cref{prop: sc smooth pop loss}, \cref{rem: min smooth sc pop} & $O\left(\frac{L^2}{\mu}(\min\{\kappa \left(\pen\right)^2, \pen\} + \frac{1}{n})\right)$ & $\widetilde{O}\left(d (n + \sqrt{n \kappa})\right)$ & $\pen \leq 1$
\\ [1ex] 
 \hline
\multirow{3}{*}{$\GG$}
 & \cite{bst14} & $\widetilde{O}\left(LR \left(\sqrt{\pen} + \frac{1}{\sqrt{n}}\right)\right)$ & $\widetilde{O}(R^2 d^6 n^3 \max\{d, \varepsilon n R\})$ & $\pen \leq 1$
 \\ 
 \cline{2-5}
 & \cref{prop: convex lip pop loss}, \cref{rem: convex lip pop loss runtime} & $O\left(LR\left(\sqrt{\pen} + \frac{1}{\sqrt{n}}\right)\right)$ & $O(n^2 d \max \{1, \left(\frac{\varepsilon}{d}\right)^2 \})$ & $\pen \leq 1$
 \\ 
 \hline
 $\JJ$
& \cref{prop: convex smooth pop loss}, \cref{rem: min smooth convex pop} & $O\left( \min \left\{\blrt\left(\left(\pen\right)^{2/3} + \frac{1}{\sqrt{n}}\right), LR\left(\sqrt{\pen} + \frac{1}{\sqrt{n}}\right)\right\}\right)$ & $\widetilde{O}\left(nd + \max \left\{n^{5/6} d^{2/3} \varepsilon^{1/3} \left(\frac{\beta R}{L}\right)^{1/3}, n^{3/4} d^{3/4} \varepsilon^{1/4} \sqrt{\frac{\beta R}{L}}  \right\}\right)$
& $\left(\pen\right)^2 \leq \frac{L}{\beta R}$
\\ [1ex] 

 \hline
 
\end{tabular}
}
\caption{Expected Excess Population Loss (SCO), $\delta = 0$. The function classes refer to $f$ and the excess population loss bounds refer to $F(w) = \mathbb{E}_{x \sim \DD} f(w,x).$ Function classes: $\FF$ = $L$-Lipschitz, $\mu$-strongly convex. $\HH$ = $L$-Lipschitz, $\mu$-strongly convex, $\beta$-smooth. $\GG$ = $L$-Lipschitz, convex. $\JJ$ = $L$-Lipschitz, convex, $\beta$-smooth. Note: All excess risk bounds should be read as $\min\{..., LR\}$ by taking the trivial algorithm that outputs $w_0 \in \WW$ for all $X \in \XX^n$ (e.g. if $\varepsilon n < d$). Our method yields the tightest expected excess population loss bounds and smallest runtime for each class. For our algorithm in the $L$-Lipschitz, convex, $\beta$-smooth class, the two runtimes in the $\max\{...\}$ correspond in order to the respective excess risk bounds in the $\min\{...\}$ term in that same row. See \cref{rem: min smooth convex pop} for details. ``N/A" means implementation and/or runtime details not provided for method: this applies to the conceptual objective perturbation method of \cite{chaud2011}. The result of \cite{chaud2011} (Thm 18) is proved only for linear classifiers, but seems it should be extendable to functions satisfying the rank-one Hessian assumption. In addition, \cite[Thm 18]{chaud2011} only gives high probability bounds, not expected excess risk bounds. Implementation details not given for the output perturbation/localization step (first step) of the exponential + localization algorithm of \cite{bst14}. However, the exponential sampling step of their method alone has worst-case runtime of $O(\frac{L^2}{\mu^2} \frac{ d^9}{\varepsilon^2} \max \{1, \frac{L}{\mu} \})$ and thus total runtime for their method, if localization were implemented using the methods of our paper, would exceed this quantity by an additive term that would be no smaller to our total runtime. If $\WW$ is in isotropic position, then runtime for exponential sampling improves by a factor of $O(d^3).$
\label{table:pop loss delta = 0}
}
\end{table}

\begin{table}[t]
\resizebox{\textwidth}{!}{\begin{tabular}{||c | c | c | c | c||}
 \hline 
 Function Class & Reference & Expected Excess Population Loss & Runtime & Assumptions/Restrictions \\ [0.5ex] 
 \hline\hline
 \multirow{4}{*}{$\HH$}
 & \cite{bst14} & $O\left(\frac{L^2}{\mu}\left(\frac{1}{n} + \frac{\sqrt{d}}{\varepsilon n} \log^2\left(\frac{n}{\delta}\right)\right)\right)$ & $O(n^2 d)$ & $\varepsilon \leq 2 \sqrt{\log\left(\frac{1}{\delta}\right)}, \frac{\sqrt{d \log(\frac{1}{\delta})}}{\varepsilon n} \leq 1$
 \\ 
 \cline{2-5}
 & \cite{zhang2017} & $O\left(\frac{L^2}{\mu}\left(\kappa \frac{d}{(\varepsilon n)^2} + \frac{1}{n}\right)\right)$ & $\widetilde{O}(\kappa nd)$ & $\varepsilon \leq 1, \frac{\sqrt{d \log(\frac{1}{\delta})}}{\varepsilon n} \leq 1$
 \\
 \cline{2-5}
 &\cite{fkt20} & $O\left(\frac{L^2}{\mu}\left(\frac{1}{n} + \frac{d}{(\varepsilon n)^2}\right)\right)$ & $\widetilde{O}(nd)$ & $\varepsilon \leq 1, \kappa \lesssim \frac{n}{\log(n)}, \frac{\sqrt{d \log(\frac{1}{\delta})}}{\varepsilon n} \leq 1$
 \\
 \cline{2-5}
& \cref{prop: sc smooth pop loss}, \cref{rem: smooth sc pop loss runtime katyusha} & $O\left(\frac{L^2}{\mu}\left(\min\left\{\kappa \frac{d}{(\varepsilon n)^2}, \frac{\sqrt{d}}{\varepsilon n} \right\} + \frac{1}{n}\right)\right)$ & $\widetilde{O}(d(n + \sqrt{n \kappa)})$ & $\lpen \leq 1$
\\
 \hline
\end{tabular}}
\caption{Expected Excess Population Loss (SCO), $\delta > 0$. 
The function classes refer to $f$ and the excess population loss bounds refer to $F(w) = \mathbb{E}_{x \sim \DD} f(w,x).$ Function class: $\HH$ = $L$-Lipschitz, $\mu$-strongly convex, $\beta$-smooth. All excess risk bounds should be read as $\min \{LR, ... \}$ by taking the trivial algorithm. The assumption $\frac{\sqrt{d}}{\varepsilon n} = \widetilde{O}(1)$ is needed to ensure the excess risk bounds shown are non-trivial. All excess risk bounds ignore the logarithmic factor $\log\left(\frac{1}{\delta}\right)$ or $\cd = \sqrt{\log\left(\frac{2}{\sqrt{16\delta + 1} - 1}\right)}.$ The $\widetilde{O}(...)$ notation is used here for runtime when there the runtime bound involves another logarithmic factor such as $\log(n)$ or $\log(d)$. Our algorithm is the only one that has no restrictions on $\varepsilon > 0.$  Noisy SGD (\cite{bst14}) doesn't require smoothness, but excess population loss bounds do not improve by adding smoothness. 
The algorithm of \cite{fkt20} that we list here is the phased ERM method (Algorithm 3), which requires $\varepsilon \leq 1.$ The authors present other algorithms which attain similar excess population loss and runtime bounds (up to log factors) for which the assumption is less strict, but to the best of our knowledge, arbitrary $\varepsilon > 0$ is not permitted with any of these, due to the suboptimal choice of Gaussian noise (see \cite{zhao2019}). Additionally, $\kappa \lesssim \frac{n}{\log(n)}$ is required for all of the algorithms in \cite{fkt20}. 
\label{table: pop delta > 0}
}
\end{table}

\begin{table}[t]
\resizebox{\textwidth}{!}{\begin{tabular}{||c | c | c | c | c||}
 \hline 
 Function Class & Reference & Excess Empirical Risk & Runtime & Assumptions/restrictions
 \\ [0.5ex]
 \hline\hline
 \multirow{2}{*}{$\FFE$}
 & \cite{bst14} & $\widetilde{O}\left(\frac{L^2}{\mu}\left(\pen\right)^2\right)$ & $O\left(\frac{L^2}{\mu^2} \frac{d^9}{\varepsilon^2} \max \{1, \frac{L}{\mu} \} + \max \{d n^2, \varepsilon n\}\right)$ & $\pen \leq 1$ 
 \\ 
 \cline{2-5}
 & \cref{cor: sc lip SGD} & $O\left(\frac{L^2}{\mu}\pen\right)$ & $O\left(\max \{d n^2, \varepsilon n\}\right)$ & $\pen \leq 1$ 
 \\ 
 \hline
 \multirow{4}{*}{$\HHE$}
 & \cite{chaud2011}
 & $\widetilde{O}\left(\frac{L^2}{\mu} \left(\pen\right)^2\right)$ & N/A & $\rank(\nabla^2 f(w,x)) \leq 1,$ $f$ linear classifier, $\pen \leq 1$ 
 \\
 \cline{2-5}
 & \cite{zhang2017} & $O\left(\frac{L^2}{\mu}\kappa \left(\pen\right)^2)\right)$ & $\widetilde{O}(\kappa nd)$ & $\pen \leq 1$ 
 \\
 \cline{2-5}
& \cref{cor: katyusha sc smooth}, \cref{rem: min smooth sc ERM} & $O\left(\frac{L^2}{\mu} \left(\pen\right) \min \left\{\kappa \left(\pen\right), 1 \right\}\right)$ & $\widetilde{O}\left(d (n + \sqrt{n \kappa})\right)$ & $\pen \leq 1$ 
\\ [1ex] 
 \hline
\multirow{2}{*}{$\GGE$}
 & \cite{bst14} & $\widetilde{O}\left(LR \pen\right)$ & $\widetilde{O}\left(R^2 d^6 n^3 \max\{d, \varepsilon n R\}\right)$ & $\pen \leq 1$ 
 \\ 
 \cline{2-5}
 & \cref{cor: convex lip SGD} & $O\left(LR \sqrt{\pen}\right)$ & $O\left(n^2 d \max \left\{1, (\frac{\varepsilon}{d})^2 \right\}\right)$ & $\pen \leq 1$
 \\ 
 \hline
 \multirow{3}{*}{$\JJE$}
 & \cite{chaud2011}
 & $\widetilde{O}\left(LR \pen\right)$ & N/A & $\rank(\nabla^2 f(w,x)) \leq 1,$ $f$ linear classifier, $\pen \leq 1$ 
\\
 \cline{2-5}
 & \cite{zhang2017} & $O\left(\blrt \left(\pen\right)^{2/3}\right)$ & $O\left(n^{5/3} d^{1/3} \varepsilon^{2/3} \left(\frac{\beta R}{L}\right)^{2/3}\right)$ & $\left(\pen\right)^2 \leq \frac{L}{\beta R}$
 \\
 \cline{2-5}
& \cref{cor: smooth convex katyusha}, \cref{rem: smooth convex ERM Katyusha min} & $O\left(\min \left\{ \blrt \left(\pen\right)^{2/3}, LR \sqrt{\pen} \right\}\right)$ & $\widetilde{O}\left(dn + \max \left\{n^{5/6} d^{2/3} \varepsilon^{1/3} \left(\frac{\beta R}{L}\right)^{1/3}, n^{3/4} d^{3/4} \varepsilon^{1/4} \sqrt{\frac{\beta R}{L}}  \right\}\right)$ & $\left(\pen\right)^2 \leq \frac{L}{\beta R}$
\\ [1ex] 
 \hline
\end{tabular}}
\caption{ERM, $\delta = 0$. ERM function classes: $\FFE$ = $L$-Lipschitz, $\mu$-strongly convex. $\HHE$ = $L$-Lipschitz, $\mu$-strongly convex, $\beta$-smooth. $\GGE$ = $L$-Lipschitz, convex. $\JJE$ = $L$-Lipschitz, convex, $\beta$-smooth. All excess risk bounds should be read as $\min \{LR, ... \}$ by taking the trivial algorithm (e.g. if $d > \varepsilon n$). The assumptions $\pen \leq 1$ and $\left(\pen\right)^2 \leq \frac{L}{\beta R}$ are needed to ensure the respective excess risk bounds shown are non-trivial. Our algorithm is the fastest in each respective function class. In the smooth, strongly convex, Lipschitz class, our excess risk bounds are also near-optimal, making it the clear preferred $\varepsilon$-differentially private algorithm for this class. For our algorithm in the $L$-Lipschitz, convex, $\beta$-smooth class, the two runtimes in the $\max\{...\}$ correspond in order to the respective excess risk bounds in the $\min\{...\}$ term in that same row. See \cref{rem: smooth convex ERM Katyusha min} for details. 
``N/A" means implementation and/or runtime details not provided for method. 
Implementation details not given for the output perturbation/localization step (first step) of the exponential + localization algorithm of \cite{bst14}. However, the second step of their method alone has runtime of $O(\frac{L^2}{\mu^2} \frac{n d^9}{\varepsilon^2} \max \{1, \frac{L}{\mu} \})$ and thus total runtime for their method, if localization were implemented using the methods of this paper, would exceed this quantity by an additive term comparable to our total runtime for that class. If $\WW$ is in isotropic position, then runtime of the exponential mechanism improves by a factor of $O(d^3).$
Objective perturbation bounds of \cite{chaud2011} are proved only for linear classifiers, but seem to be extendable to ERM functions satisfying the rank-one hessian assumption. 
\label{table:ERM delta = 0}
}
\end{table}

\begin{table}[t]
\resizebox{\textwidth}{!}{\begin{tabular}{||c | c | c | c | c||}
 \hline 
 Function Class & Reference & Excess Empirical Risk & Runtime & Assumptions
 \\ [0.5ex] 
 \hline\hline
 \multirow{5}{*}{$\HHE$}
 & \cite{kifer2012} & $O\left(\frac{L^2}{\mu} \frac{d}{(\varepsilon n)^2} + \beta R^2 \frac{1}{\varepsilon n}\right)$ & N/A & $\varepsilon \leq 1, \rank(\nabla^2 f(w,x)) \leq 1 \forall w, x,$ and $n \geq d^2$, $\frac{\sqrt{d \log(\frac{1}{\delta})}}{\varepsilon n} \leq 1$
 \\ [1ex] 
 \cline{2-5}
 & \cite{bst14} & $O\left(\frac{L^2}{\mu} \frac{d}{(\varepsilon n)^2} \log^2\left(\frac{n}{\delta}\right)\right)$ & $\widetilde{O}(d n^2)$ & $\varepsilon \leq 2 \sqrt{\log\left(\frac{1}{\delta}\right)}, \frac{\sqrt{d \log(\frac{1}{\delta})}}{\varepsilon n} \leq 1$
 \\ 
 \cline{2-5}
 & \cite{wang2017} & $O\left(\frac{L^2}{\mu} \frac{d}{(\varepsilon n)^2} \log(n)\right)$ & $\widetilde{O}(d(n + \kappa))$ & $\varepsilon = \widetilde{O}(\frac{\kappa}{n^2})$, $\frac{\sqrt{d \log(\frac{1}{\delta})}}{\varepsilon n} \leq 1$
 \\
 \cline{2-5}
 & \cite{zhang2017} & $O\left(\frac{L^2}{\mu}\kappa \frac{d}{(\varepsilon n)^2}\right)$ & $\widetilde{O}(\kappa nd)$ & $\varepsilon \leq 1$, $\frac{\sqrt{d \log(\frac{1}{\delta})}}{\varepsilon n} \leq 1$
 \\
 \cline{2-5}
& \cref{cor: katyusha sc smooth}, \cref{rem: min smooth sc ERM} & $O\left(\frac{L^2}{\mu}\min \left\{\kappa \frac{d}{(\varepsilon n)^2}, \frac{\sqrt{d}}{\varepsilon n} \right\}\right)$ & $\widetilde{O}(d(n + \sqrt{n \kappa)})$ & $\lpen \leq 1$
\\ 
 \hline
\end{tabular}}
\caption{ERM, $\delta > 0$. ERM function class: $\HHE$ = $L$-Lipschitz, $\mu$-strongly convex, $\beta$-smooth. All excess risk bounds should be read as $\min \{LR, ... \}$ by taking the trivial algorithm. The assumptions $\frac{\sqrt{d} \log\left(\frac{1}{\delta}\right)}{\varepsilon n} \leq 1$ or $\lpen \leq 1$ are needed to ensure the respective excess risk bounds shown are non-trivial. All excess risk bounds shown ignore the $\log\left(\frac{1}{\delta}\right)$ or $\cd + \sqrt{\cd^2 + \varepsilon}$ factor which is present in all of them. Here $\cd = \sqrt{\log(\frac{2}{\sqrt{16\delta + 1} - 1})}$. The $\widetilde{O}(...)$ notation is used here for runtime when there the runtime bound involves another logarithmic factor such as $\log(n)$ or $\log(d)$. Our algorithm is the only one that has no restrictions on $\varepsilon > 0.$  
Noisy SGD \cite{bst14} doesn't require smoothness, but excess risk bounds do not improve by adding smoothness. 
\cite[Remark 4.1]{wang2017} notes that the constraint $\varepsilon = \widetilde{O}(\frac{\kappa}{n^2})$ can be relaxed at the cost of an additional logarithmic factor in their excess risk bound, by amplifying the variance of their noise. However, their algorithm can never ensure $(\varepsilon, \delta)$-differential privacy for arbitrary $\varepsilon > 0$ due to their suboptimal choice of the Gaussian noise (c.f. \cite{zhao2019}). 
\label{table: ERM delta > 0}
}
\end{table}

\newpage
\FloatBarrier
\section{Proofs of Results in \cref{Sec: Conceptual risk bounds}}
\label{app: Sec 2 proofs}
We recall the following basic fact about $\beta$-smooth convex functions, which will be used frequently in many proofs:
\begin{lemma}(Descent Lemma)
Let $f$ be a convex $\beta$-smooth function on some domain $\mathcal{W}$. Then for any $w, w' \in \mathcal{W},$
\[ f(w) - f(w') \leq \langle \nabla f(w'), w - w' \rangle + \frac{\beta}{2}\|w - w'\|_2^2.
\]
\end{lemma}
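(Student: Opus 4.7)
The plan is to prove the descent inequality by the standard route: use the fundamental theorem of calculus to express $f(w)-f(w')$ as a line integral of the gradient along the segment joining $w'$ to $w$, then split off the linear term $\langle \nabla f(w'),w-w'\rangle$ and bound the remainder using the Lipschitz continuity of $\nabla f$ (which is exactly what $\beta$-smoothness provides). Note that convexity is not actually used in the argument; the statement holds for any $\beta$-smooth $f$, assuming $\mathcal{W}$ is a set containing the segment $\{w'+t(w-w'):t\in[0,1]\}$ (convex $\mathcal{W}$ certainly works).

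First I would parametrize $\gamma(t):=w'+t(w-w')$ for $t\in[0,1]$ and write
\[
f(w)-f(w')=\int_0^1 \langle \nabla f(\gamma(t)),\,w-w'\rangle\, dt,
\]
which is legitimate since $f$ is differentiable. Next, add and subtract $\nabla f(w')$ inside the integrand to split off the linear term:
\[
f(w)-f(w')=\langle \nabla f(w'),w-w'\rangle + \int_0^1 \langle \nabla f(\gamma(t))-\nabla f(w'),\,w-w'\rangle\, dt.
\]

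To bound the remainder, I would apply Cauchy--Schwarz pointwise in $t$ and then invoke $\beta$-Lipschitzness of $\nabla f$ to get $\|\nabla f(\gamma(t))-\nabla f(w')\|_2 \le \beta\,t\,\|w-w'\|_2$. Substituting and evaluating the resulting elementary integral $\int_0^1 \beta t\,\|w-w'\|_2^2\,dt=\frac{\beta}{2}\|w-w'\|_2^2$ yields the claimed bound. The main (minor) obstacle is a regularity/notation issue rather than a mathematical one: one should verify that the segment from $w'$ to $w$ lies in the domain so the integral is well defined; this is immediate in the convex domain settings used throughout the paper. No deeper difficulty arises, which is why this lemma is stated as a named fact for later reuse rather than proved in detail in the text.
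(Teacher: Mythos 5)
The paper does not prove this lemma; it simply recalls it as a standard fact, so there is no argument in the text to compare against. Your proof is the standard one and it is correct: the fundamental-theorem-of-calculus decomposition along the segment $\gamma(t)=w'+t(w-w')$, splitting off the linear term, then Cauchy--Schwarz and $\beta$-Lipschitzness of $\nabla f$ to bound the remainder by $\int_0^1 \beta t\,\|w-w'\|_2^2\,dt=\tfrac{\beta}{2}\|w-w'\|_2^2$. Your observation that convexity plays no role in the inequality (only $\beta$-smoothness and the segment lying in the domain are needed) is also correct; the paper includes convexity in the hypothesis merely because that is the setting in which it is applied, not because it is required.
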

\subsection{Proofs of Results in \cref{subsection 2.2}}
\subsubsection{Proof of \cref{conceptual sc alg is private}}
\label{pf: conceptual sc alg is private}
Recall the well-known post-processing property of differential privacy, which states that any function (e.g. projecting onto $\WW$) of an $(\varepsilon, \delta)$-differentially private method is itself $(\varepsilon, \delta)$-differentially private \cite[Proposition 2.1]{dwork2014}. By this fact, it suffices to show that the algorithm $\mathcal{A}(X) = \ws(X) + z$ (without projection) is differentially private. Assume first $\delta = 0$. Then by the definition of differential privacy, it suffices to show that for any $s \in \range(\wpr)$ (for which $\widehat{p}'(s) \neq 0$) and any $X, X' \in \XX^n$ such that $|X \Delta X'| \leq 2,$
\[
\frac{\widehat{p}(s)}{\widehat{p}'(s)} \leq e^{\varepsilon},
\]
where $\widehat{p}$ and $\widehat{p}'$ are the probability density functions (pdfs) of $\mathcal{A}(X)$ and $\mathcal{A}(X')$ respectively. Now note that $\widehat{p}(s) = p_{z}(s - \ws(X))$ and $\widehat{p}'(s) = p_{z}(s - \ws(X')),$ where $p_{z}$ is the pdf of the noise, given above. Then \begin{align*}
\frac{\widehat{p}(s)}{\widehat{p}'(s)}  &= \frac{p_{z}(s - \ws(X))}{p_{z}(s - \ws(X'))} = \exp\left({\frac{-\varepsilon\|s - \ws(X)\|_2 + \varepsilon \|s - \ws(X')\|_2}{\Delta_{F}}}\right) \\&\leq \exp\left({\frac{\varepsilon\|\ws(X) - \ws(X')\|_2}{\Delta_{F}}}\right) \leq \exp({\varepsilon}),
\end{align*}
where the second to last line uses the reverse triangle inequality and the last line uses the definition of $\Delta_{F}$. This establishes that $\mathcal{A}$ is $(\varepsilon, 0)$-differentially private. Now, by the post-processing property \cite{dwork2014}, we conclude that the algorithm $\mathcal{A'}(X) = \wprpro(X) = \Pi_{\WW}(\mathcal{A}(X))$ is $(\varepsilon, 0)$- differentially private. \\

For $\delta > 0$, the proof follows directly from the following result of \cite{zhao2019}: 
\begin{theorem} (Theorem 5 in \cite{zhao2019})
For $\delta \in (0, \frac{1}{2}),$ $(\varepsilon, \delta)$- differentially privacy can be achieved by adding Gaussian noise with mean $0$ and standard deviation $\sigma = \frac{(c + \sqrt{c^2 + \varepsilon}) \Delta}{\varepsilon \sqrt{2}}$ to each dimension of a query with $L_2$ sensitivity $\Delta.$ Here $c = \sqrt{\log\left(\frac{2}{\sqrt{16\delta + 1} - 1}\right)}.$
\end{theorem}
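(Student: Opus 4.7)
By the post-processing property of differential privacy, $\mathcal{A}'$ is $(\epsilon,\delta)$-DP whenever $\mathcal{A}$ is, since $\mathcal{A}' = \Pi_{B(0,R)} \circ \mathcal{A}$; I will therefore focus on $\mathcal{A}(X) = w^*(X) + z$. Fix adjacent datasets $X, X'$ and write $\Delta := w^*(X) - w^*(X')$, so $\|\Delta\|_2 \le \Delta_F$ by definition of $\Delta_F$. Let $p$ and $p'$ denote the densities of $\mathcal{A}(X)$ and $\mathcal{A}(X')$; these are the translates of $p_z$ by $w^*(X)$ and $w^*(X')$ respectively. The remainder of the argument splits on $\delta$.

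\textbf{Case $\delta = 0$.} Here the privacy guarantee reduces to a pointwise density-ratio bound. For any $s \in \mathbb{R}^d$, the stated exponential-in-$\|\cdot\|_2$ form of $p_z$ gives
\begin{equation*}
\frac{p(s)}{p'(s)} = \exp\!\left(\frac{\epsilon(\|s - w^*(X')\|_2 - \|s - w^*(X)\|_2)}{\Delta_F}\right) \le \exp\!\left(\frac{\epsilon\|\Delta\|_2}{\Delta_F}\right) \le e^{\epsilon},
\end{equation*}
using the reverse triangle inequality $|\|a\|-\|b\|| \le \|a-b\|$ and then the sensitivity bound. Integrating this pointwise inequality over any measurable $\mathcal{K}$ yields $P(\mathcal{A}(X) \in \mathcal{K}) \le e^{\epsilon} P(\mathcal{A}(X') \in \mathcal{K})$, i.e.\ $(\epsilon,0)$-DP.

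\textbf{Case $\delta \in (0,1/2)$.} Now $z \sim N(0,\sigma^2 I_d)$ with $\sigma = (c_\delta + \sqrt{c_\delta^2 + \epsilon})\,\Delta_F/(\sqrt{2}\,\epsilon)$. The natural tool is the privacy-loss random variable $L(s) := \ln(p(s)/p'(s))$ under $s \sim \mathcal{A}(X)$. Expanding the Gaussian quadratic exponents and substituting $z = s - w^*(X)$ yields
\begin{equation*}
L(s) = \frac{\langle z,\Delta\rangle}{\sigma^2} + \frac{\|\Delta\|_2^2}{2\sigma^2},
\end{equation*}
so $L \sim N\!\bigl(\|\Delta\|_2^2/(2\sigma^2),\,\|\Delta\|_2^2/\sigma^2\bigr)$ under $s \sim \mathcal{A}(X)$. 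The standard ``good event'' decomposition with $E := \{L \le \epsilon\}$ gives $P(\mathcal{A}(X) \in \mathcal{K}) \le P(E^c) + e^{\epsilon} P(\mathcal{A}(X') \in \mathcal{K})$ for every measurable $\mathcal{K}$, since on $E$ one has $p(s) \le e^{\epsilon} p'(s)$ pointwise. Hence it suffices to establish $P(L > \epsilon) \le \delta$ for all $\|\Delta\|_2 \le \Delta_F$.

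\textbf{Main obstacle: the tail calibration.} After standardization, $P(L > \epsilon) = \overline{\Phi}(t)$ with $t := \epsilon\sigma/\|\Delta\|_2 - \|\Delta\|_2/(2\sigma)$, where $\overline{\Phi}$ is the standard normal survival function; since $\overline{\Phi}(t)$ is nonincreasing in $\sigma/\|\Delta\|_2$, the worst case is $\|\Delta\|_2 = \Delta_F$, reducing matters to bounding $\overline{\Phi}(t_*)$ with $t_* = \epsilon\sigma/\Delta_F - \Delta_F/(2\sigma)$. Applying a sharp Gaussian tail bound (e.g.\ $\overline{\Phi}(t) \le \tfrac{1}{2}e^{-t^2/2}$ for $t \ge 0$) and requiring the result to be at most $\delta$ yields a quadratic inequality in $1/\sigma$; solving for the smallest admissible $\sigma$ reduces to a discriminant computation. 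The main technical work -- and the step I expect to require the most care -- is the algebraic verification that this discriminant simplifies so that the critical value of $\sigma$ is exactly $(c_\delta + \sqrt{c_\delta^2 + \epsilon})\Delta_F/(\sqrt{2}\epsilon)$ with $c_\delta = \sqrt{\log(2/(\sqrt{16\delta+1}-1))}$; here the specific form of $c_\delta$ arises from rationalizing $2/(\sqrt{16\delta+1}-1)$ via multiplication by $\sqrt{16\delta+1}+1$. With that identity in hand, the stated $\sigma$ is the smallest value guaranteeing $P(L>\epsilon) \le \delta$, establishing $(\epsilon,\delta)$-DP for $\mathcal{A}$ and hence, via post-processing, for $\mathcal{A}'$.
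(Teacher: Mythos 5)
The statement you are proving is imported by the paper verbatim from \cite{zhao2019} (their Theorem 5) and used as a black box inside the proof of \cref{conceptual sc alg is private}; the paper gives no proof of it, so your attempt has to stand on its own. Your setup is fine: the privacy-loss variable $L(s)=\frac{\langle z,\Delta\rangle}{\sigma^2}+\frac{\|\Delta\|_2^2}{2\sigma^2}$ is computed correctly, $L\sim N\bigl(\|\Delta\|_2^2/(2\sigma^2),\,\|\Delta\|_2^2/\sigma^2\bigr)$ under $s\sim\mathcal{A}(X)$, the good-event decomposition does reduce the problem to $P(L>\epsilon)\le\delta$, and the worst case is indeed $\|\Delta\|_2=\Delta_F$. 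The gap is in the step you flag as "the main technical work" and assert will work out: it does not. If you impose $\overline{\Phi}(t_*)\le\frac{1}{2}e^{-t_*^2/2}\le\delta$ with $t_*=\frac{\epsilon\sigma}{\Delta_F}-\frac{\Delta_F}{2\sigma}$, the quadratic in $\sigma/\Delta_F$ gives the critical value $\sigma=\frac{(c'+\sqrt{c'^2+\epsilon})\Delta_F}{\sqrt{2}\,\epsilon}$ with $c'=\sqrt{\log\frac{1}{2\delta}}$, and no rationalization turns $\log\frac{1}{2\delta}$ into $\log\frac{2}{\sqrt{16\delta+1}-1}$. In fact $c_\delta<c'$ strictly for all $\delta\in(0,\tfrac12)$, so the theorem's $\sigma$ is \emph{smaller} than anything your sufficient condition can certify: plugging the stated $\sigma$ in gives $t_*=\sqrt{2}\,c_\delta$ (using $(c_\delta+\sqrt{c_\delta^2+\epsilon})(\sqrt{c_\delta^2+\epsilon}-c_\delta)=\epsilon$), and your bound then yields only
\begin{equation*}
P(L>\epsilon)\;\le\;\tfrac{1}{2}e^{-c_\delta^2}\;=\;\frac{\sqrt{16\delta+1}-1}{4}\;>\;\delta \quad\text{for every }\delta\in\left(0,\tfrac12\right),
\end{equation*}
so the argument does not close.

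The reason the constant has the $\sqrt{16\delta+1}$ form is that $c_\delta$ is exactly the solution of $e^{-2c^2}+e^{-c^2}=4\delta$ (check: with $v=e^{-c_\delta^2}=\frac{\sqrt{16\delta+1}-1}{2}$ one has $v(v+1)=4\delta$). That is, \cite{zhao2019} calibrate against a strictly tighter, two-term sufficient condition for the Gaussian mechanism -- essentially the characterization that keeps the subtracted term $e^{\epsilon}P_{s\sim p'}(L>\epsilon)$ (or an equivalent refinement), rather than the single-tail condition $P(L>\epsilon)\le\delta$ combined with the crude bound $\overline{\Phi}(t)\le\frac12 e^{-t^2/2}$. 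To repair your proof you must either adopt such a tighter condition (and redo the calibration, which is where the quadratic in $e^{-c^2}$ appears), or settle for the weaker statement your argument actually proves, namely $(\epsilon,\delta)$-DP with $c_\delta$ replaced by the larger constant $\sqrt{\log\frac{1}{2\delta}}$. (Separately, the $\delta=0$ portion of your write-up concerns the Laplace-type noise of \cref{conceptual sc alg is private} and is not part of the quoted statement; it is correct but out of scope here.)
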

Here, our ``query" is $\ws(X),$ which has sensitivity $\Delta_{F},$ and hence $\mathcal{A}$ is $(\varepsilon, \delta)$. Applying the post-processing property again shows that $\mathcal{A'}$ is $(\varepsilon, \delta)$-differentially private. 

\subsection{Proofs of Results in \cref{subsection 2.3: sc Lip ER bounds}}
\subsubsection{Proof of \cref{prop:sc sensitivity}}
The proof relies on the following generalization (to non-differentiable functions on a possibly constrained ($\mathcal{W} \neq \mathbb{R}^d$) domain) of Lemma 7 from \cite{chaud2011}.
\begin{lemma}
\label[lemma]{lemma7}
Let $G(w), g(w)$ be functions on some convex closed set $\mathcal{W} \subseteq \mathbb{R}^d$. Suppose that $G(w)$ is $\mu$-strongly convex and $G(w) + g(w)$ is convex. Assume further that $g$ is $L_{g}$-Lipschitz. Define $w_{1} = \arg\min_{w \in \mathcal{W}} G(w), w_{2} = \arg\min_{w \in \mathcal{W}} [G(w) + g(w)]$. Then $\|w_{1} - w_{2}\|_2 \leq \frac{L_{g}}{\mu}.$
\end{lemma}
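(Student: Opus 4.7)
The plan is to mimic the classical two-point argument of Chaudhuri et al.\ (their Lemma 7), with the modification needed to handle non-differentiability and the possibly constrained domain $\mathcal{W}$. The key structural observation is that $G + g$ is itself $\mu$-strongly convex (sum of a $\mu$-strongly convex and a convex function), so both $G$ (at $w_1$) and $G + g$ (at $w_2$) are $\mu$-strongly convex functions being minimized over the same convex closed set $\mathcal{W}$. I will pair each strong convexity inequality with the first-order optimality condition for the corresponding minimization and then add.

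The first step is to invoke strong convexity of $G$ between $w_1$ and $w_2$: for every subgradient $u_1 \in \partial G(w_1)$,
\[
G(w_2) \;\geq\; G(w_1) + \langle u_1,\, w_2 - w_1\rangle + \tfrac{\mu}{2}\|w_2-w_1\|_2^2.
\]
Because $w_1$ minimizes $G$ over the convex closed set $\mathcal{W}$, the constrained first-order optimality condition $0 \in \partial G(w_1) + N_{\mathcal{W}}(w_1)$ (where $N_{\mathcal{W}}$ is the normal cone) guarantees the existence of such a subgradient $u_1$ satisfying $\langle u_1, w - w_1 \rangle \geq 0$ for every $w \in \mathcal{W}$. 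Plugging in $w = w_2 \in \mathcal{W}$ kills the linear term and yields $G(w_2) - G(w_1) \geq \tfrac{\mu}{2}\|w_2 - w_1\|_2^2$. Applying the exact same argument to $G+g$ at its minimizer $w_2 \in \mathcal{W}$ gives $(G+g)(w_1) - (G+g)(w_2) \geq \tfrac{\mu}{2}\|w_1 - w_2\|_2^2$.

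Adding these two inequalities, the $G$ contributions cancel and I am left with
\[
g(w_1) - g(w_2) \;\geq\; \mu\,\|w_1 - w_2\|_2^2.
\]
Using $L_g$-Lipschitzness of $g$ on the left, $g(w_1) - g(w_2) \leq L_g\|w_1 - w_2\|_2$, and dividing through by $\|w_1 - w_2\|_2$ (trivial if it is zero) yields the claim $\|w_1 - w_2\|_2 \leq L_g/\mu$.

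The main obstacle, compared with the original differentiable-and-unconstrained version in \cite{chaud2011}, is justifying the ``first-order condition kills the linear term'' step cleanly when $G$ is non-smooth and $\mathcal{W} \neq \mathbb{R}^d$: one must appeal to the subdifferential optimality condition $0 \in \partial G(w_1) + N_{\mathcal{W}}(w_1)$ rather than $\nabla G(w_1) = 0$. Everything else (strong convexity, Lipschitzness, and the clean cancellation when the two strong convexity inequalities are summed) is standard and does not require any smoothness.
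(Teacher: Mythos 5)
Your proof is correct, and it takes a modestly but genuinely different route from the paper's. The paper's argument starts from the variational inequalities at $w_1$ and $w_2$, subtracts them to obtain $\langle H(w_1) - H(w_2), w_1 - w_2\rangle \leq \langle h(w_2), w_1 - w_2\rangle$, and then invokes the \emph{strong monotonicity} characterization of strong convexity, $\mu\|w_1 - w_2\|_2^2 \leq \langle H(w_1) - H(w_2), w_1 - w_2\rangle$, finishing with Cauchy--Schwarz and the bound $\|h(w_2)\|_2 \leq L_g$. You instead use the \emph{quadratic growth} consequence of optimality plus strong convexity twice (once for $G$ at its minimizer $w_1$, once for the $\mu$-strongly convex sum $G + g$ at its minimizer $w_2$), add the two resulting inequalities so that the $G$ terms cancel, and apply Lipschitzness directly to the difference of function values $g(w_1) - g(w_2)$. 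The two routes are two sides of the same coin (gradient-level monotonicity vs.\ function-value growth), and both land on the inequality $\mu\|w_1 - w_2\|_2^2 \leq L_g\|w_1 - w_2\|_2$. Your version has the small advantage of making explicit which subgradient is being used: the paper writes ``let $H(w)$ output any subgradients,'' but its first-order conditions actually require the \emph{particular} subgradient guaranteed by $0 \in \partial G(w_1) + N_{\mathcal{W}}(w_1)$, which you call out. Both you and the paper implicitly rely on this sum rule for the subdifferential of $G + \iota_{\mathcal{W}}$, which holds under the mild regularity conditions in force here, so no gap on that front.
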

\begin{proof}[Proof of \cref{lemma7}]
At a point $w \in \mathcal{W},$ let $h(w)$ and $H(w)$ output any subgradients of $g$ and $G$, respectively. By first-order optimality conditions, for all $w \in \mathcal{W}$, we have \[
    \langle H(w_1), w - w_1 \rangle \geq 0\] and \[
    \langle H(w_2), w - w_2 \rangle + \langle h(w_2), w - w_2 \rangle \geq 0.
\]
Plugging $w_2$ for $w$ in the first inequality and $w_1$ for $w$ in the second inequality, and then subtracting inequalities gives: \begin{equation}
\label{eqn: FOO}
    \langle H(w_1) - H(w_2), w_1 - w_2 \rangle \leq \langle h(w_2), w_1 - w_2  \rangle. 
\end{equation}
Now, by strong convexity of $G$, we have \[
\mu \|w_1 - w_2 \|_2^2 \leq \langle H(w_1) - H(w_2), w_1 - w_2 \rangle. 
\]
Combining this with \cref{eqn: FOO} and using Cauchy-Schwartz yields: 
\[
\mu \|w_1 - w_2 \|_2^2 \leq \langle H(w_1) - H(w_2), w_1 - w_2 \leq \langle h(w_2), w_1 - w_2  \rangle \leq \|h(w_2)\|_2 \|w_1 - w_2\|_2. 
\]
Finally, using $L_{g}$-Lipschitzness of $g$ and dividing the above inequality by $\mu \|w_1 - w_2 \|_2$ gives the result. 
\end{proof}

Now we can prove \cref{prop:sc sensitivity}.
\begin{proof}[Proof of \cref{prop:sc sensitivity}]
Let $X, X' \in \XX^n$ such that (WLOG) $x_{n} \neq x'_{n}$, but all other data points are the same. 
Apply \cref{lemma7} to $G(w) = F(w,X), g(w) = F(w,X') - F(w,X)$ and note that $g$ is $2L$-Lipschitz by the triangle inequality. For $F$ of ERM form, we have $g(w) = \frac{1}{n}[f(w, x'_{n}) - f(w, x_n)]$, which is $2 \frac{L}{n}$-Lipschitz. 
\end{proof}

\subsubsection{Proof of \cref{prop: sc ER}}
We begin with a simple lemma. 
\begin{lemma} 
\label[lemma]{lem: expec of gamma and gauss}
1. Let $z$ be a random vector in $\mathbb{R}^d$ with density $p_z(t) \propto \exp(-\frac{\varepsilon \|t\|_{2}}{\Delta}).$ \\
    Then $\mathbb{E}\|z\|^2 = \frac{d(d+1) \Delta^2}{\varepsilon^2}.$ \\
2. Let $b \sim N(0, \mathbf{I}_{d} \sigma^2)$ (Gaussian random vector in $\mathbb{R}^d$). Then $~\mathbb{E}\|b\|_2^2 = d\sigma^2.$
\end{lemma}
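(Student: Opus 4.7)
The plan is to dispatch Part 2 directly from the definition of a Gaussian vector, then reduce Part 1 to a moment calculation for the Gamma distribution, using the observation that has already been recorded in the footnote following display (4) of the paper.

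For Part 2, write $b = (b_1, \ldots, b_d)$ with $b_i \sim N(0, \sigma^2)$ independent. Then $\mathbb{E}\|b\|_2^2 = \sum_{i=1}^d \mathbb{E}[b_i^2] = d\sigma^2$, using $\mathrm{Var}(b_i) = \sigma^2$ and $\mathbb{E}[b_i] = 0$. This takes one line.

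For Part 1, the paper's footnote already establishes that $\|z\|_2 \sim \Gamma\!\left(d, \Delta/\epsilon\right)$, by integrating the density $p_z(t) \propto \exp(-\epsilon\|t\|_2/\Delta)$ in polar coordinates over spheres of radius $s$. Taking this as given, the claim follows from the standard formulas $\mathbb{E}Y = k\theta$ and $\mathrm{Var}(Y) = k\theta^2$ for $Y \sim \Gamma(k,\theta)$: setting $k=d$ and $\theta = \Delta/\epsilon$ gives
\[
\mathbb{E}\|z\|_2^2 \;=\; \mathrm{Var}(\|z\|_2) + (\mathbb{E}\|z\|_2)^2 \;=\; d\,(\Delta/\epsilon)^2 + d^2 (\Delta/\epsilon)^2 \;=\; \frac{d(d+1)\,\Delta^2}{\epsilon^2}.
\]
If one prefers a self-contained derivation that bypasses the footnote, compute $\mathbb{E}\|z\|_2^2$ directly in polar coordinates: up to the normalizing constant $C_d = \Gamma(d)(\Delta/\epsilon)^d \cdot s_{d-1}$ (where $s_{d-1}$ is the surface area of the unit sphere in $\mathbb{R}^d$),
\[
\mathbb{E}\|z\|_2^2 \;=\; \frac{1}{C_d}\int_0^\infty r^2 \cdot r^{d-1} e^{-\epsilon r/\Delta}\, s_{d-1}\, dr \;=\; \frac{\Gamma(d+2)}{\Gamma(d)} \left(\frac{\Delta}{\epsilon}\right)^2 \;=\; d(d+1)\frac{\Delta^2}{\epsilon^2},
\]
using $\Gamma(d+2) = (d+1)d\,\Gamma(d)$.

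There is no real obstacle here: Part 2 is standard and Part 1 reduces to a one-line moment computation once the paper's footnote identifying the law of $\|z\|_2$ as Gamma is invoked. The only judgement call is whether to simply cite the footnote or to re-derive the polar-coordinate integral; I would do the former for brevity and mention the latter parenthetically.
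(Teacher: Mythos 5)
Your proof is correct and follows essentially the same route as the paper: Part 2 is the one-line computation from independence of Gaussian coordinates, and Part 1 invokes $\|z\|_2 \sim \Gamma(d,\Delta/\epsilon)$ (already noted in the paper's footnote) and solves for the second moment via the mean and variance formulas. The optional polar-coordinate derivation you append is a valid self-contained alternative, but it is not needed and the paper does not include it.
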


\begin{proof}
\begin{enumerate}
    \item Observe that $\|z\|_2 \sim \Gamma(d, \frac{\Delta}{\varepsilon})$, which has mean $\mathbb{E}\|z\|_2 = d \frac{\Delta}{\varepsilon}$ and variance $\mathbb{E}\|z\|_2^2 - (\mathbb{E}\|z\|_2)^2 = d (\frac{\Delta}{\varepsilon})^2$. Now solve for $\mathbb{E}\|z\|_2^2$. 
    \item Next, $b$ consists of $d$ independent Gaussian random variables (i.i.d) $b_{i} \sim N(0, \sigma^2).$ Hence $\mathbb{E}[\|b\|_2^2] = \sum_{i = 1}^d \mathbb{E}b_{i}^2 = d\sigma^2.$
\end{enumerate}
\end{proof}

We also recall the following basic fact about projections onto closed, convex sets:
\begin{lemma}
\label[lemma]{projection lemma}
Let $\mathcal{W}$ be a closed, convex set in $\mathbb{R}^d,$ and let $a \in \mathbb{R}^d.$
Then $\pi = \Pi_{\mathcal{W}}(a)$ if and only if $\langle a - \pi, w - \pi \rangle \leq 0$ for all $w \in \mathcal{W}.$
\end{lemma}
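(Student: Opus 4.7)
The plan is to prove the standard variational (obtuse-angle) characterization of the metric projection onto a closed convex set by establishing both implications directly from the definition $\Pi_{\mathcal{W}}(a) = \argmin_{w\in\mathcal{W}} \tfrac{1}{2}\|a - w\|_2^2$ and from convexity of $\mathcal{W}$.

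For the forward direction ($\Rightarrow$), I would fix $w \in \mathcal{W}$ and use convexity of $\mathcal{W}$ to form the segment $w_t := (1-t)\pi + t w = \pi + t(w - \pi) \in \mathcal{W}$ for $t \in [0,1]$. Since $\pi$ minimizes $\phi(v) := \tfrac{1}{2}\|a - v\|_2^2$ over $\mathcal{W}$, the scalar function $t \mapsto \phi(w_t)$ is minimized at $t = 0$ on $[0,1]$. Expanding gives
\[
\phi(w_t) = \tfrac{1}{2}\|a - \pi\|_2^2 - t\langle a - \pi, w - \pi\rangle + \tfrac{t^2}{2}\|w - \pi\|_2^2.
\]
The requirement $\phi(w_t) \geq \phi(w_0)$ for all $t \in (0,1]$ yields, after dividing by $t$ and letting $t \downarrow 0$, the inequality $\langle a - \pi, w - \pi\rangle \leq 0$, as claimed.

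For the reverse direction ($\Leftarrow$), I would assume the inequality holds at $\pi \in \mathcal{W}$ for every $w \in \mathcal{W}$ and show directly that $\pi$ minimizes $\|a - \cdot\|_2^2$ on $\mathcal{W}$. The key identity is
\[
\|a - w\|_2^2 = \|(a - \pi) - (w - \pi)\|_2^2 = \|a - \pi\|_2^2 - 2\langle a - \pi, w - \pi\rangle + \|w - \pi\|_2^2.
\]
The middle term is nonnegative by hypothesis, and the last term is nonnegative trivially, so $\|a - w\|_2^2 \geq \|a - \pi\|_2^2$ for all $w \in \mathcal{W}$. Combined with uniqueness of the projection (which follows from strict convexity of $\|a - \cdot\|_2^2$ and convexity of $\mathcal{W}$), this forces $\pi = \Pi_{\mathcal{W}}(a)$.

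There is essentially no obstacle here: this is a classical, short result and both directions are one-line algebraic manipulations once the segment $w_t$ (forward) and the expansion identity (reverse) are written down. The only subtlety worth flagging is ensuring $w_t \in \mathcal{W}$, which is precisely where convexity of $\mathcal{W}$ enters, and ensuring that closedness of $\mathcal{W}$ (together with coercivity of $\|a - \cdot\|_2^2$) justifies existence of the projection in the forward direction's premise.
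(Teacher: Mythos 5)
Your proof is correct and follows essentially the same route as the paper: the paper simply invokes the first-order optimality condition for the convex quadratic $h(w)=\tfrac{1}{2}\|w-a\|_2^2$ over $\mathcal{W}$, while you unpack that condition by hand (the segment argument for necessity and the expansion identity for sufficiency). The only difference is that your version is more self-contained, proving both directions from scratch rather than citing the optimality criterion.
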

\begin{proof}
Fix any $a \in \mathbb{R}^{d}.$ By definition, $\Pi_{\mathcal{W}}(a) = \arg\min_{w \in \mathcal{W}} \frac{1}{2}\|w - a\|_2^2 := \arg\min_{w \in \mathcal{W}} h(w).$ Then $\nabla h(w) = w - a$ and by first-order optimality conditions, $\pi = \arg\min_{w \in \mathcal{W}} h(w)$ if and only if for all $w \in \mathcal{W},$  \[
\langle \nabla h(\pi), w - \pi \rangle \geq 0 \Leftrightarrow \langle \pi - a, w - \pi \rangle \geq 0 \Leftrightarrow \langle a - \pi, w - \pi \rangle \leq 0.
\]
\end{proof}

\begin{proof}[Proof of \cref{prop: sc ER}]
By $L$-Lipschitzness of $F,$ $\ERpro \leq L \mathbb{E} \ |\Pi_{\WW}(\ws(X) + z) - \ws(X) \|_2$. 
Applying \cref{projection lemma} to $a = \ws(X) + z$ and $\pi = \Pi_{\WW}(\ws(X) + z)$ and using Cauchy-Schwartz gives \begin{align*}
\|\ws(X) - \Pi_{\WW}(\ws(X) + z)\|_2^2 &\leq \langle \Pi_{\WW}(\ws(X) + z) - \ws(X), z \rangle \\ &\leq \|\Pi_{\WW}(\ws(X) + z) - \ws(X)\|_2 \|z\|_2. 
\end{align*}
Dividing both sides of this inequality by $\|\ws(X) - \Pi_{\WW}(\ws(X) + z)\|$ shows that $|\Pi_{\WW}(\ws(X) + z) - \ws(X) \|_2 \leq \|z\|_2$ for any $z \in \mathbb{R}^{d}.$ Hence $\ERpro \leq L \mathbb{E} \|z\|_2.$ Since $\|z\|_2 \sim \Gamma(d, \frac{\Delta_{F}}{\varepsilon})$ has mean $\mathbb{E} \|z\|_2 = \frac{d \Delta_{F}}{\varepsilon},$ part a) follows. For part b), combine the second part of \cref{lem: expec of gamma and gauss} with Jensen's inequality. This completes the proof. 
\end{proof}

\subsection{Proofs of Results in \cref{subsection 2.4 convex lip}}
\subsubsection{Proof of \cref{prop: convex ER}}
\label[proof]{Proof of Prop 8}
We will prove part 2 (ERM) here; the proof of part 1 is almost identical, but delete the $n$ term everywhere it appears to account for the different sensitivity estimate in \cref{prop:sc sensitivity}. 
To begin, decompose excess risk into 3 terms and then bound each one individually: \begin{align*}
\mathbb{E}_{\mathcal{A'_{\lambda}}}F(\wlapro(X), X) - F(\ws(X), X) = \underbrace{\mathbb{E}_{A'_{\lambda}} [F(\wla(X), X) - \Fl(\wla(X), X)]}_{\textcircled{\small{i}}} \\
+ \underbrace{\mathbb{E}_{A'_{\lambda}} \Fl(\wla(X), X) - \Fl(\wls(X), X)}_{\textcircled{\small{ii}}} 
+ \underbrace{\Fl(\wls(X), X) - F(\ws(X), X)}_{\textcircled{\small{iii}}}. 
\end{align*}
Suppose first $\delta = 0$. Then we bound the terms as follows: \textcircled{\small{i}} = $-\frac{\lambda}{2} \mathbb{E}\|\wla(X) \|_2^2 \leq 0$; \textcircled{\small{ii}} $\leq \frac{(L + \lambda R) \Delta_{\lambda} d}{\varepsilon}$ by \cref{prop: sc ER}; and \textcircled{\small{iii}} $ = [\Fl(\wls(X), X) - \Fl(\ws(X),X)] + [\Fl(\ws(X), X) - F(\ws(X), X)] < 0 + \frac{\lambda}{2} \|\ws(X)\|_2^2 \leq \frac{\lambda R^2}{2}$ since $\wls(X)$ is the unique minimizer of $\Fl(\cdot, X)$ by strong convexity. 
By \cref{prop:sc sensitivity}, we have $\Delta_{\lambda} \leq \frac{2(L + \lambda R)}{\lambda}$ for $F \in \GG$ and $\Delta_{\lambda} \leq \frac{2(L + \lambda R)}{\lambda n}$ for $F \in \GGE$. 
For $\delta = 0$, combining the above gives us: 
\[
\ERLpro \leq \frac{\lambda R^2}{2} + \frac{2(L+\lambda R)^2}{\lambda}\left(\pen\right) \]
for $F \in \GGE,$ and \[
\ERLpro \leq \frac{\lambda R^2}{2} + \frac{2(L+\lambda R)^2}{\lambda}\left(\pe\right) \]
for $F \in \GG$. Minimizing these expressions over $\lambda > 0$ yields (up to constant factors) $\lambda^{*} = \frac{L}{R \sqrt{1 + \frac{\varepsilon n}{d}}}$ for $F \in \GGE$ and $\lambda^{*} = \frac{L}{R \sqrt{1 + \frac{\varepsilon}{d}}}$ for $F \in \GG$. Plugging in these choices of $\lambda^*$ and using elementary algebraic inequalities gives the results for $\delta = 0$.  \\
For $\delta > 0$, we instead bound the three terms in the above decomposition as follows: 
\textcircled{\small{i}} $\leq 0$, as before; \textcircled{\small{ii}} $\leq (L + \lambda R)\sqrt{d} \sigma_{\lambda} =  (L + \lambda R)\sqrt{d} \left(\frac{\cd + \sqrt{\cd^2 + \varepsilon}}{\sqrt{2} \varepsilon}\right) \Delta_{\lambda}  
$; 
and \textcircled{\small{iii}} $\leq \frac{\lambda R^2}{2}$, as before. Using \cref{prop:sc sensitivity}, we get \[
\ERDLpro \leq \frac{\lambda R^2}{2} + \frac{2(L+\lambda R)^2}{\lambda}\left(\lpen\right)
\] 
for the ERM case; delete the $n$ for non-ERM. Minimizing over $\lambda > 0$ yields (up to constants) $\lambda^* = \frac{L}{R \sqrt{1 + \frac{\varepsilon n} {d^{1/2} \left(\ld\right)}}}$ for ERM $F$. Plugging this in and using elementary algebraic inequalities (e.g. note $L + \lambda^* R \leq 2L$) and the assumptions in the statement of the proposition finishes the proof. Delete the $n$ everywhere and follow line for line for non-ERM.   
\subsubsection{Proof of \cref{lem: smooth ER}}
By the descent lemma, \begin{align*}
\ER &= \mathbb{E}_{\mathcal{A}}[F(\ws(X) + z), X) - F(\ws(X), X)] \\
&\leq \mathbb{E}_{\mathcal{A}}\langle \nabla_{w} F(\ws(X), X), \ws(X) + z - \ws(X)\rangle \\
&+ \frac{\beta}{2} \mathbb{E}_{\mathcal{A}}\|\ws(X) + z - \ws(X)\|_{2}^2 \\
&= \frac{\beta}{2}\mathbb{E}_{\mathcal{A}} \|z\|_2^2.
\end{align*}
 Now recall (\cref{lem: expec of gamma and gauss}) that for $\delta = 0$, $\mathbb{E}_{\mathcal{A}} \|z\|_2^2 = \frac{d(d+1) \Delta_{F}^2}{\varepsilon^2}$, whereas for $\delta > 0$, $\mathbb{E}_{\mathcal{A}} \|z_{\delta}\|_2^2 = d \sigma^2 = 
\frac{d}{2} \left(\frac{\cd + \sqrt{\cd^2 + \varepsilon}}{\varepsilon}\right)^2 \Delta_{F}^2.$
Combining the above gives the result. 

\subsubsection{Proof of \cref{prop: lower bound for HHE}}
The proof is an extension of the argument used in \cite{bst14}. Define the following unscaled hard instance: \[
f(w,x) := \frac{1}{2}\|Mw - x\|_2^2, 
\]
for $w, x \in B(0,1) \subset \mathbb{R}^d,$ where $M$ is a symmetric positive definite $d \times d$ matrix satisfying $\frac{\mu}{\beta} \mathbf{I} \preccurlyeq M^2 \preccurlyeq \mathbf{I}.$ Note that this implies $\sqrt{\frac{\mu}{\beta}} \mathbf{I} \preccurlyeq M \preccurlyeq \mathbf{I}.$ Then for any $w, x \in B(0,1) \subset \mathbb{R}^d$, \[
\nabla f(w,x) = M^T(Mw - x) \implies \|\nabla f(w,x)\|_2 \leq \|M^2\|_2 \|w\|_2 + \|M^T\|_2 \|x\|_2 \leq 2,
\]
so $f(\cdot, x)$ is 2-Lipschitz,
and \[
\frac{\mu}{\beta} \mathbf{I} \preccurlyeq \nabla^2 f(w,x) = M^2 \preccurlyeq \mathbf{I},
\]
so $f$ is $\frac{\mu}{\beta}$-strongly convex and $1$-smooth. 
Hence $F(w,X):= \frac{1}{n} \sum_{i=1}^n f(w,x_i) \in \mathcal{H}^{ERM}_{1, \frac{\mu}{\beta}, 2, 1}.$ 
We will require the following lemma from \cite{bst14}:
\begin{lemma} \cite[Lemma 5.1]{bst14}
\label[lemma]{lemma 5.1 bst}
\begin{enumerate}
    \item Let $n, d \in \mathbb{N}, \varepsilon > 0.$ For every $\varepsilon$-differentially private algorithm $\mathcal{A},$ there is a data set $X = (x_1, \cdots, x_n) \in B(0,1)^n \subset \mathbb{R}^{d \times n}$ such that with probability at least $1/2$ (over the randomness in $\mathcal{A}$), we have \[
    \|\mathcal{A}(X) - \Bar{X}\|_2 = \Omega\left(\min\left\{1, \pen \right\}\right),
    \]
    where $\Bar{X}:= \frac{1}{n} \sum_{i=1}^n x_{i}.$
    \item Let $n, d \in \mathbb{N}, \varepsilon > 0, \delta = o(1/n).$ For every $(\varepsilon, \delta)$-differentially private algorithm $\mathcal{A},$ there is a data set $X = (x_1, \cdots, x_n) \in B(0,1)^n \subset \mathbb{R}^{d \times n}$ such that with probability at least $1/3$ (over the randomness in $\mathcal{A}$), we have \[
    \|\mathcal{A}(X) - \Bar{X}\|_2 = \Omega\left(\min\left\{1, \frac{\sqrt{d}}{\varepsilon n} \right\}\right),
    \]
    where $\Bar{X}:= \frac{1}{n} \sum_{i=1}^n x_{i}.$
\end{enumerate}
\end{lemma}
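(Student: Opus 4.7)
}
The plan is to exhibit a concrete hard instance in $\HHE$ on which any DP algorithm must make a large estimation error, and then lift that error to an excess empirical risk lower bound via a direct quadratic identity. Specifically, take
\[
    f(w,x) = \tfrac{\beta}{2}\|w - x\|_2^2, \qquad w,x \in B(0,R)\subset\mathbb{R}^d,
\]
and $F(w,X) := \tfrac1n\sum_{i=1}^n f(w,x_i)$. I would first verify that $F\in\HHE$ with parameters $(\beta,\mu,L=2\beta R,R)$ for any $\mu\le\beta$: the gradient $\nabla_w f(w,x) = \beta(w-x)$ has norm at most $2\beta R$ on $B(0,R)^2$, and $\nabla_w^2 f = \beta\mathbf{I}$, so $f$ is simultaneously $\beta$-smooth and $\beta$-strongly convex, hence also $\mu$-strongly convex for any $\mu\in(0,\beta]$. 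Next, a direct computation of $\nabla_w F(w,X) = \beta(w-\bar X)$ gives the closed-form minimizer $w^*(X) = \bar X := \tfrac1n\sum_i x_i$, which automatically lies in $B(0,R)$ by convexity. Moreover, because $F$ is a quadratic,
\[
    F(w,X) - F(w^*(X),X) \;=\; \tfrac{\beta}{2}\|w - \bar X\|_2^2
\]
for every $w\in\mathbb{R}^d$, so excess empirical risk \emph{is} (up to the factor $\beta/2$) the squared estimation error for the mean.

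The next step is to reduce the lower bound to the mean-estimation lower bound from \cite[Lemma 5.1]{bst14} (stated above as \cref{lemma 5.1 bst}), which is phrased for datasets in $B(0,1)^n$. Given an arbitrary $\epsilon$-DP (or $(\epsilon,\delta)$-DP) algorithm $\mathcal{A}$ defined on $B(0,R)^n$, define $\widetilde{\mathcal{A}}(\widetilde X) := \tfrac{1}{R}\mathcal{A}(R\widetilde X)$ on inputs $\widetilde X\in B(0,1)^n$. Rescaling data pointwise by a deterministic factor preserves the adjacency relation ``$|X\Delta X'|\le 2$,'' so $\widetilde{\mathcal{A}}$ inherits the same privacy guarantee as $\mathcal{A}$. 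Applying \cref{lemma 5.1 bst} to $\widetilde{\mathcal{A}}$ yields some $\widetilde X\in B(0,1)^n$ for which $\|\widetilde{\mathcal{A}}(\widetilde X) - \bar{\widetilde X}\|_2 = \Omega(\min\{1,\tfrac{d}{\epsilon n}\})$ holds with probability at least $\tfrac12$ (respectively $\Omega(\min\{1,\tfrac{\sqrt d}{\epsilon n}\})$ with probability at least $\tfrac13$ when $\delta=o(1/n)$). Setting $X = R\widetilde X\in B(0,R)^n$ and multiplying through by $R$, the same event gives
\[
    \|\mathcal{A}(X) - \bar X\|_2 \;=\; \Omega\!\left(R\cdot\min\!\Bigl\{1,\,\tfrac{d}{\epsilon n}\Bigr\}\right)
\]
in the pure-DP case, and the analogous bound with $\sqrt d$ in the approximate-DP case.

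Combining the two displays, on the same constant-probability event we obtain
\[
    F(\mathcal{A}(X),X) - F(w^*(X),X) \;=\; \tfrac{\beta}{2}\|\mathcal{A}(X)-\bar X\|_2^2 \;=\; \Omega\!\left(\beta R^2 \min\!\Bigl\{1,\,\bigl(\tfrac{d}{\epsilon n}\bigr)^2\Bigr\}\right),
\]
and substituting $\beta R^2 = \tfrac{L R}{2}$ (using $L=2\beta R$) yields the claimed $\Omega\bigl(LR\min\{1,(d/(\epsilon n))^2\}\bigr)$ lower bound on that event. Taking expectations over the randomness of $\mathcal{A}$ via a standard Markov-style argument (the event has probability bounded below by a constant and the excess risk is nonnegative) upgrades this to a lower bound on $\ER$. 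The $(\epsilon,\delta)$ statement follows identically after substituting $\sqrt d$ for $d$ from the second part of \cref{lemma 5.1 bst}.

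The main obstacle is cosmetic rather than conceptual: one must be careful that the rescaled algorithm $\widetilde{\mathcal{A}}$ is genuinely DP with the same parameters (which follows because the map $\widetilde X \mapsto R\widetilde X$ is a deterministic bijection preserving adjacency), and that the hard instance lives in $\HHE$ with all four parameters $(\beta,\mu,L,R)$ simultaneously — the latter is handled by observing that $\beta$-strong convexity implies $\mu$-strong convexity for any $\mu\le\beta$, so no separate construction is needed for the regime $\mu<\beta$. No stronger $\kappa$-dependence is sought here, matching the statement of the proposition.
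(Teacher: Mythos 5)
Your proposal does not actually prove the statement under review. The statement is \cref{lemma 5.1 bst} itself --- the private mean-estimation lower bound of \cite{bst14} --- and your argument takes that lemma as given (``Applying \cref{lemma 5.1 bst} to $\widetilde{\mathcal{A}}$ yields some $\widetilde X$ \dots'') and then derives the excess-risk lower bound of \cref{prop: lower bound for HHE}. In other words, you have written a proof of the downstream proposition, not of the lemma; as a proof of the lemma it is circular. What is missing is the substance of the lemma: for part 1 ($\epsilon$-DP) one needs a packing argument --- exhibit exponentially many (in $d$) datasets in $B(0,1)^n$ whose means are pairwise well separated, and use the group-privacy/pigeonhole consequence of pure DP to show no single private output can be close to most of them unless $\epsilon n \gtrsim d$; for part 2 ($(\epsilon,\delta)$-DP with $\delta = o(1/n)$) the separation is only $\sqrt{d}/(\epsilon n)$ and the known proof goes through fingerprinting codes (the Bun--Ullman--Vadhan machinery invoked in \cite{bst14}), which a packing argument cannot reproduce. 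None of this appears in your write-up, so the statement remains unproved; in this paper it is simply imported by citation.

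As a side remark, viewed as a proof of \cref{prop: lower bound for HHE} your argument is essentially the paper's own: rescale the algorithm and data between $B(0,1)$ and $B(0,R)$, use a quadratic hard instance so that excess empirical risk equals $\frac{\beta}{2}\|w-\bar X\|_2^2$ exactly, invoke \cref{lemma 5.1 bst}, and finish with Markov. Your simplification of taking $f(w,x)=\frac{\beta}{2}\|w-x\|_2^2$ (i.e.\ $M=\mathbf{I}$) instead of the paper's matrix $M$ with $\frac{\mu}{\beta}\mathbf{I}\preccurlyeq M^2\preccurlyeq\mathbf{I}$ still places the instance in $\HHE$, since $\beta$-strong convexity implies $\mu$-strong convexity for $\mu\le\beta$, so the formal statement is met; the paper's choice of $M$ is there to make the hard instance genuinely ill-conditioned (Hessian spectrum spanning $[\mu,\beta]$), which better supports the claim that the large-$\kappa$ subclass is itself hard. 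But none of this addresses the lemma you were asked to prove.
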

Note that for any $(\varepsilon, \delta)$-differentially private algorithm $\mathcal{A},$ the algorithm $M^{-1} \mathcal{A}$ is also $(\varepsilon, \delta)$-differentially private by post-processing \cite{dwork2014}. Moreover, since $M$ is a bijection, every $(\varepsilon, \delta)$-differentially private algorithm $\mathcal{A}$ can be written as $M \mathcal{A'}$ for $(\varepsilon, \delta)$-differentially private algorithm $\mathcal{A'}.$ Thus, we can replace $\|\mathcal{A}(X) - \Bar{X}\|_2$ by $\|M \mathcal{A}(X) - \Bar{X}\|_2$ in the statement of \cref{lemma 5.1 bst}.

Now observe that for any $w$ and $X = (x_1, \cdots, x_n),$ \[
\nabla F(w,X) = \frac{1}{n} \sum_{i=1}^n M^T(Mw - x_i) \implies \ws(X) = (M^T M)^{-1} M^T \bar{X} = M^{-1} \Bar{X}, \] and
\begin{align*}
F(w,X) - F(\ws(X), X)  &= \frac{1}{2n}\left[n\|Mw\|_2^2 - 2\sum_{i=1}^n\left(\langle Mw, x_i \rangle - \langle \Bar{X} , x_i\rangle\right) - n\|\Bar{X}\|_2^2 \right]\\
&= \frac{1}{2} \left[\|Mw\|_2^2 -2 \langle Mw, \Bar{X}\rangle + \frac{1}{n^2}\|\sum_{i=1}^n x_i\|_2^2 \right] \\
&= \frac{1}{2}\|Mw - \Bar{X}\|_2^2.
\end{align*}

Combining this with \cref{lemma 5.1 bst} and the discussion that followed, and applying Markov's inequality implies that for any $\varepsilon$-differentially private algorithm $\mathcal{A},$ there exists a data set $X \in B(0,1)^n$ such that \begin{equation}
\label{eq: unscaled LB delta = 0}
    \ER = \Omega\left(\min\left\{1, \left(\pen\right)^2 \right\}\right).
\end{equation}
Likewise, for any $(\varepsilon, \delta)$-differentially private algorithm $\mathcal{A},$ there exists a data set $X \in B(0,1)^n$ such that \begin{equation}
\label{eq: unscaled LB delta > 0}
     \ER = \Omega\left(\min\left\{1, \left(\frac{\sqrt{d}}{\varepsilon n}\right)^2 \right\}\right).
\end{equation}

It remains to properly scale $F.$ To this end, define $\mathcal{W} = \XX = B(0,R)$ and $\widetilde{f}: \WW \times \XX \to \mathbb{R}$ by $\widetilde{f}(\widetilde{w},\widetilde{x}) := \frac{\beta}{2}\|M \widetilde{w} - \widetilde{x}\|_2^2,$ where $\widetilde{w} = Rw$ and $\widetilde{x} = Rx.$ Then $\widetilde{F}(\widetilde{w}, \widetilde{X}):= \frac{1}{n}\sum_{i=1}^n \widetilde{f}(\widetilde{w},\widetilde{x_i}) \in \HHE$ with $L = 2 \beta R$ and \[
\widetilde{F}(\widetilde{w}, \widetilde{X}) - \widetilde{F}(\widetilde{w}^*(\widetilde{X}), \widetilde{X}) = \frac{\beta}{2}\|M\widetilde{w} - \Bar{\widetilde{X}}\|_2^2 = \frac{\beta R^2}{2}\|Mw - \Bar{X}\|_2^2
,
\]
where $\widetilde{w}^*(\widetilde{X}) = \argmin_{\widetilde{w} \in \WW} \widetilde{F}(\widetilde{w}, \widetilde{X})$ and $\Bar{\widetilde{X}} = \frac{1}{n}\sum_{i=1}^n \widetilde{x_{i}}.$ Therefore, applying the unscaled lower bounds above from \cref{eq: unscaled LB delta = 0} and \cref{eq: unscaled LB delta > 0} completes the proof.

\subsubsection{Proof of \cref{prop: convex smooth ER}}
As in the proof of \cref{prop: convex ER}, we decompose excess risk into three terms and then bound each one individually: \begin{align*}
\mathbb{E}_{\mathcal{A}} F(\wla(X), X) - F(\ws(X), X) = \underbrace{\mathbb{E}_{A} [F(\wla(X), X) - \Fl(\wla(X), X)]}_{\textcircled{\small{a}}} \\+ 
\underbrace{\mathbb{E}_{A} F(\wla(X), X) - \Fl(\wls(X), X)}_{\textcircled{\small{b}}} 
+ \underbrace{\Fl(\wls(X), X) - F(\ws(X), X)}_{\textcircled{\small{c}}}. 
\end{align*}

We prove part 2 (ERM) first. Assume first that $\delta = 0$. Then we bound the terms as follows: \textcircled{\small{a}} = $-\frac{\lambda}{2} \mathbb{E}\|\wla(X)\|_2^2 \leq 0$; \textcircled{\small{b}} $\leq (\beta + \lambda) (2\frac{L + \lambda R}{\lambda})^2 \left(\pen\right)^2 $ by $(\beta + \lambda)$-smoothness and $(L + \lambda R)$-Lipschitzness of $\Fl$, and \cref{lem: smooth ER} combined with \cref{prop:sc sensitivity}; and \textcircled{\small{c}} $ = [\Fl(\wls(X), X) - \Fl(\ws(X),X)] + [\Fl(\ws(X), X) - F(\ws(X), X)] < 0 + \frac{\lambda}{2} \|\ws(X)\|_2^2 \leq \frac{\lambda R^2}{2}$ since $\wls(X)$ is the unique minimizer of $\Fl(\cdot, X)$ by strong convexity. Putting these estimates together, setting $\lambda = \left(\frac{\beta L^2}{R^2}\right)^{1/3} \left(\pen\right)^{2/3} 
$, noting $\lambda R \leq L$ by the parameter assumptions and hence $\lambda \leq 2 \beta$ (by the inequality $L \leq 2 \beta R$ which holds for all $F \in \JJ$) yields 
\begin{align*}
    \ER &\leq 
     \frac{\lambda R^2}{2} + 4 (\beta + \lambda) \left(\pen\right)^2 \frac{(L+\lambda R)^2}{\lambda^2}  \\
     &\leq \frac{\lambda R^2}{2} + \frac{4 (3 \beta) (2L)^2}{\lambda^2}\left(\pen\right)^2 \\
     &\leq 48.5 \blrt\left(\pen\right)^{2/3}.
\end{align*}
Now suppose $\delta > 0$. Then the \textcircled{\small{b}} term in the decomposition is bounded as \[
\textcircled{\small{b}} \leq 4(\beta + \lambda) \left(\frac{L + \lambda R}{\lambda}\right)^2 \left(\lpen\right)^2 
\] by $(\beta + \lambda)$-smoothness and $(L + \lambda R)$-Lipschitzness of $\Fl$, and \cref{lem: smooth ER} combined with \cref{prop:sc sensitivity}. Then the other two terms are bounded as above for $\delta = 0$. Set $\lambda = \left(\frac{\beta L^2}{R^2}\right)^{1/3} \left(\lpen\right)^{2/3} 
$. 
Again note that $\lambda R \leq L$ and $\lambda \leq 2\beta$ by the parameter assumptions and the inequality $L \leq 2 \beta R$ for $F \in \JJ$. Then, by similar steps as above, we have \begin{align*}
  \ERD &\leq \frac{\lambda R^2}{2} + 4\left(\beta + \lambda\right)\left(\frac{L + \lambda R}{\lambda}\right)^2 \left(\lpen\right)^2 \\
  &\leq \frac{\lambda R^2}{2} + 48 \frac{\beta L^2}{\lambda^2}\left(\lpen\right)^2 \\
  &\leq 48.5 \blrt\left(\lpen\right)^{2/3}.
\end{align*}

To prove part 1, set $\lambda = \left(\frac{\beta L^2}{R^2}\right)^{1/3} \left(\pe\right)^{2/3}$ when $\delta = 0$ and $\lambda = \left(\frac{\beta L^2}{R^2}\right)^{1/3} \left(\lpe\right)^{2/3}$ for $\delta > 0$. Then the proof of part 1 follows line for line as the proof of part 2, except with ``$n$" deleted everywhere.

\section{Proofs for \cref{section 3: implment} and Additional Results for Non-Strongly-Convex}
\subsection{Proofs of Results in \cref{subsection: 3.1}}
\label{app: proofs of sec 3.1}
\subsubsection{Proof of \cref{prop: sc imp priv}}
Define $\Delta_{T} := \sup_{|X \Delta X'|\leq 2} \|w_{T}(X) - w_{T}(X')\|_{2}.$ By standard arguments (see proof of \cref{conceptual sc alg is private}) and our choice of noise, it suffices to show that $\Delta_{T} \leq \Delta_{F} + 2\sqrt{\frac{2 \alpha}{\mu}}$. 
Now, since $F(w_{T}(X), X) - F(\ws, X) \leq \alpha$ by the choice of $T = T(\alpha)$, and since $F(\cdot, X)$ is $\mu$-strongly convex, we have 
\[ \|w_{T} - \ws\|_2^2 \leq \frac{2}{\mu} [F(w_{T}, X) - F(\ws, X) - \langle \nabla F(\ws, X), w_{T} - \ws \rangle] \leq \frac{2}{\mu} \alpha. 
\]
Hence for any data sets $X, X' \in \XX^n$ such that $|X \Delta X'| \leq 2$, we have \begin{align*}
  \|w_{T}(X) - w_{T}(X') \| &\leq \|w_{T}(X) - \ws(X) \| + \|\ws(X) - \ws(X')\| + \|\ws(X') - w_{T}(X')\| \\
  &\leq \sqrt{\frac{2}{\mu} \alpha} + \Delta_{F} + \sqrt{\frac{2}{\mu} \alpha}.  
\end{align*}

\subsubsection{Proof of \cref{thm: black box sc lip}}
Notice \begin{align*}
\ERpro = \mathbb{E}F(\Pi_{\WW}(w_{T}(X) + \widehat{z}), X) - F(\ws(X), X) \\
= \mathbb{E}[F(\Pi_{\WW}(w_{T}(X) + \widehat{z}), X) - F(w_{T}(X), X)]
+ \mathbb{E}[F(w_{T}(X), X) - F(\ws(X), X)] \\
\leq L \mathbb{E}\|\widehat{z}\|_2 + \alpha,
\end{align*}
where we used \cref{projection lemma} and the non-expansiveness property of projection in the last inequality.
Now for $\delta = 0,$ $\ERpro \leq \sqrt{2} L\left(\Delta_{F} + 2 \sqrt{\frac{2 \alpha}{\mu}}\right)\left(\pe\right) + \alpha$ by \cref{lem: expec of gamma and gauss}. Then using \cref{prop:sc sensitivity} to bound $\Delta_{F}$ proves the first statement in each of 1a) and 2a). Similarly, for $\delta > 0$, we have \[
\ERDpro \leq L\left(\Delta_{F} + 2 \sqrt{\frac{2 \alpha}{\mu}}\right) \left(\lpe\right) + \alpha
\] 
by \cref{lem: expec of gamma and gauss}. Appealing again to \cref{prop:sc sensitivity} completes the proof. The verification that the prescribed choices of $\alpha$ achieve the respective conceptual upper bounds is routine, using the assumptions stated in the theorem.

\subsubsection{Proof of of \cref{cor: sc lip SGD}}
In line 4 of \cref{alg: stochastic subgrad}, note that $\mathbb{E}_{i \sim unif[n]}\|g(w,x_{i})\|_2^2 \leq L^2$ since $f(\cdot,x)$ is $L$-Lipschitz for all $x \in \XX.$ Also, $\mathbb{E}_{i \sim unif[n]}[g(w,x_{i})] = G(w, x_{i}) \in \partial F(w,X).$ Therefore, the result follows from \cite[Theorem 6.2]{bubeck}, which says that $F(\widehat{w_{T}}, X) - F(\ws(X), X) \leq \frac{2L^2}{\mu (T+1)},$ where $\widehat{w_{T}}$ is the weighted average of the first $T$ iterates $w_{t}, t \in [T]$ returned by \cref{alg: stochastic subgrad}. 

\subsubsection{Proof of \cref{th: smooth sc implement}}
Write \begin{align*}
    \ER &= \mathbb{E}F(w_{T}(X) + \widehat{z}, X) - F(\ws(X), X) \\
&= \mathbb{E}F(w_{T}(X) + \widehat{z}, X) - F(w_{T}(X), X) 
+ F(w_{T}(X), X) - F(\ws(X), X) \\ 
&\leq \mathbb{E} \langle \nabla F (w_T(X), X), \widehat{z} \rangle + \mathbb{E}\|\widehat{z}\|_2^2 + \alpha \\
&= \frac{\beta}{2} \mathbb{E}\|\widehat{z}\|_2^2 + \alpha,
\end{align*}
where we used the descent lemma to get the inequality. Now for $\delta = 0$, \[
\ER \leq \beta \left(\Delta_{F} + 2 \sqrt{\frac{2 \alpha}{\mu}}\right)^2 \left(\pe\right)^2 + \alpha.
\] by \cref{lem: expec of gamma and gauss}. 
~Substituting the bounds on $\Delta_{F}$ from \cref{prop:sc sensitivity} proves the first statement in each of 1a) and 2a). 
Similarly, for $\delta > 0$, \[
~\ER \leq \frac{\beta}{2} \left[\frac{d \left(\ld\right)^2 \left(\Delta_{F} + 2 \sqrt{\frac{2 \alpha}{\mu}}\right)^2}{\varepsilon^2}\right] + \alpha.
\]
Again appealing to \cref{prop:sc sensitivity} establishes the first statements in 1b) and 2b). Verification that the prescribed choices of $\alpha$ achieve the claimed bounds is straightforward. 

\subsection{Efficient Implementation for Convex, Lipschitz functions}
\label{subsection 3.3}
In this subsection, we proceed similarly as in \cref{subsection 2.4 convex lip}, where we used regularization to obtain a strongly convex objective $\Fl(w,X) = F(w,X) + \frac{\lambda}{2} \|w\|_2^2$, optimized $\Fl$ using our techniques for strongly convex functions, and then add noise (calibrated to the sensitivity of $\Fl$) to ensure privacy. The key difference here is that, as in the previous subsection, we do not obtain $\wls(X)$ exactly, but rather get an approximate minimizer. Our Black Box implementation procedure is described in \cref{alg: black box convex non-smooth}. Our first result is that this method is differentially private. 

\FloatBarrier
\begin{algorithm}[ht!]
\caption{Black Box Output Perturbation Algorithm with Regularization for $\GG$ and $\JJ$}
\label{alg: black box convex non-smooth}
\begin{algorithmic}[1]
\Require~ Number of data points $n \in \mathbb{N},$ dimension $d \in \mathbb{N}$ of data, non-private (possibly randomized) optimization method $\mathcal{M}$, privacy parameters $\varepsilon > 0, \delta \geq 0$, data universe $\XX,$ data set $X \in \XX^{n}$, function $F(w,X) \in \GG$, accuracy and regularization parameters $\alpha > 0, \lambda > 0$ with corresponding iteration number $T = T(\alpha, \lambda) \in \mathbb{N}$ (such that $\mathbb{E}[\Fl(w_{T}(X), X) - \Fl(\ws(X), X)] \leq \alpha$).
\State Run $\MM$ on $\Fl(w,X) = F(w,X) + \frac{\lambda}{2}\|w\|_2^2$ for $T = T(\alpha)$ iterations to ensure $\mathbb{E}\Fl(w_{T}(X), X) - \Fl(\wls, X) \leq \alpha$. 
\State Add noise to ensure privacy: $\wprl:= w_{T} + \hat{\zl}$, where the density $p(\hat{\zl})$ of $\hat{\zl}$ is proportional to 
$\begin{cases} 
\exp\left\{-\frac{\varepsilon \|\hat{\zl}\|_{2}}{\Delta_{\lambda} + 2\sqrt{\frac{2 \alpha}{\lambda}}}\right\} &\mbox{if } \delta = 0 \\
\exp\left\{-\frac{2 \varepsilon^2 \|\hat{\zl}\|_{2}^2}{(\Delta_{\lambda} + 2\sqrt{\frac{2 \alpha}{\lambda}})^2 (\cd + \sqrt{\cd^2 + \varepsilon})^2}\right\} &\mbox{if } \delta > 0,
\end{cases}$ 
where $\Delta_{\lambda}:= \frac{2(L + \lambda R)}{\lambda n}$
is an upper bound on the $L_2$ sensitivity of $\Fl.$ \\
\Return $\wprl.$
\end{algorithmic}
\end{algorithm}

\begin{proposition}
\label[proposition]{prop: convex imp priv}
\cref{alg: black box convex non-smooth} is $(\varepsilon, \delta)$-differentially private. Moreover, $\mathcal{A'}_{\lambda}(X) := \Pi_{\WW}\left(w_{\mathcal{A}_{\lambda}}(X)\right)$ is $(\varepsilon, \delta)$-differentially private, where $w_{\mathcal{A}_{\lambda}(X)}$ denotes the output of \cref{alg: black box convex non-smooth}.
\end{proposition}

Next, we establish excess risk bounds for the generic \cref{alg: black box convex non-smooth} that depend on $\alpha,$ and show that the conceptual excess risk bounds from \cref{subsection 2.4 convex lip} can be obtained by choosing $\alpha$ appropriately. 

\begin{theorem}
\label[theorem]{thm: black box imp convex lip}
Suppose $F \in \GGE$. Run \cref{alg: black box convex non-smooth} on $F \in \GGE$ with $\lambda > 0$ chosen as follows: If $\delta = 0,$ set $\lambda = \frac{L}{R \sqrt{1 + \frac{\varepsilon n}{d}}}$.
If $\delta \in \left(0, \frac{1}{2}\right),$ set $\lambda =
\frac{L}{R \sqrt{1 + \frac{\varepsilon n}{d^{1/2} (\cd + \sqrt{\cd^2 + \varepsilon})}}}.$
Denote $\mathcal{A'}_{\lambda}(X) = w_{\mathcal{A'}_{\lambda}}(X):=  \Pi_{\WW}\left(w_{\mathcal{A}_{\lambda}}(X)\right),$ where $\wprl(X)$ is the output of \cref{alg: black box convex non-smooth}. \\
a) Let $\delta = 0, \pen \leq 1.$  
Then \[
\ERpro \leq 16.5 LR \sqrt{\pen} + 32\sqrt{\alpha} \sqrt{LR} (\frac{\varepsilon n}{d})^{1/4}(1 + \pe).\] In particular, setting $\alpha = \frac{LR \left(\pen\right)^{3/2}}{(1 + \pe)^2}$ implies \[
\ERLpro \leq 49 LR \sqrt{\pen}.\] \\
b) Let $\delta \in (0,\frac{1}{2}), \lpen \leq 1$. 
Then \[
\ERDpro \leq 13 LR \left(\lpen\right)^{1/2} +  12 \sqrt{\alpha} \sqrt{LR} \left(\frac{\varepsilon n}{\sqrt{d} (\cd + \sqrt{\cd^2 + \varepsilon})}\right)^{1/4}.\]
In particular, setting $\alpha = LR \left(\lpen\right)^{3/2}$ implies \[
\ERDLpro \leq 25 LR \left(\lpen\right)^{1/2}.\]
\end{theorem}

Instantiating \cref{alg: black box convex non-smooth} with fast stochastic optimization algorithms allow us to achieve the above excess risk bounds efficiently. Note that $\Fl(\cdot, X)$ is $(L + \lambda R)$-Lipschitz and $\lambda$-strongly convex. Recall that in this case, $T = \ceil*{\frac{2 (L+\lambda R)^2}{\lambda \alpha}}$ iterations of stochastic subgradient descent are sufficient for finding a point ${w_{T}}$ such that $\mathbb{E}\Fl(w_{T}, X) - \Fl(\wls(X), X) \leq \alpha$. Plugging in the value of $\lambda$ prescribed in \cref{thm: black box imp convex lip}, we have the following:

\begin{corollary}
\label[corollary]{cor: convex lip SGD}
Let $F \in \GGE.$ Run \cref{alg: black box convex non-smooth} with $\MM$ as the stochastic subgradient method and set $\lambda$ as specified in \cref{thm: black box imp convex lip}, $\alpha$ and $T$ as prescribed below. Denote $\mathcal{A'}_{\lambda}(X) = w_{\mathcal{A'}_{\lambda}}(X):=  \Pi_{\WW}\left(w_{\mathcal{A}_{\lambda}}(X)\right),$ where $\wprl(X)$ is the output of \cref{alg: black box convex non-smooth}. \\
a) Let $\delta = 0$ and $\pen \leq 1.$ Then setting $\alpha = \frac{LR \left(\pen\right)^{3/2}}{(1 + \pe)^2}$ and $T = \ceil*{12 n^2 (1 + \left(\ep\right)^2)}$ implies \[
\ERLpro \leq 49 LR \sqrt{\pen}\] in runtime $
    O\left(d n^2 \max\left\{1, \left(\frac{\varepsilon}{d}\right)^2 \right\}\right).$
b) Let $\delta \in \left(0, \frac{1}{2}\right)$ and  $\lpen \leq 1.$ Then setting $\alpha = LR \left(\lpen\right)^{3/2}$ and $T = \ceil*{12 \frac{n^2 \varepsilon}{d \ld})}$  implies \[
\ERDLpro \leq 25 LR \left(\lpen\right)^{1/2}\] in runtime $O(n^2 \varepsilon).$
\end{corollary}

There are two existing algorithms for $\GGE$ that we are aware of that provide excess risk and runtime bounds. For $\delta = 0,$ the exponential mechanism of \cite{bst14} has excess risk $O\left(LR \left(\pen\right)\right)$ and runtime $\widetilde{O}\left(R^2 d^6 n^3 \max\{d, \varepsilon n R\}\right),$ which is clearly not very practical for large-scale problems. For $\delta > 0,$ the noisy SGD method of \cite{bst14} has excess risk $O\left(LR \left(\frac{\sqrt{d} \log(n) \log\left(\frac{1}{\delta}\right)}{\varepsilon n}\right)\right)$ and runtime $O(n^2 d).$ This method requires $\varepsilon \leq 2 \sqrt{\log\left(\frac{1}{\delta}\right)}$ to be private, however, which restricts how small excess risk can get. Although both of these excess empirical risk bounds are tighter than the ones we provide via output perturbation in most parameter regimes, in \cref{Section 4: pop loss}, we prove tighter \textit{population loss} bounds (which is arguably more important than empirical risk) than the two competing algorithms of \cite{bst14} noted above.

Next, we assume additionally that $F$ is $\beta$-smooth. 
\subsubsection{Smooth, convex, Lipschitz functions.}
\label{subsection 3.4: smooth convex lip imp}
When we add the smoothness assumption, we need to change the regularization parameter in \cref{alg: black box convex non-smooth} in order to obtain tighter excess risk bounds. 

\begin{theorem}
\label[theorem]{thm: black box convex smooth}
$F \in \JJE.$ Run \cref{alg: black box convex non-smooth} with accuracy $\alpha$ and set $\lambda$ as follows: if $\delta = 0$, set $\lambda = 
\left(\frac{\beta L^2}{R^2}\right)^{1/3} \left(\pen\right)^{2/3}.$ If $\delta \in (0,\frac{1}{2})$, set $\lambda = 
\left(\frac{\beta L^2}{R^2}\right)^{1/3} \left(\lpen\right)^{2/3}.$ \\
a) Let $\delta = 0$ and $~\left(\pen\right)^2 \leq \lrb.$ Then, \[
\ER \leq 48.5 \blrt\left(\pen\right)^{2/3} + \alpha\left[3\left(\frac{R \beta}{L}\right)^{2/3} \frac{d^{4/3} n^{2/3}}{\varepsilon^{4/3}} + 1\right] + 12\sqrt{\alpha}R \beta^{1/2} \frac{d}{\varepsilon}.\]
In particular, setting $\alpha = \min \left\{\frac{L^{4/3} R^{2/3}}{\beta ^{1/3}} (\frac{\varepsilon}{d})^{2/3} \frac{1}{n^{4/3}}, \blrt\left(\pen\right)^{2/3}\right\} $ implies \[
\ER \leq 65 \blrt \left(\pen\right)^{2/3}.
\]
b) Let $\delta \in (0,\frac{1}{2})$ and $~\left(\lpen\right)^2 \leq \lrb.$ Then, \begin{align*}
\ERD &\leq 48.5 \blrt \left(\lpen\right)^{2/3} \\
&\;\;\; + \alpha\left[1 + 8n^{2/3}\left(\frac{\beta R}{L}\right)^{2/3} \frac{(d(\ld)^2)^{2/3}}{\varepsilon^{4/3}}\right] + 69 \sqrt{\alpha} R \sqrt{\beta} \frac{\sqrt{d} (\ld)}{\varepsilon}.\end{align*} 
In particular, setting $\alpha = \min\left\{\frac{L^{4/3} R^{2/3}}{\beta ^{1/3}} \left(\lnep\right)^{2/3} \frac{1}{n^2}, \blrt \left(\lpen\right)^{2/3} \right\}$ implies \[
\ERD \leq 127 \blrt \left(\lpen\right)^{2/3}.\]
\end{theorem}

For efficient private optimization, we can instantiate our Black Box Algorithm with an accelerated stochastic method, such Katyusha \cite{az16}, on the regularized objective. Recall that Katyusha finds an $\alpha$-suboptimal point of $\Fl(\cdot, X)$ in runtime  $\widetilde{O}\left(d\left(n + \sqrt{n} \sqrt{\frac{\beta + \lambda}{\lambda}}\right)\right)$ (where $\widetilde{O}$ hides a log factor depending on $\frac{1}{\alpha}$) since $\Fl$ is $(\beta + \lambda)$-smooth and $\lambda$-strongly convex. Putting in the choices of $\lambda$ and $\alpha$ prescribed by \cref{thm: black box convex smooth} gives the following:

\begin{corollary}
\label[corollary]{cor: smooth convex katyusha}
For $F \in \JJE$, take $\MM$ to be \cref{alg: Katyusha} (Katyusha) in \cref{alg: black box convex non-smooth} and $\lambda$ as prescribed in \cref{thm: black box convex smooth}, $\alpha$ and $T$ as given below. \\
a) If $\delta = 0$ and $\left(\pen\right)^2 \leq \frac{L}{R \beta}$ then taking $\alpha = \frac{L^{4/3} R^{2/3}}{\beta ^{1/3}} (\frac{\varepsilon}{d})^{2/3} \frac{1}{n^{4/3}}$ and \[
T = O\left(n + n^{5/6} \left(\ep\right)^{1/3} \left(\frac{\beta R}{L}\right)^{1/3} \log\left(\left(\frac{\beta R}{L}\right)^{1/3} \left(\pe\right)^{2/3} n^{4/3}\right)\right)\] implies \[
\ER \leq 65 \blrt \left(\frac{d}{\varepsilon n}\right)^{2/3}\]
in runtime \[
\widetilde{O}\left(nd + n^{5/6} d^{2/3} \varepsilon^{1/3}\left(\frac{\beta R}{L}\right)^{1/3}\right).\]
b) If $\delta \in \left(0, \frac{1}{2}\right)$ and $\left(\lpen\right)^2 \leq \frac{L}{R \beta},$ then taking $\alpha = \frac{L^{4/3} R^{2/3}}{\beta ^{1/3}} \left(\lnep\right)^{2/3} \frac{1}{n^2},$ and $~T = O\left(n + n^{5/6} \left(\lep\right)^{1/3} \left(\frac{\beta R}{L}\right)^{1/3} \log\left(\left(\frac{\beta R}{L}\right)^{1/3} n^2 \left(\lpen\right)^{2/3}\right) \right)$ implies \[
\ERD \leq 127 \blrt \left(\lpen\right)^{2/3}\] in runtime \[
\widetilde{O}\left(nd + n^{5/6} d^{5/6} \varepsilon^{1/6}\left(\frac{\beta R}{L}\right)^{1/3}\right).\]
\end{corollary}

\begin{remark}
\label[remark]{rem: smooth convex ERM Katyusha min}
Since $\GGE \subseteq \JJE$, by \cref{cor: convex lip SGD}, we obtain the following upper bounds on the excess risk of functions in $\JJE: \ER \lesssim
\min \left\{LR, LR \sqrt{\pen}, \blrt \left(\pen\right)^{2/3} \right\}$ for $\delta = 0$;
and $~\ERD \lesssim \min \left\{LR, LR \left(\lpen\right)^{1/2}, \blrt \left(\lpen\right)^{2/3} \right\}$ for $\delta > 0.$ 

Now, to derive upper bounds on the runtimes for obtaining the second (middle) excess risk bounds in the above minima (i.e. the excess risk bounds for $\GGE$) under the additional assumption of $\beta$-smoothness, we use the regularization term $\lambda$ and the corresponding $\alpha$ that are prescribed in \cref{thm: black box imp convex lip}. For $\delta = 0,$ plugging these choices of $\lambda$ and $\alpha$ into the runtime expression for Katyusha gives the following runtime for achieving excess risk $O\left(LR \sqrt{\pen}\right)$: $\widetilde{O}\left(d\left(n + \sqrt{n\left(\frac{\beta + \lambda}{\lambda}\right)}\right)\right) = \widetilde{O}\left(d\left(n + n^{3/4} \left(\frac{\varepsilon}{d}\right)^{1/4} \sqrt{\frac{\beta R}{L}}\right)\right) = \widetilde{O}\left(nd + n^{3/4}d^{3/4}\varepsilon^{1/4}\sqrt{\frac{\beta R}{L}}\right).$ Likewise, for $\delta > 0,$ the runtime for obtaining excess risk $O\left(LR \left(\lpen\right)^{1/2}\right)$ by Katyusha is $\widetilde{O}\left(d\left(n + n^{3/4} \left(\frac{\varepsilon}{\ld}\right)^{1/4} \sqrt{\frac{\beta R}{L}}\right)\right) = \\
\widetilde{O}\left(nd + n^{3/4} \varepsilon^{1/8} d^{7/8} \sqrt{\frac{\beta R}{L}}\right).$ 
\end{remark}

There are several alternative efficient differentially private algorithms for smooth convex ERM that have been proposed previously, but \textit{our method is faster and arguably simpler} than all of these. Here we discuss only the methods that have practical implementation schemes and runtime guarantees (see \cref{subsection 2.5: smooth sc} and \cref{subsection 2.6: convex smooth} for discussion of objective perturbation) and those that require smoothness (see \cref{subsection 3.3} for discussion of exponential mechanism and noisy SGD), to avoid redundancy. The gradient descent-based output perturbation method of \cite{zhang2017} achieves excess risk of $O\left(\blrt \left(\pen\right)^{2/3}\right)$ in runtime $O\left(n^{5/3} d^{1/3} \varepsilon^{2/3} \left(\lrb\right)^{2/3}\right)$ when $\delta = 0$. When $\delta > 0$, they get excess empirical risk $\widetilde{O}\left(\blrt \left(\frac{\sqrt{d}}{\varepsilon n}\right)^{2/3}\right)$ in runtime $\widetilde{O}\left(n^{5/3} d^{2/3} \varepsilon^{2/3} \left(\frac{\beta R}{L}\right)^{2/3} \right),$ but they require $\varepsilon \leq 1$ in order to be differentially private (by their choice of Gaussian noise and \cite{zhao2019}). Thus, by \cref{rem: smooth convex ERM Katyusha min}, our excess risk bounds match or beat (when $\beta$ or $R$ is very large) theirs in every parameter regime (see Section 2.5 for details). In addition, our runtime is significantly faster by way of using Katyusha instead of gradient descent. Indeed, when our risk bounds match, we improve runtime by a factor of $O\left(n^{5/6} \frac{\beta R}{L} \frac{\varepsilon}{d}\right)^{1/3}$ for $\delta = 0$ and $\widetilde{O}\left(n^{5/6} \frac{\beta R}{L} \lep\right)^{1/3}$ for $\delta > 0,$ while also permitting any $\varepsilon > 0.$ 
For $\delta > 0,$ the other notable method for $\JJE$ is noisy SVRG of \cite{wang2017}. This method achieves nearly optimal excess empirical risk of $\widetilde{O}\left(LR \frac{\sqrt{d}}{\varepsilon n}\right)$ in runtime $\widetilde{O}\left(n\left(\left(\frac{\beta R}{L}\right) \varepsilon \sqrt{d} + d\right)\right),$ provided $\beta$ is ``sufficiently large'' \cite[Theorem 4.4]{wang2017}. By contrast, our method works for arbitrarily small $\beta$ and (ignoring log terms) is faster whenever $d^2 \leq n \varepsilon^5 \left(\frac{\beta R}{L}\right)^4$ due to acceleration.

\vspace{0.2cm}
In the next section, we use the ERM results from this Section (and Section 2) to derive strong differentially private guarantees on the excess \textit{population loss} of Lipschitz convex (and strongly convex) loss functions. The population loss bounds are realized in as little runtime as was needed to obtain the ERM risk bounds in this Section, making them the fastest known runtimes for differentially private optimization when $\delta = 0$ and nearly the fastest for $\delta > 0$. 

\subsection{Proofs of Results in \cref{subsection 3.3}}
\subsubsection{Proof of \cref{prop: convex imp priv}}
By standard arguments, it suffices to show that $\sup_{|X \Delta X'| \leq 2} \|w_{T}(X) - w_{T}(X') \|_2 \leq \Delta_{\lambda} + 2\sqrt{\frac{2 \alpha}{\mu}}.$ Let $|X \Delta X'| \leq 2$. Then \begin{align*}
\|w_{T}(X) - w_{T}(X') \|_2  &\leq \|w_{T}(X) - \wls(X)\| + \|\wls(X) - \wls(X') \| + \|\wls(X') - w_{T}(X')\| \\
&\leq \sqrt{\frac{2 \alpha}{\lambda}} + \Delta_{\lambda} + \sqrt{\frac{2 \alpha}{\lambda}}. 
\end{align*}
The last line follows because: $\lambda$-strong convexity of $\Fl$ implies the middle term is bounded according to \cref{prop:sc sensitivity} and the other two terms are both bounded by a similar argument as in \cref{prop: sc imp priv}, again using $\lambda$-strong convexity of $\Fl$. Finally, the post-processing property of differential privacy \cite[Proposition 2.1]{dwork2014} implies that $\mathcal{A}'$ is also differentially private. 

\subsubsection{Proof of \cref{thm: black box imp convex lip}}
To begin, decompose excess risk into 3 terms: 
\begin{align*}
\begin{split}
&\ERLpro = \\
&\underbrace{\mathbb{E}_{\mathcal{A'}_{\lambda}} [F(\Pi_{\WW}(w_{T}(X) + \zl), X) - \Fl(\Pi_{\WW}(w_{T}(X) + \zl), X)]}_{\textcircled{\small{a}}} \\
&+ \underbrace{\mathbb{E}_{\mathcal{A'}_{\lambda}} \Fl(\Pi_{\WW}(w_{T}(X) + \zl), X) - \Fl(\wls(X), X)}_{\textcircled{\small{b}}}  
+ \underbrace{\Fl(\wls(X), X) - F(\ws(X), X)}_{\textcircled{\small{c}}}. 
\end{split}
\end{align*}

Now bound the terms as follows: \textcircled{\small{a}} = $-\frac{\lambda}{2} \mathbb{E}\|\wlapro(X)\|_2^2 \leq 0$; \[\textcircled{\small{b}} \leq (L + \lambda R) \mathbb{E}\|\Pi_{\WW}(w_{T}(X) + \zl) - \wls(X)\| \leq (L + \lambda R)\left[\sqrt{\frac{2 \alpha}{\lambda}} + \mathbb{E}\|\zl\|_{2}\right],\]
using contractivity of the projection map, $\lambda$-strong convexity of $\Fl$ and our choice of $T = T(\alpha).$ Also, \begin{align*}
    \textcircled{\small{c}} = [\Fl(\wls(X), X) - \Fl(\ws(X),X)] + [\Fl(\ws(X), X) - F(\ws(X), X)] &< \\
    0 + \frac{\lambda}{2} \|\ws(X)\|_2^2 \leq \frac{\lambda R^2}{2} \end{align*}
since $\wls(X)$ is the unique minimizer of $\Fl(\cdot, X)$ by strong convexity. Hence $\ERLpro \leq (L + \lambda R)\left[\sqrt{\frac{2 \alpha}{\lambda}} + \mathbb{E}\|\zl\|_{2}\right] + \frac{\lambda R^2}{2}$. \\
Assume that $F \in \GGE$: the proof of the non-ERM case is very similar (but simpler since $n$ disappears).
First suppose $\delta = 0$, so $\mathbb{E}\|\zl\|_{2} \leq \frac{\sqrt{2} d (\Delta_{\lambda} + 2\sqrt{\frac{2 \alpha}{\lambda}})}{\varepsilon}.$  Then $\Dl \leq  \frac{2(L+ \lambda R)}{\lambda n} \leq \frac{4L}{\lambda n} \leq 4\sqrt{2} R \sqrt{\frac{\varepsilon}{n d}}$ by our choice of $\lambda$ and assumption that $\pen \leq 1$. Also, $\frac{1}{\lambda} < \sqrt{2}\frac{R}{L} \sqrt{\frac{d}{\varepsilon n}}.$ Using these estimates and the above estimate for excess risk gives 
\begin{multline}
    \ERLpro \leq \\
    2L\left[2\sqrt{\alpha} \sqrt{\frac{R}{L}} (\nep)^{1/4} + \sqrt{2}\frac{d}{\varepsilon}\left(4 \sqrt{2} R \sqrt{\frac{\varepsilon}{n d}} + 4 \sqrt{2}\sqrt{\alpha} \sqrt{\frac{R}{L}} \left(\nep\right)^{1/4}\right)\right] + LR \frac{\sqrt{\pen}}{2}\\
    \leq 16 \sqrt{\alpha} \sqrt{LR} (\nep)^{1/4}(1 + \pe) + 16.5 LR \sqrt{\pen},
\end{multline}
as claimed. Then plug in $\alpha.$\\
Next, suppose  $\delta \in (0,1/2).$ Then $\mathbb{E}\|\zl\|_2 \leq \lpe \left(\Dl + 2 \sqrt{\frac{2 \alpha}{\lambda}}\right).$ Now \[
\lambda = \frac{L}{R \sqrt{1 + \frac{\varepsilon n}{d^{1/2} (\cd + \sqrt{\cd^2 + \varepsilon})}}} < \frac{L}{R \sqrt{\frac{\varepsilon n}{d^{1/2} (\cd + \sqrt{\cd^2 + \varepsilon})}}} \leq \frac{L}{R}\sqrt{\lpen}\]
and $~\Dl \leq \frac{4L}{\lambda n} \leq 4 \sqrt{2}\frac{R}{n} (\frac{\varepsilon n}{\sqrt{d} (\cd + \sqrt{\cd^2 + \varepsilon})^2})^{1/2}.$ Using these estimates and the estimates for each component of excess risk, as above, gives 
the first statement of part 2b). Using the assumptions and estimates above, it is easy to verify that the prescribed choices of $\alpha$ yield the second statements in each part of the theorem.   
\subsubsection{Proof of \cref{thm: black box convex smooth}}
We decompose excess risk into 3 terms: \begin{multline*}
\ER = \underbrace{\mathbb{E}_{A} [F(w_{T}(X) + \zl, X) - \Fl(w_{T}(X) + \zl, X)]}_{\textcircled{\small{a}}} + \\
\underbrace{\mathbb{E}_{A} \Fl(w_{T}(X) + \zl, X) - \Fl(\wls(X), X)}_{\textcircled{\small{b}}}
+ \underbrace{\Fl(\wls(X), X) - F(\ws(X), X)}_{\textcircled{\small{c}}}. 
\end{multline*}

Now bound the terms as follows: \textcircled{\small{a}} = $-\frac{\lambda}{2} \mathbb{E}\|\wla(X)\|_2^2 \leq 0$. By the descent lemma (note $\Fl(\cdot, X)$ is $(\beta + \lambda)$-smooth) and mean-zero noise $\zl$, we have
\begin{align*}
\textcircled{\small{b}} &=  \mathbb{E}[\Fl(w_{T} + \zl, X) - \Fl(w_{T}, X)] + \mathbb{E}[\Fl(w_{T}, X) - \Fl(\wls, X)]   \\
&\leq \mathbb{E}[\langle \nabla \Fl(w_{T}(X), X)), \zl \rangle + \frac{\beta + \lambda}{2} \|\zl\|_2^2] + \alpha \\
&= \frac{\beta + \lambda}{2} \mathbb{E} \|\zl\|_2^2 + \alpha. 
\end{align*}
 
Lastly, \textcircled{\small{c}} $ = [\Fl(\wls(X), X) - \Fl(\ws(X),X)] + [\Fl(\ws(X), X) - F(\ws(X), X)] < 0 + \frac{\lambda}{2} \|\ws(X)\|_2^2 \leq \frac{\lambda R^2}{2}$ since $\wls(X)$ is the unique minimizer of $\Fl(\cdot, X)$ by strong convexity. Hence $\ER \leq \frac{\lambda R^2}{2} + (\beta + \lambda) \mathbb{E} \|\zl\|_2^2 + \alpha.$ \\
Assume that $F \in \JJE$: the proof of the non-ERM case is very similar, but simpler since $n$ disappears.
First suppose $\delta = 0$, so $\mathbb{E}\|\zl\|_2^2 \leq \frac{2 d^2 (2 \sqrt{\frac{2 \alpha}{\lambda}} + \Dl)^2}{\varepsilon^2}.$ Note that  for our choice of $\lambda$ given in 
\cref{thm: black box convex smooth}, $~\lambda \leq \frac{L}{R} \leq 2\beta$ and $(L + \lambda R) \leq 2L$ by the assumption $~\left(\pen\right)^2 \leq \lrb$ and since $~L \leq 2 \beta R$ always holds for $~F \in \JJ.$ Also, $~\Dl \leq \frac{2(L+ \lambda R)}{\lambda n} \leq \frac{4L}{\lambda n} \leq 4(\frac{L R^2}{\beta})^{1/3} (\nep)^{2/3}$. Therefore,
\[
    \ER \leq \frac{\lambda R^2}{2} + 3\beta \left(\pe\right)^2 \left[16\frac{L^2}{\lambda ^2 n^2} + \frac{\alpha}{\lambda} + 4\sqrt{\frac{\alpha}{\lambda}} \frac{L}{\lambda n}\right] + \alpha.
\]
Plugging in $\lambda = \left(\frac{\beta L^2}{R^2}\right)^{1/3} \left(\pen\right)^{2/3}$, using the estimates above and the fact that $\pen \leq 1$ by assumption gives \[
\lambda R^2, ~\beta \left(\pe\right)^2 \frac{L^2}{\lambda ^2 n^2} \leq \blrt \left(\pen\right)^{2/3}.
\]
Also, by the choice of $\lambda,$ \[
\beta \left(\pe\right)^2\left[\frac{\alpha}{\lambda} + 4\sqrt{\frac{\alpha}{\lambda}} \frac{L}{\lambda n}\right] \leq \alpha\left[\left(\frac{R \beta}{L}\right)^{2/3} \frac{d^{4/3} n^{2/3}}{\varepsilon^{4/3}}\right] + 4\sqrt{\alpha}R \beta^{1/2} \frac{d}{\varepsilon}.
\]
Putting these pieces together proves the first statement in part 2a). Verifying the second statement is routine, using the assumptions stated in the theorem and the estimates obtained above. \\

Now suppose $\delta \in (0, \frac{1}{2})$. Then $\mathbb{E}\|\zl\|_2^2 \leq \frac{d\left(\ld\right)\left(2\sqrt{\frac{2 \alpha}{\lambda}} + \Dl\right)^2}{\varepsilon^2}.$ Again, the assumption $\left(\lpen\right)^2 \leq \lrb$ implies $\lambda \leq 2\beta$ and $L + \lambda R \leq 2L$. Thus, \begin{multline*}
    \ERD \leq \frac{\lambda R^2}{2} + 3\beta \frac{d\left(\ld\right)}{\varepsilon^2}\left[\Dl^2 + 4\sqrt{2}\Dl \sqrt{\frac{\alpha}{\lambda}} + 8\frac{\alpha}{\lambda}\right] + \alpha \\
    \leq 48.5 \blrt \left(\lpen\right)^{2/3} + 3\beta \frac{d \left(\ld\right)}{\varepsilon^2} \left[\frac{16 \sqrt{2} L}{\lambda n} \sqrt{\frac{\alpha}{\lambda}} + \frac{8\alpha}{\lambda}\right] + \alpha,
\end{multline*} 
where in the last line we plugged in estimates for $\lambda$ and $\Dl$ to bound the first two terms in the sum. Then plugging in $\lambda = \left(\frac{\beta L^2}{R^2}\right)^{1/3} \left(\lpen\right)^{2/3}$ and using the estimates and assumptions completes the proof of the first statement of the theorem. The second statement is easy to verify by plugging in the prescribed $\alpha.$

\section{Proofs for \cref{Section 4: pop loss} and Additional Results for Non-Strongly-Convex}
\label{app: Sec 4 proofs}
\subsection{Proofs of Results in \cref{subsection 4.1}}
\subsubsection{Proof of \cref{lem: stability of sc lip f}}
Let $x \in \XX, |X \Delta X'| \leq 2.$ Then \begin{align*}
\mathbb{E}_{\mathcal{A'}}[f(\mathcal{A'}(X), x) - f(\mathcal{A'}(X'), x)] &= \mathbb{E}_{\mathcal{A'}}[f(\Pi_{\WW}(\widehat{w}(X) + z), x) - f(\Pi_{\WW}(\widehat{w}(X') + z), x)] \\
&\leq L \|\widehat{w}(X) - \widehat{w}(X')\| \\
&\leq L \Delta_{\hf} \leq \frac{2L^2}{\mu n},
\end{align*}
using contractivity of the projection map and \cref{prop:sc sensitivity}. Clearly, the first inequality above also holds without projection, so that the same uniform stability bound holds for $\mathcal{A}$ as well. 
\subsubsection{Proof of \cref{prop: sc lip pop loss}}
Write
\begin{multline*}
\mathbb{E}_{X \sim \DD^{n}, \mathcal{A'}}[F(\wprpro(X), \DD) - F(\ws(\DD), \DD)] = \underbrace{\mathbb{E}_{X \sim \DD^{n}, \mathcal{A'}}[F(\wprpro(X), \DD) - \hf(\wprpro(X), X)]}  _{\textcircled{\small{a}}} + \\
\underbrace{\mathbb{E}_{X \sim \DD^{n}, \mathcal{A'}}[\hf(\wprpro(X), X) - \hf(\hw(X), X)]}_{ \textcircled{\small{b}}} + \underbrace{\mathbb{E}_{X \sim \DD^{n}, \mathcal{A'}}\hf(\hw(X), X) - F(\ws(\DD), \DD)}_{\textcircled{\small{c}}}. \\
\end{multline*}
Now, \textcircled{\small{a}} $\leq \alpha = \frac{2L^2}{\mu n}$ by \cref{lem: stability} and \cref{lem: stability of sc lip f}. 
If $~\delta = 0,$ then ~\textcircled{\small{b}} $\leq \frac{2L^2}{\mu}\left(\pen\right);$ \\
and if $\delta \in (0, \frac{1}{2}),$ then ~\textcircled{\small{b}} $\leq \frac{\sqrt{2}L^2 \sqrt{d}\left(\ld\right)}{\mu \varepsilon n}$ by \cref{cor: sc upper bound}. 
Finally, since 
\[\expec[\min_{w \in \WW} \hf(w, X)] \leq \min_{w \in \WW} \expec[\hf(w, X)] = F(\ws(\DD), \DD),\]
we have that  ~\textcircled{\small{c}} $\leq 0.$ Summing these inequalities up completes the proof. 

\subsubsection{Proof of \cref{rem: sc lip pop loss runtime}}
Decompose \begin{multline*}
\ERP = \underbrace{\mathbb{E}_{X \sim \DD^{n}, \mathcal{A}}[F(\wpr(X), \DD) - \hf(\wpr(X), X)]}  _{\textcircled{\small{a}}} + \\
\underbrace{\mathbb{E}_{X \sim \DD^{n}, \mathcal{A}}[\hf(\wpr(X), X) - \hf(\hw(X), X)]}_{ \textcircled{\small{b}}} + \underbrace{\mathbb{E}_{X \sim \DD^{n}, \mathcal{A}}\hf(\hw(X), X) - F(\ws(\DD), \DD)}_{\textcircled{\small{c}}}, \\
\end{multline*}
as before. 
Observe that $\Delta_{T} := \sup_{|X \Delta X'| \leq 2}\|w_T(X) - w_T(X')\|_2 \leq \Delta_{\widehat{F}} + 2 \sqrt{\frac{2 \alpha}{\mu}} \leq 2(\frac{L}{\mu n} + \sqrt{\frac{2 \alpha}{\mu}}),$ and $\mathcal{A}$ is $L \Delta_{T}$-uniformly stable with respect to $f$. Hence $\textcircled{\small{a}} \leq 2(\frac{L^2}{\mu n} + L \sqrt{\frac{2 \alpha}{\mu}}),$ by \cref{lem: stability}. Plugging in the choices of $\alpha$ given in the Corollary shows that $\textcircled{\small{a}} \leq 5 \lmu.$ 
Next, by \cref{cor: sc lip SGD}, \[ \textcircled{\small{b}} = \mathbb{E}_{\mathcal{A}}[\hf(\wpr(X), X) - \hf(\hw(X), X)] \leq 9\lmu \pen\] if $\delta = 0$ and 
$\mathbb{E}_{\mathcal{A}}[\hf(\wpr(X), X) - \hf(\hw(X), X)] \leq 6\lmu \lpen$ if $\delta \in (0, \frac{1}{2})$ with the given choices of $\alpha$ and $T.$ Finally, $\textcircled{\small{c}} \leq 0,$ as in the proof of \cref{prop: sc lip pop loss}. Putting these estimates together completes the proof. 

\subsection{Population Loss Bounds for Convex, Lipschitz functions}
\label{subsection 4.3}
For convex functions, we use a regularization scheme, as described in \cref{subsection 2.4 convex lip} and \cref{subsection 3.3}, and output \[
\wlapro(X) = \Pi_{\WW}(\hw_{\lambda}(X) + \zl),\] where $\hw_{\lambda}(X) = \argmin_{w \in \WW} \widehat{\Fl}(w,X)$ (c.f. \cref{regularized output pert - conceptual alg}). 
For this mechanism, we have the following expected excess population loss bounds:
\begin{proposition}
\label[proposition]{prop: convex lip pop loss}
Let $f(\cdot, x)$ be $L$-Lipschitz on $\WW$ and convex for all $x \in \XX,$ let $\varepsilon > 0$ and $\delta \in [0, \frac{1}{2}).$ Let $X$ be a data set of size $n$ drawn i.i.d. according to $\mathcal{D}.$ Then with the choices of $\lambda > 0$ below, running the conceptual regularized output perturbation algorithm (\cref{regularized output pert - conceptual alg}) on $\hf(w,X)$ achieves the following expected excess population loss bounds: \\
a) Assume $\delta = 0$ and $\pen \leq 1$. Set $\lambda = \left(\frac{L}{R}\right)\left(\left(\pen\right)^{1/2} + \frac{1}{\sqrt{n}}\right).$ \\Then, $\ERPLpro \leq LR\left(127\left(\pen\right)^{1/2} + \frac{1}{2\sqrt{n}}\right).$ \\
b) Assume $\delta \in \left(0, \frac{1}{2}\right)$ and $\lpen \leq 1.$ Set $\lambda = \left(\frac{L}{R}\right)\left(\left(\lpen\right)^{1/2} + \frac{1}{\sqrt{n}}\right).$\\
Then, $\ERDPLpro \leq LR\left(19\left(\lpen\right)^{1/2} + \frac{1}{2\sqrt{n}}\right).$
\end{proposition}

When SGD (\cref{alg: stochastic subgrad}) is used as the non-private input to the Black Box \cref{alg: black box convex non-smooth}, we achieve the above loss bounds with the following runtimes: 
\begin{corollary}
\label[corollary]{rem: convex lip pop loss runtime}
Let $f(\cdot, x)$ be $L$-Lipschitz on $\WW$ and convex for all $x \in \XX,$ let $\varepsilon > 0$ and $\delta \in [0, \frac{1}{2}).$ Let $X$ be a data set of size $n$ drawn i.i.d. according to $\mathcal{D}.$ Run $\mathcal{A}=$~\cref{alg: black box convex non-smooth} on $\widehat{\Fl}(w,X)$ with $\mathcal{M}$ as the stochastic subgradient method (\cref{alg: stochastic subgrad}) with step sizes $\eta_{t} = \frac{2}{\lambda (t + 1)}$ and $\lambda, ~T, \alpha$ as prescribed below. \\
a) Suppose $\delta = 0, ~\pen \leq 1.$ Set $\lambda = \frac{L}{R}\left(\left(\pen\right)^{1/2} + \frac{1}{\sqrt{n}}\right),$ $~\alpha = \frac{LR \left(\pen\right)^{3/2}}{(1 + \pe)^2},$ and $T = 2n^2 \max\left\{1, (\frac{\varepsilon}{d})^2\right\}.$ Then \[
\ERPLpro \leq 32 LR\left(\left(\pen\right)^{1/2} + \frac{1}{\sqrt{n}}\right)\]
in runtime $O\left(dn^2 \max\left\{1, (\frac{\varepsilon}{d})^2\right\}\right).$\\
b) Suppose $\delta \in (0, \frac{1}{2}).$ Set $\lambda = \left(\frac{L}{R}\right)\left(\left(\lpen\right)^{1/2} + \frac{1}{\sqrt{n}}\right), ~\alpha = LR\left(\lpen\right)^{3/2},~$ and $~T = 2\left(\frac{n^2 \varepsilon}{d}\right).$ 
Then \[
\ERDPLpro \leq 32 LR\left(\left(\lpen\right)^{1/2} + \frac{1}{\sqrt{n}}\right)\]
in runtime $O(n^2 \varepsilon).$
\end{corollary}

For $\delta = 0,$ the only competing expected excess population loss bound for this class that we are aware of is $\widetilde{O}\left(LR \left(\sqrt{\pen} + \frac{1}{\sqrt{n}}\right)\right),$ obtained by the exponential mechanism in \cite{bst14}. This bound is the same as ours. However, its worst-case runtime $\widetilde{O}(R^2 d^9 n^3 \max \{d, \varepsilon n R\}),$ is generally much larger than ours. 
For $\delta > 0,$ on the other hand, there are several efficient algorithms that nearly attain the optimal (by \cite{bft19}) private expected excess population loss bound of $O\left(LR \left(\frac{1}{\sqrt{n}} + \frac{\sqrt{d}}{\varepsilon n}\right)\right)$ \cite{bft19, fkt20, arora20}. The fastest of these is the private FTRL method of \cite{arora20}, which has runtime $O(nd)$ (and is suboptimal in excess loss by a $\log(n)$ factor). However, the method does not guarantee $(\varepsilon, \delta)$-differential privacy for arbitrary $\varepsilon$. In fact, $\varepsilon$ must be bounded by $\sqrt{\log(\frac{1}{\delta})}$ in order for their results to be valid \footnote{Their results are stated in terms of $(\alpha, \gamma)$-Renyi-privacy for $\alpha > 1$, but they argue on p.9, using \cite[Lemma 2.6]{fkt20}, that their Algorithm 1 is  $(\varepsilon, \delta)$-differentially private for $\varepsilon = \frac{L \sqrt{\log(\frac{1}{\delta})}}{\sigma \sqrt{n}},$ where they put $\sigma = \frac{2 \alpha L}{\sqrt{n}\sqrt{\gamma}}.$ Some algebra reveals that $\varepsilon = \sqrt{\log(\frac{1}{\delta})}$ is necessary for their risk and privacy guarantees to both hold.}.

\paragraph{Smooth, convex, Lipschitz functions.}
If we assume additionally that $f$ is $\beta$-smooth, then the conceptual regularized output perturbation algorithm results in the following expected excess population loss guarantees:  
\begin{proposition}
\label[proposition]{prop: convex smooth pop loss}
Let $f(\cdot, x)$ be Let $f(\cdot, x)$ be $L$-Lipschitz on $\WW$, convex, and $\beta$-smooth for all $x \in \XX,$ let $\varepsilon > 0$ and $\delta \in [0, \frac{1}{2}).$ Let $X$ be a data set of size $n$ drawn i.i.d. according to $\mathcal{D}.$ Then with the choices of $\lambda > 0$ below, running the conceptual regularized output perturbation algorithm (\cref{regularized output pert - no proj}) on $\hf(w,X)$ achieves the following expected excess population loss bounds: \\
a) If $\delta = 0$ and $\left(\pen\right)^2 \leq \lrb,$ then setting $\lambda = \left(\frac{\beta L^2}{R^2}\right)^{1/3}\left(\left(\pen\right)^{2/3} + \frac{1}{\sqrt{n}}\right)$ implies \[
\ERPL \leq \blrt \left(\frac{11}{\sqrt{n}} + 83\left(\pen\right)^{2/3}\right).\]
b) If $\delta \in \left(0, \frac{1}{2}\right)$ and $\left(\lpen\right)^2 \leq \lrb,$ then 
setting $\lambda = \left(\frac{\beta L^2}{R^2}\right)^{1/3}\left(\left(\lpen\right)^{2/3} + \frac{1}{\sqrt{n}}\right)$ implies 
\[
\ERDPL \leq \blrt \left(\frac{11}{\sqrt{n}} + 43\left(\lpen\right)^{2/3}\right).\]
\end{proposition}

Next, we show that we can obtain these population loss bounds with a practical, (nearly) linear-time algorithm. 

\begin{corollary}
\label[corollary]{rem: convex smooth pop loss runtime}
Let $f(\cdot, x)$ be Let $f(\cdot, x)$ be $L$-Lipschitz on $\WW$, convex, and $\beta$-smooth for all $x \in \XX$. Let $\varepsilon > 0$ and $\delta \in [0, \frac{1}{2})$ and let $X$ be a data set of size $n$ drawn i.i.d. according to $\mathcal{D}.$  Run $\mathcal{A}=$~\cref{alg: black box convex non-smooth} on $\widehat{\Fl}(w,X)$ with $\mathcal{M}$ as \cref{alg: Katyusha} (Katyusha) with $\lambda, ~\alpha, ~T$ as prescribed below. \\
1. Suppose $\delta = 0$ and $\left(\pen\right)^2 \leq \frac{L R}{\beta}.$ Set $\lambda = \left(\frac{\beta L^2}{R^2}\right)^{1/3}\left(\left(\pen\right)^{2/3} + \frac{1}{\sqrt{n}}\right),$ $~\alpha = \frac{L^{4/3} R^{2/3}}{\beta^{1/3}},$ and $~T = O\left(n + n^{5/6} \left(\ep\right)^{1/3} \left(\frac{\beta R}{L}\right)^{1/3} \log\left(\left(\frac{\beta R}{L}\right)^{1/3} \left(\pe\right)^{2/3} n^{4/3}\right)\right).$ \\ Then, \[
\ERPL \leq 128 \blrt \left(\frac{1}{\sqrt{n}} + \left(\pen\right)^{2/3}\right)\]
in runtime $O\left(nd + n^{5/6}d^{2/3} \varepsilon^{1/3} \left(\frac{\beta R}{L}\right)^{1/3}\right).$\\ 
2. Suppose $\delta \in (0, \frac{1}{2})$ and $\left(\lpen\right)^2 \leq \frac{LR}{\beta}.$ \\
Set $\lambda = \left(\frac{\beta L^2}{R^2}\right)^{1/3}\left(\left(\lpen\right)^{2/3} + \frac{1}{\sqrt{n}}\right), \alpha = \frac{L^{4/3} R^{2/3}}{\beta^{1/3}}\left(\frac{\varepsilon n}{\sqrt{d}\left(\ld\right)}\right)^{2/3}\frac{1}{n^2},$ and \\
$~T = O\left(n + n^{5/6} \left(\lep\right)^{1/3} \left(\frac{\beta R}{L}\right)^{1/3} \log\left(\left(\frac{\beta R}{L}\right)^{1/3} n^2 \left(\lpen\right)^{2/3}\right) \right).$ \\
Then, \[
\ERDPL \leq 128 \blrt \left(\frac{1}{\sqrt{n}} + \left(\lpen\right)^{2/3}\right)\]
in runtime $O\left(nd + n^{5/6}d^{5/6} \varepsilon^{1/6} \left(\frac{\beta R}{L}\right)^{1/3}\right).$
\end{corollary}
\begin{remark}
\label[remark]{rem: min smooth convex pop}
Since the smooth, convex, Lipschitz loss function class is a subset of the convex, Lipschitz loss function class, we obtain by combining \cref{prop: convex lip pop loss} with \cref{rem: convex smooth pop loss runtime} the following upper bounds on the expected excess population loss of smooth, strongly convex, Lipschitz functions: 
$\ERPL \lesssim 
\min \left\{LR,  LR\left(\left(\pen\right)^{1/2} + \frac{1}{\sqrt{n}}\right), \blrt\left(\left(\pen\right)^{2/3} + \frac{1}{\sqrt{n}}\right)\right\}$ for $\delta = 0$;  and $\ERDPL \lesssim 
\min \left\{LR,  LR\left(\left(\lpen\right)^{1/2} + \frac{1}{\sqrt{n}}\right), \blrt\left(\left(\lpen\right)^{2/3} + \frac{1}{\sqrt{n}}\right)\right\}$ for $\delta > 0.$ The second terms in each of the above minima are the non-smooth expected excess population losses from \cref{prop: convex lip pop loss}. For smooth convex loss functions, by utilizing Katyusha, these expected excess population losses can be realized faster than the runtime promised in \cref{rem: convex lip pop loss runtime}. To derive these enhanced upper bounds on the runtimes for obtaining the non-smooth convex expected excess population loss bounds, recall the procedure described in \cref{rem: smooth convex ERM Katyusha min}: use the regularization term $\lambda$ (and the corresponding $\alpha$) prescribed in \cref{thm: black box imp convex lip}. Plugging this into the runtime expression for Katyusha gives the following runtime for achieving excess empirical risk $O(LR \sqrt{\pen})$ for $\delta = 0$: $\widetilde{O}(d(n + \sqrt{n\left(\frac{\beta + \lambda}{\lambda}\right)})) = \widetilde{O}\left(nd + n^{3/4} d^{3/4} \varepsilon^{1/4} \sqrt{\frac{\beta R}{L}}\right).$ By the uniform stability argument used in this section, this excess empirical loss bound implies expected excess population loss of $O\left(LR \left(\frac{1}{\sqrt{n}} + \sqrt{\pen}\right)\right)$. This explains where the ``$\max \{...\}$'' term in the last row (corresponding to $\JJ$) of \cref{table:pop loss delta = 0} comes from. Likewise, for $\delta > 0,$ the runtime for obtaining excess empirical risk $O\left(LR \left(\lpen\right)^{1/2}\right)$, which yields the expected excess population loss $O\left(LR\left(\left(\pen\right)^{1/2} + \frac{1}{\sqrt{n}}\right)\right)$, by Katyusha is $\widetilde{O}\left(nd + n^{3/4} d^{7/8} \varepsilon^{1/8} \sqrt{\frac{\beta R}{L}}\right).$ 
\end{remark}

Recall that the optimal non-private population loss for convex Lipschitz is $O(\frac{LR}{\sqrt{n}}).$ 
Hence, if we assume $\beta R = L$, so that $\blrt = LR$ for simplicity, then when $\delta = 0,$ we get privacy for free if $\left(\pe\right)^2 \lesssim \sqrt{n}.$ For $\delta > 0,$ under the same simplifying assumptions on $\beta, L, R$, we get privacy for free if $\frac{d \cd^2}{\varepsilon^2} \lesssim \sqrt{n}.$ For general $\beta, L,$ and $R,$ the ``privacy for free'' parameter ranges may be narrower than these by a factor of $\left(\frac{L}{\beta R}\right)^{1/3}$.

\vspace{0.2cm}
For $\delta = 0$, there are no other private expected population loss bounds we are aware of besides the one of \cite{bst14} mentioned above, which, to the best of our knowledge, does not improve under additional smoothness assumption and is therefore inferior to our bounds in both excess loss and runtime. The gradient descent-based output perturbation method of \cite{zhang2017} achieves with probability $~1 - \gamma$, excess population loss of $\widetilde{O}\left(\frac{R L^{1/2} (\beta d)^{1/4}}{(n \varepsilon \gamma)^{1/2}}\right),$ but they do not provide a guarantee for \textit{expected} excess population loss. For $\delta > 0,$ \cite{bft19}, \cite{fkt20}, and \cite{arora20} all nearly achieve the optimal (by \cite{bft19}) expected excess population loss bound $O\left(LR\left(\frac{\sqrt{d}}{\varepsilon n} + \frac{1}{\sqrt{n}}\right)\right)$ for this function class efficiently. In fact, \cite{fkt20} achieves the bound up to a $\log(\frac{1}{\delta})$ factor in runtime $O(nd)$, provided $\frac{\beta R}{L} \lesssim \min\left\{\frac{1}{\sqrt{n}}, \frac{\sqrt{\log\left(\frac{1}{\delta}\right) n}}{\sqrt{d}}\right\}$ and $\varepsilon \leq 2 \log\left(\frac{1}{\delta}\right).$ The first condition is not very restrictive; indeed, essentially every method (including ours) requires some such restriction to guarantee non-trivial excess risk for $\JJ.$ The condition on $\varepsilon,$ however does restrict the range of $n, d$ in which strong risk bounds can be realized privately. Our method has no such restriction. 

\vspace{0.2cm}
The last noteworthy result from recent DP SCO literature is that conceptual objective perturbation can attain obtain the optimal $(\varepsilon, \delta)$ private population excess loss in the convex smooth case under strict assumptions such as the loss function $f$ having rank-one hessian and $\beta \leq \frac{2 \varepsilon L}{R} \sqrt{2n + 4d \log\left(\frac{1}{\delta}\right)}$ \cite{bft19}. By contrast, here we have shown that output perturbation, which does not require any of the above assumptions and can be implemented quite quickly and easily, nearly attains the optimal rate and also applies (and performs quite well) for $\delta = 0$.

\subsubsection{Proof of \cref{prop: sc smooth pop loss}}
The proof is nearly identical to the proof of \cref{prop: sc lip pop loss}, except that we use the excess empirical risk bounds for $\HHE$ (\cref{cor: smooth sc fund ER bounds}) to bound the \textcircled{\small{b}} term instead of \cref{cor: sc upper bound}.

\subsubsection{Proof of \cref{rem: smooth sc pop loss runtime katyusha}}
The proof follows exactly as the proof of \cref{rem: sc lip pop loss runtime}, but using \cref{cor: katyusha sc smooth} instead of \cref{cor: sc lip SGD} to bound \textcircled{\small{b}}.

\subsection{Proofs of Results in \cref{subsection 4.3}}
\subsubsection{Proof of \cref{prop: convex lip pop loss}}
As in the proof of \cref{prop: sc lip pop loss}, write  
\begin{multline*}
\ERPLpro = \underbrace{\mathbb{E}_{X \sim \DD^{n}, \mathcal{A'}_{\lambda}}[F(\wlapro(X), \DD) - \hf(\wlapro(X), X)]}  _{\textcircled{\small{a}}} + \\
\underbrace{\mathbb{E}_{X \sim \DD^{n}, \mathcal{A'}_{\lambda}}[\hf(\wlapro(X), X) - \hf(\hw(X), X)]}_{ \textcircled{\small{b}}} + \underbrace{\mathbb{E}_{X \sim \DD^{n}, \mathcal{A'}_{\lambda}}\hf(\hw(X), X) - F(\ws(\DD), \DD)}_{\textcircled{\small{c}}}. \\
\end{multline*}

Now decompose \textcircled{\small{a}} into three parts: \begin{align*}
\textcircled{\small{a}} &= \underbrace{\mathbb{E}[F(\wlapro(X), \mathcal{D}) - \Fl(\wlapro(X), X)]}_{\textcircled{\small{i}}} 
+ \underbrace{\mathbb{E}[\Fl(\wlapro(X), \mathcal{D}) - \widehat{\Fl}(\wlapro(X), X)]}_{\textcircled{\small{ii}}} 
\\&+ \underbrace{\mathbb{E}[\widehat{\Fl}(\wlapro(X), X) - \widehat{F}(\wlapro(X), X)]}_{\textcircled{\small{iii}}},
\end{align*}
where $\Fl(w,\mathcal{D}) := F(w, \mathcal{D}) + \frac{\lambda}{2}\|w\|_2^2.$
Then \textcircled{\small{i}} + \textcircled{\small{iii}} = 0, whereas \textcircled{\small{ii}} $\leq \frac{2(L+\lambda R)^2}{\lambda n}$ by \cref{lem: stability} and \cref{lem: stability of sc lip f}. Next, decompose \textcircled{\small{b}} into three parts as in the proof of \cref{prop: convex ER}: 
\begin{align*}
\textcircled{\small{b}} &= \underbrace{\mathbb{E}[\hf(\wlapro(X), X) - \widehat{\Fl}(\wlapro(X), X)]}_{\textcircled{\small{1}}} + \underbrace{\mathbb{E}[\widehat{\Fl}(\wlapro(X), X) - \widehat{\Fl}(\widehat{w_{\lambda}}(X), X)]}_{\textcircled{{\small{2}}}} \\ 
&+ \underbrace{\mathbb{E}[\widehat{\Fl}(\widehat{w_{\lambda}})(X), X) - \hf(\hw(X), X)]}_{\textcircled{{\small{3}}}}.
\end{align*}
Then $\textcircled{{\small{1}}} \leq 0.$ Also, \[
\textcircled{{\small{2}}} \leq (L + \lambda R) \mathbb{E}\|\zl\|_2.
\]
By the definition of $\zl$ (\cref{subsection 2.4 convex lip}), \cref{lem: expec of gamma and gauss}, and \cref{prop:sc sensitivity}, we have $\mathbb{E}\|\zl\|_2 \leq \frac{2 \sqrt{2}(L + \lambda R)}{\lambda}\left(\pen\right)$ if $\delta = 0;$ if $\delta \in (0, \frac{1}{2}),$ then $\mathbb{E}\|\zl\|_2 \leq \frac{2 (L + \lambda R)}{\lambda} \left(\lpen\right).$
The third term $\textcircled{\small{3}} \leq \frac{\lambda R^2}{2}.$ Lastly, $\textcircled{\small{c}} \leq 0$ by the same argument used in \cref{prop: sc lip pop loss}. Hence if $~\delta = 0,$ then \[
\ERPLpro \leq 2 \sqrt{2}\left[\frac{(L + \lambda R)^2}{\lambda}\left(\pen\right)\right] + \frac{\lambda R^2}{2},\]
and if $~\delta \in (0, \frac{1}{2}),$ then \[
\ERPLpro \leq 2\left[\frac{(L + \lambda R)^2}{\lambda}\left(\lpen\right)\right] + \frac{\lambda R^2}{2}.\]
For $\delta = 0,$ plugging in $\lambda = \left(\frac{L}{R}\right)\left(\left(\pen\right)^{1/2} + \frac{1}{\sqrt{n}}\right)$ gives the first result. \\
For $\delta \in \left(0, \frac{1}{2}\right),$ instead choose $\lambda = \left(\frac{L}{R}\right)\left(\left(\lpen\right)^{1/2} + \frac{1}{\sqrt{n}}\right)$ to get the last result. 

\subsubsection{Proof of \cref{rem: convex lip pop loss runtime}}
Decompose
\begin{multline*}
\ERPLpro = \underbrace{\mathbb{E}_{X \sim \DD^{n}, \mathcal{A'}_{\lambda}}[F(\wpr(X), \DD) - \hf(\wpr(X), X)]}  _{\textcircled{\small{a}}} + \\
\underbrace{\mathbb{E}_{X \sim \DD^{n}, \mathcal{A'}_{\lambda}}[\hf(\wpr(X), X) - \hf(\hw(X), X)]}_{ \textcircled{\small{b}}} + \underbrace{\mathbb{E}_{X \sim \DD^{n}, \mathcal{A'}_{\lambda}}\hf(\hw(X), X) - F(\ws(\DD), \DD)}_{\textcircled{\small{c}}}, 
\end{multline*}
as before. First, $\mathcal{A}$ is $L \Delta_{T}$-uniformly stable with respect to $f$ (see proof of \cref{rem: sc lip pop loss runtime}), and $\Delta_{T} = \sup_{|X \Delta X'| \leq 2} \|w_T(X) - w_T(X')\|_2 \leq \Delta_{\lambda} + 2 \sqrt{\frac{2 \alpha}{\lambda}} \leq 2\left[\frac{L + \lambda R}{\lambda n} + \sqrt{\frac{2 \alpha}{\lambda}}\right],$ using \cref{cor: convex lip SGD} and strong convexity of $\Fl$. Hence $\textcircled{\small{a}} \leq 2L[\frac{L + \lambda R}{\lambda n} + \sqrt{\frac{2 \alpha}{\lambda}}]$ by \cref{lem: stability}. Now \[\textcircled{\small{b}} \leq (L + \lambda R)\left[\sqrt{\frac{2 \alpha}{\lambda}} + \mathbb{E}\|\zl\|_2\right] + \frac{\lambda R^2}{2},
\]
by decomposing and bounding as done in the proof of \cref{thm: black box imp convex lip}. Also, \textcircled{\small{c}} $\leq 0$ as usual. Then combining the above with our choices of $\lambda, \alpha, T$ and using \cref{lem: expec of gamma and gauss} completes the proof. 
\subsubsection{Proof of of \cref{prop: convex smooth pop loss}}
To begin, we follow along the same lines as in the proof of \cref{prop: convex lip pop loss}: 
\begin{multline*}
\ERPL = \underbrace{\mathbb{E}_{X \sim \DD^{n}, \mathcal{A}_{\lambda}}[F(\wpr(X), \DD) - \hf(\wpr(X), X)]}  _{\textcircled{\small{a}}} + \\
\underbrace{\mathbb{E}_{X \sim \DD^{n}, \mathcal{A}_{\lambda}}[\hf(\wpr(X), X) - \hf(\hw(X), X)]}_{ \textcircled{\small{b}}} + \underbrace{\mathbb{E}_{X \sim \DD^{n}, \mathcal{A}_{\lambda}}\hf(\hw(X), X) - F(\ws(\DD), \DD)}_{\textcircled{\small{c}}}. \\
\end{multline*}

Then $\textcircled{\small{a}} \leq \frac{2(L+\lambda R)^2}{\lambda n}$ as in the proof of \cref{prop: convex lip pop loss}. Next, decompose \textcircled{\small{b}} into three parts as in the proof of \cref{prop: convex ER}: 
\begin{align*}
\textcircled{\small{b}} &= \underbrace{\mathbb{E}[\hf(\wpr(X), X) - \widehat{\Fl}(\wpr(X), X)]}_{\textcircled{\small{1}}} + \underbrace{\mathbb{E}[\widehat{\Fl}(\wpr(X), X) - \widehat{\Fl}(\widehat{w_{\lambda}}(X), X)]}_{\textcircled{\small{2}}} 
\\&= \underbrace{\mathbb{E}[\widehat{\Fl}(\widehat{w_{\lambda}})(X), X) - \hf(\hw(X), X)]}_{\textcircled{\small{3}}}.
\end{align*}
Then $\textcircled{1} \leq 0$ and $\textcircled{\small{3}} \leq \frac{\lambda R^2}{2},$ as in the proof of \cref{prop: convex lip pop loss}, but to bound \textcircled{\small{2}}, we now use the descent lemma to obtain \[
\textcircled{\small{2}} \leq \frac{(\beta + \lambda)}{2} \mathbb{E}\|\zl\|_2^2.
\]
By the definition of $\zl$ (\cref{subsection 2.4 convex lip}), \cref{lem: expec of gamma and gauss}, and \cref{prop:sc sensitivity}, we have $\mathbb{E}\|\zl\|_2^2 \leq 8\frac{(L + \lambda R)^2}{\lambda^2}\left(\pen\right)^2$ if $\delta = 0;$ if $\delta \in (0, \frac{1}{2}),$ then $\mathbb{E}\|\zl\|_2^2 \leq \frac{4 (L + \lambda R)^2}{\lambda^2} \left(\lpen\right)^2.$
The third term $\textcircled{\small{3}} \leq \frac{\lambda R^2}{2}.$ Lastly, $\textcircled{\small{c}} \leq 0$ by the same argument used in \cref{prop: sc lip pop loss}. Hence if $~\delta = 0,$ then \[
\ERPL \leq \frac{2(L + \lambda R)^2}{\lambda n} + 4 \frac{(\beta + \lambda)(L + \lambda R)^2}{\lambda^2}\left(\pen\right)^2 + \frac{\lambda R^2}{2}.\] 
If $\delta \in (0, \frac{1}{2}),$ then \begin{align*}
\ERDPL &\leq \frac{2(L + \lambda R)^2}{\lambda n} 
\\&+ 2 \frac{(\beta + \lambda)(L + \lambda R)^2}{\lambda^2}\left(\lpen\right)^2 
+ \frac{\lambda R^2}{2}. 
\end{align*}
Set $\lambda = \left(\frac{\beta L^2}{R^2}\right)^{1/3}\left(\left(\pen\right)^{2/3} + \frac{1}{\sqrt{n}}\right)$ and use the assumptions stated in the Proposition to get $\lambda \leq 5 \beta.$ Then bound $\frac{2(L + \lambda R)^2}{\lambda n} \leq 2 \blrt\left(\left(\pen\right)^{2/3} + \frac{5}{\sqrt{n}}\right);$ $4 \frac{(\beta + \lambda)(L + \lambda R)^2}{\lambda^2}\left(\pen\right)^2 \leq 80 \blrt\left( \pen\right)^{2/3};$ and $\frac{\lambda R^2}{2} \leq \frac{1}{2}\blrt\left(\left(\pen\right)^{2/3} + \frac{1}{\sqrt{n}}\right)$ to botain the first part of the Proposition. 
\\
For $\delta \in \left(0, \frac{1}{2}\right),$ instead choose $\lambda = \left(\frac{\beta L^2}{R^2}\right)^{1/3}\left(\left(\lpen\right)^{2/3} + \frac{1}{\sqrt{n}}\right)$ and proceed similarly to get the second bound. 

\subsubsection{Proof of \cref{rem: convex smooth pop loss runtime}}
The proof follows almost exactly as the proof of \cref{rem: convex lip pop loss runtime}, but uses the descent lemma to bound \textcircled{\small{b}} with a $\frac{(\beta + \lambda)}{2} \mathbb{E}\|\zl\|^2$ term (instead of $(L + \lambda R)\mathbb{E}\|\zl\|_2$) and uses
\cref{cor: smooth convex katyusha} instead of \cref{cor: convex lip SGD} (along with the alternative choices of $\alpha$ and $\lambda$) to obtain the stated bounds in faster runtime. 

\section{Proofs of Results in \cref{section 5: applications}}
\subsection{Proofs of Results in \cref{sec: TERM}}
\label[section]{sec: TERM proofs}

\subsubsection{Proof of \cref{lem: sensitivity of ftau}}
First note that $\Delta_{\ftau} \leq \frac{2L}{\mu}$ follows immediately from \cref{prop:sc sensitivity}. 
To prove $\Delta_{\ftau} \leq \frac{2L C_{\tau}}{\mu n},$ we follow the same approach used in proving \cref{prop:sc sensitivity}. Let $X, X' \in \XX^n$ such that $|X \Delta X'| \leq 2$ and assume WLOG that $x_n \neq x_n'.$ We apply \cref{lemma7} with $g_{\tau}(w):= \ftau(w,X) - \ftau(w,X')$
and $G_{\tau}(w) := \ftau(w,X').$ Denote $v_{i}(\tau, w) = \frac{e^{\tau f(w,x_i)}}{\sum_{j=1}^n e^{\tau f(w,x_j)}}$ and $v_{i}'(\tau, w) = \frac{e^{\tau f(w,x_i')}}{\sum_{j=1}^n e^{\tau f(w,x_j')}}.$ Then observe that 
\begin{align*}
\nabla g_{\tau}(w) &= \nabla \ftau(w,X) - \nabla \ftau(w,X') \\
&= \sum_{i=1}^n v_i(\tau, w) \nabla f(w,x_i) - v_i'(\tau, w) \nabla f(w, x_i')\\
&= v_n(\tau, w) \nabla f(w, x_n) - v_n'(\tau, w) \nabla f(w, x_n').
\end{align*}
Now convexity and $L$-Lipschitzness of $f$ imply \[
\|\nabla g_{\tau}(w)\| \leq 2L \max\{v_n(\tau, w), v_n'(\tau, w)\} \leq 2L C_{\tau}.
\]
This proof is completed by noticing that $G_{\tau}$ is $\mu$-strongly convex and appealing to \cref{lemma7}.

\subsubsection{Proof of \cref{lem: TERM beta smoothness}}
First, we have $\nabla \ftau(w, X) = \sum_{i=1}^n v_i (w, \tau) \nabla f_i(w),$ where $v_i(w, \tau):= \frac{e^{\tau f(w, x_{i})}}{\sum_{j=1}^n e^{\tau f(w, x_{j})}}$ and we denote $f_i(w):= f(w, x_i),$ by \cref{tilted ERM lemma 1}. Now for any $w_1, w_2 \in \WW$, we have 
\begin{align*}
    \left\|\nabla \ftau(w_1, X) - \nabla \ftau(w_2, X)\right\|_2 &= \left\|\sum_{i=1}^n \left(v_i(w_1, \tau) \nabla f_i(w_1) - v_i(w_2, \tau) f_i(w_2) \right)\right\|_2 \\
    &\leq \sum_{i=1}^n \left(v_i(w_1, \tau) \beta \|w_1 - w_2 \|_2 + L_{v_{i}}\|w_1 - w_2 \|_2 L\right) \\
    &\leq \beta \|w_1 - w_2\|_2 + L\|w_1 - w_2\|_2 \sum_{i=1}^n \left(L_{v_{i}}\right),
\end{align*}
where $L_{v_{i}}$ denotes the Lipschitz constant of $v_i(w, \tau)$ as a function of $w.$ 
Next, we compute $L_{v_{i}}$ by bounding $\| \nabla v_i(w, \tau)\|$: 
\begin{align*}
    \nabla v_i(w, \tau) &= \frac{\tau e^{\tau f_i(w)} \nabla f_i(w) \left(\sum_{j=1}^n e^{\tau f_j(w)}\right) - \tau \sum_{j=1}^n e^{\tau f_j(w)} \nabla f_j(w) e^{\tau f_i(w)}}{\left(\sum_{j=1}^n e^{\tau f_j(w)}\right)^2} \\
    & = \frac{\tau \sum_{j=1}^n e^{\tau f_j(w)} e^{\tau f_i(w)} \left(\nabla f_i(w) - \nabla f_j(w) \right)}{\left(\sum_{j=1}^n e^{\tau f_j(w)}\right)^2}. 
\end{align*}
Taking the norm of both sides and using $L$-Lipschitzness of $f(\cdot, x_i)$ implies 
\[
L_{v_{i}} \leq 2L \tau \frac{e^{\tau f_i(w)}}{\sum_{j=1}^n e^{\tau f_j(w)}},\] and hence \[\sum_{i=1}^n L_{v_{i}} \leq 2 L \tau.
\]
Therefore, \begin{align*}
    \|\nabla \ftau(w_1, X) - \nabla \ftau(w_2, X)\|_2 &\leq \beta \|w_1 - w_2\|_2 + L\|w_1 - w_2\|_2 \sum_{i=1}^n L_{v_{i}} \\
    &\leq \left(\beta + 2L^2 \tau \right)\|w_1 - w_2\|_2,
\end{align*}
which proves \cref{lem: TERM beta smoothness}.

\subsection{Proofs of Results in \cref{subsection: DP adversarial training}}
\subsubsection{Proof of \cref{lem: max is strongly convex and lipschitz}}
First of all, $g$ is $L$-Lipschitz on $\WW$ if and only if $\|q\|_2 \leq L$ for all subgradients $q \in \partial g(w)$ for all $w \in \WW.$ Moreover, for any $w \in \WW,$ \[
\partial g(w) = conv\left(\bigcup \{u \in \partial_{w} h(w, \widehat{v}) \colon \widehat{v} \in \argmax_{v} h(w,v)\} \right),
\]
where \textit{conv} denotes the convex hull of a set. Then since $\|u\| \leq L$ for all $u \in  \partial_{w} h(w, \widehat{v})$ (as $h(\cdot, \widehat{v})$ is $L$-Lipschitz by assumption), we have $\|q\|_2 \leq L$ for all $q \in \partial g(w),$ which establishes Lipschitzness of $g.$ 
Next, consider the convex (in $w$ for all $v \in S$) function $\widetilde{h}(w,v) := h(w,v) - \frac{\mu}{2}\|w\|_2^2.$ Then since the maximum of a (possibly uncountable) family of convex functions is convex, we have that $\max_{v \in S}\widetilde{h}(w,v) = g(w) - \frac{\mu}{2}\|w\|_2^2$ is convex, which means that $g$ is $\mu$-strongly convex.

\section{$L_2$ Sensitivity Lower Bound for $\FF$}
\label{app: sens lower bound}
In this section, we establish that the sensitivity analysis, which is essential to most of our results, is tight. We do so by exhibiting a data universe $\XX := B(0,1)$ and a function $F \in \FFE$ with $\Delta_{F} \geq \frac{L}{\mu n}.$ Our hard instance is $f(w,x) := \frac{\mu}{4} \|w\|_2^2  + \frac{L}{4}x^{T}w,$ on $\mathbb{R}^d \times B(0, 1).$ Then \[
\nabla f(w,x) = \frac{1}{2}\left(\mu w + \frac{L}{2} x\right) \implies \| \nabla f(w,x) \|_2 \leq \frac{1}{2}\left(\mu R + \frac{L}{2}\right) \leq L,
\]
for any $L \geq \mu R$. So, $f(\cdot, x)$ is $L$-Lipschitz for any $L \geq \mu R$ (for all $x \in \XX$). Furthermore $\nabla^2 f(w,x) = \mu \mathbf{I}.$ Define $F(w,X) = \frac{1}{n}\sum_{i=1}^n f(w,x_{i}).$ Observe that for any $X = (x_1, \cdots, x_n) \in B(0,1),$ we have 
\[
\nabla F(w,X) = \frac{\mu}{2} w + \frac{L}{4n}\sum_{i=1}^N x_i \implies \ws(X) = \frac{-L}{2\mu n} \sum_{i=1}^N x_i. 
\]
Hence $\|\ws(X)\|_2 \leq R$ for any $R \geq \frac{L}{\mu}$ (for all $X \in \XX$), so $F \in \FFE.$ Now consider the data sets $X = \{0, ... , 0, e_1 \}$ and $X' = \{0, ... ,0, -e_1\},$ where $e_1 = (1, 0, ..., 0)^{T} \in \XX \subseteq \mathbb{R}^{d}.$ We will show that $\|\ws(X) - \ws(X') \| \geq \frac{L}{\mu n},$ which implies our lower bound $\Delta_{F} \geq \frac{L}{\mu n}.$

Consider \[
\nabla F(w,X) = \frac{\mu}{2}w + \frac{L}{4}e_1 = 0 \implies \ws(X) = -\frac{L}{2 \mu n} e_1 
\]
and likewise, $\ws(X') = \frac{L}{2 \mu n} e_1.$ Thus, $\|\ws(X) - \ws(X') \|_2 = \frac{L}{\mu n}.$

\end{document}